\def\eqref#1{equation~\ref{#1}}
\def\1{\bm{1}}
\def\vg{{\bm{g}}}
\def\vr{{\bm{r}}}
\def\vv{{\bm{v}}}
\def\vw{{\bm{w}}}
\def\vx{{\bm{x}}}
\def\vy{{\bm{y}}}
\def\vz{{\bm{z}}}
\def\vphi{{\bm{\phi}}}
\def\mA{{\bm{A}}}
\def\mB{{\bm{B}}}
\def\mG{{\bm{G}}}
\def\mH{{\bm{H}}}
\def\mI{{\bm{I}}}
\def\mJ{{\bm{J}}}
\def\mM{{\bm{M}}}
\def\mW{{\bm{W}}}
\def\mX{{\bm{X}}}
\def\meps{{\bm{\epsilon}}}
\DeclareMathAlphabet{\mathsfit}{\encodingdefault}{\sfdefault}{m}{sl}
\SetMathAlphabet{\mathsfit}{bold}{\encodingdefault}{\sfdefault}{bx}{n}
\newcommand{\Var}{\mathrm{Var}}
\DeclareMathOperator{\Tr}{Tr}
\begingroup\color{black}}{%
\begingroup\color{black}}{%
\begin{document}
\title{Learning Rates as a Function of Batch Size: A Random Matrix Theory Approach to Neural Network Training}
\author{\name Diego Granziol \email diego@robots.ox.ac.uk \\
\addr AI Theory Lab\\
Huawei\\\
Gridiron building, 1 Pancras Square, Kings Cross, London, N1C 4AG
\AND
\name  Stefan Zohren \email zohren@robots.ox.ac.uk \\
\name  Stephen Roberts \email sjrob@robots.ox.ac.uk \\
\addr Machine Learning Research Group and Oxford-Man Institute for Quantitative Finance\\
University of Oxford\\\
25 Walton Well Rd, Oxford OX2 6ED, UK
}

\editor{Simon Lacoste-Julien}

\maketitle
\begin{abstract}
We study the effect of mini-batching on the loss landscape of deep neural networks using spiked, field-dependent random matrix theory.  We demonstrate that the magnitude of the extremal values of the batch Hessian are larger than those of the empirical Hessian. We also derive similar results for the Generalised Gauss-Newton matrix approximation of the Hessian. As a consequence of our theorems we derive an analytical expressions for the maximal learning rates as a function of batch size, informing \textcolor{black}{practical training regimens} for both stochastic gradient descent (linear scaling) and adaptive algorithms, such as Adam (square root scaling), 
\textcolor{black}{for smooth, non-convex deep neural networks}.
Whilst the linear scaling for stochastic gradient descent has been derived under more restrictive conditions, which we generalise, the square root scaling rule for adaptive optimisers is, to our knowledge, completely novel. 
We validate our claims on the VGG/WideResNet architectures on the CIFAR-$100$ and ImageNet datasets.
\begin{Correction}
Based on our investigations of the sub-sampled Hessian we develop a stochastic Lanczos quadrature based on the fly learning rate and momentum learner, which avoids the need for expensive multiple evaluations for these key hyper-parameters and shows good preliminary results on the Pre-Residual Architecure for CIFAR-$100$.
\end{Correction}
\end{abstract}
\begin{keywords}
Deep Learning Theory, Random Matrix Theory, Loss Surfaces, Neural Network Training, Learning Rate Scaling, Adam, Adaptive Optimization, Square root rule
\end{keywords}

\section{Introduction}

Deep Learning has taken computer vision and natural language processing tasks by storm. The observation that different critical points on the loss surface post similar test set performance has spawned an explosion of theoretical \citep{choromanska2015loss,choromanska2015open,pennington2017geometry} and empirical interest \citep{papyan2018full,ghorbani2019investigation,li2017visualizing,sagun2016eigenvalues,sagun2017empirical,wu2017towards}, in deep learning loss surfaces, typically through study of the eigenspectrum of the Hessian. 
Scalar metrics of the Hessian, such as the trace/spectral norm, have been related to generalisation \citep{keskar2016large,li2017visualizing}. Under a Bayesian \citep{mackay2003information} and minimum description length framework \citep{hochreiter1997flat}, flatter minima generalise better than sharp minima.  
\textcolor{black}{
This has, however, been disputed recently \citep{dinh2017sharp} due to a perceived lack of parameterisation invariance, with further work considering a parameterisation invariant flatness metric \citep{tsuzuku2020normalized}.}
Theoretical work on the Hessian of neural networks has shown that all local minima are close to the global minimum \citep{choromanska2015loss} and that critical points of high index (i.e those with many negative eigenvalues) have high loss values \citep{pennington2017geometry}. second-order optimisation methods \citep{bottou2018optimization}, use the Hessian (or positive semi definite approximations thereof, such as the Fisher information matrix). They more efficiently navigate along narrow and sharp valleys, making significantly more progress per iteration  \citep{martens2010deep,martens2012training,martens2015optimizing,dauphin2014identifying} than first-order methods.

A crucial part of practical deep learning is the concept of sub-sampling or mini-batching. Instead of using the entire dataset of size $N$ to evaluate the loss, gradient or Hessian at each training iteration, only a small randomly chosen subset of size $B \ll N$ is used. This allows faster progress and lessens the computational burden tremendously. However, despite its widespread use in optimisation, the precise characterisation of the effects of mini-batching on the loss landscape and implications thereof, has not been thoroughly investigated. 
In this paper we show that: 
\begin{itemize}
\item Under assumptions consistent with the optimisation paradigm, the fluctuations in the Hessian due to mini-batching can be modelled as a random matrix;
\item For the feed forward, fully connected network with cross-entropy loss we expect the full Hessian to be low-rank and we provide extensive experiments
along with a theoretical derivation to back up this assertion.
\item When the eigenvalues of the full dataset Hessian are well separated from the fluctuations matrix (which we define in Section \ref{subsec:noise}) due to mini-batching, the extremal eigenvalues of the batch Hessian are given by the extremal eigenvalues of the full Hessian plus a term proportional to the ratio of the \emph{Hessian variance} to the batch size. We verify this empirically for the VGG-$16$ network \citep{simonyan2014very} on the CIFAR-$100$ dataset;
\begin{Correction}
\item By a natural extension of our framework we can (and experimentally do) investigate the nature of the Hessian under the data generating distribution, which is a natural object when considering the true risk surface and generalisation;
\end{Correction}
\item Our rigorous theoretical results predicts initial perfect scaling, diminishing returns and stagnation when increasing the batch size of stochastic gradient descent training \citep{golmant2018computational,shallue2018measuring}. This result is crucial for understanding how to alter learning rate schedules when exploiting large batch training and data-parallelism, or when using limited GPU capacity for small or mobile devices. Whilst this result has been experimentally verified and derived previously \citep{goyal2017accurate,smith2017don}, the setting here is much more general and less restrictive than in previous work;
\begin{Correction}
\item As a consequence of our analysis of the batch Hessian, we provide a Lanczos algorithm based learning rate and momentum learner, which we show works effectively in training neural networks out of the box on a preliminary example.
\end{Correction}
\item For adaptive-gradient methods where the damping parameter is fixed to a small value (such as the Adam default settings) we derive and verify the efficacy of a square root learning rate scaling with batch size. Specifically we mean that we expect a similar performance and training stability as we increase/decrease the learning rate with the square root of the batch size increase/decrease. 
\begin{Correction}
\item We explicitly experimentally validate our proposed scaling rules, by scaling the largest learning rate which trains without divergence on the VGG-$16$ \citep{simonyan2014very} architecture, for a batch size of $128$, with no weight decay and batch normalisation. We show that alternative scaling rules break down and fail to train in the regime where they predict more aggressive scalings (larger learning rates) than our rules. 
\item We show that alternate scaling rules when they are more conservative, give sub-optimal validation errors and hence can be considered sub-optimal from a practical perspective. We relate this to the similarity of paths taken throughout the loss landscape. Where similar paths result in similar validation/test performance.
\end{Correction}
\end{itemize}
The paper is structured as follows. The relevance of our work, key contributions and relationships to prior literature is detailed in Section \ref{sec:motivation}. \begin{Correction}
Section \ref{sec:illustration} Illustrates the main result for practitioners.
Section \ref{sec:rmttheory} details the random matrix theory framework modelling the noise due to mini-batching -- it states the assumptions, lemmas and proofs. Section \ref{sec:mainresult} gives the theoretical main result. Section \ref{sec:GGNtheory} extends the framework from Section \ref{sec:rmttheory} to strictly positive-definite matrices such as the Generalised Gauss-Newton matrix, along with a theoretical and empirical investigation on the low rank approximation of the full Dataset Hessian in Section \ref{sec:lowrankapprox}. 
Section \ref{sec:experiments} provides experimental validation for the theoretical claims. We discuss why we expect similar trajectories in weight space to give similar validation curves in Section \ref{sec:testaccrmt}.
We then derive and verify as consequence of our framework a linear scaling rule for SGD in Section \ref{sec:scaling} along with a square root scaling rule for Adam in Section \ref{sec:adamlrscale} as a function of batch size. We discuss the Hesssian under the data generating distribution in Section \ref{sec:truelosssurface} and why for classification we always expect outliers in the spectra in Section \ref{sec:whywehaveoutliers}.
\end{Correction}
Finally, we conclude in Section \ref{sec:conclusion}. Several appendices provide further details as referred to in the main text.

\section{Motivation}
\label{sec:motivation}
For samples drawn independently from the training set, the stochastic gradient  $\vg_{i}(\vw) \in \mathbb{R}^{P \times 1}$ in expectation is equal to the empirical gradient $\mathbb{E}(\vg_{i}(\vw)) = \vg(\vw)$ \citep{boyd_vandenberghe_2009,nesterov2013introductory}. However, for the sample inverse Hessian $\mH^{-1}_{i}(\vw) \in \mathbb{R}^{P \times P}$, we note that $\mathbb{E}(\mH^{-1}_{i}(\vw)) \neq \mH^{-1}(\vw)$, as inversion is not a linear operation.
By the spectral theorem, every Hermitian matrix, can be represented by its spectrum $\mH(\vw) = \sum_{i}^{P}\lambda_{i}\vphi_{i}\vphi_{i}^{T}$ and hence the spectrum of $\mH_{i}(\vw)$ differs from that of $(1/N)\sum_{i=1}^{N}\mH(\vw)$ or that of $\mathbb{E}(H(\vw))$. Whilst this problem may at first seem intractable, under specific assumptions about the \emph{matrix of fluctuations}, which characterises how the Hessian of a single sample varies from that of the full dataset, we can evaluate this difference in spectrum analytically. In this paper we develop this idea with two different assumptions. We show that our theory well describes the perturbations between the batch and full data Hessians for large neural networks (VGG) with millions of parameters on regularly used datasets (CIFAR-100). We show that as consequences of our theorems, scaling rules as a function of batch size for both stochastic gradient descent and adaptive optimisers (which are different) follow naturally. \begin{Correction}
We analyse the scaling rules, which are derived from our work, on other common networks and datasets, such as Residual networks \citep{he2016deep} and ImageNet. We note that other concurrent analytical works on the Hessian have also used the VGG net as a reference network \citep{papyan2020traces}.
\end{Correction}

\subsection{Practical Applicability}
How the loss surface changes as a function of mini-batch size, is of general interest to the greater problem of understanding deep learning. In particular, in the following we detail three practical applications which we identify.
\paragraph{Second-order optimisation:}
Mini-batching is prevalent in all \citep{martens2015optimizing,dauphin2014identifying} deep learning second-order optimisation methods. 
\textcolor{black}{
Certain proofs of convergence for this class of methods explicitly require similarity between the spectra of the sub-sampled and full dataset Hessians \citep{roosta2016sub}. Hence, understanding the spectral perturbations due to mini-batching is important for some theoretical results regarding second-order methods. We note, however, that alternative proof methods \citep{bollapragada2019exact,moritz2016linearly} don't require such assumptions.
}
\paragraph{Gradient-based optimisation: }
For gradient methods on convex functions, the convergence rate, optimal and maximal learning rates are functions of the Lipschitz constant \citep{nesterov2013introductory}, which is the infimum of the eigenvalues of the Hessian in the weight manifold. Hence understanding the largest eigenvalue perturbation due to mini-batching also has direct implications for their stability and convergence. Our framework prescribes a linear scaling rule up to a threshold for stochastic gradient descent. The works in \citet{krizhevsky2014one,goyal2017accurate} also prescribe a linear scaling of the learning rate with batch size, however it is justified under the unrealistic assumption that the gradient is the same at all points in weight space.
\citet{jain2017parallelizing} show linear parallelisation and then thresholding for least squares linear regression, assuming strong convexity. Our result holds for more general losses and does not assume strong convexity. Other work which considers the effect of batch sizes on learning rate choices and various optimisation algorithms, considers a constant as opposed to evolving Hessian and relies on assumptions of co-diagonalisability of the Hessian and covariance of the gradients \citep{zhang2019algorithmic}, which is not necessary in our framework. 
\paragraph{Adaptive gradient optimisation:}For adaptive or stochastic second-order methods using small damping and small learning rates, our theory prescribes a square root scaling procedure. \citet{hoffer2017train} also prescribe a square root scaling based on the co-variance of the gradients, for stochastic gradient descent (SGD) but not for adaptive methods. Our analysis expressly shows that the ways in which SGD and adaptive-gradient methods traverse the loss surface differ and this alters the optimal learning rate scaling as we increase the batch size.
\begin{Correction}
To the best of our knowledge no work has considered the difference in learning rate scalings between adaptive and non adaptive methods. In this work we expressly show (and empirically validated) that whilst for SGD we expect a linear learning rate scaling to hold as we increase/decrease the batch size up to a threshold, for Adam with small numerical stability constant (as is typical in practice) we expect a square root scaling rule.
\end{Correction}
\subsection{Related Work}
To the best of our knowledge no prior work has theoretically or empirically compared the Hessian of the full dataset and that of a mini-batch and the consequences thereof. 
\paragraph{Previous Loss Landscape Work:} Previous works focusing on the loss landscape structure as a function of loss value \citep{choromanska2015loss,pennington2017geometry} assume normality and independence of the inputs and weights and often even more assumptions, such as i.i.d. Hessian elements and free addition \citep{pennington2017geometry} which means that we can simply add the spectra of two matrices. Removing these assumptions is considered a major open problem \citep{choromanska2015open}, addressed in the deep linear case with squared loss \citep{kawaguchi2016deep}. Furthermore, the Hessian spectra are not compatible with outliers, extensively observed in practice \citep{sagun2016eigenvalues,sagun2017empirical,ghorbani2019investigation,papyan2018full}. We address both concerns, by considering a field dependence structure \citep{gotze2012semicircle}, non-identical element variances and modelling the outliers explicitly as low-rank perturbations \citep{benaych2011eigenvalues}. This may be of more general use to the community outside of our applications. 
\begin{Correction}
\paragraph{Similar Scaling Rules:}\citet{smith2017bayesian} derived optimal learning rate scalings, which were found to be linear by considering the scale of gradient noise and (assuming independent draws) the central limit theorem. This work was further extended (and experimentally verified) in \citet{smith2017don}. This raises the question \emph{Why should we consider the impact of curvature as opposed to gradient variance?} One simple pedagogical reason includes a holistic understanding in the limit of full-dataset training. In \citet{smith2017bayesian} the noise scale is given by a factor $\frac{N-B}{NB}$, where $N,B$ denote the dataset size and batch size respectively. In the case where $N=B$, even when there is no noise, learning rate choices are dictated by the local curvature. This is already well known in the stochastic (convex and otherwise) optimisation literature \citep{rakhlin2011making,shamir2013stochastic,lacoste2012simpler,harvey2019tight}, where proofs typically set a learning rate of $\frac{1}{\lambda t}$, where $\lambda,t$ denotes the Lipshitz constant (which is an upper bound on the local Hessian maximum eigenvalue) and the iteration number respectively, showing the importance of considering curvature. As a consequence of this, we implement and present an online learning rate and momentum learner which uses the local sub-sampled curvature estimate to estimate appropriate values for these two coefficients.
Another practical consideration, to the best of our knowledge novel in this paper, is the difference in learning rate scaling for adaptive methods compared to that of gradient descent. This forms a key contribution and motivation for our framework. Because our fine-grained analysis allows for an understanding of what happens to different regions of the spectrum when sub-sampling, we predict a new phenomenon unexplored in previous literature. Whilst \citet{smith2017don} argue that a linear scaling rate can also be used \footnote{Figure 4b page 5}, we note from the corresponding figure in their text that, before the final sharp learning rate drop, the test accuracy for Adam diverges significantly as the learning rate drops and batch size increases. This implies that the linear scaling rate does not hold and hence warrants further investigation and in the authors opinion a novel framework. In this paper we show how a curvature based approach identifies that, for adaptive methods, a different regime holds compared to that of SGD. We experimentally validate this observation. 
As a further potential practical use case, which could form the basis of future work, our framework naturally extends to stochastic second-order optimisation methods \citep{nocedal2006numerical} such as KFAC \citep{martens2015optimizing}. These approximate the eigenvalue/eigenvectors pairs of the batch Hessian. Hence, an understanding of how the eigenvalue/eigenvector estimations vary as a function of batch size becomes useful. 
\paragraph{Hessian Analysis of DNNs:}\citet{papyan2020traces} provides an extensive analysis of Deep Neural Network Hessians, developing an attribution strategy to various elements of the observed spectra which they empirically verify.  Specifically this work builds upon that of \citet{papyan2018full}, which shows that the spectral outliers are attributable to the covariance of gradient class means and demonstrates that a mini-bulk, separated from the main bulk and outliers, is attributable to the cross-class gradient covariance and, further, that the main bulk is attributable to the within-class covariance. The paper demonstrates this experimentally by leveraging linear algebraic tools to plot the spectrum of $\log \mH$ and by removing the components due to the within and cross class covariance from the spectrum. The paper also shows that increasing separation of the spectral outliers from the bulk distribution occurs with network depth and that. Furthermore they show for softmax regression on a Gaussian mixture dataset, that separation of the spectral outliers from the mini-bulk and separation of the mini-bulk from the bulk can be analytically related to generalistaion. 
The work also provides an alternative matrix to KFAC \cite{martens2015optimizing} for second order optimisation called CFAC, which is shown to be a better approximation to the Generalised Gauss Newton matrix. \citet{ghorbani2019investigation} re-introduce the Lanczos \citep{meurant2006lanczos} algorithm to the machine learning community and validate its accuracy to double precision using only a limited number ($m=90$) of Hessian vector products. They use this tool to investigate the Hessian spectral density on Imagenet and conclude that there remains significant negative spectral mass at the end of training and that the optimisation landscape seems to be smoother without residual connections. They also discuss spectral outlier suppression due to batch normalisation and argue that increasing the gradient contribution to flatter directions is inherently beneficial to the optimisation process. 
Whilst both of these works similarly focus on the Hessian and use similar tools to evaluate the spectrum, our principal focus is on how the sub-sampled batch Hessian deviates from the empirical (and or population) Hessian and the impacts this has on network training and hence the focus of the work, theoretical basis and approach are very different.
\end{Correction}
Some of the ideas in this work are inspired by earlier unfinished work on the true loss surface \citep{granzioldeep2018}. 

\begin{Correction}

\section{Illustration of the Key Result}
\label{sec:illustration}
We illustrate our key result (formalised in Theorems \ref{theorem:mainresult} and \ref{theorem:batchtheoremgnn} in Sections \ref{sec:rmttheory} \& \ref{sec:GGNtheory}) in Figure \ref{fig:stylisedfact2}.
\begin{figure}[h!]
	\centering
	\begin{subfigure}{0.23\linewidth}
		\includegraphics[trim={0cm 0cm 0cm 0cm},clip, width=1\textwidth]{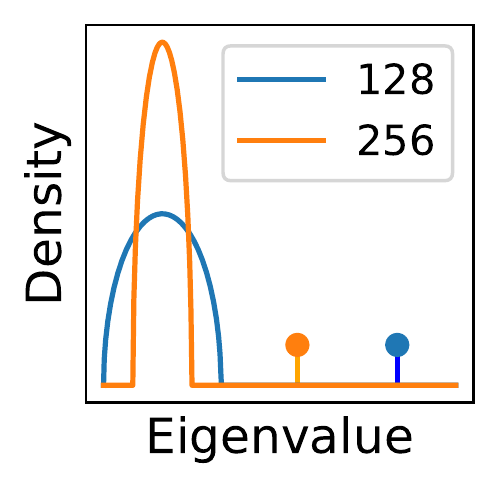}		
		\caption{Wigner Linear}
		\label{subfig:sepoutliersmallbatch2}
	\end{subfigure}
	\begin{subfigure}{0.23\linewidth}
		\includegraphics[trim={0cm 0cm 0cm 0cm},clip, width=1\textwidth]{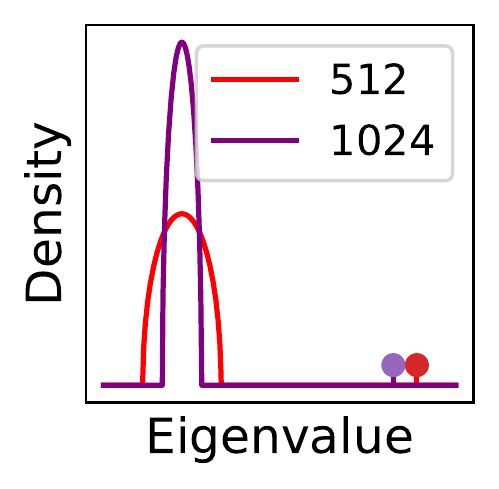}	
		\caption{Wigner Threshold}
		\label{subfig:sepoutlierbigbatch2}
	\end{subfigure}
	\begin{subfigure}{0.23\linewidth}
		\includegraphics[trim={0cm 0cm 0cm 0cm},clip, width=1\textwidth]{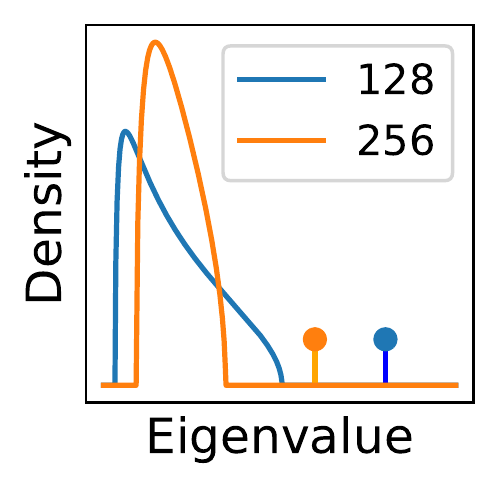}	
		\caption{MP Linear}
		\label{subfig:sepoutliersmallbatch}
	\end{subfigure}
	\begin{subfigure}{0.23\linewidth}
		\includegraphics[trim={0cm 0cm 0cm 0cm},clip, width=1\textwidth]{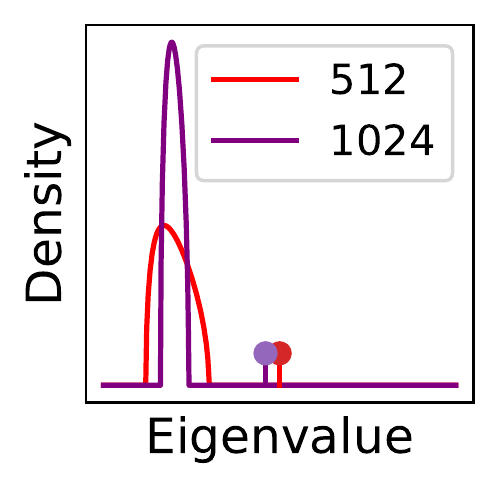}		
		\caption{MP Threshold}
		\label{subfig:sepoutlierbigbatch}
	\end{subfigure}
	\caption{\textbf{Variation of the spectral norm with batch size. Spectral norm decreases linearly until a threshold with batch size increase for both the Wigner and Machenko-Pastur noise models.} The continuous region (bulk) corresponds to the fluctuation matrix induced by mini-batching, shown as a Wigner semicircle (a \& b) or Marchenko-Pastur (MP - c \& d), whose width depends on the square root of the batch size). The largest eigenvalue of the batch Hessian is shown as a single peak, which decreases in magnitude as the batch size increases.}
	\label{fig:stylisedfact2}
\end{figure}
If the largest Hessian eigenvalue is well separated from the fluctuation matrix (continuous spectral density), as shown in Figures \ref{subfig:sepoutliersmallbatch2} \& \ref{subfig:sepoutliersmallbatch}, then increasing the batch size, which reduces the spectral width of the fluctuation matrix (which in turn reduces with the square root of the batch size), will have an approximately linear effect in reducing the spectral norm. This will hold up to a threshold, shown in Figures \ref{subfig:sepoutlierbigbatch2} \& \ref{subfig:sepoutlierbigbatch}, after which the spectral norm no longer appreciably changes in size. This is because the perturbation due to minibatch sampling no longer dominates the magnitude of the eigenvalue from the full dataset Hessian. We discuss the prevalence and origin of spectral outliers in Deep Neural Network spectra in Section \ref{sec:whywehaveoutliers}.
\end{Correction}

\section{Random matrix theoretic approach to the Batch Hessian}
\label{sec:rmttheory}
For an input, output pair $[\vx,\vy] \in [\mathbb{R}^{d_{x}},\mathbb{R}^{d_{y}}]$ and a given prediction function $h(\cdot;\cdot):\mathbb{R}^{d_{x}}\times \mathbb{R}^{P} \rightarrow \mathbb{R}^{d_{y}}$, we consider the family of prediction functions parameterised by a weight vector $\vw$, i.e., $\mathcal{H}:= \{h(\cdot;\vw):\vw \in \mathbb{R}^{P} \}$ with a given loss function $\ell(h(\vx;\vw), \vy): \mathbb{R}^{d_{y}} \times \mathbb{R}^{d_{y}} \rightarrow \mathbb{R}$. 
In conjunction with statistical learning theory terminology, we denote the loss over our data generating distribution $\psi(\vx,\vy)$, as the \emph{true risk}.
\begin{equation}
R_{true}(\vw) = \int\ell(h(\vx;\vw),\vy)d\psi(\vx,\vy),   
\end{equation}
with corresponding gradient $\vg_{true}(\vw)= \nabla R_{true}(\vw)$  and Hessian $\mH_{true}(\vw) = \nabla^{2} R_{true}(\vw) \in \mathbb{R}^{P\times P}$.
Given a dataset of size $N$, we only have access to the \emph{empirical risk} 
\begin{equation}
R_{emp}(\vw) = \sum_{i=1}^{N}\frac{1}{N}\ell(h(\vx_{i};\vw),\vy_{i}),    
\end{equation}
empirical gradient $\vg_{emp}(\vw) = \nabla R_{emp}(\vw)$ and empirical Hessian $\mH_{emp}(\vw) = \nabla^{2} R_{emp}(\vw).$
To further reduce computation cost, often only the batch risk
\begin{equation}
R_{batch}(\vw) = \frac{1}{B}\sum_{i=1}^{B}\ell(h(\vx_{i};\vw),\vy_{i}),
\end{equation}
(where $B \ll N$ \textcolor{black}{belongs to the batch})
and the gradients $\vg_{batch}(\vw)$, Hessians $\mH_{batch}(\vw)$ thereof are accessed. The Hessian describes the curvature at that point in weight space $\vw$ and hence the risk surface can be studied through the Hessian. 
\subsection{Properties of the fluctuation matrix}
\label{subsec:noise}
We write the stochastic batch Hessian as the deterministic empirical Hessian plus a perturbation due to the sampling noise. 
\begin{equation}
\mH_{batch}(\vw) = \mH_{emp}(\vw) + \meps(\vw)\footnote{Note that although we could write $\mH_{emp}(\vw) = \mH_{batch}(\vw) - \meps(\vw)$, this treatment is not symmetric as $\mH_{batch}(\vw)$ is dependent on $\meps(\vw)$, whereas $\mH_{emp}(\vw)$ is not.}
\end{equation}
Rewriting the fluctuation matrix as
$\meps(\vw) \equiv \mH_{batch}(\vw) - \mH_{emp}(\vw)$, we can infer 

\begin{equation}
\label{eq:noisematrix}
\begin{aligned}
	& \meps(\vw) = \bigg(\frac{1}{B}-\frac{1}{N}\bigg)\sum_{j=1}^{B}\nabla^{2} \ell(\vx_{j},\vw;\vy_{j}) - \frac{1}{N}\sum_{i=B+1}^{N}\nabla^{2} \ell(\vx_{i},\vw;\vy_{i}) \\
	& \text{thus   } \mathbb{E}(\meps(\vw)_{j,k}) = 0 \text{   and   } \thinspace \mathbb{E}(\meps(\vw)_{j,k})^{2} = \bigg(\frac{1}{B}-\frac{1}{N}\bigg)\mathrm{Var}[\nabla^{2} \ell(\vx,\vw;\vy)_{j,k}]. \\ 
\end{aligned}
\end{equation}
Where $B$ is the batch size and $N$ the total dataset size 
\begin{Correctionunsure}
and we use the fact that for each sample a given Hessian element has a common mean and variance. Note that we sample without replacement. This avoids the pathological case where we sample the same element $B$ times and hence have no variance reduction. Sampling without replacement is typical in deep learning and hence the relevant case for our investigations. We implicitly assume that each sample can be considered an independent draw from the data generating distribution $\psi(\vx,\vy)$. Without this assumption the variance could scale differently. The expectation is taken with respect to $\psi(\vx,\vy)$.
\end{Correctionunsure}
In order for the variance in Equation \ref{eq:noisematrix} to exist, the elements of $\nabla^{2} \ell(\vw,\vw;\vy)$ must obey sufficient moment conditions. This can either be assumed as a technical condition, or alternatively derived under the more familiar condition of $L$-Lipschitz continuity, as shown with the following Lemma
\begin{lemma}
\label{lemma:boundedhessianelements}
For a Lipschitz-continuous empirical risk gradient and almost everywhere twice differentiable loss function $\ell(h(\vx;\vw),\vy)$, the elements of the fluctuation matrix $\meps(\vw)_{j,k}$ are strictly bounded in the range $- \sqrt{P}L \leq \meps(\vw)_{j,k}\leq \sqrt{P}L$. Where $P$ is the number of model parameters and $L$ is \textcolor{black}{the smoothness constant}.
\end{lemma}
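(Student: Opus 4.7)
The plan is to translate the Lipschitz hypothesis into a spectral-norm bound on the underlying Hessians and then relax this to a uniform element-wise bound on $\meps(\vw)$. I would first invoke the standard characterisation of $L$-smoothness: for a twice-differentiable function whose gradient is $L$-Lipschitz, the Hessian (wherever it exists) satisfies $\|\nabla^{2}R(\vw)\|_{2}\leq L$. Reading the hypothesis as applying uniformly to each per-sample loss $\ell(h(\vx_i;\vw),\vy_i)$ (the standard interpretation in the stochastic-optimisation literature), each per-sample Hessian has spectral norm bounded by $L$. Applying the triangle inequality in the operator norm to the explicit expression for $\meps(\vw)$ given in Equation~\ref{eq:noisematrix}, the weighted sums collapse to yield $\|\meps(\vw)\|_{2}\leq L$ after absorbing the $(1-B/N)$ factors into the constant.

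I would then convert this spectral bound into an element-wise bound by routing through the Frobenius norm. Since $\meps(\vw)\in\sR^{P\times P}$ is symmetric,
\begin{equation*}
\|\meps(\vw)\|_{F}^{2}=\sum_{i=1}^{P}\sigma_{i}(\meps(\vw))^{2}\leq P\,\|\meps(\vw)\|_{2}^{2}\leq P L^{2},
\end{equation*}
so $\|\meps(\vw)\|_{F}\leq\sqrt{P}\,L$. The claim then follows from the elementary inequality $|\meps(\vw)_{j,k}|\leq\|\meps(\vw)\|_{F}$, valid for any matrix entry, which supplies the stated two-sided bound $-\sqrt{P}L\leq \meps(\vw)_{j,k}\leq \sqrt{P}L$.

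The main subtlety, and where I would take most care, is the passage from the stated hypothesis (Lipschitz continuity of the empirical risk gradient) to a spectral-norm bound on each per-sample Hessian, because averaging across samples can strictly reduce Lipschitz constants. The cleanest resolution is to interpret the hypothesis as $L$-smoothness of each per-sample loss, which is standard in the stochastic-optimisation literature and is exactly what makes the operator-norm triangle inequality step go through. A tighter element-wise bound $|\meps(\vw)_{j,k}|\leq\|\meps(\vw)\|_{2}\leq L$ is in fact directly available without invoking the Frobenius norm, but the Frobenius route is precisely what produces the explicit $\sqrt{P}L$ scaling stated in the lemma and makes transparent where the dimensional factor enters.
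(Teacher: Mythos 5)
Your proof is correct and follows essentially the same route as the paper's: bound the spectral norm of $\meps(\vw)$ by $L$ via the Lipschitz hypothesis, then pass through the Frobenius norm ($\|\meps\|_F^2=\sum_i\lambda_i^2\leq PL^2$) to obtain the element-wise bound $|\meps(\vw)_{j,k}|\leq\sqrt{P}L$. Your explicit flagging of the gap between ``the empirical risk gradient is $L$-Lipschitz'' and the per-sample (or batch-level) smoothness actually needed for the operator-norm step is a point the paper's own proof glosses over, and your observation that the tighter bound $|\meps(\vw)_{j,k}|\leq L$ is already available is accurate.
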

\begin{proof}
As the gradient of the empirical risk is $L$ Lipschitz  \textcolor{black}{continuous and} the empirical risk 
\textcolor{black}{is the sum over the samples}, the gradient of the batch risk is also Lipschitz continuous. As the difference of two Lipschitz functions is also Lipschitz, by the fundamental theorem of calculus and the definition of Lipschitz continuity the largest eigenvalue $\lambda_{max}$ of the fluctuation matrix $\meps(\vw)$ must be smaller than $L$. Hence using the Frobenius norm we can upper bound the matrix elements of $\meps(\vw)$
\begin{equation}
	\begin{aligned}
		& \text{Tr}(\meps(\vw)^{2})  = \sum_{j,k=1}^{P}\meps(\vw)_{j,k}^{2}
		= \meps(\vw)_{j=j',k=k'}^{2} + \sum_{j\neq j',k\neq k'}^{P}\meps(\vw)_{j,k}^{2} = \sum_{i=1}^{P}\lambda_{i}^{2} \\
		& \text{thus   } \meps(\vw)_{j=j',k=k'}^{2} \leq \sum_{i=1}^{P}\lambda_{i}^{2} \leq PL^{2} \thinspace \thinspace 
		\text{   and   }  -\sqrt{P}L \leq \meps(\vw)_{j=j',k=k'} \leq \sqrt{P}L .\\
	\end{aligned}
\end{equation}
\end{proof}
As the domain of the Hessian elements under the data generating distribution is bounded, the moments of Equation \ref{eq:noisematrix} are bounded and hence the variance exists. We can even go a step further with the following extra lemma.
\begin{lemma}
\label{lemma:normalelements}
For independent samples drawn from the data generating distribution and an $L$-Lipschitz loss $\ell$ the difference between the empirical Hessian and Batch Hessian converges element-wise to a zero mean, normal random variable with variance $\propto \frac{1}{B}-\frac{1}{N}$ for large $B,N$.
\end{lemma}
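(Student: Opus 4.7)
The plan is to exploit the i.i.d.\ assumption on the samples to decompose $\meps(\vw)_{j,k}$ into a weighted difference of two independent sample means, each of which is asymptotically normal by the classical Lindeberg--L\'evy CLT. Concretely, for each index pair $(j,k)$ define $Y_i := \nabla^2 \ell(\vx_i,\vw;\vy_i)_{j,k}$ for $i=1,\dots,N$. By the independence hypothesis these are i.i.d.\ with common mean $\mu_{j,k}$ and variance $\sigma_{j,k}^2$, and by the argument in the proof of Lemma~\ref{lemma:boundedhessianelements} (applied to a single sample rather than to the full fluctuation) we have $|Y_i|\le \sqrt{P}L$ almost surely, so all moments exist.

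The first step is an algebraic rewriting that separates the batch and its complement. Let $\bar Y_B := (1/B)\sum_{i=1}^B Y_i$ and $\bar Y_{N-B} := (1/(N-B))\sum_{i=B+1}^N Y_i$; these two averages are independent since they are taken over disjoint subsets of independent variables. Using the identity $\bar Y_N = (B/N)\bar Y_B + ((N-B)/N)\bar Y_{N-B}$, the expression for $\meps(\vw)_{j,k}$ from Equation~\ref{eq:noisematrix} collapses to
\[
\meps(\vw)_{j,k} \;=\; \bar Y_B - \bar Y_N \;=\; \frac{N-B}{N}\bigl(\bar Y_B - \bar Y_{N-B}\bigr).
\]

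The second step is to apply the standard i.i.d.\ CLT separately to each sample mean: as $B\to\infty$ and $N-B\to\infty$ one has $\sqrt{B}(\bar Y_B - \mu_{j,k}) \to \mathcal N(0,\sigma_{j,k}^2)$ and $\sqrt{N-B}(\bar Y_{N-B} - \mu_{j,k}) \to \mathcal N(0,\sigma_{j,k}^2)$ in distribution, and by independence their difference is also asymptotically normal with zero mean. The scaling factor $(N-B)/N$ is deterministic, so normality is preserved; a short calculation of $((N-B)/N)^2 \sigma_{j,k}^2(1/B + 1/(N-B))$ collapses to $\sigma_{j,k}^2(1/B - 1/N)$, which recovers both the claimed proportionality and the second-moment formula already derived in Equation~\ref{eq:noisematrix}.

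The only real subtlety, and the step I would think most carefully about, is clarifying the asymptotic regime: the CLT requires both $B$ and $N-B$ to tend to infinity, so ``for large $B,N$'' should be interpreted as $B\to\infty$ with $B/N$ bounded away from $1$; the edge regime $B\simeq N$ is degenerate since the variance vanishes and no distributional statement is needed there. The almost-sure boundedness granted by Lemma~\ref{lemma:boundedhessianelements} makes any Lindeberg-type condition automatic, so no moment assumptions beyond those already present in the paper are required, and the main obstacle is really just bookkeeping the scaling factors and recording that independence of $\bar Y_B$ and $\bar Y_{N-B}$ promotes the marginal CLTs to joint bivariate normality.
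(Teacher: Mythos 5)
Your proposal is correct and follows essentially the same route as the paper: boundedness of the per-sample Hessian elements (from Lemma~\ref{lemma:boundedhessianelements}) guarantees the moments exist, and the central limit theorem applied under the independence assumption yields asymptotic normality with variance $\sigma_{j,k}^{2}\left(\frac{1}{B}-\frac{1}{N}\right)$. The paper states this in a single line citing the CLT, whereas you make explicit the decomposition $\meps(\vw)_{j,k}=\frac{N-B}{N}\left(\bar Y_{B}-\bar Y_{N-B}\right)$ into independent sample means and the variance bookkeeping, which is a useful elaboration of the same argument rather than a different one.
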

\begin{proof}
By Lemma \ref{lemma:boundedhessianelements}, the Hessian elements are bounded, hence the moments are bounded and using independence of samples and the central limit theorem \citep{stein1972}
\begin{Correction}
	\begin{equation}
		(\frac{1}{B}-\frac{1}{N})^{-1/2}[\nabla^{2} R_{emp}(\vw)-\nabla^{2} R_{batch}(\vw)]_{jk} \xrightarrow[a.s]{} \mathcal{N}(0,\sigma_{jk}^{2})    
	\end{equation}
\end{Correction}
\end{proof}
\subsection{The fluctuation matrix spectrum converges to the semi-circle law}
To derive analytic results, we employ the Kolmogorov limit \citep{bun2017cleaning}, where $P,B,N \rightarrow \infty$ but $P(\frac{1}{B}-\frac{1}{N}) = q > 0$. 

\textcolor{black}{
We preserve the shape factor $q$ to keep our results consistent with the theoretical and applied random matrix theory literature \citep{baik2006eigenvalues,bun2016rotational,bun2017cleaning}. Mathematically, the limit to infinity allows for the convergence of stochastic quantities into deterministic ones, for which we can derive exact expressions. We discuss finite size corrections, both experimentally and state the corresponding theoretical corrections, in Section \ref{subsec:stochasticrmt}. Note that for typical deep learning the number of parameters is in the millions or billions, and the dataset size is also often in the tens or thousands or millions of examples. State of the art training also utilises batch sizes in the thousands \citep{goyal2017accurate}. We experimentally demonstrate in our experiments that whilst for smaller batch sizes e.g $B=128$, stochasticity is important, we find that the mean predictions given by our framework are still accurate and useful.
}

By Lemma \ref{lemma:boundedhessianelements}, we have $\mathbb{E}(\meps(\vw)_{j,k}) = 0$ and $\mathbb{E}(\meps(\vw)^{2}_{j,k}) = \sigma^{2}_{j,k}$. To further account for dependence beyond the symmetry of the fluctuation matrix elements, we introduce the $\sigma$-algebras
\begin{equation}
\begin{aligned}
	& 	    \mathfrak{F}^{(i,j)} :=  \sigma \{ \meps(\vw)_{kl}:1\leq k \leq l \leq P, (k,l) \neq (i,j) \}, ~~~ q \leq i \leq j \leq P \\
\end{aligned}
\end{equation}
We can now state the following Theorem which is based on a general result from \citet{gotze2012semicircle}:
\begin{theorem}
\label{theorem:beastingassumptions}
Under the conditions of Lemmas \ref{lemma:boundedhessianelements} and \ref{lemma:normalelements}, where $\meps(\vw) \equiv \mH_{batch}(\vw)-\mH_{emp}(\vw) $ along with the following technical conditions:
\begin{enumerate}[label=(\roman*)]
	\item $\frac{1}{P^{2}}\sum_{i,j=1}^{P}\mathbb{E}|\mathbb{E}(\meps(\vw)_{i,j}^{2}|\mathfrak{F}^{i,j})-\sigma^{2}_{i,j}| \rightarrow 0$,
	\item $\frac{1}{P}\sum_{i=1}^{P}|\frac{1}{P}\sum_{j=1}^{P}\sigma_{i,j}^{2}-\sigma_{\epsilon}^{2}| \rightarrow 0$ 
	\item $\max_{1\leq i \leq P} \frac{1}{P}\sum_{j=1}^{P}\sigma_{i,j}^{2} \leq C$ 
\end{enumerate}
when $P \rightarrow \infty$, the limiting spectral density $p(\lambda)$ of $\meps(\vw) \in \mathbb{R}^{P\times P}$ satisfies the semicircle law $p(\lambda) = \frac{\sqrt{4\sigma_{\epsilon}^{2}-\lambda^{2}}}{2\pi\sigma_{\epsilon}^{2}}$. Where $\mathbb{E}(\meps(\vw)_{i,j}^{2}|\mathfrak{F}^{i,j})$ denotes the expectation conditioned on the sigma algebra, which is different to the unconditional expectation $\mathbb{E}(\meps(\vw)_{i,j}^{2}|\mathfrak{F}^{i,j}) \neq \mathbb{E}(\meps(\vw)_{i,j}^{2}) = \sigma^{2}_{i,j}$. 

\end{theorem}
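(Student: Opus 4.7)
The plan is to reduce the statement to the Götze--Tikhomirov semicircle theorem for symmetric matrices with weakly dependent entries. First I would verify the structural inputs. Symmetry of $\meps(\vw)$ is inherited from that of each per-sample Hessian $\nabla^{2}\ell(\vx_i,\vw;\vy_i)$; zero mean $\mathbb{E}(\meps(\vw)_{jk})=0$ is already recorded in Equation (\ref{eq:noisematrix}); and Lemma \ref{lemma:boundedhessianelements} provides a deterministic entrywise bound, which is more than enough for any Lindeberg-type hypothesis. Crucially, in the Kolmogorov regime $P(1/B-1/N)=q>0$, each variance $\sigma_{jk}^{2}$ is of order $q/P$ times the variance of an underlying Hessian element, i.e.\ the correct $O(1/P)$ scale for a $P\times P$ symmetric matrix whose limiting spectrum has compact support.

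Next, I would observe that conditions (i)--(iii) are precisely the Götze--Tikhomirov hypotheses, adapted to our filtration $\mathfrak{F}^{(i,j)}$: (i) is an $L^{1}$ conditional-variance closeness that substitutes for independence, (ii) is the row-averaging condition that pins down a common effective variance $\sigma_{\epsilon}^{2}$, and (iii) is a uniform row-variance bound. With these in place, I would invoke their theorem: the argument proceeds via the Stieltjes transform
\begin{equation*}
m_{P}(z)\;=\;\tfrac{1}{P}\Tr\!\left(\meps(\vw)-zI\right)^{-1},\qquad z\in\mathbb{C}^{+},
\end{equation*}
using a resolvent / Schur-complement expansion together with a self-consistency step to show $m_{P}(z)\to m(z)$ in probability, where $m(z)$ is the unique upper-half-plane solution of the fixed point $m(z)=-1/(z+\sigma_{\epsilon}^{2}m(z))$. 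This $m(z)$ is the Stieltjes transform of the semicircle density $p(\lambda)=\sqrt{4\sigma_{\epsilon}^{2}-\lambda^{2}}/(2\pi\sigma_{\epsilon}^{2})$, and weak convergence of the empirical spectral measure of $\meps(\vw)$ then follows by Stieltjes--Perron inversion.

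The main obstacle in this strategy is condition (i). A priori the entries of $\meps(\vw)$ are not independent: they are all assembled from the same random mini-batch, and the Hessian entries of a single sample may be strongly correlated. Condition (i) asks only that, on average over indices, conditioning on every other entry leaves the entrywise variance essentially unchanged --- a martingale-difference-style weakening of independence. Genuinely verifying it in the deep-learning setting would require structural information about how Hessian entries covary across samples, and is the technical core of any attempt to establish the theorem without (i) as an explicit assumption. Having elevated (i)--(iii) to hypotheses of the theorem, the proof itself reduces to a direct appeal to the general result of \citet{gotze2012semicircle}.
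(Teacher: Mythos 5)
Your proposal takes essentially the same route as the paper: verify a Lindeberg-type condition (which, as you note, follows immediately from the deterministic entrywise bound of Lemma \ref{lemma:boundedhessianelements}), treat (i)--(iii) as the remaining hypotheses, and conclude by direct appeal to \citet{gotze2012semicircle}, modulo the trivial rescaling from unit variance to $\sigma_{\epsilon}^{2}$. The one hypothesis of the cited theorem that you check only in its unconditional form is the centering condition: Götze--Tikhomirov require the martingale-type condition $\mathbb{E}(\meps(\vw)_{i,j}\,|\,\mathfrak{F}^{(i,j)})=0$, not merely $\mathbb{E}(\meps(\vw)_{i,j})=0$, and the paper explicitly (if briefly) asserts this conditional version via Lemma \ref{lemma:normalelements}; you should add that verification alongside your (correct) observation that condition (i) is the substantive dependence assumption being elevated to a hypothesis.
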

We note that under the assumption of independence between all the elements of $\meps(\vw)$ we would have obtained the same result, as long as conditions $ii)$ and $iii)$ were obeyed. So in simple words, condition $9i)$ merely states that the dependence between the elements cannot be too large. For example completely dependent elements have a second moment expectation that scales as $P^{2}$ and hence condition $(i)$ cannot be satisfied. Condition $(ii)$ merely states that there cannot be too much variation in the variances per element and condition $(iii)$ that the variances are bounded. \textcolor{black}{Note that $\meps(\vw)$ is a function of the current iterate $\vw$ and hence its spectrum depends on the Hessian at that point.}

\begin{proof}
Lindenberg's ratio is defined as $L_{P}(\tau) := \frac{1}{P^{2}} \sum_{i,j=1}^{P}\mathbb{E}|\meps(\vw)_{i,j}|^{2}\mathbbm{1}(|\meps(\vw)_{i,j}|\geq \tau \sqrt{P})$.
By Lemma \ref{lemma:normalelements}, the tails of the normal distribution decay sufficiently rapidly such that $L_{P}(\tau) \rightarrow 0$ for any $\tau>0$ in the $P \rightarrow \infty$ limit. Alternatively, using the Frobenius identity and Lipschitz continuity $\sum_{i,j=1}^{P}\mathbb{E}|\meps(\vw)_{i,j}|^{2}\mathbbm{1}(|\meps(\vw)_{i,j}|\geq \tau \sqrt{P}) \leq \sum_{i,j}^{P}\meps(\vw)_{i,j}^{2} = \sum_{i}^{P}\lambda_{i}^{2} \leq PL^{2}$, $L_{P}(\tau) \rightarrow 0$ for any $\tau>0$. 
By Lemma \ref{lemma:normalelements} we also have $\mathbb{E}(\meps(\vw)_{i,j}|\mathfrak{F}^{i,j})=0$. Hence along with conditions $(i), (ii), (iii)$ the matrix $\meps(\vw)$ satisfies the conditions in \citet{gotze2012semicircle} and the and the limiting spectral density $p(\lambda)$ of $\meps(\vw) \in \mathbb{R}^{P\times P}$ converges to the semicircle law $p(\lambda) = \frac{\sqrt{4\sigma_{\epsilon}^{2}-\lambda^{2}}}{2\pi\sigma_{\epsilon}^{2}}$ 
\citep{gotze2012semicircle}. 
\citet{gotze2012semicircle} use the condition $\frac{1}{P}\sum_{i=1}^{P}|\frac{1}{P}\sum_{j=1}^{P}\sigma_{i,j}^{2}-1| \rightarrow 0$, however this simply introduces a simple scaling factor, which is accounted for in condition $ii)$ and the corresponding variance per element of the limiting semi-circle.
\end{proof}
\section{Main Result}
\label{sec:mainresult}
Having shown that the limiting spectral density of the fluctuations matrix converges to the semi-circle, we are now in a position to present the main result of this paper.
\begin{theorem}
\label{theorem:mainresult}
Under the assumption that $\mH_{emp}$ is of low-rank $r \ll P$, 
the extremal eigenvalues $[\lambda'_{1},\lambda'_{P}]$ of the matrix sum $\mH_{batch}(\vw) = \mH_{emp}(\vw) + \meps(\vw)$, where $\lambda'_{1}\geq\lambda'_{2}...\geq \lambda'_{P}$ and $\meps(\vw)$ is defined in Section~\ref{subsec:noise} and obeys the conditions set out in Theorem~\ref{theorem:beastingassumptions}, are given by
\begin{equation}
	\label{eq:spectralbroadeningwignernoisebatch}
	\lambda'_{1} = \left\{\begin{array}{lr}
		\lambda_{1} + \frac{P}{\mathfrak{b}}\frac{\sigma_{\epsilon}^{2}}{\lambda_{1}}, & \text{if } \lambda_{1} > \sqrt{\frac{P}{\mathfrak{b}}}\sigma_{\epsilon}\\
		2 \sqrt{\frac{P}{\mathfrak{b}}}\sigma_{\epsilon}, & \text{otherwise } \\
	\end{array}\right\} \thinspace 
	, \thinspace
	\lambda'_{P} = \left\{\begin{array}{lr}
		\lambda_{P} + \frac{P}{\mathfrak{b}}\frac{\sigma_{\epsilon}^{2}}{\lambda_{P}}, & \text{if } \lambda_{P} < -\sqrt{\frac{P}{\mathfrak{b}}}\sigma_{\epsilon}\\
		-2 \sqrt{\frac{P}{\mathfrak{b}}}\sigma_{\epsilon}, & \text{otherwise } \\
	\end{array}\right\}.
\end{equation}
where $[\lambda_{1},\lambda_{P}]$ are the extremal eigenvalues  of $\mH_{emp}(\vw)$, $\mathfrak{b} = B/(1-B/N)$ \textcolor{black}{occurs due to the random sub-sampling} and $B$ is the batch-size.\footnote{Note that the factor $\mathfrak{b}= B/(1-B/N)$ has appeared before in \citep{jastrzkebski2018relation,jain2017parallelizing}.} Recall that $\sigma_{\epsilon}$ is defined in Theorem~\ref{theorem:beastingassumptions}, through the limiting spectral density $p(\lambda)$ of $\meps(\vw)$. \textcolor{black}{This result holds in the $P,B,N \rightarrow \infty$ limit, where $P/\mathfrak{b}$ remains finite.}
\hspace{-10pt}
\end{theorem}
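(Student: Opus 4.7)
The plan is to recognise $\mH_{batch}(\vw) = \mH_{emp}(\vw) + \meps(\vw)$ as a finite-rank (spiked) perturbation of a Wigner-type random matrix and to invoke the Benaych-Georges--Nadakuditi machinery for outlier eigenvalues of deformed ensembles. By hypothesis $\mH_{emp}$ has rank $r$ that stays finite as $P\to\infty$, while by Theorem~\ref{theorem:beastingassumptions} the bulk spectrum of $\meps(\vw)$ converges to a semicircle supported on $[-2R,2R]$ with effective radius $R := \sqrt{P/\mathfrak{b}}\,\sigma_\epsilon$, the factor $\sqrt{P}$ coming from the standard Wigner scaling applied to per-element variance $\sigma_\epsilon^2/\mathfrak{b}$. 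This is exactly the setup of a finite-rank deformation of a Wigner ensemble.

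First I would cast the outlier equation in Stieltjes-transform form. The limiting Stieltjes transform of the semicircle with radius $2R$ is $G(z) = (z-\sqrt{z^2-4R^2})/(2R^2)$ on $\mathbb{C}\setminus[-2R,2R]$. The Benaych-Georges--Nadakuditi master equation then states that each non-zero eigenvalue $\lambda$ of $\mH_{emp}$ creates an outlier $\lambda'$ of $\mH_{batch}$ at the unique solution in $(2R,\infty)$ of $G(\lambda') = 1/\lambda$, provided such a solution exists. Inverting gives $\lambda' = \lambda + R^2/\lambda$, with the solvability condition $\lambda > R$; otherwise the would-be outlier is absorbed at the bulk edge $2R$. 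Substituting $R^2 = (P/\mathfrak{b})\sigma_\epsilon^2$ recovers the first line of the stated formula for $\lambda_1'$. The statement for $\lambda_P'$ follows by applying the same argument to $-\mH_{batch}$, using the symmetry of the semicircle.

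Next I would verify the prerequisites of the spiked theorem. It requires (a) the noise matrix obeys the semicircle law (Theorem~\ref{theorem:beastingassumptions}), (b) the perturbation has operator norm that stays bounded as $P\to\infty$ (we focus only on outlier spikes, so this is an $O(1)$ condition we impose on the $\lambda_i$ under study), and (c) an isotropic local resolvent law, i.e.\ $\langle \vu, G_\meps(z)\vv\rangle \to \langle \vu,\vv\rangle\, G(z)$ for deterministic $\vu,\vv$. The Lindeberg swapping argument underlying the Götze--Tikhomirov proof of Theorem~\ref{theorem:beastingassumptions} also delivers this isotropic law under the same conditions (i)--(iii); alternatively, one can bypass it by computing the free additive convolution of the semicircle with the finite-rank atomic measure carrying the spikes, which is standard and yields the same outlier formula.

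The main obstacle is the dependence between $\mH_{emp}$ and $\meps$, since both are built from the training set. I would resolve this by conditioning on the full dataset: then $\mH_{emp}$ is deterministic and $\meps$ is the zero-mean sub-sampling noise satisfying the hypotheses of Theorem~\ref{theorem:beastingassumptions}, so the spiked machinery applies directly with $\mH_{emp}$ playing the role of a fixed low-rank perturbation. Since the limiting outlier formula depends only on $\lambda_1$ (and on $R$), not on any finer features of $\mH_{emp}$, the conclusion lifts to the unconditional statement claimed in Theorem~\ref{theorem:mainresult}.
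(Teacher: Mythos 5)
Your proposal is correct and follows essentially the same route as the paper: the paper's proof also reduces the statement to the Benaych-Georges--Nadakuditi outlier formula for a finite-rank deformation of a semicircular ensemble (its Lemma~\ref{theorem:rmtbookwork}, proved in the appendix via the Stieltjes/$\mathcal{R}$-transform perturbation you describe) and then rescales the per-element variance by the $\bigl(\tfrac{1}{B}-\tfrac{1}{N}\bigr)$ factor and the Wigner $\sqrt{P}$ normalisation to obtain $R^{2}=\tfrac{P}{\mathfrak{b}}\sigma_{\epsilon}^{2}$. Your additional remark about conditioning on the dataset to treat $\mH_{emp}$ as deterministic is consistent with the paper's (footnoted but less explicit) stance on the asymmetry between $\mH_{emp}$ and $\meps(\vw)$.
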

In order to prove Theorem \ref{theorem:mainresult} we utilise the following Lemma, which is taken from \cite{benaych2011eigenvalues} and for which we outline the proof in Appendix ~\ref{sec:notationandproof} for completeness.
\begin{lemma}
\label{theorem:rmtbookwork}
Denote by $[\lambda_{1}',\lambda_{P}']$ the extremal eigenvalues of the matrix sum $\mM = \mA+\meps(\vw)/\sqrt{P}$, where $\mA \in \mathbb{R}^{P\times P}$ is a matrix of finite rank $r$ with extremal eigenvalues $[\lambda_{1},\lambda_{P}]$ and $\meps(\vw) \in \mathbb{R}^{P\times P}$ with limiting spectral density $p(\lambda)$ satisfying the semicircle law $p(\lambda) = \frac{\sqrt{4\sigma_{\epsilon}^{2}-\lambda^{2}}}{2\pi\sigma_{\epsilon}^{2}}$. Then we have
\begin{equation}
	\lambda'_{1} = \left\{\begin{array}{lr}
		\lambda_{1} + \frac{\sigma_{\epsilon}^{2}}{\lambda_{1}}, & \text{if } \lambda_{1} > \sigma_{\epsilon}\\
		2 \sigma_{\epsilon}, & \text{otherwise } \\
	\end{array}\right\} 
	, \thinspace
	\lambda'_{P} = \left\{\begin{array}{lr}
		\lambda_{P} + \frac{\sigma_{\epsilon}^{2}}{\lambda_{P}}, & \text{if } \lambda_{P} < -\sigma_{\epsilon}\\
		-2 \sigma_{\epsilon}, & \text{otherwise } \\
	\end{array}\right\}.
\end{equation}
\end{lemma}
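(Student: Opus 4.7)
The plan is to follow the standard spiked-model (BBP) strategy: reduce the eigenvalue problem to a low-dimensional secular equation via the matrix determinant lemma, then identify that equation's coefficients as almost-sure limits of resolvent bilinear forms controlled by the Stieltjes transform of the semicircle law. First I would write $\mA = \mU \mLambda \mU^{T}$ with $\mU \in \mathbb{R}^{P \times r}$ orthonormal and $\mLambda = \mathrm{diag}(\lambda_{1},\dots,\lambda_{r})$, and set $\mW = \meps(\vw)/\sqrt{P}$. Any $\lambda$ outside the support of the limiting spectrum of $\mW$ is an eigenvalue of $\mM$ if and only if $\det(\lambda \mI_{P}-\mW-\mA)=0$; factoring $(\lambda \mI_{P}-\mW)$ out and applying Sylvester's determinant identity reduces this to the $r\times r$ secular equation
\begin{equation}
\det\bigl(\mI_{r} - \mLambda\, \mU^{T}(\lambda \mI_{P}-\mW)^{-1}\mU\bigr)=0 .
\end{equation}

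Second, I would invoke the Stieltjes transform $g_{\epsilon}(\lambda)=\int (\lambda-x)^{-1}p(x)\,dx$ of the semicircle law established in Theorem~\ref{theorem:beastingassumptions}, which solves $\sigma_{\epsilon}^{2}g_{\epsilon}(\lambda)^{2}-\lambda g_{\epsilon}(\lambda)+1=0$ and hence equals $g_{\epsilon}(\lambda)=(\lambda-\sqrt{\lambda^{2}-4\sigma_{\epsilon}^{2}})/(2\sigma_{\epsilon}^{2})$ for $|\lambda|>2\sigma_{\epsilon}$. The key random-matrix input is the isotropic semicircle law: for deterministic unit vectors $\vu,\vv$ independent of $\mW$ and $\lambda$ bounded away from $[-2\sigma_{\epsilon},2\sigma_{\epsilon}]$,
\begin{equation}
\vu^{T}(\lambda \mI_{P}-\mW)^{-1}\vv \;\xrightarrow[P\to\infty]{\text{a.s.}}\; g_{\epsilon}(\lambda)\,\vu^{T}\vv .
\end{equation}
This follows from a standard Schur-complement/self-consistent-equation argument: partition $\mW$, apply the resolvent identity, control the off-diagonal contributions by a concentration inequality for quadratic forms, and close the fixed-point equation satisfied by the normalised trace. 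Under the Lindenberg-type tail control and the variance regularity conditions (i)--(iii) of Theorem~\ref{theorem:beastingassumptions}, the same hypotheses that delivered the semicircle law also deliver this isotropic extension. Consequently $\mU^{T}(\lambda \mI_{P}-\mW)^{-1}\mU \to g_{\epsilon}(\lambda)\mI_{r}$ almost surely, and the secular equation reduces to
\begin{equation}
\det\bigl(\mI_{r}-g_{\epsilon}(\lambda)\mLambda\bigr)=0 \quad\Longleftrightarrow\quad g_{\epsilon}(\lambda)=\lambda_{i}^{-1}\ \text{for some } i .
\end{equation}

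Third, I would solve this explicitly. Substituting into the quadratic defining $g_{\epsilon}$ gives $\sigma_{\epsilon}^{2}/\lambda_{i}^{2}-\lambda/\lambda_{i}+1=0$, so $\lambda=\lambda_{i}+\sigma_{\epsilon}^{2}/\lambda_{i}$. Monotonicity of $g_{\epsilon}$ on $(2\sigma_{\epsilon},\infty)$ together with $g_{\epsilon}(2\sigma_{\epsilon})=1/\sigma_{\epsilon}$ shows that an outlier actually leaves the bulk precisely when $|\lambda_{i}|>\sigma_{\epsilon}$; otherwise the equation $g_{\epsilon}(\lambda)=\lambda_{i}^{-1}$ has no solution outside $[-2\sigma_{\epsilon},2\sigma_{\epsilon}]$, and by interlacing/rigidity of the edge the corresponding extremal eigenvalue of $\mM$ is pinned to $\pm 2\sigma_{\epsilon}$. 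Applying this coordinate-wise to $\lambda_{1}$ and $\lambda_{P}$ yields the two displayed formulae.

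The main obstacle is the isotropic convergence step: the semicircle law proved in Theorem~\ref{theorem:beastingassumptions} is only a statement about the empirical spectral distribution (trace of the resolvent), whereas the secular equation requires control of specific bilinear forms $\vu_{i}^{T}(\lambda \mI_{P}-\mW)^{-1}\vu_{j}$. Upgrading from the averaged trace to this anisotropic statement, while allowing the weak dependence structure permitted by the $\sigma$-algebras $\mathfrak{F}^{(i,j)}$, is the technically delicate part; everything else is algebraic manipulation of the semicircle Stieltjes transform and a phase-transition argument at $|\lambda_{i}|=\sigma_{\epsilon}$.
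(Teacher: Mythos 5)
Your argument is correct, and it reaches the stated formulae by a genuinely different route from the paper. You take the Benaych-Georges--Nadakuditi path: Sylvester's identity reduces the outlier condition to the $r\times r$ secular equation $\det(\mI_{r}-g_{\epsilon}(\lambda)\mLambda)=0$ via the isotropic limit $\vu^{T}(\lambda\mI_{P}-\mW)^{-1}\vv\to g_{\epsilon}(\lambda)\vu^{T}\vv$, and the quadratic $\sigma_{\epsilon}^{2}g_{\epsilon}^{2}-\lambda g_{\epsilon}+1=0$ together with $g_{\epsilon}(2\sigma_{\epsilon})=1/\sigma_{\epsilon}$ then yields both the displacement $\lambda_{i}+\sigma_{\epsilon}^{2}/\lambda_{i}$ and the phase transition at $|\lambda_{i}|=\sigma_{\epsilon}$; your algebra here checks out. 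The paper instead works entirely at the level of transforms: it computes the $\mathcal{R}$-transform of the semicircle ($\mathcal{R}_{\meps}(z)=\sigma_{\epsilon}^{2}z$) and of the rank-one spike to order $1/N$, adds them using asymptotic freeness, expands the Stieltjes transform of the sum as $\mathcal{S}_{0}+\mathcal{S}_{1}/N$, and locates the outlier as the point where the $O(1/N)$ correction survives, namely where $\mathcal{S}_{\meps}(z)=1/\lambda_{1}$ -- which is of course the same fixed-point condition your secular equation produces. The trade-off is where the unproved technical input sits: your route requires upgrading the averaged (trace) semicircle convergence of Theorem~\ref{theorem:beastingassumptions} to an anisotropic/isotropic local law compatible with the $\mathfrak{F}^{(i,j)}$ dependence structure, which you correctly flag as the delicate step; the paper's route instead assumes asymptotic freeness of the spike and the fluctuation matrix, which is a comparable and likewise non-trivial hypothesis in this setting. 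Your version handles all $r$ outliers in one determinant and makes the subcritical case ($|\lambda_{i}|\leq\sigma_{\epsilon}$, eigenvalue sticking to the edge $\pm 2\sigma_{\epsilon}$) more explicit than the paper's perturbative expansion does, at the cost of needing the heavier resolvent machinery.
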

We now proceed with the proof of Theorem \ref{theorem:mainresult}:
\begin{proof}
The variance per element is a function of the batch size $B$ and the size of the empirical dataset $N$, as given by Lemma \ref{lemma:normalelements}. Furthermore, unravelling the dependence in $P$ (which is simply the matrix dimension) due to the definition of the Wigner matrix (shown in Appendix \ref{sec:backgroundtheory}) leads to Theorem \ref{theorem:mainresult}.
\end{proof}
\paragraph{Comments on the Proof:}Although for clarity we only focus on the extremal eigenvalues, the proof as shown in Appendix \ref{sec:backgroundtheory} holds for all outlier eigenvalues which are outside the spectrum of the fluctuation matrix.  The assumption that either $\mH_{emp}$ or $\meps(\vw)$ are low-rank is necessary to use perturbation theory in the proof. This condition could be relaxed if a substantial part of the eigenspectrum of $\mH_{emp}$ were considered to be mutually free with that of $\meps(\vw)$ \citep{bun2017cleaning}. In Section \ref{sec:lowrankapprox} we derive a bound on the rank of a feed-forward network, which we show to be small for large networks and provide extensive experimental evidence that the full Hessian is in fact low-rank.
In the special case that $\meps(\vw)_{i,j}$ are i.i.d. Gaussian, the fluctuation matrix is the Gaussian Orthogonal Ensemble, proposed as the spectral density of the Hessian by \citet{choromanska2015loss}. In this case, Theorem \ref{theorem:mainresult} can be proved more succinctly, which we detail in full in the Appendix \ref{sec:backgroundtheory}.
\begin{Correction}
\begin{remark}
	Note that whilst we have considered the framework in which the batch Hessian is considered a perturbation of the full dataset (empirical) Hessian (via an additive perturbation), we could have alternatively considered the batch Hessian to be the true Hessian (i.e the dataset under the data generating distribution) plus an additive perturbation, i.e.
	\begin{equation}
		\mH_{batch}(\vw) = \mH_{true}(\vw) + \meps(\vw).
	\end{equation}
	This might be considered appropriate if each sample is only seen once, or as is typical in deep learning, the extent of the augmentation, e.g. random flips, crops with zero padding, rotations, colour variations, additions of Gaussian noise, are so extensive that no identical (or sufficiently similar) samples are ever seen by the optimiser twice. Note that under this framework, we simply need to replace $\mathfrak{b} \rightarrow B$ in our framework and we simply replace the maximal eigenvalue $\lambda_{1}$ of the full dataset Hessian with that of the Hessian of the data generating distribution. Since such an extension is natural under the typical neural network training framework utilising many augmentations (such as random flipping, cropping with zero padding, rotations, translations, or the addition of Gaussian noise) and can be readily derived from our framework. We investigate the nature of the true Hessian in Section \ref{sec:truelosssurface}. Here we show that the empirical Hessian does indeed closely resemble that of the true Hessian. 
\end{remark}
\end{Correction}

\section{Extension to Fisher information and other positive-definite matrices}
\label{sec:GGNtheory}
In the case of Logistic regression, which is simply a $0$ hidden layer neural network with cross-entropy loss, by the diagonal dominance theorem \citep{cover2012elements}, the Hessian is semi-positive-definite and positive-definite with the use of $L2$ regularisation. Hence an underlying fluctuation matrix which contains negative eigenvalues is unsatisfactory and we extend our noise model to cover the positive semi definite case. 
Commonly used positive semi-definite approximations to the Hessian in deep learning \citep{martens2014new} include the Generalised Gauss-Newton matrix (GGN) matrix \citep{martens2010deep,martens2012training} and the Fisher information matrix \citep{martens2015optimizing,pennington2018spectrum}, both used extensively for optimisation and theoretical analysis. Hence to understand the effect of mini-batching on these practically relevant optimisers, we must also extend our framework. Below we introduce the Generalised Gauss-Newton matrix.
\paragraph{The Generalised Gauss-Newton matrix:}
For some common activations and loss functions typical in deep learning, such as the cross-entropy loss and sigmoid activation the Generalised Gauss-Newton matrix is equivalent to the Fisher information matrix \citep{pascanu2013revisiting}. The Hessian may be expressed in terms of the activation $\sigma$ at the output of the final layer $f(\vw)$ using the chain rule as $\mH = \nabla^{2} \sigma(f(\vw))$ with corresponding $(i,j)$'th component:
\begin{equation}
\label{eq:batchHessian}
\begin{aligned}
	& \mH(\vw)_{ij}  = \sum_{k=0}^{d_{y}} \sum_{l=0}^{d_{y}} \frac{\partial^{2} \sigma(f(\vw))}{\partial f_{l}(\vw)\partial f_{k}(\vw)}\frac{\partial f_{l}(\vw)}{\partial w_{j}}\frac{\partial f_{k}(\vw)}{\partial w_{i}} 
	+ \sum_{k=0}^{d_{y}}\frac{\partial \sigma(f(\vw))}{\partial w_{k}} \frac{\partial^{2} f_{k}(\vw)}{\partial w_{j}\partial w_{i}}.\\
\end{aligned}
\end{equation}
The first term on the RHS of Equation \ref{eq:batchHessian} is known as the Generalised Gauss-Newton (GGN) matrix. The rank of a product is the minimum rank of its products so the raank of the GGN matrix is upper bounded by $B\times d_{y}$.  Following \citet{sagun2017empirical} due to the convexity of the loss $\ell$ with respect to the output $f(\vw)$ we rewrite the GGN matrix per sample as
\begin{equation}
\label{eq:batchggncrossent}
\begin{aligned}
	& \sum_{k,l=0}^{d_{y}} \sqrt{\frac{\partial^{2} \sigma(f(\vw))}{\partial f_{l}(\vw)\partial f_{k}(\vw)}}\frac{\partial f_{l}(\vw)}{\partial w_{j}} \times~~\sqrt{\frac{\partial^{2} \sigma(f(\vw))}{\partial f_{l}(\vw)\partial f_{k}(\vw)}}\frac{\partial f_{k}(\vw)}{\partial w_{i}} = \mJ_{*}\mJ_{*}^{T}, \\
\end{aligned}
\end{equation}
where we define $\mJ_{*}$ in order to retain a similarity for the GGN matrix in the case of the squared loss function \citep{pennington2017geometry}, which has the form $\mG(\vw) = \mJ\mJ^{T}$. There are many potential candidate noise models due to the effect of mini-batching. Examples include the free multiplicative and information plus noise model \citep{bun2016rotational,hachem2013bilinear}. 
\begin{Correctionunsure}
Let us simply consider a mini-batching model where the transformed Jacobian, $J^{*}$, is perturbed by additive noise. Specifically,
\begin{equation}
	\mJ^{*}_{batch}(\vw) = \mJ^{*}_{emp}(\vw)+\meps(\vw).
\end{equation}
Under this framework, as $\mathbb{E}[\meps(\vw)] =0$,
\begin{equation}
	\mathbb{E}(\mJ_{*}+\meps)(\mJ_{*}+\meps)^{T} = \mJ_{*}\mJ_{*}^{T}+\mathbb{E}\meps\meps^{T}.
\end{equation}
Note that in this case  $\meps(\vw) \in \mathbb{R}^{P\times  (B\times d_{y})}$ and hence $\meps(\vw)\meps(\vw)^{T} \in \mathbb{R}^{P\times P}$. Whilst it is known that, for i.i.d. Normal entries for $\meps(\vw)$, the spectrum of $\mathbb{E}\meps\meps^{T}$ converges to the Marchenko-Pastur distribution \citep{marvcenko1967distribution}, the conditions can similarly be relaxed to those stated in Theorem \ref{theorem:mainresult} \citep{adamczak2011marchenko,gotze2015limit,o2012note}. Hence, with this assumption and in line with the previous derivation, we consider the finite rank perturbation of the Marchenko-Pastur density and arrive at the following result. For completeness, we derive the non-unit-variance Stieltjes transform of the Marchenko-Pastur distribution
in Appendix~\ref{sec:nonunitmp}. 
\begin{theorem}
\label{theorem:batchtheoremgnn}
The extremal eigenvalue $\lambda_{1}'$ of the matrix $\mG_{batch}$, where $\mG_{emp}$ has extremal eigenvalue $\lambda_{1}$, is given by
\begin{equation}
	\label{eq:spectralbroadeningwignernoise}
	\lambda'_{1} = \left\{\begin{array}{lr}
	\frac{\lambda_{1}+\sigma^{2}(1-\frac{P}{\mathfrak{b}})}{1-\frac{P\sigma^{2}}{\lambda_{1}\mathfrak{b}}}, & \text{if } \lambda_{1} > \sigma^{2}(1+\frac{P}{\mathfrak{b}})\\
		2\sigma^{2}(1+\frac{P}{\mathfrak{b}}), & \text{otherwise } \\
	\end{array}\right\}.
\end{equation}
\end{theorem}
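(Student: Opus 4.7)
\medskip

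\noindent\textbf{Proof proposal for Theorem~\ref{theorem:batchtheoremgnn}.}
The plan is to mirror the proof strategy of Theorem~\ref{theorem:mainresult}: identify a low-rank signal matrix plus a rotation-invariant noise matrix whose bulk has a known limiting law, and then invoke the Benaych-Georges--Nadakuditi type formula for finite-rank perturbations. Concretely, I would start from the identity
\begin{equation*}
\mG_{batch}(\vw) = \bigl(\mJ^{*}_{emp}+\meps\bigr)\bigl(\mJ^{*}_{emp}+\meps\bigr)^{T} = \mG_{emp}(\vw) + \mJ^{*}_{emp}\meps^{T} + \meps (\mJ^{*}_{emp})^{T} + \meps\meps^{T},
\end{equation*}
and then argue that, in the Kolmogorov limit $P,B,N\to\infty$ with $c := P/\mathfrak{b}$ fixed, the cross terms do not contribute to outliers. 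This is the first place I expect difficulty: since $\meps$ has the same dimensions as $\mJ^{*}_{emp}$, the cross terms are not mean zero after symmetrisation. However, by the assumed low rank $r\ll P$ of $\mJ^{*}_{emp}$ (from Section~\ref{sec:lowrankapprox}), the cross terms are themselves rank-$r$ random matrices with norm of order $\sqrt{\|\mG_{emp}\|\cdot\|\meps\meps^{T}\|}$, and they can be absorbed into a deterministic equivalent by a standard resolvent argument; I would treat the effective signal as $\mG_{emp}$ and the effective noise bulk as the sample covariance $\meps\meps^{T}$.

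The second step is to establish that the limiting spectral density of $\meps\meps^{T}$, under the element-wise conditions of Theorem~\ref{theorem:beastingassumptions} transferred to the rectangular setting, converges to the Marchenko--Pastur law with shape parameter $c=P/\mathfrak{b}$ and scale $\sigma^{2}$. I would cite \citet{adamczak2011marchenko,gotze2015limit,o2012note} as the paper already does, noting that the relaxation from i.i.d.\ entries to the dependence structure of $\meps$ parallels the passage from Wigner's original theorem to the field-dependent version of \citet{gotze2012semicircle} used in Theorem~\ref{theorem:beastingassumptions}. The Stieltjes transform $G_{MP}(z)$ of this non-unit-variance Marchenko--Pastur distribution is derived in Appendix~\ref{sec:nonunitmp} and is the main analytical tool for the next step.

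The third and principal step is to apply the spiked-model result: for an additive finite-rank perturbation of a rotationally-invariant matrix with limiting spectral density admitting Stieltjes transform $G_{MP}$, each outlier eigenvalue $\lambda'_{1}$ of $\mG_{batch}$ corresponds to an eigenvalue $\lambda_{1}$ of $\mG_{emp}$ through the implicit equation $G_{MP}(\lambda'_{1}) = 1/\lambda_{1}$, provided $\lambda_{1}$ exceeds a critical threshold; otherwise $\lambda'_{1}$ sticks to the right edge of the bulk. Substituting the explicit form of $G_{MP}$ from Appendix~\ref{sec:nonunitmp} and solving the resulting quadratic in $\lambda'_{1}$ yields
\begin{equation*}
\lambda'_{1} = \frac{\lambda_{1} + \sigma^{2}(1-P/\mathfrak{b})}{1 - P\sigma^{2}/(\lambda_{1}\mathfrak{b})}.
\end{equation*}
The BBP-type phase transition occurs exactly when this expression ceases to lie above the upper edge of $G_{MP}$'s support, and a direct computation shows that the critical value is $\lambda_{1}=\sigma^{2}(1+P/\mathfrak{b})$, at which the formula continuously attains $2\sigma^{2}(1+P/\mathfrak{b})$, matching the two branches of the theorem.

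The main obstacle I anticipate is step one: controlling the cross terms $\mJ^{*}_{emp}\meps^{T}+\meps(\mJ^{*}_{emp})^{T}$ rigorously. A clean way to handle this is to perform a singular-value decomposition $\mJ^{*}_{emp}=\mU\mSigma\mV^{T}$ of the low-rank signal, project $\meps$ onto the column space of $\mU$ and $\mV$, and observe that the cross terms contribute only to a rank-$2r$ subspace whose eigenvalues are lower order than those of $\meps\meps^{T}$ on the orthogonal complement, after a spectral-shift argument; the outliers of the full matrix then coincide (up to vanishing corrections) with those obtained by treating $\mG_{emp}$ as the signal and $\meps\meps^{T}$ as the pure noise. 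If that rigorous reduction proves delicate, a sufficient fallback is to adopt the ``information-plus-noise'' model of \citet{hachem2013bilinear} directly for the rectangular additive perturbation, whose outlier formula reduces to the stated one after specialising to finite rank.
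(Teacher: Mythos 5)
Your proposal follows essentially the same route as the paper: write $\mJ^{*}_{batch}=\mJ^{*}_{emp}+\meps$, identify the bulk of $\meps\meps^{T}$ with a Marchenko--Pastur law of shape $P/\mathfrak{b}$ and scale $\sigma^{2}$ via the universality results already cited in the text, and extract the outlier location from the finite-rank perturbation machinery of \citet{benaych2011eigenvalues} using the non-unit-variance transform derived in Appendix~\ref{sec:nonunitmp}. You are in fact more careful than the paper on the cross terms $\mJ^{*}_{emp}\meps^{T}+\meps(\mJ^{*}_{emp})^{T}$: the paper disposes of them in one line by passing to the expectation, $\mathbb{E}(\mJ_{*}+\meps)(\mJ_{*}+\meps)^{T}=\mJ_{*}\mJ_{*}^{T}+\mathbb{E}\meps\meps^{T}$, which says nothing about the spectrum of a single realisation; your rank-$2r$ projection argument, and the fallback to the information-plus-noise model of \citet{hachem2013bilinear} (which the paper names as a candidate model but does not pursue), are genuine strengthenings of the published argument.

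The one concrete gap is in your third step. You invoke the \emph{additive} spiked-model relation $G_{MP}(\lambda_{1}')=1/\lambda_{1}$ and assert that solving it ``yields'' Equation~\ref{eq:spectralbroadeningwignernoise}. It does not: under the additive relation the sub-critical outlier sticks to the Marchenko--Pastur bulk edge $\sigma^{2}\bigl(1+\sqrt{P/\mathfrak{b}}\bigr)^{2}$, whereas the theorem's second branch is $2\sigma^{2}(1+P/\mathfrak{b})$, which exceeds that edge by $\sigma^{2}\bigl(1-\sqrt{P/\mathfrak{b}}\bigr)^{2}$ whenever $P\neq\mathfrak{b}$; the super-critical branch likewise comes out with different constants. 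The stated formula is what the paper obtains by passing instead to the $T$-transform $\mathcal{T}(z)=z\mathcal{S}(z)-1$ at the end of Appendix~\ref{sec:nonunitmp} and applying the relation $\lambda_{i}'=\mathcal{T}(1/\lambda_{i})$ appropriate to the multiplicative spiked model. So while your decomposition and universality steps are sound (indeed sounder than the paper's), your write-up as it stands would land on a different pair of expressions than the theorem claims; to recover the displayed constants you would need to justify treating the perturbation multiplicatively, as the paper implicitly does, or else flag that the additive reduction leads to a corrected version of the statement.
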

The key conclusion is that, \emph{independent of the exact limiting spectral density of the fluctuation matrix, we can consider the extremal eigenvalues of the True Hessian, or Generalised Gauss-Newton matrix (GGN), to be a low-rank perturbation of the fluctuation matrix. This can be considered a form of universality for the proved result in Theorem \ref{theorem:mainresult}. Where the assumptions on the noise matrix may differ, but the key phenomena, that of spectra broadening, persists.}  
\end{Correctionunsure}

\paragraph{How realistic is the low-rank approximation?} Since this is a major assumption in our analysis, we investigate the experimental evidence for the low-rank nature of the empirical Hessian and empirical GGN in Section~\ref{sec:lowrankapprox} and provide a theoretical argument for feed forward neural network Hessians. 
\begin{Correction}

\section{Evaluating the Low Rank Approximation}
\label{sec:lowrankapprox}
One of the key ingredients to proving Theorem \ref{theorem:mainresult}, as shown in Section \ref{sec:rmttheory}, is the use of perturbation theory. This requires either the fluctuation matrix, or the full empirical Hessian, to be low-rank. In our work, we consider the empirical Hessian to be low-rank. The rank degeneracy of small neural networks has already been discovered and discussed in \citet{sagun2017empirical} and reported for larger networks using spectral approximations in \citet{ghorbani2019investigation,papyan2018full}. We further provide extensive experimental validation for both the VGG-$16$ and PreResNet-$110$ on the CIFAR-$100$ datasets.
However theoretical arguments rely on the Generalised Gauss-Newton matrix (GGN) decomposition. From Equation \ref{eq:batchHessian} it can be surmised that the rank of the GGN  is  bounded above by $N\times d_{y}$ (the dataset size times the number of classes). However the Hessian is the sum of the GGN and another matrix, which has not been theoretically argued to be low-rank. The rank of a sum of two matrices is upper bounded by their rank sum. Furthermore, if the dataset size becomes large (e.g. such as ImageNet with $10^{7}$ entries) and the class number also large, even the GGN bound is ineffective. We hence provide in Section \ref{subsec:rankbound} a novel theoretical argument for a Hessian rank bound for feed-forward neural networks with a cross-entropy loss. The key intuition behind our proof is that each product of weights is a rank one object. Hence, if the sum of these products can be bounded we can also bound the rank. Since the sum depends on the number of neurons, the rank bound can end up becoming very small.

\subsection{Theoretical argument for Feed Forward Networks}
\label{subsec:rankbound}
We consider a neural network with a $d_{x}$ dimensional input $\vx$. Our network has $H-1$ hidden layers and we refer to the output as the $H$'th layer and the input as the $0$'th layer. We denote the ReLU activation function as $f(x)$ where $f(x) = \max(0,x)$. Let $\mW_{i}$ be the matrix of weights between the $(i-1)$'th and $i$'th layer. For a $d_{y}$ dimensional output our $q$'th component of the output can be written as
\begin{equation}
	\vz(\vx_{i};\vw)_{q} = f(\mW_{H}^{T}f(\mW_{H-1}^{T}....f(\mW_{1}\vx))) = \prod_{l=0}^{H}\sum_{n_{i,l}=1}^{N_{l}}\sum_{i}^{d_{x}}\vx_{i}\vw_{n_{i,l},n_{i,l+1}}
\end{equation}
where $\vw_{n_{i,l},n_{i,l+1}}$ denotes the weight of the path segment connecting node $i$ in layer $l$ with node $i$ in layer $l+1$. layer $l$ has $N_{l}$ nodes. Where $n_{i,l_{0}} = x_{i}$. The Hessian, in the small loss limit tends to
\begin{equation}
	\frac{\partial^{2} \ell(h(\vx_{i};\vw),\vy_{i})}{\partial w_{\phi,\kappa}\partial w_{\theta,\nu}} \rightarrow -\sum_{m\neq c}\exp(h_{m})\bigg[\frac{\partial^{2}h_{m}}{\partial w_{\phi,\kappa}\partial w_{\theta,\nu}}+\frac{\partial h_{m}}{\partial w_{\phi,\kappa}}\frac{\partial h_{m}}{\partial w_{\theta,\nu}}\bigg].
\end{equation}
\begin{equation}
	\label{eq:expandingloss}
	\begin{aligned}
		& \bigg[\frac{\partial^{2}h_{m}}{\partial w_{\phi,\kappa}\partial w_{\theta,\nu}}+\frac{\partial h_{m}}{\partial w_{\phi,\kappa}}\frac{\partial h_{m}}{\partial w_{\theta,\nu}}\bigg] =  \prod_{l=1}^{d-1}\sum_{n_{i,l}\neq [(\phi, \kappa),(\theta, \nu)]}^{N_{i,l}}\sum_{i}^{d_{x}}\vx_{i}\vw_{n_{i,l},n_{i,l+1}} \\
		& +\bigg(\prod_{l=1}^{d-1}\sum_{n_{i,l}\neq (\theta, \nu)}^{N_{i,l}}\sum_{i}^{d_{x}}\vx_{i}\vw_{n_{i,l},n_{i,l+1}}\bigg)
		\bigg(\prod_{l=1}^{d-1}\sum_{n_{j,l}\neq (\phi, \kappa)}^{N_{j,l}}\sum_{i}^{d_{x}}\vx_{i}\vw_{n_{j,l},n_{j,l+1}}\bigg)
		\\
	\end{aligned}
\end{equation}
Each product of weights contributes an object of rank-$1$ (as shown in Section \ref{sec:motivation}). Furthermore, the rank of a product is the minimum of the constituent ranks, i.e. $\text{rank}(AB) = \min \text{rank}(A,B)$. Hence Equation \ref{eq:expandingloss} is rank bounded by $2(\sum_{l}N_{l} + d_{x})$, where  $N_{l}$ is the total number of neurons in the network. By rewriting the loss per-sample, repeating the same arguments and including the class factor, we obtain
\begin{equation}
	\frac{\partial^{2}\ell}{\partial w_{k}\partial w_{l}} = - \frac{\partial^{2}h_{q(i)}}{\partial w_{k}\partial w_{l}} + \frac{\sum_{j}\exp(h_{j})\sum_{i}\exp(h_{i})(\frac{\partial^{2}h_{i}}{\partial w_{k} \partial w_{l}}+\frac{\partial h_{i}}{\partial w_{k}}\frac{\partial h_{i}}{\partial w_{l}})-\sum_{i}\exp(h_{i})\frac{\partial h_{i}}{\partial w_{k}}\sum_{j}\frac{\partial h_{j}}{\partial w_{l}}\exp(h_{j})}{[\sum_{j}\exp(h_{j})]^{2}},
\end{equation}
and thence a rank bound of $4d_{y}(\sum_{l}N_{l} + d_{x})$. To give some context, along with a practical application of a real network and dataset, for the CIFAR-$10$ dataset, the VGG-$16$ \citep{simonyan2014very} contains $1.6 \times 10^{7}$ parameters, the number of classes is $10$ and the total number of neurons is $13,416$ and hence the bound gives us a spectral peak at the origin of at least $1-\frac{577,600}{1.6\times 10^{7}} = 0.9639$.

\subsection{Experimental Validation of Low Rank Approximation}
A full Hessian inversion with computational cost $\mathcal{O}(P^{3})$ is infeasible for large neural networks. Hence, counting the number of zero eigenvalues (which sets the degeneracy) is not feasible in this manner. Furthermore, there would still be issues with numerical precision, so a threshold would be needed for accurate counting. Hence, based on our understanding of the Lanczos algorithm, discussed in Appendix \ref{sec:lanczos}, we propose an alternative method. 
\paragraph{Lanczos:}
We know that $m$ steps of the Lanczos method, gives us an $m$-moment matched spectral approximation of the moments of $\vv^{T}\mH\vv$, where in expectation over the set of zero mean, unit variance, random vectors this is equal to the spectral density of $\mH$. \cite{meurant2006lanczos,ete} Each eigenvalue/eigenvector pair estimated by the Lanczos algorithm is called a Ritz-value/Ritz-vector. We hence take $m\gg1$, where for consistency with \citet{ghorbani2019investigation}
we take $m=100$ in our experiments\footnote{They show that $m=90$ is sufficient for double precision accuracy on an MLP MNIST example}. We then take the Ritz value closest to the origin and take that as a proxy for the zero eigenvalue and report its weight.
\paragraph{Spectral Splitting:}
One weakness of this method is that for a large value of $m$, since the Lanczos algorithm finds a discrete moment matched spectral algorithm, the spectral mass near the origin may split into multiple components. Counting the largest thereof, or closest to the origin, may not be sufficient.  We note this problem both for the PreResNet-$110$ and VGG-$16$ on the CIFAR-$100$ dataset shown in Figure \ref{fig:degenhessprob}. Significant drops in degeneracy occur at various points in training and occur in tandem with significant changes in the absolute value of the Ritz value of minimal magnitude. This suggests the aforementioned splitting phenomenon is occurring. This issue is not present in the calculation of the Generalised Gauss-Newton matrix, as the spectrum is constrained to be positive-definite, so there is a limit to the extent of splitting that may occur. In order to remedy this problem for the Hessian, we calculate the combination of the two closest Ritz values around the centre and combine their mass. We consider this mass, and the weighted average of their values, as the degenerate mass. An alternative approach could be to kernel smooth the Ritz weights at their values, but this would involve another arbitrary hyper-parameter $\sigma$ and hence we do not adopt this strategy.

\begin{figure}[h!]
	\centering
	\begin{subfigure}{0.23\linewidth}
		\includegraphics[width=1\linewidth,trim={0 0 0 0},clip]{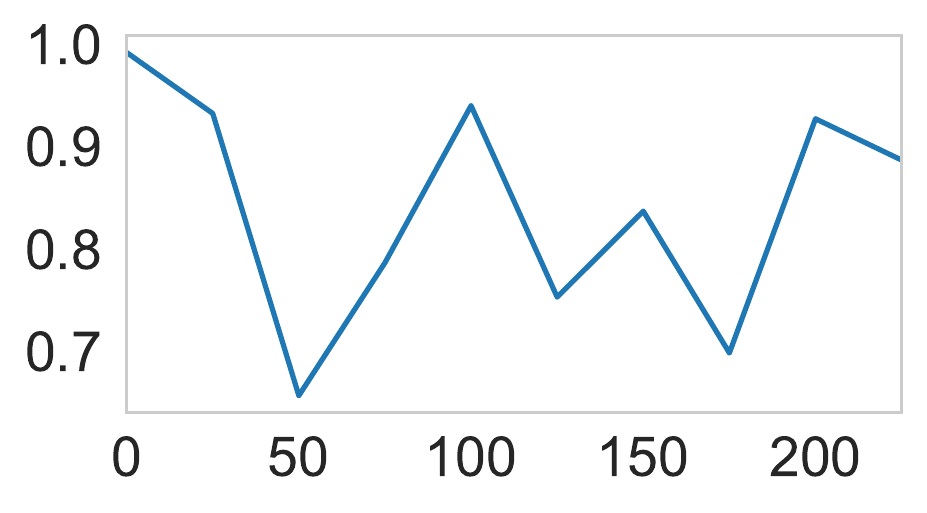}
		\caption{$\mathcal{D}$ weight P110}
		\label{subfig:hessp110prob}
	\end{subfigure}
	\begin{subfigure}{0.23\linewidth}
		\includegraphics[width=1\linewidth,trim={0 0 0 0},clip]{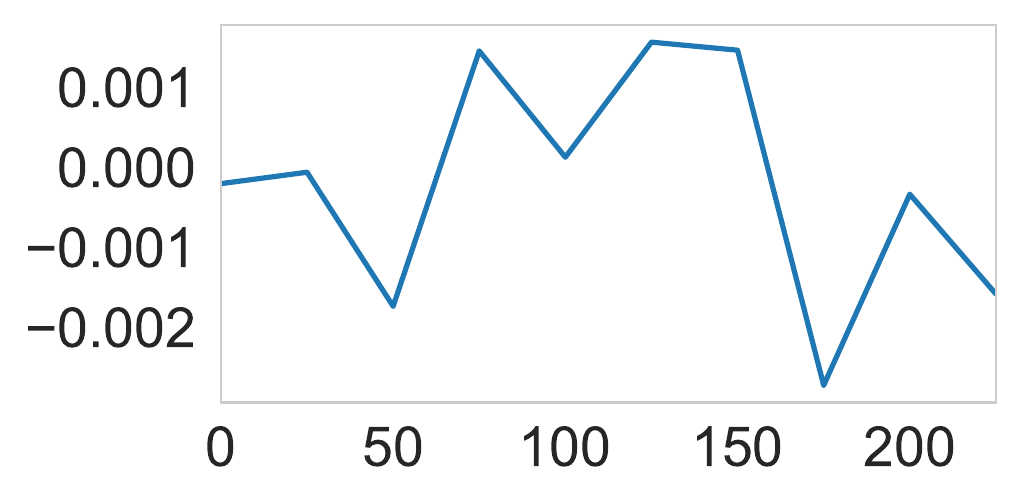}
		\caption{$\mathcal{D}$ value P110}
		\label{subfig:hessp110probv}
	\end{subfigure}
	\begin{subfigure}{0.23\linewidth}
		\includegraphics[width=1\linewidth,trim={0 0 0 0},clip]{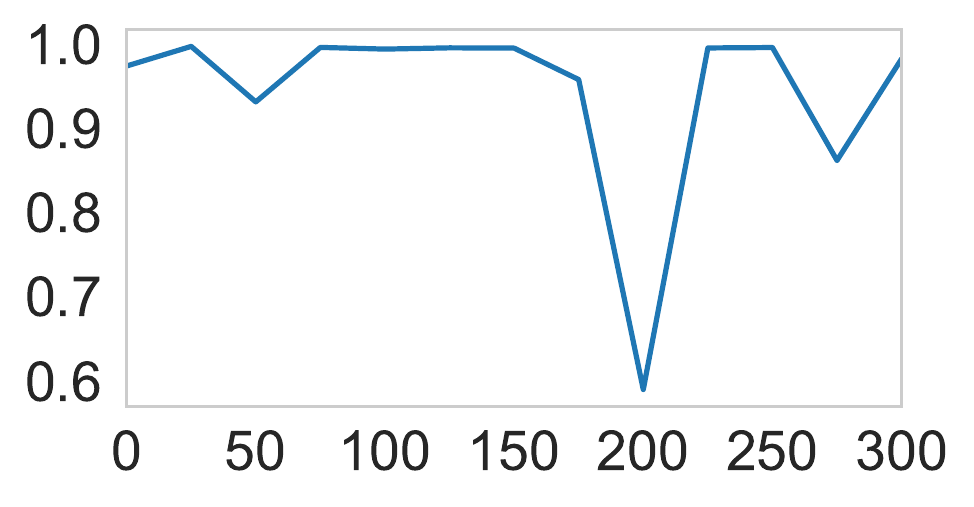}
		\caption{$\mathcal{D}$ weight VGG-$16$}
		\label{subfig:hessvgg16prob}
	\end{subfigure}
	\begin{subfigure}{0.23\linewidth}
		\includegraphics[width=1\linewidth,trim={0 0 0 0},clip]{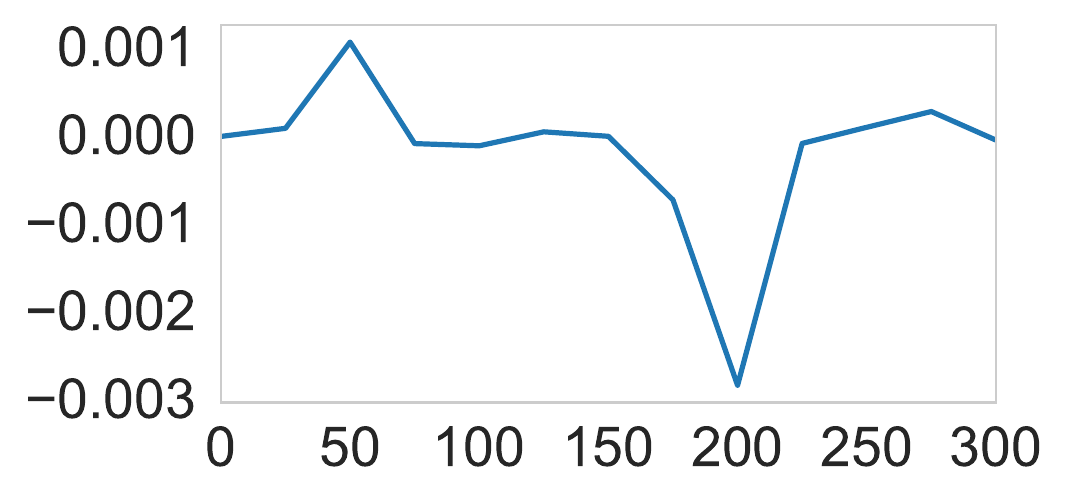}
		\caption{$\mathcal{D}$ value VGG-$16$}
		\label{subfig:hessvgg16probv}
	\end{subfigure}
	\caption{Rank degeneracy $\mathcal{D}$ (proportion of zero eigenvalues) evolution throughout training using the VGG-$16$ and PreResNet-$110$ on the CIFAR-$100$ dataset, the weight corresponds to the spectral mass of the Ritz value(s) considered to correspond to $\mathcal{D}$}
	\label{fig:degenhessprob}
	
\end{figure}

\subsection{VGG16}
\label{subsec:rankapproxvgg16}
For the VGG-$16$ model, which forms the reference model for this paper, we see that for both the Generalised Gauss-Newton matrix (GGN, shown in Figure \ref{subfig:rggndegenvgg16}) and the Hessian (shown in Figure \ref{subfig:rhessdegenvgg16}) the rank degeneracy is extremely high. For the GGN, the magnitude of the Ritz value, which we take to be the origin, is extremely close to the threshold of GPU precision, as shown in Figure \ref{subfig:0ggndegenvgg16}. For the Hessian, for which we combine the two smallest absolute value Ritz values, we find an even larger spectral degeneracy. The weighted average also gives a value very close to $0$, as shown in Figure \ref{subfig:0hessdegenvgg16}. The combined weighted average, however, is much closer to the origin than that of the lone spectral peak, shown in Figure \ref{fig:degenhessprob}, which indicates splitting, we do not get as close to the GPU precision threshold of $10^{-7}$, which we consider as a reasonable level to assume domination by numerical imprecision.

\begin{figure}[h!]
	\centering
	\begin{subfigure}{0.23\linewidth}
		\includegraphics[width=1\linewidth,trim={0 0 0 0},clip]{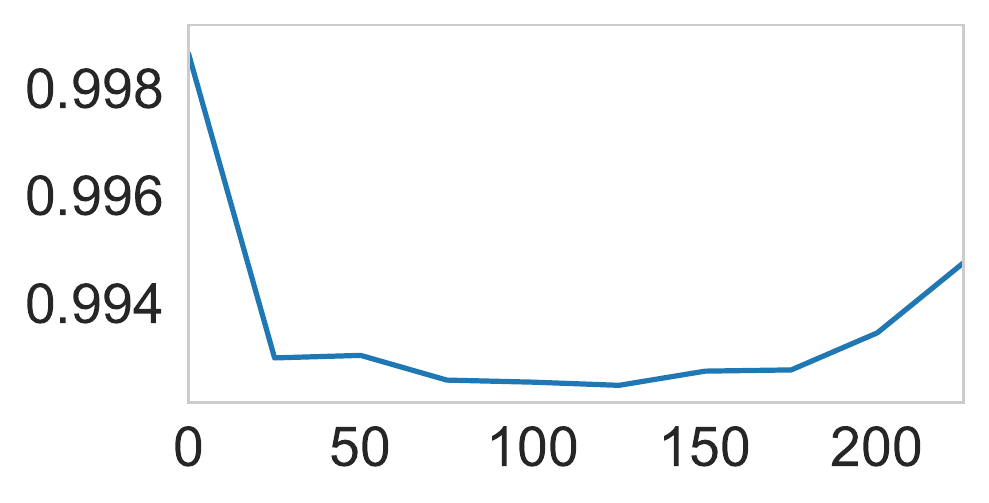}
		\caption{GGN Degeneracy}
		\label{subfig:rggndegenvgg16}
	\end{subfigure}
	\begin{subfigure}{0.23\linewidth}
		\includegraphics[width=1\linewidth,trim={0 0 0 0},clip]{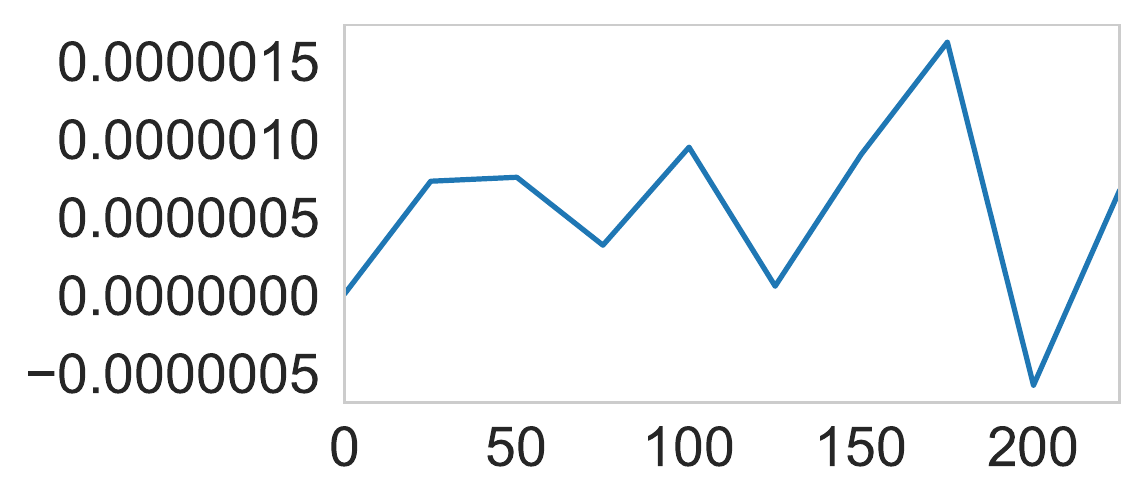}
		\caption{GGN Ritz Value}
		\label{subfig:0ggndegenvgg16}
	\end{subfigure}
	\begin{subfigure}{0.23\linewidth}
		\includegraphics[width=1\linewidth,trim={0 0 0 0},clip]{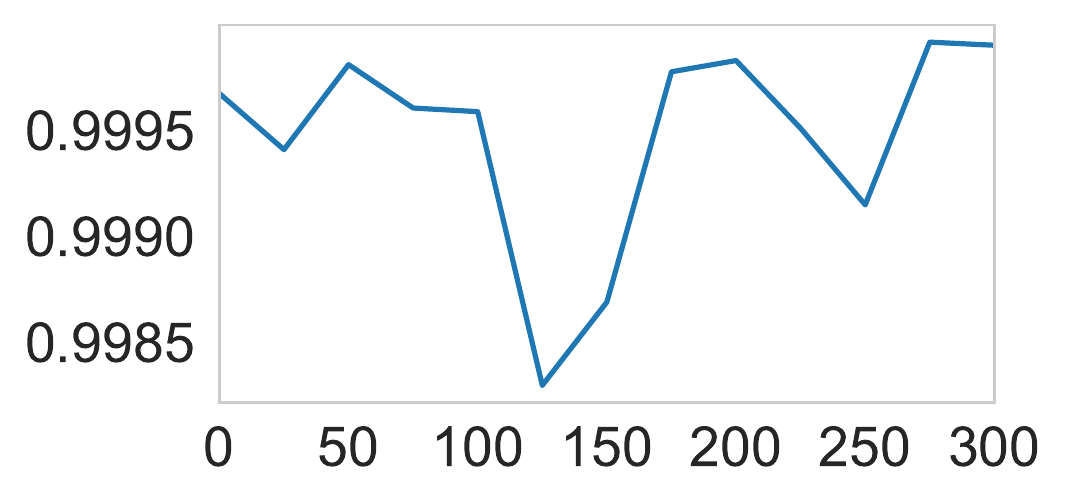}
		\caption{Hessian Degeneracy}
		\label{subfig:rhessdegenvgg16}
	\end{subfigure}
	\begin{subfigure}{0.23\linewidth}
		\includegraphics[width=1\linewidth,trim={0 0 0 0},clip]{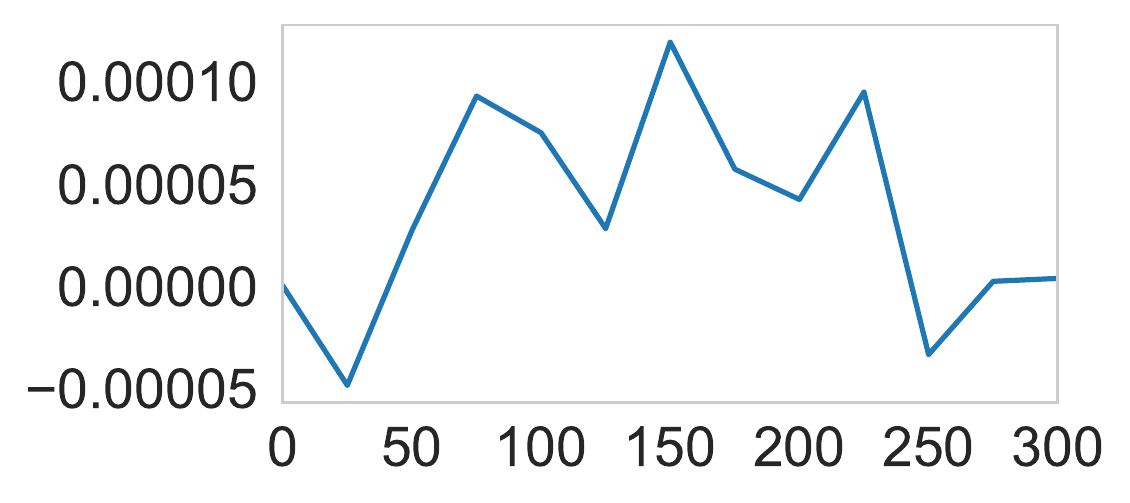}
		\caption{Hessian Ritz Value}
		\label{subfig:0hessdegenvgg16}
	\end{subfigure}
	\caption{Rank degeneracy (proportion of zero eigenvalues) evolution throughout training using the VGG-$16$ on the CIFAR-$100$ dataset, total training $225$ epochs, the Ritz value corresponds to the value of the node which we assign to $0$.}
	\label{fig:degenvgg16}
	
\end{figure}
\subsection{PreResNet110}
\label{subsec:rankapproxp110}
We repeat the same experiments in Section \ref{subsec:rankapproxvgg16} for the preactivated residual network with $110$ layers, on the same dataset. Note that, as explained in Section \ref{sec:batchnormresults}, we can calculate the spectra in both batch normalisation and evaluation mode. Hence we report results for both, with the main finding that the empirical Hessian spectra are consistent with large rank degeneracy.
\begin{figure}[h!]
	\centering
	\begin{subfigure}{0.23\linewidth}
		\includegraphics[width=1\linewidth,trim={0 0 0 0},clip]{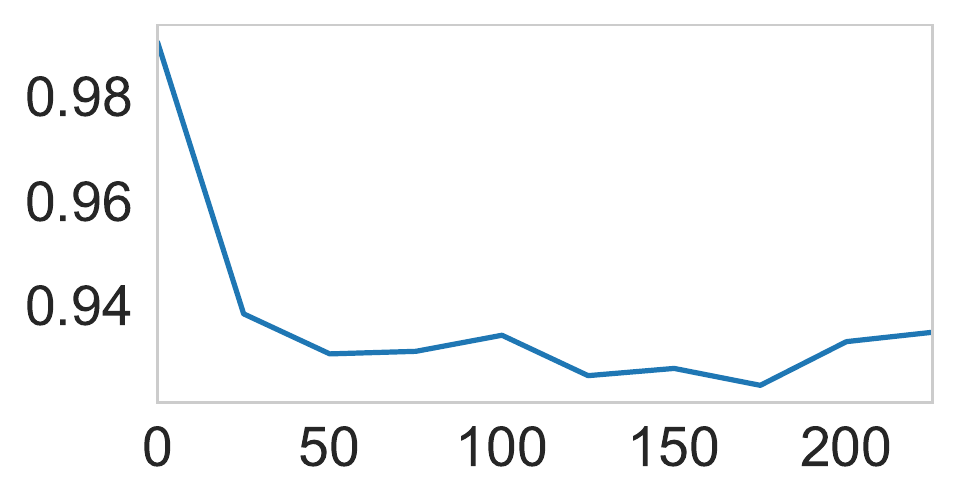}
		\caption{BN-train Degen}
		\label{subfig:ggndegenbnon}
	\end{subfigure}
	\begin{subfigure}{0.23\linewidth}
		\includegraphics[width=1\linewidth,trim={0 0 0 0},clip]{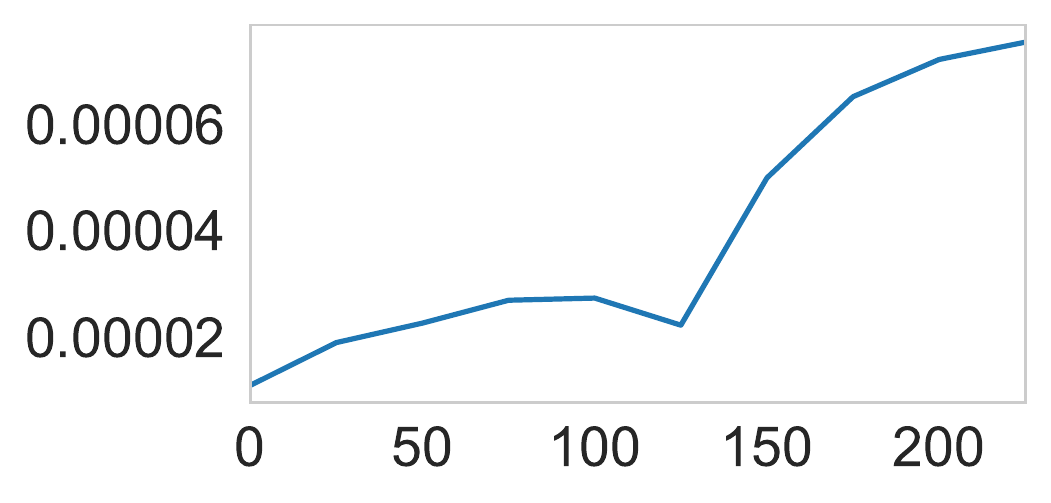}
		\caption{BN-train Ritz Val}
		\label{subfig:ggndegenbnonval}
	\end{subfigure}
	\begin{subfigure}{0.23\linewidth}
		\includegraphics[width=1\linewidth,trim={0 0 0 0},clip]{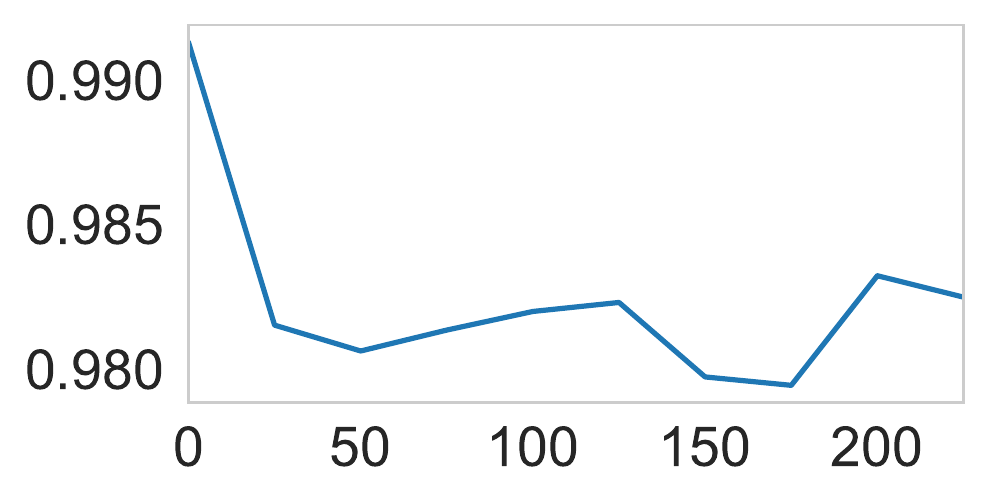}
		\caption{BN-eval Degen}
		\label{subfig:ggndegenbnoff}
	\end{subfigure}
	\begin{subfigure}{0.23\linewidth}
		\includegraphics[width=1\linewidth,trim={0 0 0 0},clip]{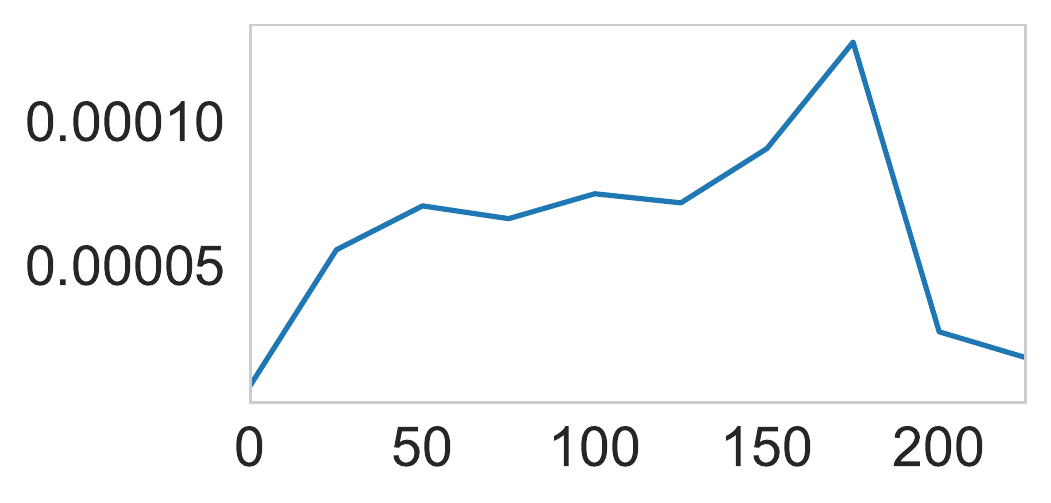}
		\caption{BN-eval Ritz Val}
		\label{subfig:ggndegenbnoffval}
	\end{subfigure}
	\caption{Generalised Gauss-Newton matrix rank degeneracy (proportion of zero eigenvalues) evolution throughout training using the PreResNet-$110$ on the CIFAR-$100$ dataset, total training $225$ epochs, the Ritz value corresponds to the value of the node which we assign to $0$.}
	\label{fig:ggndegen}
	
\end{figure}
\begin{figure}[h!]
	\centering
	\begin{subfigure}{0.23\linewidth}
		\includegraphics[width=1\linewidth,trim={0 0 0 0},clip]{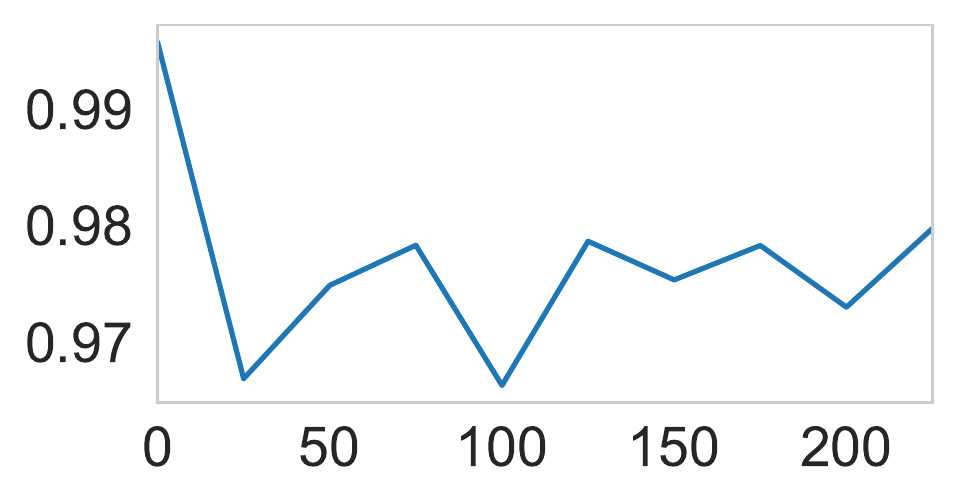}
		\caption{BN-train Degen}
		\label{subfig:hessdegenbnon}
	\end{subfigure}
	\begin{subfigure}{0.23\linewidth}
		\includegraphics[width=1\linewidth,trim={0 0 0 0},clip]{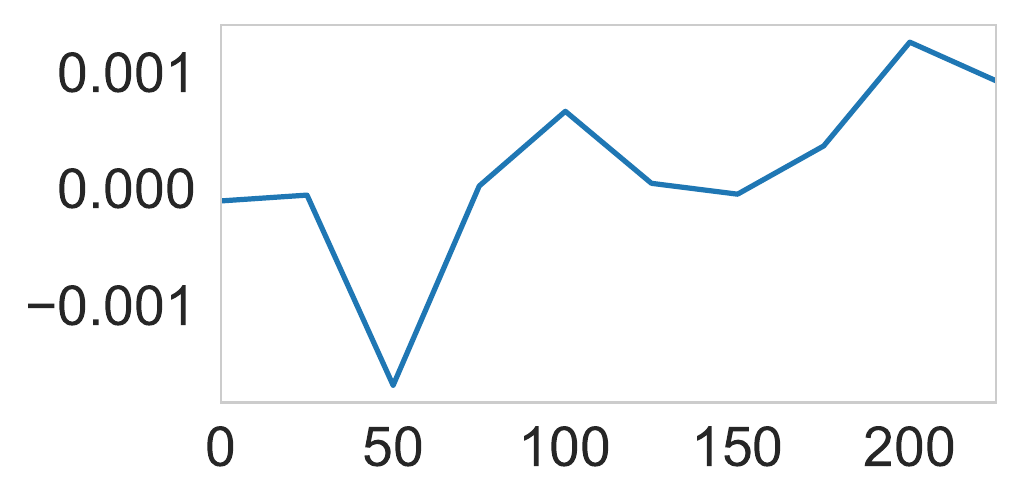}
		\caption{BN-train Ritz Val}
		\label{subfig:hessdegenbnonval}
	\end{subfigure}
	\begin{subfigure}{0.23\linewidth}
		\includegraphics[width=1\linewidth,trim={0 0 0 0},clip]{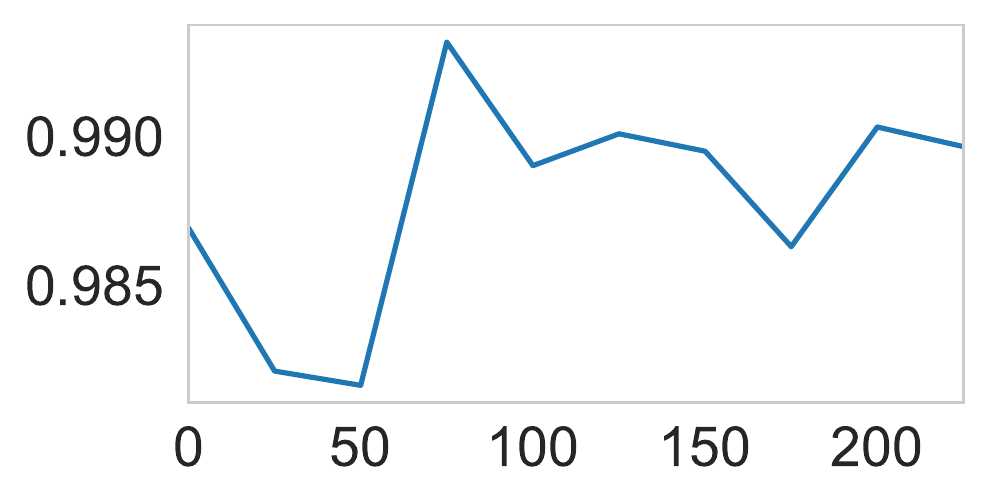}
		\caption{BN-eval Degen}
		\label{subfig:hessdegenbnoff}
	\end{subfigure}
	\begin{subfigure}{0.23\linewidth}
		\includegraphics[width=1\linewidth,trim={0 0 0 0},clip]{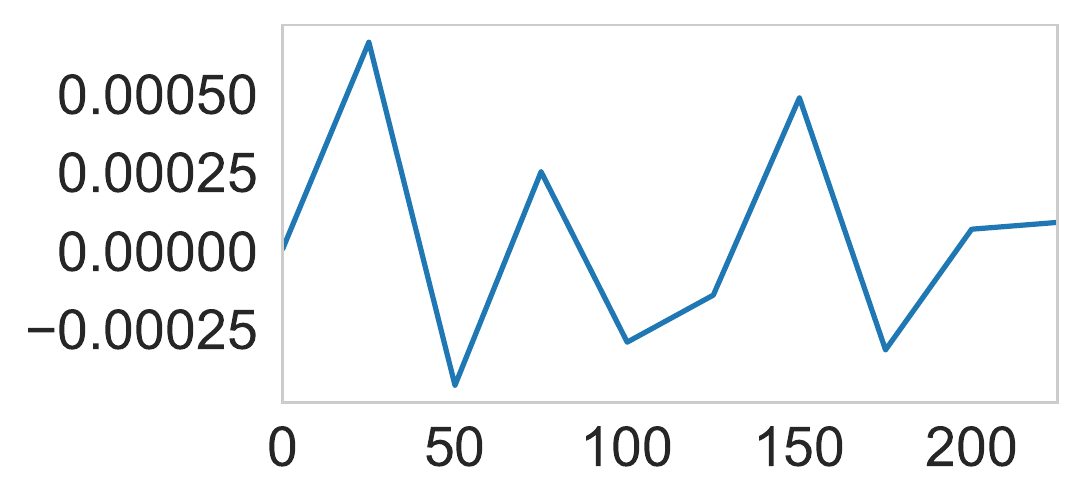}
		\caption{BN-eval Ritz Val}
		\label{subfig:hessdegenbnoffval}
	\end{subfigure}
	\caption{Hessian rank degeneracy (proportion of zero eigenvalues) evolution throughout training using the PreResNet-$110$ on the CIFAR-$100$ dataset, total training $225$ epochs, the Ritz value corresponds to the value of the node which we assign to $0$.}
	\label{fig:hessdegen}
\end{figure}
\end{Correction}

\section{Experimental Validation of the Theoretical Results}
\label{sec:experiments}
\begin{Correction}
In this section we run experiments to explictly test the validity of our derived theorems, for which we then develop practical algorithms and scaling rules in the coming sections.
\paragraph{Experimental Setup:}We use the GPU powered Lanczos quadrature algorithm \citep{gardner2018gpytorch, meurant2006lanczos}, with the Pearlmutter trick \citep{pearlmutter1994fast} for Hessian and GGN vector products, using the PyTorch \citep{paszke2017automatic} implementation of both Stochastic Lanczos Quadrature and the Pearlmutter. We then train a 16 Layer VGG CNN \citep{simonyan2014very} with $P=15291300$ parameters 
on the CIFAR-$100$ dataset (45,000 training samples and 5,000 validation samples) using SGD and K-FAC optimisers. For both SGD and K-FAC, we use the following  learning rate schedule:
\begin{equation}
	\label{eq:schedule}
	\alpha_t = 
	\begin{cases}
		\alpha_0, & \text{if}\ \frac{t}{T} \leq 0.5 \\
		\alpha_0[1 - \frac{(1 - r)(\frac{t}{T} - 0.5)}{0.4}] & \text{if } 0.5 < \frac{t}{T} \leq 0.9 \\
		\alpha_0r, & \text{otherwise.}
	\end{cases}
\end{equation}
We use a learning rate ratio $r=0.01$ and a total number of epochs budgeted $T=300$. We further use momentum set to $\rho=0.9$, a weight decay coefficient of $0.0005$ and data-augmentation on PyTorch \citep{paszke2017automatic}. We set the inversion frequency to be once per 100 iterations for K-FAC.
\end{Correction}

\paragraph{Advantages of the VGG architecture:}
For simplicity, we do not analyse the added dependence between curvature and the samples due to batch normalisation \citep{ioffe2015batch} and hence adopt as our reference model the VGG-$16$ \citep{simonyan2014very} on the CIFAR-$100$ dataset which does not utilise batch normalisation. We show in Appendix \ref{sec:batchnormresults} that many of our results also hold with batch-normalisation for ResNet architectures. We also include further results for the WideResNet architecture and the ImageNet-$32$ dataset.

\begin{Correction}
\paragraph{Estimating the Spectrum and Extremal Eigenvalues using the Lanczos Algorithm:} To plot the spectrum of the neural network we use the approach of \citet{granziol2019mlrg}, which gives a discrete, moment-matched approximation to the underlying spectrum. We use $m=100$ as the number of moments.
As discussed in \citet{granziol2019mlrg} the spectrum can be estimated consistently even using a single random vector, due to the high dimensionality of the neural network (large number of parameters). Whilst accurate bounds on the moments of the spectrum can be derived using stochastic Lanczos quadrature \citep{ubaru2017fast}, we note that these bounds are considered very loose and pessimistic \citep{ete,diego}. Whether a spectrum, or more generally a density, can be accurately estimated using its moments is known as the Hausdorff moment problem \citep{hausdorff1921summationsmethoden}. It can be shown \citep{diego} that finite matrices (such as the Hessians of Neural Networks) satisfy these conditions. Hence there can be no surprises from "bad pathological spectra" in this case. Note that in the case of infinite matrices, we would need to have bounded moment conditions and hence finite eigenvalues, but this is not relevant for our measurements here.
\end{Correction}

\subsection{Effect of spectral broadening for a typical batch size}

We plot an example effect of the spectral broadening of the Hessian due to mini-batching, for a typical batch size of $B=128$ in Figure \ref{fig:hesspert}. \emph{The magnitude of the extremal eigenvalues are significantly increased as are other outlier eigenvalues, such as the second largest}. We estimate the mean of the continuous region (bulk) of the spectrum as the position where the Ritz\footnote{This is the term used by approximate eigenvalue/eigenvector pairs by the Lanczos algorithm, as detailed in Appendix \ref{sec:lanczos}.} weight drops below $1/P$. We see that the spectral width of this continuous region also increases.
\begin{figure}[h]
\centering
\begin{subfigure}{0.49\linewidth}
	\includegraphics[trim={0cm 0cm 0cm 0cm},clip, width=1\textwidth]{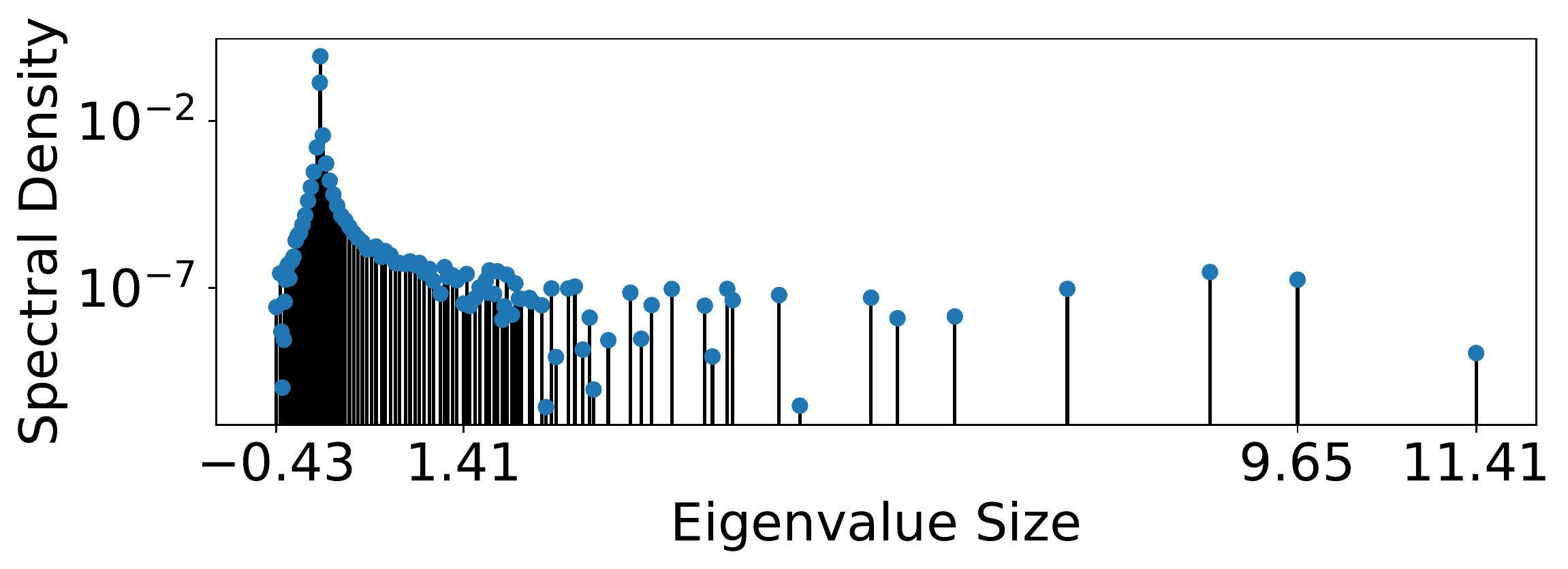}
	\caption{Empirical Hessian spectrum $N=50,000$}
	\label{subfig:vgg16batchhess}
\end{subfigure}
\begin{subfigure}{0.49\linewidth}
	\includegraphics[trim={0cm 0cm 0cm 0cm},clip, width=1\textwidth]{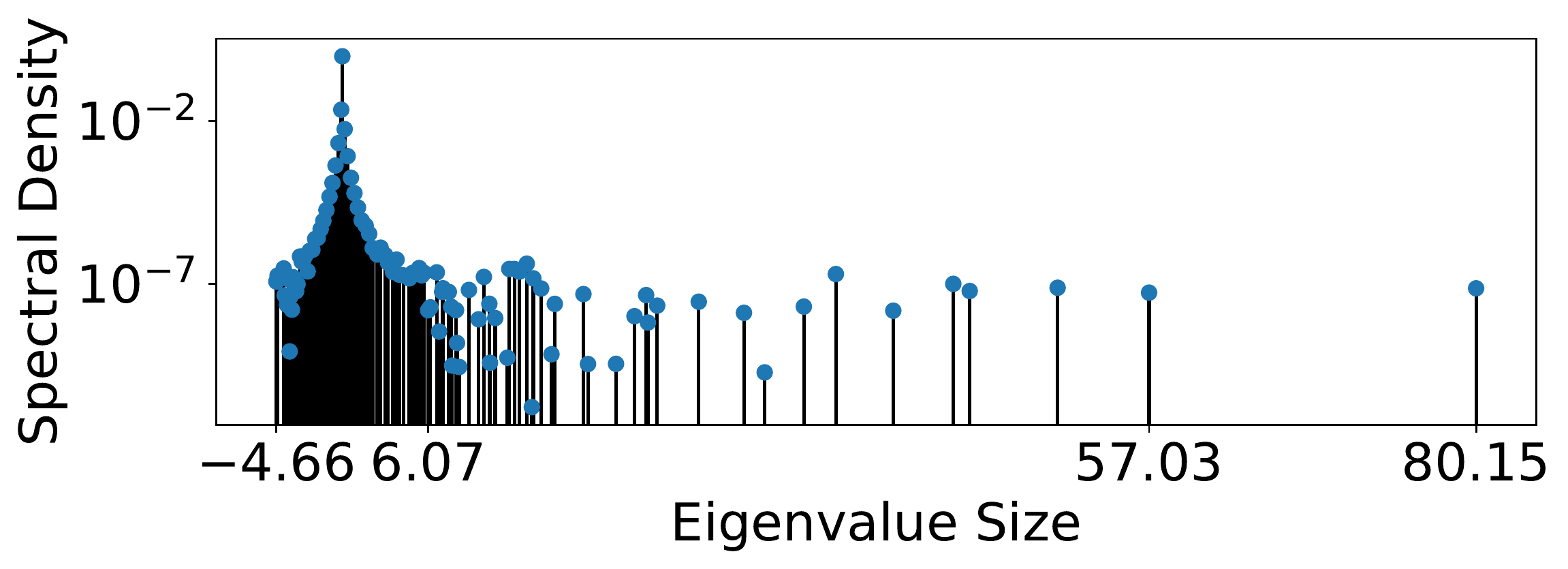}
	
	\caption{Batch Hessian spectrum $B = 128$}
	\label{subfig:vgg16emphess}
\end{subfigure}
\caption{Spectral Density of the Hessian at epoch $200$, for different sample sizes $B,N$ on a VGG-$16$ on the CIFAR-$100$ dataset. The Y-axis corresponds to $p(\lambda)$ and the X-axis to $\lambda$. The initial learning rate used is $\alpha = 0.05$, with momentum $\rho=0.9$ and weight decay $0.0005$, using the learning rate schedule in Section \ref{sec:applictions}.}
\label{fig:hesspert}

\end{figure}
We plot an example of the Generalised Gauss-Newton matrix in Figure \ref{fig:ggnpert}, which for cross-entropy loss and softmax activation is equal to the Fisher information matrix \citep{pascanu2013revisiting}. We observe identical behaviour of bulk and outlier broadening.
\begin{figure}[htbp]
\centering
\begin{subfigure}{0.48\linewidth}
	\includegraphics[trim={0cm 0cm 0cm 0cm},clip, width=1\textwidth]{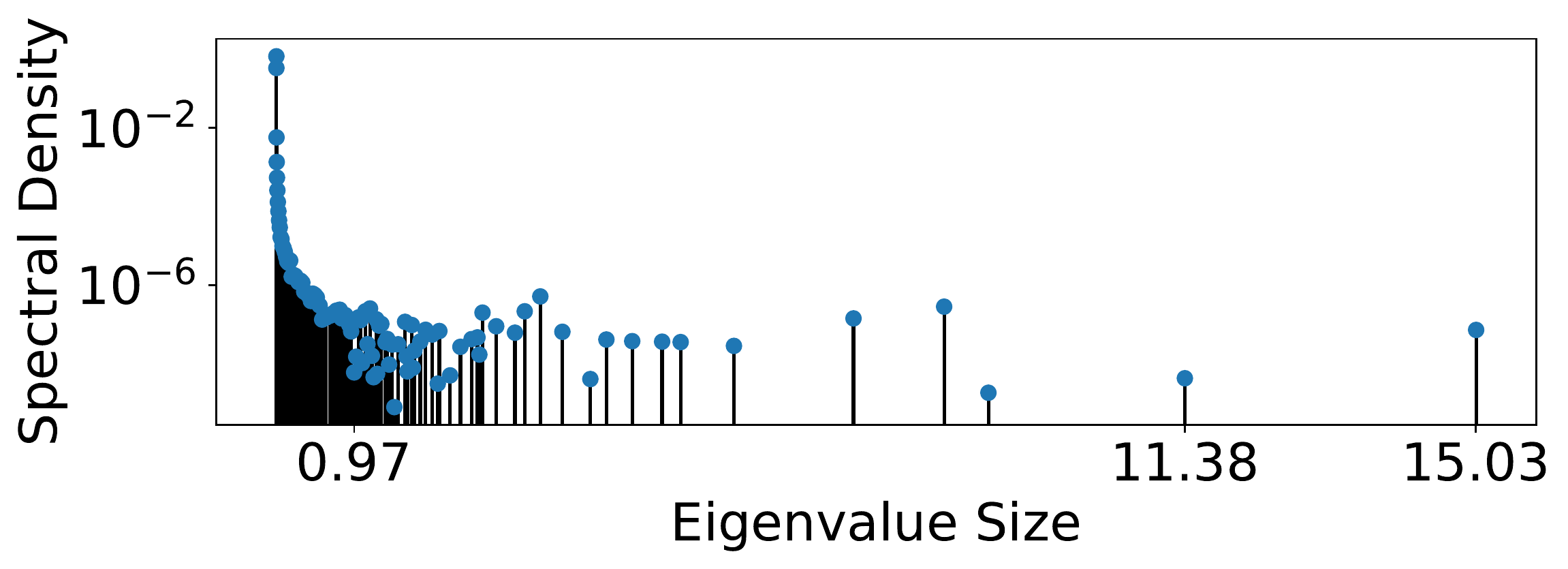}
	
	\caption{Empirical GGN spectrum $N=50,000$}
	\label{subfig:vgg16batchggn}
\end{subfigure}
\begin{subfigure}{0.48\linewidth}
	\includegraphics[trim={0cm 0cm 0cm 0cm},clip, width=1\textwidth]{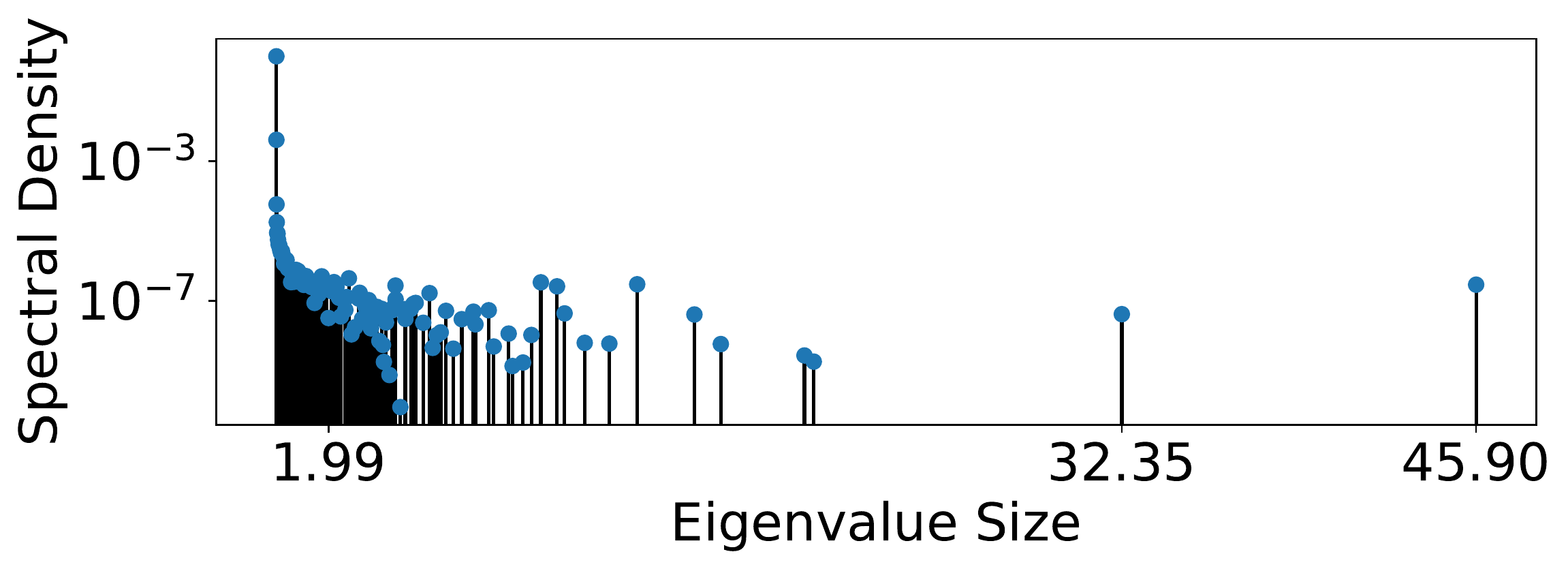}
	
	\caption{Batch GGN spectrum $B = 128$}
	\label{subfig:vgg16empggn}
\end{subfigure}
\caption{Spectral Density of the Generalised Gauss-Newton matrix (GGN) at epoch $25$ ,for different sample sizes $B,N$,  on a VGG-$16$ on the CIFAR-$100$ dataset. The Y-axis corresponds to $p(\lambda)$ and the X-axis to $\lambda$. The initial learning rate used is $\alpha = 0.05$, with momentum $\rho=0.9$ and weight decay $0.0005$, using the learning rate schedule in Section \ref{sec:applictions}.}
\label{fig:ggnpert}
\end{figure}

\subsection{Measuring the Hessian Variance}
We estimate the variance of the Hessian/GGN using stochastic trace estimation \citep{hutchinson1990stochastic,diego} in Algorithm \ref{alg:hessvar}, from which the variance per element can be inferred. 
\textcolor{black}{
Note that under the assumptions of our model, which assumes that the batch Hessian is either a deterministic full Hessian plus a stochastic fluctuations matrix (Theorem \ref{theorem:mainresult}), or alternatively the product of a stochastic fluctuations matrix and a deterministic modified Jacobian (Theorem \ref{theorem:batchtheoremgnn}), the variance of the elements of the Hessian directly leads us to the variance of the elements of the fluctuations matrix.
}
We plot the evolution of the Hessian/GGN variance throughout an SGD training cycle in Figure \ref{fig:hessvarcurve},
\textcolor{black}{
where we observe a slow initial growth, followed by explosive growth during learning rate reduction (from epoch $161$ onwards) and then reduction when the learning rate is held fixed at a low value (from epoch $270$ onwards). Because the variance of the Hessian massively increases in the later part of training (from epoch $161$ onwards) and the variance of the Hessian determines the variance of the elements of the fluctuations matrix (because the full Hessian is deterministic).
}
This Figure implies that we expect the batch Hessian extremal eigenvalues to diverge from those of the empirical Hessian during training. \textcolor{black}{
By `diverge' we specifically mean substantially larger in magnitude. This is exactly what we see in practice in Figures \ref{subfig:sgdvgg16batchhess} and \ref{subfig:sgdvgg16batchggn} for both the Hessian and the Generalised Gauss Newton. Here we plot the batch Hessian maximum eigenvalues (Batch Maxval) using a batch size of $B=128$ against the full Hessian maximum eigenvalues (Full MaxVal) over the course of training a VGG-$16$ on CIFAR-$100$. We track the Hessian variance over the trajectory to make our predictions (shown as Pert Maxval). We calculate the perturbation prediction using Theorems \ref{theorem:mainresult} and \ref{theorem:batchtheoremgnn}, where $\sigma_{\epsilon}$ is calculated using Algorithm \ref{alg:hessvar}. The full Hessian maximum eigenvalue used for the theorems and plotted is derived from using the Lanczos algorithm on the full dataset $N$. We take the average of $10$, $B=128$ batch Hessian extremal eigenvalues. We shade in the $\pm$ standard deviation of our stochastic Batch Hessian and Batch Generalised Gauss Newton eigenvalues. We also repeat the same experiment for the KFAC optimiser and show similar results, pertaining to the difference between the full and batch eigenvalues along with the ability to predict them in Figures \ref{subfig:kfacvgg16batch}, \ref{subfig:kfacvgg16batchggn}. Whilst the goal of this section is to show that Theorems \ref{theorem:mainresult},\ref{theorem:batchtheoremgnn} are accurate and representative, we note that potentially accurate and cheap estimates of the full Hessian spectral norm could be calculated using an inverse procedure, whereby we calculate the spectral norm on a data subset and then, considering the Hessian variance within a subset, estimate the full Hessian spectral norm. 
}
\begin{table}[h]
\begin{tabular}{cc}
	\begin{minipage}{.45\textwidth}
		\begin{algorithm}[H]
			\caption{Calculate Hessian Variance}
			\label{alg:hessvar}
			\begin{algorithmic}[1]
				\STATE {\bfseries Input:} Sample Hessian $\mH_{i}\in \mathbb{R}^{P\times P}$
				\STATE {\bfseries Output:} Hessian Variance $ \sigma^{2}$
				\STATE $\vv \in \mathbb{R}^{1\times P} \sim \mathcal{N}(\boldsymbol{0}, \mI)$
				\STATE Initialise $\sigma^{2}=0, i = 0$, $\vv \leftarrow \vv/||\vv||$
				\FOR{$i < N$}
				\STATE $\sigma^{2} \leftarrow \sigma^{2} + \vv^{T}\mH_{i}^{2}\vv$
				\STATE $i \leftarrow i + 1$
				\ENDFOR
				\STATE $\sigma^{2} \leftarrow \sigma^{2} - [\vv^{T}(1/N\sum_{j=1}^{N}\mH_{j})\vv]^{2}$
			\end{algorithmic}
		\end{algorithm}
		\vspace{0pt}
		\caption{Algorithm which estimates the central quantity $\sigma_{\meps}^{2}$ in Theorems~\ref{theorem:mainresult} \& ~\ref{theorem:batchtheoremgnn}.}
	\end{minipage}&
	\begin{minipage}{.52\textwidth}
		\begin{figure}[H]
			\centering
			\includegraphics[trim={0cm 0cm 0cm 0cm},clip, width=0.97\textwidth]{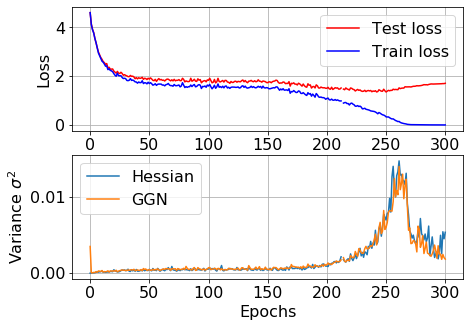}
			\vspace{0pt}
			\caption{Loss/variance evolution during SGD training for VGG-$16$ CIFAR-$100$. Learning Rate schedule specified in Sec~\ref{sec:applictions}.}
			\label{fig:hessvarcurve}
		\end{figure}
	\end{minipage}
\end{tabular}
\end{table}
\begin{figure}[h!]
\centering
\begin{subfigure}{0.24\linewidth}
	\includegraphics[trim={0cm 0cm 0cm 0cm},clip, width=1\textwidth]{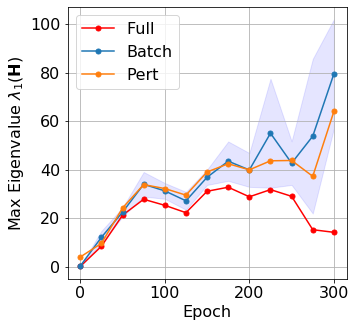}
	
	\caption{KFAC $\lambda_{1}(\mH)$}
	\label{subfig:kfacvgg16batch}
\end{subfigure}
\begin{subfigure}{0.24\linewidth}
	\includegraphics[trim={0cm 0cm 0cm 0cm},clip, width=1\textwidth]{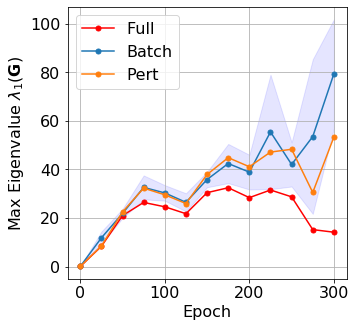}
	
	\caption{KFAC $\lambda_{1}(\mG)$}
	\label{subfig:kfacvgg16batchggn}
\end{subfigure}
\begin{subfigure}{0.24\linewidth}
	\includegraphics[trim={0cm 0cm 0cm 0cm},clip, width=1\textwidth]{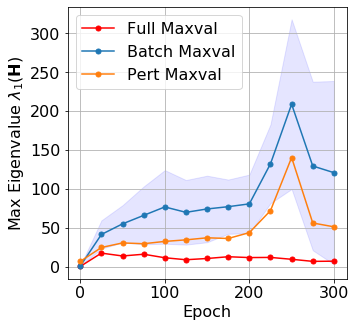}
	
	\caption{SGD $\lambda_{1}(\mH)$}
	\label{subfig:sgdvgg16batchhess}
\end{subfigure}
\begin{subfigure}{0.24\linewidth}
	\includegraphics[trim={0cm 0cm 0cm 0cm},clip, width=1\textwidth]{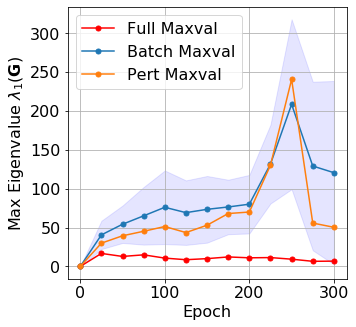}
	
	\caption{SGD $\lambda_{1}(\mG)$}
	\label{subfig:sgdvgg16batchggn}
\end{subfigure}
\caption{Evolution of the maximal eigenvalue $\lambda_{1}$ for both the Hessian $\mathbf{H}$ and the GGN matrix $\mathbf{G}$, during SGD and KFAC training on VGG-$16$ using the CIFAR-$100$ dataset. Full, Batch and Pert refer to the full, batch and the theoretically predicted Hessian eigenvalues respectively. The initial learning rate used is $\alpha = 0.05$, with momentum $\rho=0.9$ and weight decay $0.0005$ for SGD and using $\alpha=0.003$ and decoupled weight decay $0.01$ for KFAC, both using the learning rate schedule in Section \label{sec:applictions}.}
\label{fig:perttest}
\end{figure}

\begin{Correction}
\subsubsection{How important is stochasticity?}
\label{subsec:stochasticrmt}
The batch Hessian extremal eigenvalues have a large variance. This is to be expected, as our results are in the limit of $P,B \rightarrow \infty$ and corrections for finite $B$ scale as $B^{-1/4}$ for matrices with finite $4$th moments \citep{bai2008convergence}, which is $\approx 30 \%$ for $B=128$. Both the theoretical results from the additive noise process (Theorem \ref{theorem:mainresult}) and multiplicative noise process (Theorem \ref{theorem:batchtheoremgnn}) are within 1 standard deviation from the true result. They both follow the increase in variance of the Hessian in Figure \ref{fig:hessvarcurve}. 
We note that the multiplicative noise process provides a better fit. Recent work shows the Hessian outliers to be attributable to the GGN matrix component of the spectrum \citep{papyan2018full}. Hence a positive semi-definite noise process, tailored to the GGN matrix, would be expected to better estimate the outlier perturbations due to mini-batching - which we observe.

\section{Test Accuracy and Movement in the Loss Surface}
In this section we bring together intuitions on generalisation, flat minima and distance from the initialisation point. We argue that in the case that there are exponentially many local minima very close in error to that of the global minimum on the training set, similar curves on the validation not training set may give greater ability to discern whether we are appropriately scaling our learning rates with batch size. We argue that if we want to escape similarly sharp minima into similarly flat minima, we need to scale our learning rates by the decrease/increase in sharpness resulting from our increase/decrease in sub-sampling respectively. How to scale the learning rate with batch size forms the study of our next sections. 
\paragraph{Large Learning Rates and their Uses}
\label{sec:testaccrmt}
Large learning rates have been shown to induce implicit regularisation, observed in \cite{li2019towards}. In contrast, too small learning rates have been shown to lead to poor generalisation \citep{jastrzkebski2017three,berrada2018deep}. We show an example in Figure \ref{fig:smalllrtraintest} where a smaller learning rate for Adam quickly trains worse but generalises better. This is despite the fact that we train with no weight decay, hence there is no confounding $(1-\alpha\gamma)$ decay factor which depends on the learning rate. Therefore, \emph{learning the largest stable learning rate is an important practical question for neural network training}.
\begin{figure}[h!]
    \centering
	\includegraphics[width=0.8\textwidth]{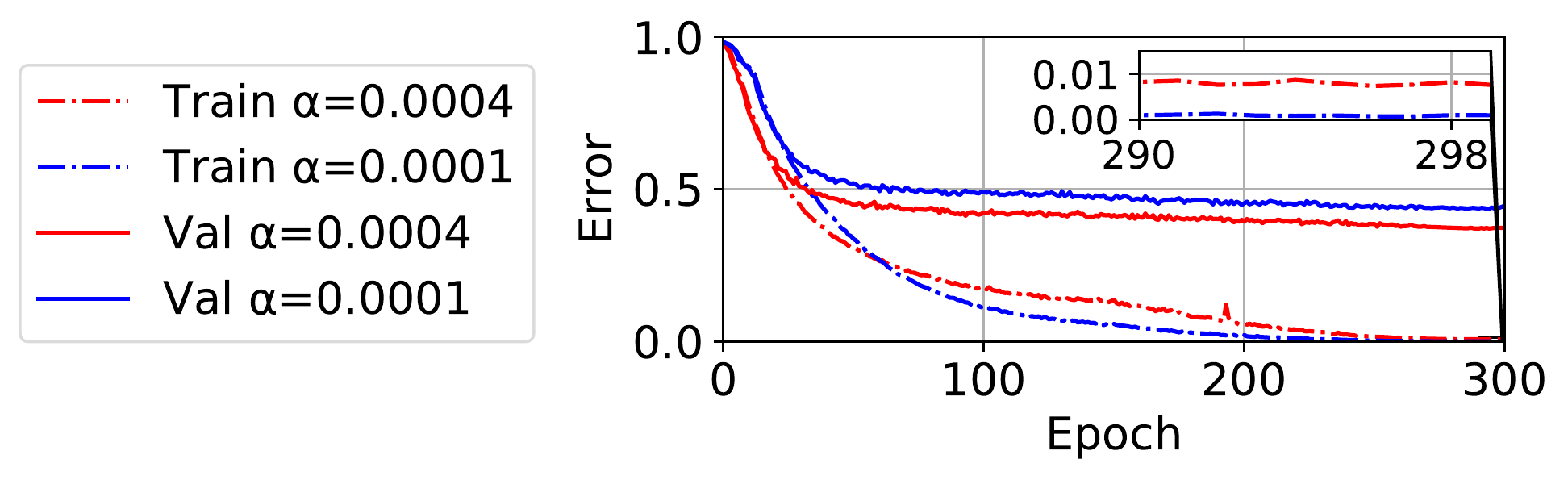}
	\caption{\textbf{Smaller learning rates train faster but generalise worse} Training and Validation Error on the VGG-$16$ on CIFAR-$100$ with different initial learning rates following the same learning rate schedule with no weight decay.}
	\label{fig:smalllrtraintest}
\end{figure}
There is no definitive answer on why large learning rates seem to correlate with better generalisation and this topic remains an active area of research. However, one potential intuitive explanation for this phenomenon is that many minima, equivalently or similarly deep in the training loss surface, may have different characteristics on the true loss surface. This is indicated by performance on the validation or test set, which can be considered unbiased estimates of the True Risk/Error. Such minima might be "flatter", generalising better under both Bayesian and minimum description length arguments \citep{hochreiter1997flat,jastrzkebski2017three,dinh2017sharp}. These arguments can be extended to include parameterisation invariance \citep{tsuzuku2020normalized}, which makes the correlation between sharpness and test accuracy more robust. 
\begin{figure}[t!]
	\centering
	\begin{subfigure}{0.12\textheight}
		\includegraphics[trim={2cm 0cm 1.2cm 0cm},clip, width=1\textwidth]{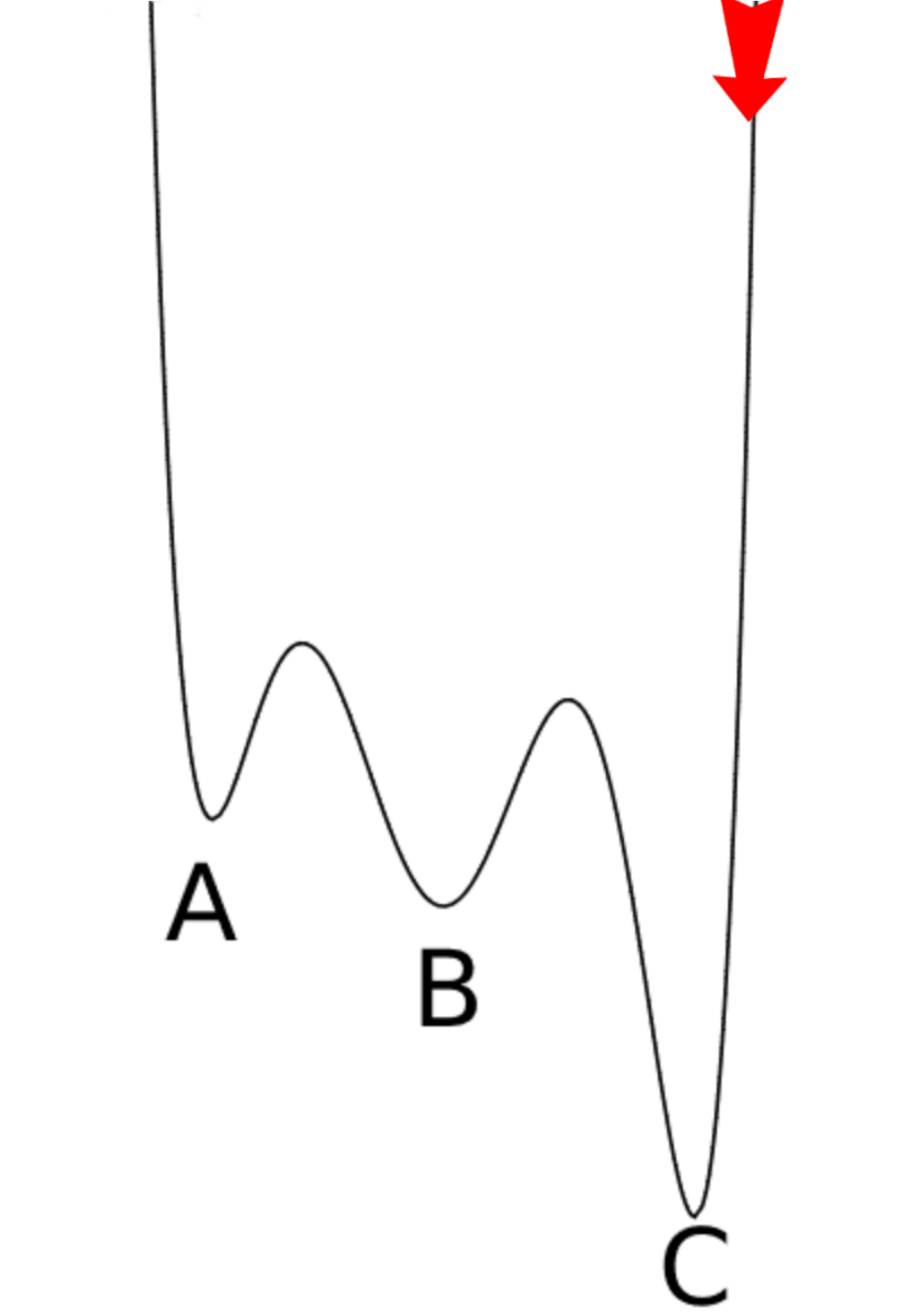}
		\caption{Small $B$}
		\label{subfig:tightcurve}
	\end{subfigure}
	\begin{subfigure}{0.24\textheight}
		\includegraphics[trim={0.6cm 0cm 0cm 0cm},clip, width=1\textwidth]{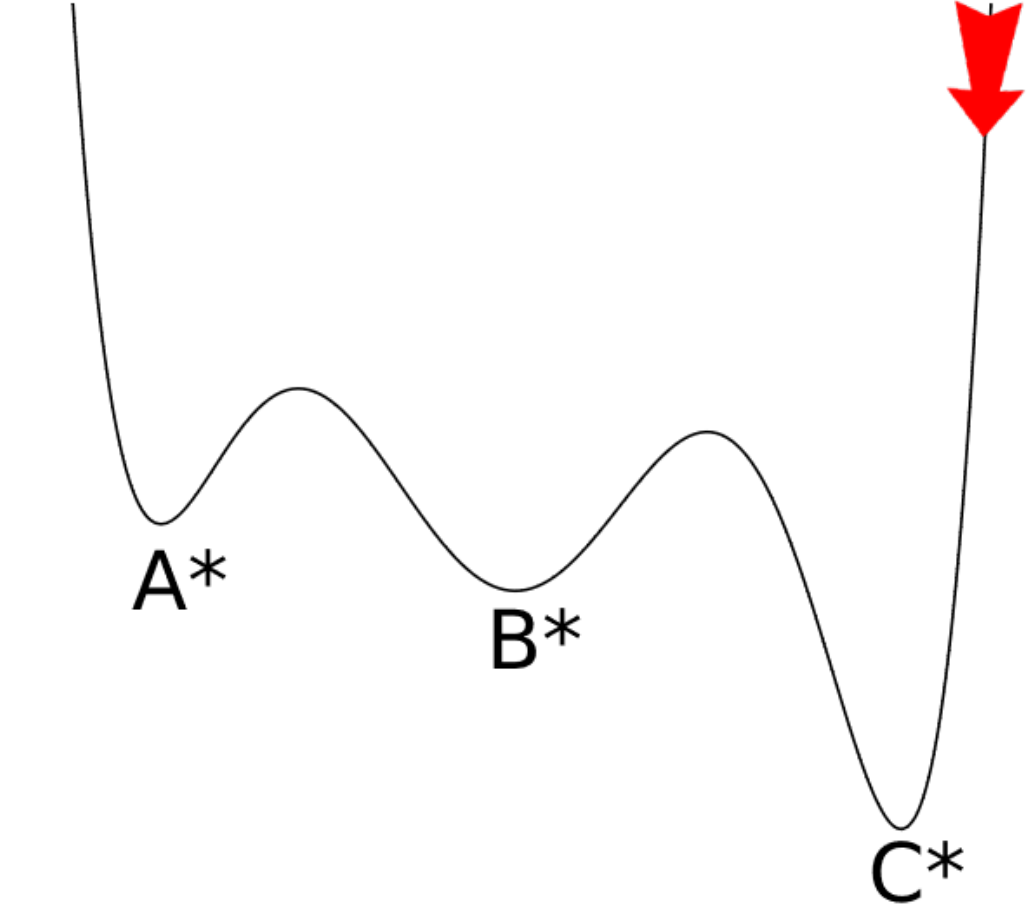}
		\caption{Large $B$}
		\label{subfig:widecurve}
	\end{subfigure}
	\begin{subfigure}{0.25\textheight}
		\tiny
		\begin{tabular}{@{}lllll@{}}
			\toprule
			$\alpha$ &Train Acc & Val Acc & Test Acc & $||\Delta_{init}||$ \\ \midrule
			0.08                 & $99.7 \%$ & $64.35\%$ & $65.36\%$  & 145.76                              \\
			0.028                  & $99.89 \%$& $61.08\%$ & $62.45\%$  & 67.44                               \\ \bottomrule
		\end{tabular}
		\label{table:distvgg}
		\caption{\textbf{Larger distances from initalisation in weight space give solutions of greater generalisation.} Various Learning Rates $\alpha$ for $B=1024$, with corresponding val/test accuracies and $L_{2}$ distance from initialisation for CIFAR-$100$ on the VGG-$16$.}
	\end{subfigure}
	\caption{Transformed Test Set Surface, going from sharper to flatter with an increase in Batch Size. Larger learning rates are more able to escape local poor quality minima which are close to the initialisation.}
\end{figure}
In this case, the practice of large learning rate SGD (or any other optimiser) can be viewed as simulated annealing, where we move around the loss surface, limiting our ability to be trapped early on in a sharp local minimum, as the maximal sharpness of the minimum in which we can be trapped is inversely proportional to our learning rate \citep{wu2018sgd}. Another conjecture, from \citet{hoffer2017train}, considers minima which have a greater distance from the initialisation surface, to generalise better. We visualise both of these concepts in Figure \ref{subfig:tightcurve}, which can be considered a one dimensional slice in the high dimensional surface. If we start with a small learning rate, we settle in minimum $C$, which is deep (i.e low training loss) but sharp and close to the initialisation point (shown as a red arrow), indicating potentially poor generalisation. If instead we start with a sufficiently large learning rate we can potentially escape such a minimum and with sufficient decay later in training end in minima $B$ or $A$, which are both flatter and further from the origin. We show evidence that these phenomena are relevant to deep learning in Figure $11c$, where we show a larger learning rate variant, trains worse, but tests and validates better and has a greater distance from the initialisation point.

\subsection{Validation Error Curves as a Proxy for Trajectories in the Loss Surface}
When increasing the batch size, our Theorems \ref{theorem:mainresult} and  \ref{theorem:batchtheoremgnn} indicate that the sharpness of the loss landscape decreases at all points in weightspace. To have a similar trajectory in weight space, where we avoid the 
"transformed" sharp minimum $C^{*}$, shown in Figure \ref{subfig:widecurve}, we must move with a larger learning rate since the extremal and bulk edge eigenvalues have decreased in size. Note that depending on whether we mainly move in outlier or bulk directions, the transformations will vary either linearly or with the square root of the batch size respectively.

Since DNNs have been shown to easily fit completely random data \citep{zhang2021understanding} and have exponentially local minima in the training loss \citep{choromanska2015loss}, which are close to the global minimum of training loss, there are likely to be many regions of low training error/loss. Hence, when scaling the learning rate in-equivalently with batch size, we could be taking very different paths in weight space, moving through very different minima and still end up with very similar training loss/error curves. Here by in-equivalently we mean as per our example not escaping from minima $C/C^{*}$ into minimum $A/A^{*}$ but instead to $B/B^{*}$.
Hence, we can consider that validation/testing error curves and NOT training error curves should serve as good proxies to identify whether similar trajectories (moving into and through minima of similar sharpness) are being taken along the multi-modal surface. 

\paragraph{Note on Trajectory Stochasticity:} We note that, since we consider trajectories in expectation, we now discuss whether trajectory stochasticity affects the core arguments presented in this paper. Deep learning initialisation with different random seeds (and hence different starting points and different gradient updates) leads to very similar validation performance. As shown in Appendix \ref{sec:diffinit}.
This is not the case, for example, with different learning rates. We thus consider the trajectory in expectation to be the critical factor and not the stochasticity.
\paragraph{Experimental Validation:}
We show in Figure $11c$ an example of a VGG-$16$ network on CIFAR-$100$, trained with different learning rates and a common batch size of $B=1024$. Despite near identical training performance across the ensemble (noting though that the lower learning rate variant appears to train better), the validation and test accuracies differ significantly with initialisation distance (again noting that these are higher for the higher learning rate variant). We further show in Appendix \ref{sec:initdist} that, for this network and a linear scaling relation (which we derive for SGD in the subsequent section) that the distance in L2 norm between initialisation and final solution remains stable across scaling, as does the final testing error and its profile.

\paragraph{Practical Consequences:}
Whilst we present both training and validation metrics in our experiments, the setup of the experiments, which uses data augmentation, an initially large learning rate (followed by a drop) and often non-zero weight decay, is specifically chosen to provide a low
test error. Predicting the learning rates required to achieve similar validation error trajectories, for a given batch size, is key as opposed to finding trajectories leading to similar training error. We show that different learning rates can give rise to (largely) indistinguishable training characteristics unless divergence occurs. 

\subsection{Experimental Design} 
Given that neural networks can be trained with a wide variety of schedules, which traverse the loss landscape in very different ways, we need an experimental design which allows us to discern, whether two trajectories in weight space are "similar" and hence whether a proposed scaling rule "works". Having already argued that learning the largest possible initial learning rate is a practical problem for deep learning as such schedules aid deep learning generalisation and that trajectories in the validation error are more meaningful to measure loss trajectory movements, we need to be clear with what we mean by "largest". We find that for the VGG \citep{simonyan2014very} networks, without batch normalisation and weight decay, there exists a learning rate value for a given schedule (we use flat and then linear decay) above which (to a certain precision in the grid search) the loss value returns NaN and training breaks. This serves as our definition of maximum and hence for this reason in our experiments we consider the VGG-$16$ as our reference network. As discussed previously, due to the interaction of batch normalisation with curvature and the lack of explicit treatment in our work on batch normalisation, this network serves as an ideal testing ground, but we conjecture our scaling rules to hold more generally and give preliminary evidence for this. For other networks, including batch normalisation \citep{ioffe2015batch} and residual layers \citep{he2016deep}, we find that there exist learning rates above which training and testing both suffer and so we use a working "maximum" which is the largest learning rate which gives a good validation error, close to what is used in practice. 
\end{Correction}


\section{SGD learning rates as a function of batch size}
\label{sec:scaling}
One key practical application of Section \ref{sec:rmttheory} for neural network training is its implications for learning rates as we alter the batch size. \begin{Correction}
Where weight decay is used, the value of $\gamma=0.0005$ is employed, giving the best validation performance on the grid of $[0,0.0001,0.0005,0.001]$ and representing a common practical starting choice.
\end{Correction}
\begin{Correction}

\subsection{Finding the Maximal Allowable Learning Rate}

The change in batch loss, to second-order approximation, for a small step in the direction of the gradient is given by 
\begin{equation}
	\label{eq:losschange}
	\begin{aligned}
		\delta L_{batch}(\vw-\alpha\nabla L) & = - \alpha ||\vg (\vw)||^{2}\bigg(1-\frac{\alpha\sum_{i}^{P}\lambda_{i}||\vphi_{i}\vg(\vw)||^{2}}{2}\bigg) \leq - \alpha ||\vg (\vw)||^{2} \bigg(1-\frac{\alpha \lambda_{1}}{2}\bigg) \\
	\end{aligned}.
\end{equation}
Typically this bound is derived for the deterministic full gradient case \citep{nesterov2013introductory} and hence $\alpha < 2/\lambda_{1}$. In our case, since the batch is stochastic and hence the gradient and Hessian (and therefore its extremal eigenvalues) are also stochastic, this bound holds in expectation i.e. $\mathbb{E}(\delta L_{batch}(\vw-\alpha\nabla L))$.
Here, $\lambda_{1}(\mH_{batch})$ is the largest eigenvalue of the batch Hessian (in expectation) which, along with all outlier eigenvalues of the batch Hessian, are given by Theorem \ref{theorem:mainresult}.
\end{Correction}
A key term in Equation \ref{eq:losschange} is the overlap between the eigenvectors and the stochastic gradient, shown to be large in practice \citep{ghorbani2019investigation,gur2018gradient}.  This indicates that the outlier broadening effect predicted by our framework (when there are well separated outliers\footnote{If there are no outliers, we expect the largest eigenvalue to decrease as the square root of the batch size.}), i.e  $\lambda_{i}* \approx \lambda_{i} + P\sigma^{2}/\mathfrak{b}\lambda_{i}$, is relevant to determining the maximal allowed learning rate. 
\textcolor{black}{
This follows as the bracketed term in Equation \ref{eq:losschange} can be written as $(1-\frac{\alpha |\vg(\vw)|^{2}}{2}\sum_{i}^{P}\lambda_{i}\beta_{i}^{2})$. Hence, if $\sum_{i}^{k}\beta_{i}^{2} \approx 1$ (where $k$ is the number of outliers) and noting that all outliers scale in a similar way, the result in expectation is similar to that of the expectation of the upper bound. This can be seen in the case where the broadening term dominates the value of the outliers from the empirical Hessian, i.e. 
\begin{equation}
	1 - \frac{\alpha |\vg(\vw)|^{2}}{2}\sum_{i}\beta_{i}^{2}\bigg(\lambda_{i}+\frac{P\sigma^{2}}{\mathfrak{b}\lambda_{i}}\bigg) \approx 1 - \frac{P\sigma^{2}\alpha |\vg(\vw)|^{2}}{2\mathfrak{b}}\sum_{i}\frac{\beta_{i}^{2}}{\lambda_{i}}.
\end{equation}
Note that, if we want to consider the difference between the batch and true Hessian, we would have $B$ instead of $\mathfrak{b}$, where $\mathfrak{b} > B$ and for $B\ll N$ $\mathfrak{b} \approx B$.}
We observe outliers in all our experiments, as shown in Figures \ref{fig:hesspert} and  \ref{fig:ggnpert}, which is consistent with previous literature \citep{ghorbani2019investigation,papyan2018full} and motivated in Section \ref{sec:whywehaveoutliers}.

\paragraph{For small batch sizes the maximal learning rate is proportional to the batch size.} As the largest allowable learning rate as shown by Equation \ref{eq:losschange} is $\propto 1/\lambda_{1}*$ and $\lambda_{1}* = \lambda_{1} + P\sigma^{2}/B\lambda_{1}$ for very small batch sizes this $ \approx P\sigma^{2}/B\lambda_{1} $, hence increasing the batch size allows a proportional increase in the maximal learning rate. This holds until the first term $\lambda_{1}$ in Theorem \ref{theorem:mainresult} is no longer negligible in comparison to the latter, $P\sigma^{2}/B\lambda_{1}$. Thereafter we cross over into the regime where, although the spectral norm still decreases with batch size, it asymptotically reaches its minimal value $\lambda_{1}$. Hence the learning rate cannot be appreciably increased despite using larger batch sizes.
To validate this empirically, we train the VGG-$16$ on CIFAR-$100$, finding the maximal learning rate at which the network trains for $B=128$. We then increase/decrease the batch size by factors of $2$, proportionally scaling the learning rate. We plot the results in Figure \ref{fig:vggexp}. 







\begin{figure}[h!]
\centering
\begin{subfigure}{0.53\linewidth}
	\includegraphics[trim={0cm 0cm 0cm 0cm},clip, width=1\textwidth]{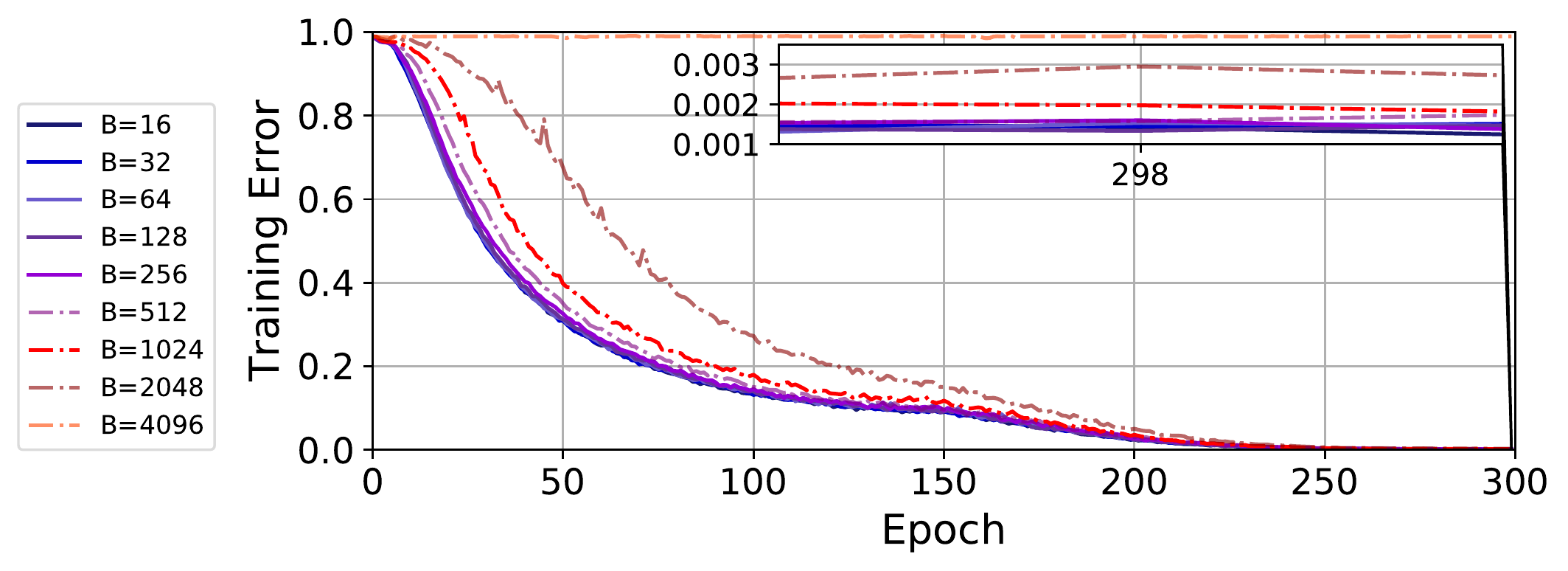}
	\caption{Training Error against Epoch}
	\label{subfig:vgg16train}
\end{subfigure}
\begin{subfigure}{0.45\linewidth}
	\includegraphics[trim={0.0cm 0cm 0cm 0cm},clip, width=1\textwidth]{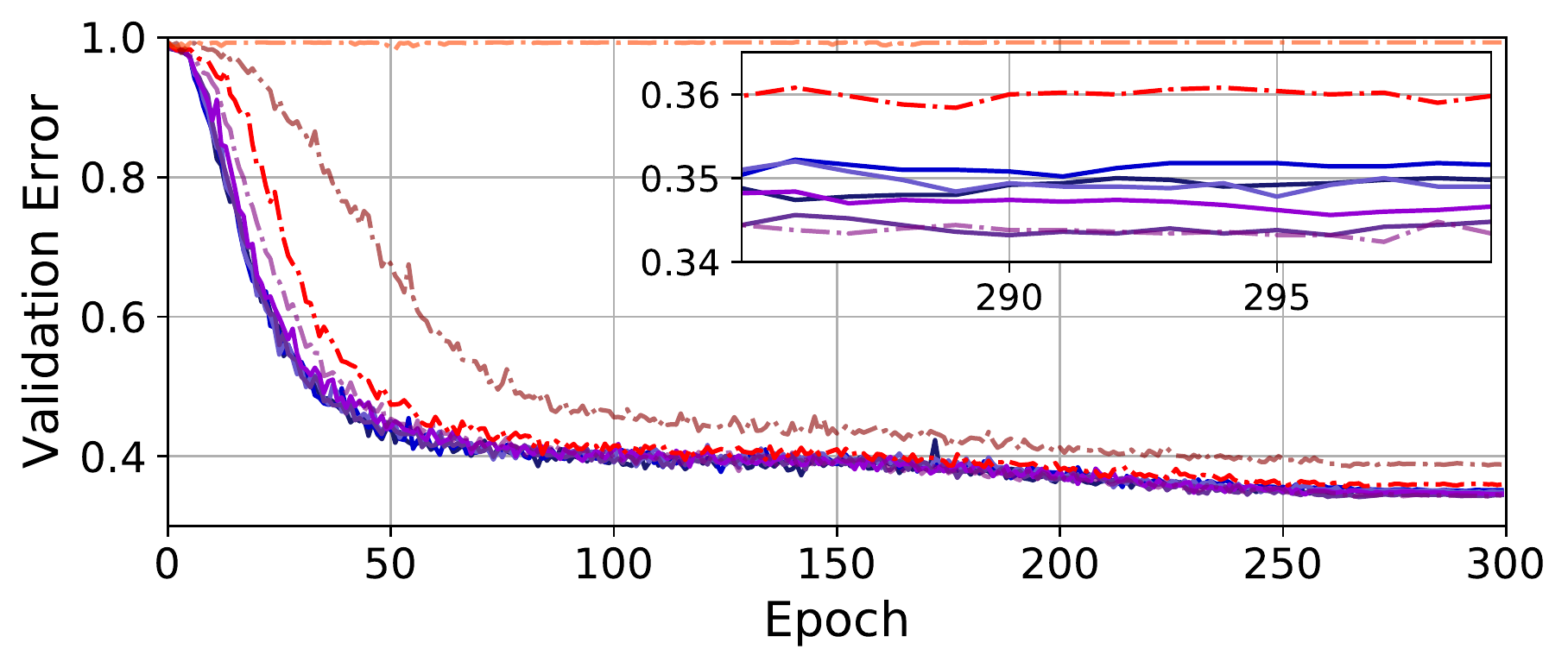}
	\caption{Validation Error against Epoch}
	\label{subfig:vgg16test}
\end{subfigure}
\caption{\textbf{Linear scaling is consistent up to a threshold.} Training and Validation error of the VGG-$16$ architecture, without batch normalisation (BN) on CIFAR-$100$, with no weight decay $\gamma=0$ and initial learning rate $\alpha_{0}=\frac{0.01B}{128}$}
\label{fig:vggexp}
\end{figure}

The 
\begin{Correction}
training and
\end{Correction}
validation accuracy remains stable for all batch size values, until a small drop for $B=1024$, a larger drop still for $B=2048$ and for $B=4096$ we see no training. 

\paragraph{One can get away with large initial learning rates.}
Another theoretical prediction is that, if the Hessian variance increases during training (as observed in Figure \ref{fig:hessvarcurve}), large learning rates which initially rapidly decrease the loss could become unstable later in training. 
\textcolor{black}{
This follows because we want to ensure, at every point in training, that the batch loss does not increase. The largest allowable learning rate which, in expectation, does not increase the batch loss is inversely proportional to the sum of the full Hessian (which is deterministic for a point in weight space) and a term proportional to the variance of the elements of the fluctuations matrix. If the variance of the Hessian (which uniquely determines under the conditions of our model the variance of the fluctuations matrix elements) increases, then we expect the largest allowable learning rate to decrease. Since, in practice, we note that the variance of the Hessian increases during training, we expect for these experiments to be able to start with a larger learning rates. This has also been noted in \citet{lewkowycz2020large}.
}

To see this, we run the same experiment but this time use twice the maximal allowed learning rate. We observe in Figure \ref{subfig:vgg16failtrain} that initially the loss decreases rapidly (far faster than in the smaller learning rate alternative in Figure \ref{subfig:vgg16train}), but that soon the training becomes unstable and diverges. This indicates that the practice of starting with an initially large learning rate and decaying it is well justified in terms of stability implied by the batch Hessian
at least for our experiments.

\begin{figure}[h!]
\centering
\begin{subfigure}{0.53\linewidth}
	\includegraphics[trim={0cm 0cm 0cm 0cm},clip, width=1\textwidth]{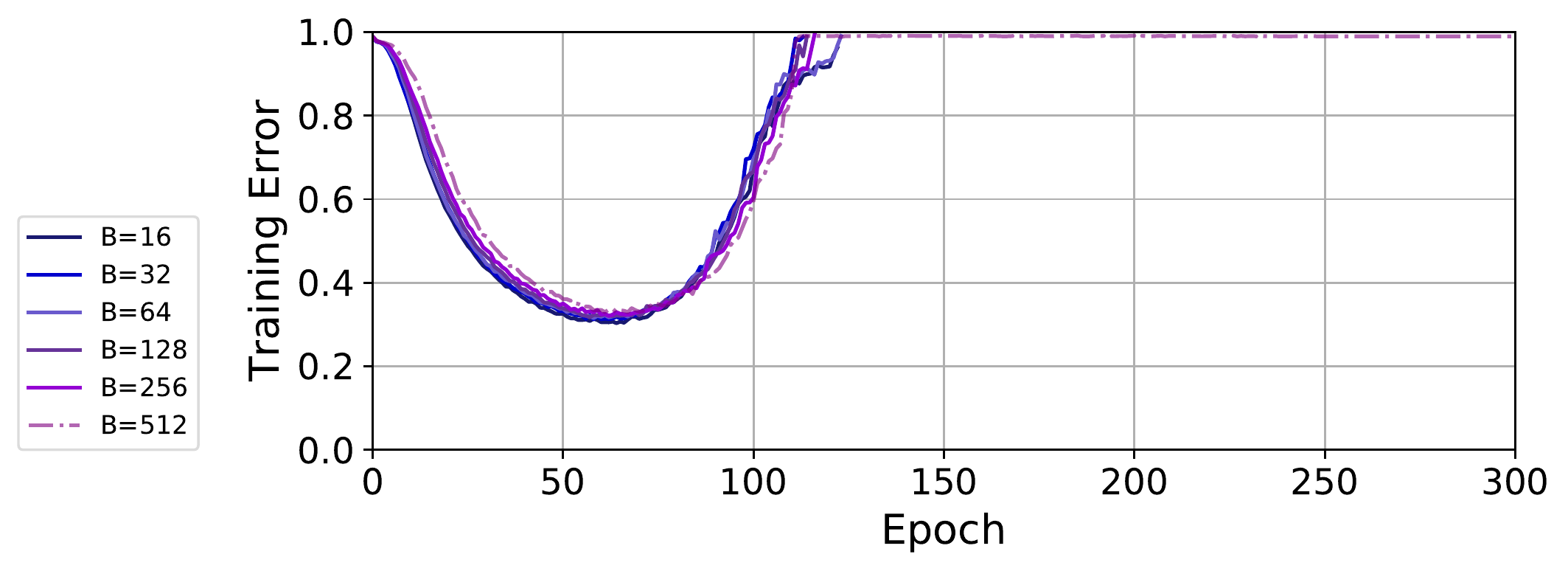}
	\caption{Training Error against Epoch}
	\label{subfig:vgg16failtrain}
\end{subfigure}
\begin{subfigure}{0.45\linewidth}
	\includegraphics[trim={0.0cm 0cm 0cm 0cm},clip, width=1\textwidth]{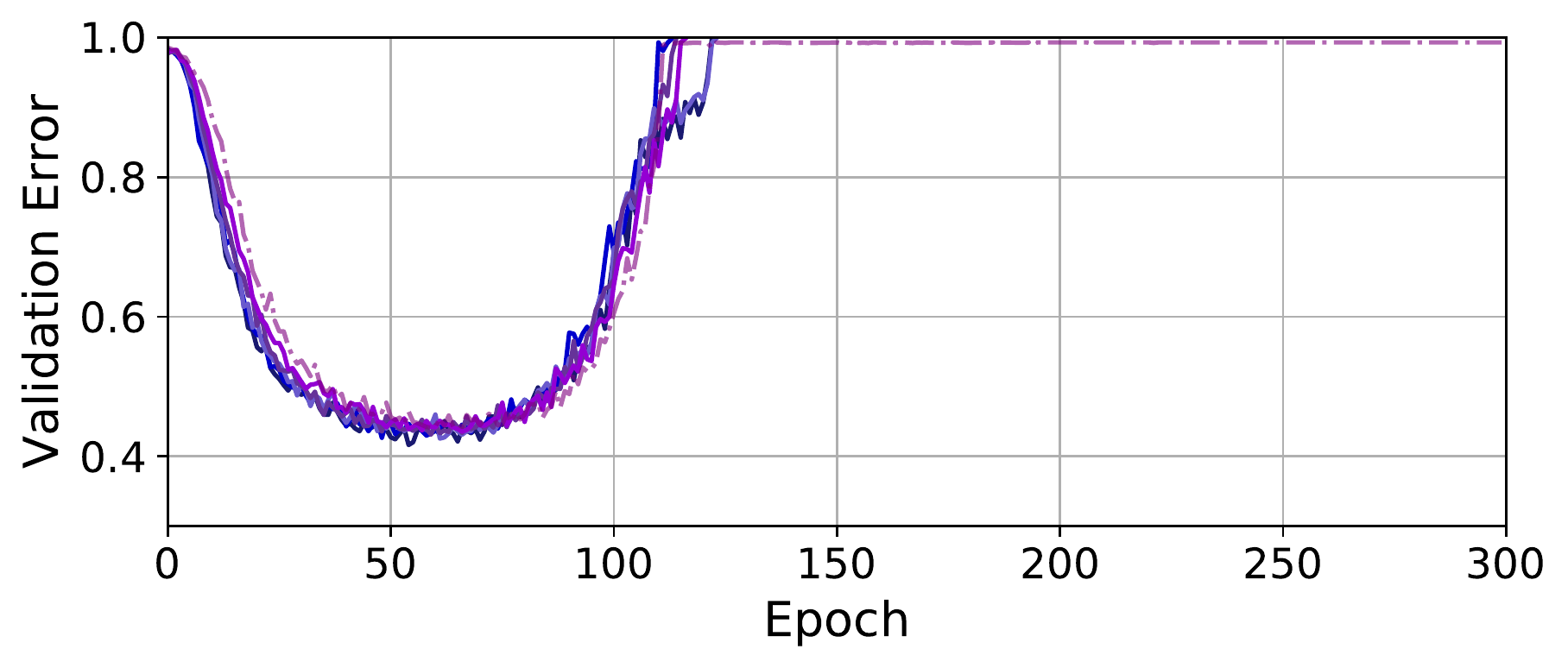}
	\caption{Validation Error against Epoch}
	\label{subfig:vgg16failtest}
\end{subfigure}
\caption{\textbf{Consistency holds for a variety of learning rates.} Training and Validation error of the VGG-$16$ architecture, without batch normalisation (BN) on CIFAR-$100$, with no weight decay $\gamma=0$ and initial learning rate $\alpha_{0}=\frac{0.02B}{128}$.}
\label{fig:vggfailed}
\end{figure}

\paragraph{Our linear scaling rule seems to hold generally for SGD.} To highlight the generality of our linear scaling rule, we include batch normalisation \citep{ioffe2015batch} and weight decay $\gamma = 0.0005$. In this case there is a greater range of permissible learning rates, so we grid search the best learning rate as defined by the validation error for $B=128$ and use our derived linear scaling rule, as shown in Figure \ref{fig:vggbnexp}, where we observe a similar pattern. We repeat the experiment on the WideResNet-$28\times10$ \citep{zagoruyko2016wide} on both the CIFAR-$100$ and ImageNet $32\times 32$ \citep{chrabaszcz2017downsampled} datasets shown in Figures \ref{fig:wrnc100}, \ref{fig:wrnimg32} respectively. Unlike the VGG model without batch normalisation, where unstable trajectories diverge or with batch normalisation do not train. Highly unstable oscillatory WideResNet trajectories converge with learning rate reduction, however they never reach peak performance. The 
\begin{Correction}
training and
\end{Correction}
test performance is stable for a variety of learning rates with fixed learning rate to batch size ratio, again 
\begin{Correction}
strongly supporting
\end{Correction}
the validity of the linear scaling rate rule until a threshold. 
\begin{figure}[h!]
\centering
\begin{subfigure}{0.53\linewidth}
	\includegraphics[trim={0cm 0cm 0cm 0cm},clip, width=1\textwidth]{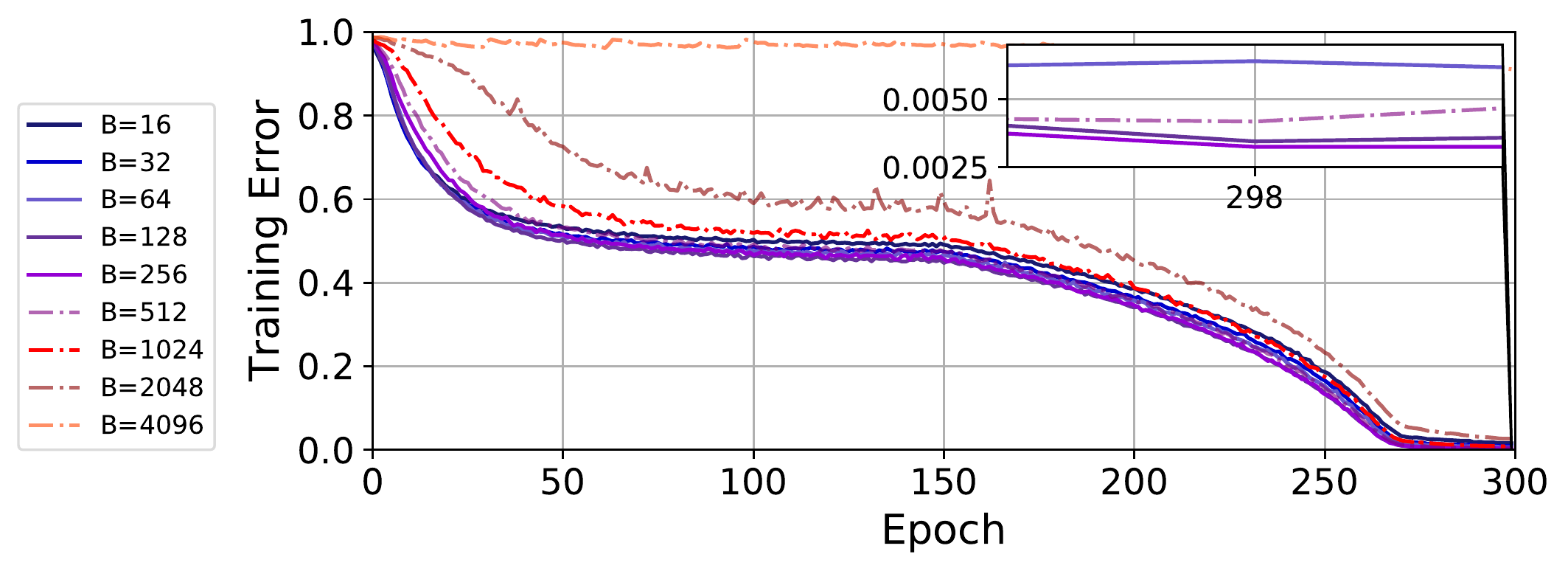}
	\caption{Training Error against Epoch}
	\label{subfig:vgg16bntrain}
\end{subfigure}
\begin{subfigure}{0.45\linewidth}
	\includegraphics[trim={0.0cm 0cm 0cm 0cm},clip, width=1\textwidth]{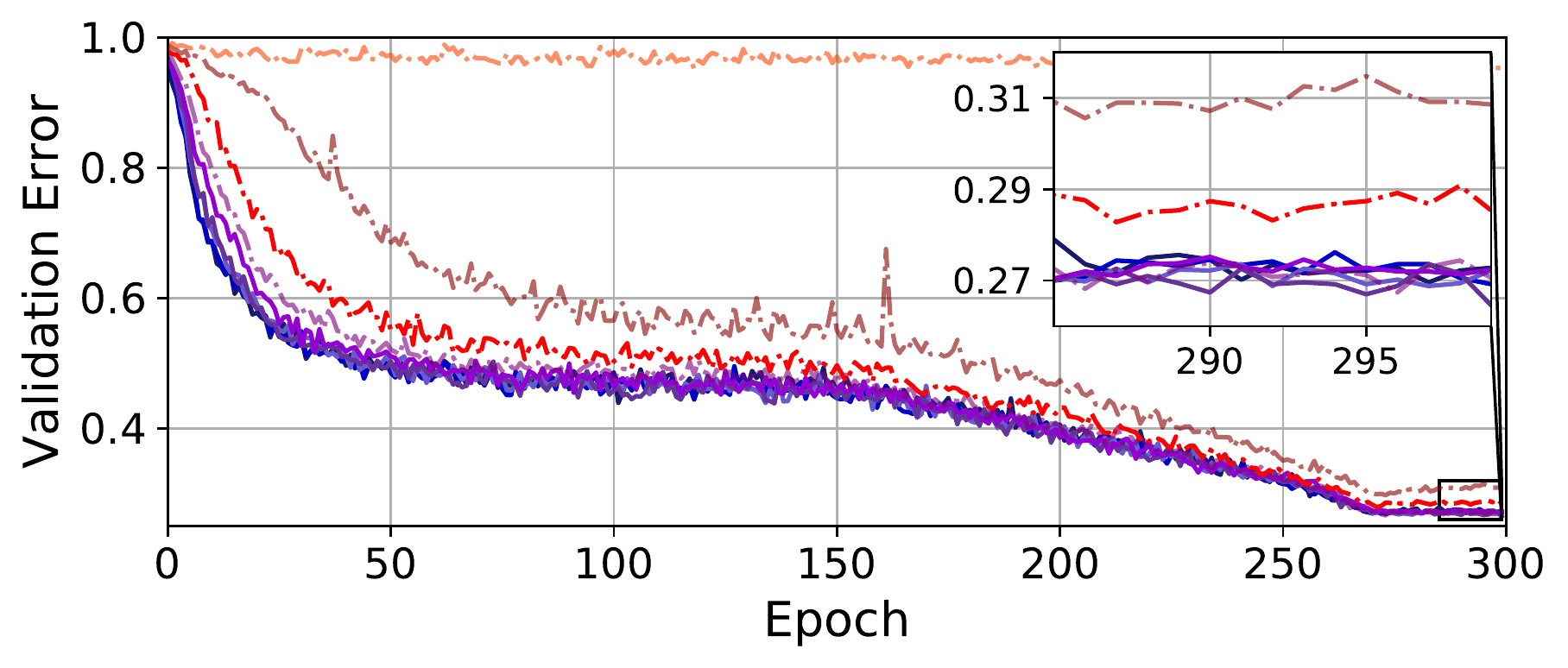}
	\caption{Validation Error against Epoch}
	\label{subfig:vgg16bntest}
\end{subfigure}
\caption{\textbf{Linear scaling is consistent up to a threshold.} Training and Validation error of the VGG-$16$ architecture, with batch normalisation (BN) on CIFAR-$100$, with no weight decay $\gamma=5e^{-4}$ and initial learning rate $\alpha_{0}=\frac{0.1B}{128}$.}
\label{fig:vggbnexp}
\end{figure}
\begin{figure}[h!]
\centering
\begin{subfigure}{0.53\linewidth}
	\includegraphics[trim={0cm 0cm 0cm 0cm},clip, width=1\textwidth]{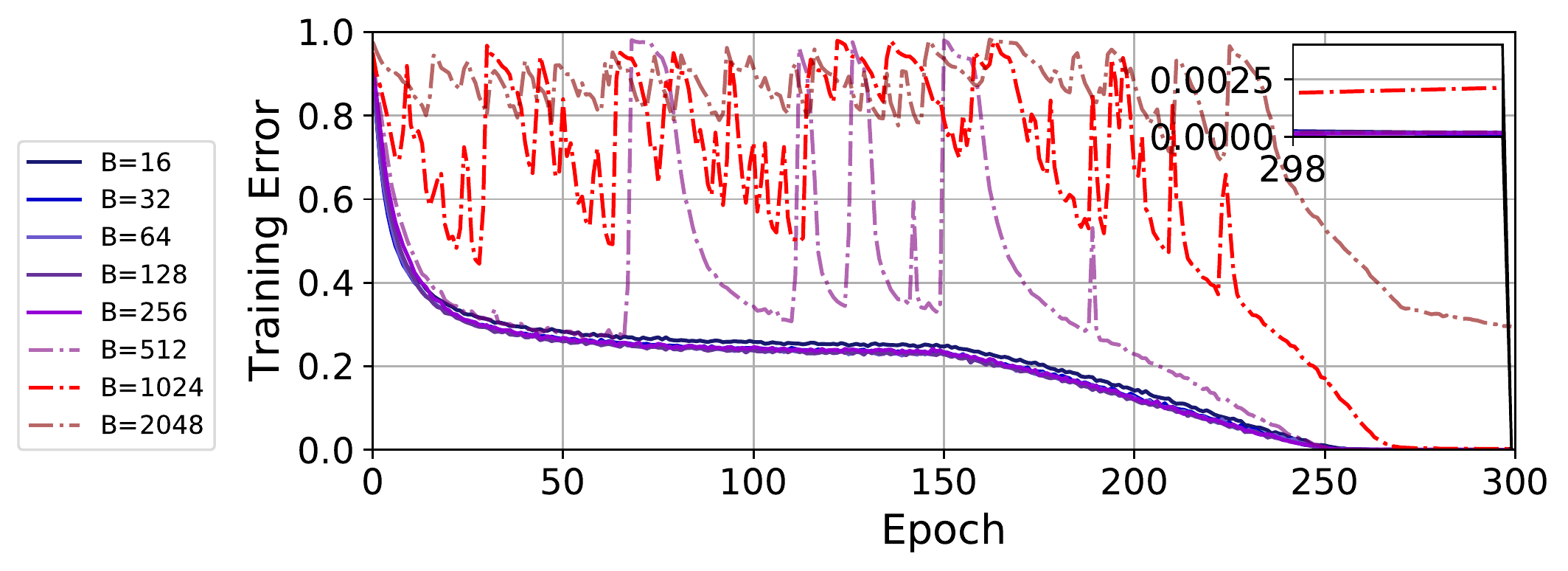}
	\caption{Training Error against Epoch}
	\label{subfig:wrnc100train}
\end{subfigure}
\begin{subfigure}{0.45\linewidth}
	\includegraphics[trim={0.0cm 0cm 0cm 0cm},clip, width=1\textwidth]{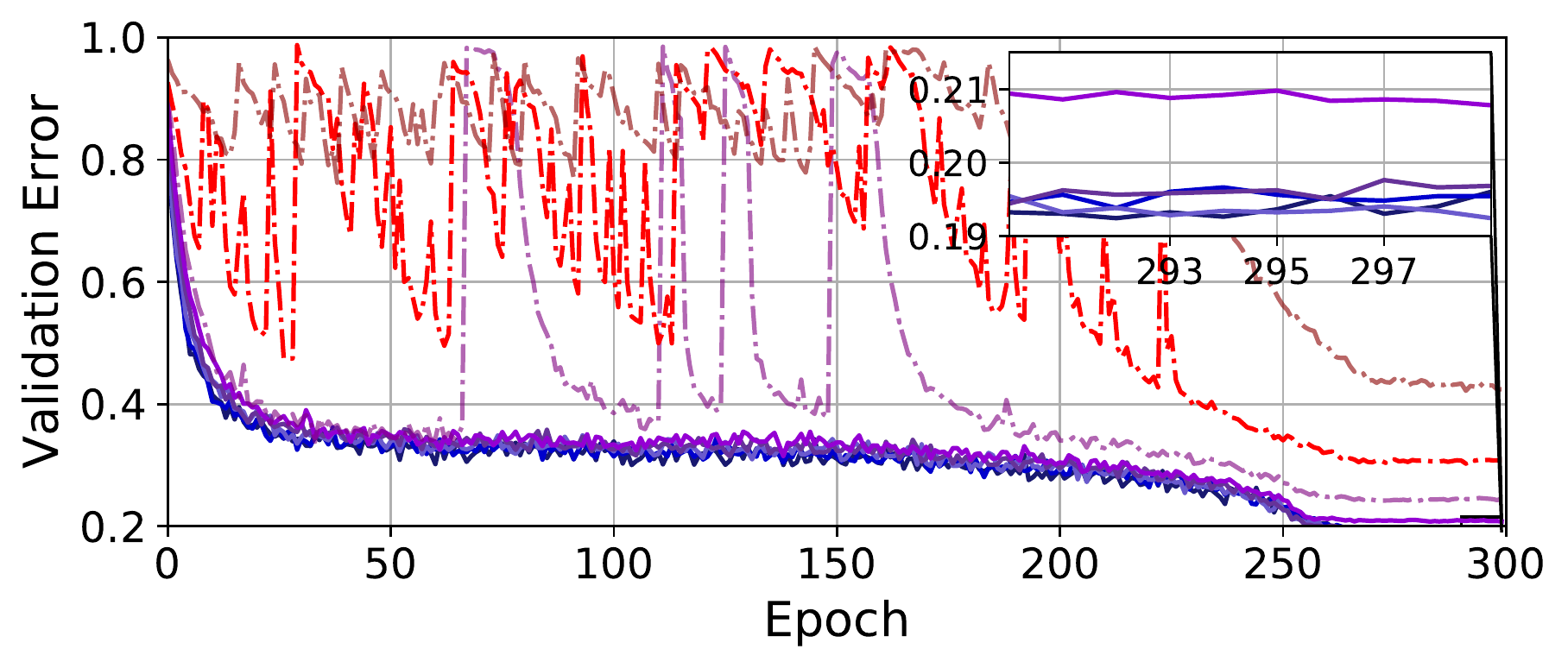}
	\caption{Validation Error against Epoch}
	\label{subfig:wrnc100test}
\end{subfigure}
\caption{\textbf{Consistency holds for a variety of learning rates.} Training and Validation error of the WideResNet-$28\times10$ architecture, with batch normalisation (BN) on CIFAR-$100$, with weight decay $\gamma=5e^{-4}$ and initial learning rate $\alpha_{0}=\frac{0.1B}{128}$.}
\label{fig:wrnc100}
\end{figure}
\begin{figure}[h!]
\centering
\begin{subfigure}{0.53\linewidth}
	\includegraphics[trim={0cm 0cm 0cm 0cm},clip, width=1\textwidth]{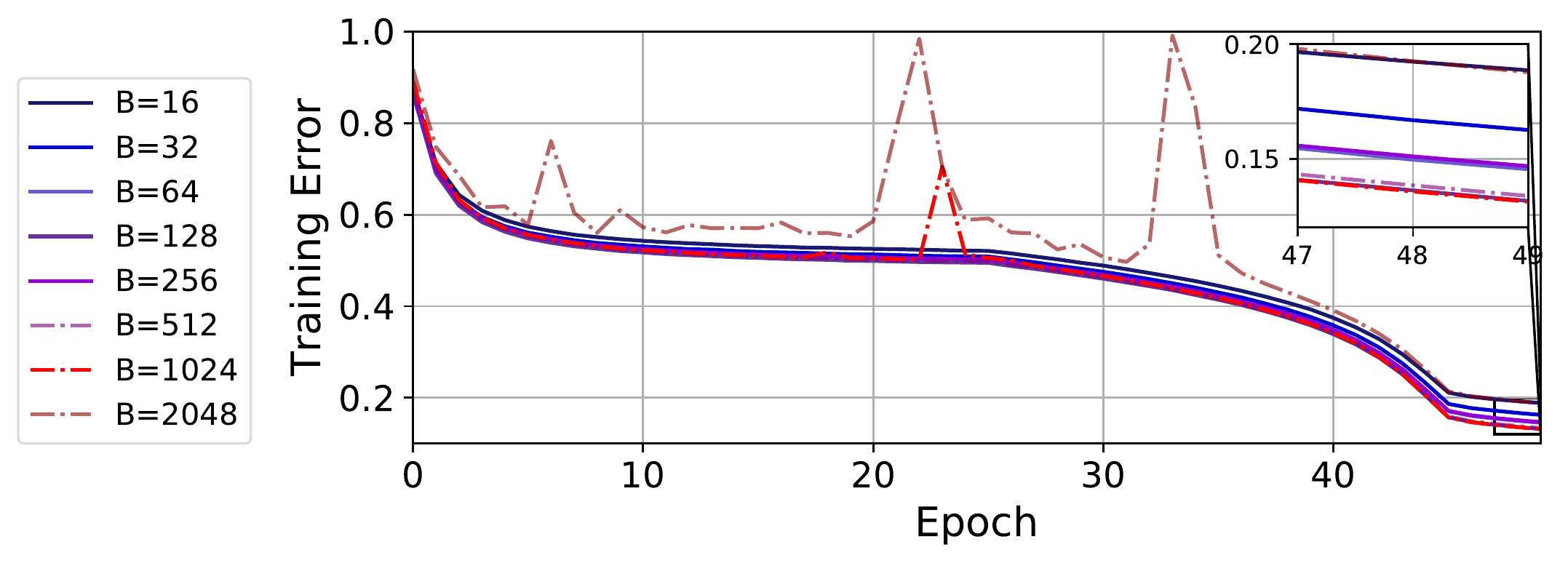}
	\caption{Training Error against Epoch}
	\label{subfig:wrnimg32train}
\end{subfigure}
\begin{subfigure}{0.45\linewidth}
	\includegraphics[trim={0.0cm 0cm 0cm 0cm},clip, width=1\textwidth]{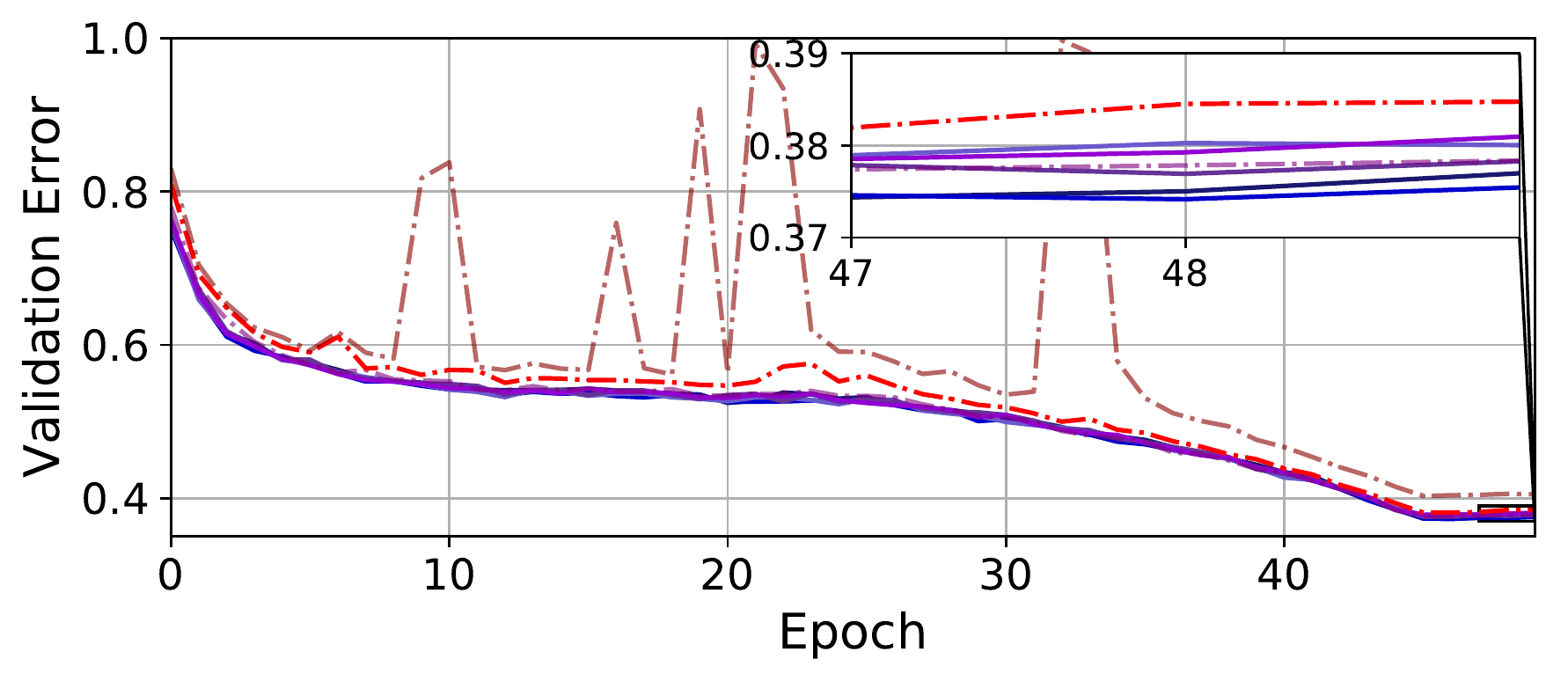}
	\caption{Validation Error against Epoch}
	\label{subfig:wrnimg32test}
\end{subfigure}
\caption{\textbf{Consistency holds for a variety of learning rates.} Training and Validation error of the WideResNet-$28\times10$ architecture, with batch normalisation (BN) on ImageNet-$32$, with weight decay $\gamma=5e^{-4}$ and initial learning rate $\alpha_{0}=\frac{0.1B}{128}$.}
\label{fig:wrnimg32}
\end{figure}

\begin{Correction}
\subsection{Estimating the "Optimal" Learning Rate and Momentum from the Spectrum}
Our theoretical anaylsis in Section \ref{sec:rmttheory} and experiments in Section \ref{sec:experiments} show that the relevant curvature estimates, when mini-batch training, are not those of the full (or true) Hessian, but rather those of the batch Hessian. This leads to the  supposition that we can estimate relevant aspects of the curvature using the Lanczos algorithm in $\mathcal{O}(mPB)$ time during training and, in effect, estimate the optimal learning and momentum rates during training. Note that, since one iteration of SGD is only of cost $\mathcal{O}(PB)$, this procedure needs only to be run irregularly (or alternatively $m$ needs to be kept very small, resulting in poor curvature estimates) for it to be competitive with multiple runs using differing learning rates and/or schedules. As a proof of concept, we run two variants of our approach using the optimality relations for both Polyak and Nesterov learning rates and momenta.
\begin{equation}
	\alpha_{Polyak} =  \frac{2}{\sqrt{\lambda_{1}}+\sqrt{\lambda_{P}}}, \thinspace \alpha_{Nesterov} = \sqrt{\frac{\lambda_{P}}{\lambda_{1}}}
\end{equation}
\begin{equation}
	\rho_{Polyak} = \bigg(\frac{\sqrt{\lambda_{1}}-\sqrt{\lambda_{P}}}{\sqrt{\lambda_{1}}+\sqrt{\lambda_{P}}}\bigg)^{2}, \rho_{Nesterov} = \bigg(\frac{\sqrt{\lambda_{1}}-\sqrt{\lambda_{P}}}{\sqrt{\lambda_{1}}+\sqrt{\lambda_{P}}}\bigg).
\end{equation}
Here, the Lipshitz and strong convexity constants are estimated locally using the Lanczos algorithm on the \emph{batch Hessian}. Note that, whilst the Hessian in our experiments has negative spectral mass at all points in weight space (and is hence not strongly convex), we conveniently can use a positive-definite approximation, as is frequently done in the second-order learning literature \citep{martens2015optimizing,dauphin2014identifying}. We run a curvature estimate using the Lanczos algorithm, seeded with a random vector every $20$ epochs, with iteration number $m=20$. Neither of these parameters was optimised. Our primary objective is to show that a batch Hessian curvature based approach to learning the learning rate and momentum can be useful out of the box and experimentally reduce and not increase the hyper-parameter burden.  
Given that, in the stochastic case, all methods must decay the learning rate and/or employ weight averaging we employ the latter \citep{izmailov2018averaging} for all methods near the end of training. It is known \citep{kushner2003stochastic} that iterate averaging gives greater robustness to the learning rate schedule and choice, whilst still leading to convergence. 
Since the smallest Ritz values are very very close to zero, which would result in a momentum $\rho = 1$, we use a heuristic to remove the smallest Ritz values, whereby if the Ritz value of largest spectral mass has more than $50\%$ of the spectral mass, it is removed and the resulting density renormalised, forming the new spectral density of interest. 
We present our results in both training and testing for the PreResNet-$110$, with weight decay of $0.0005$, in Figure \ref{fig:p110learned}. Here we compare with the tuned learning-rate schedule used in \citet{izmailov2018averaging} and described in Section \ref{sec:experiments}. The latter has an initial learning rate set to $0.1$. We show the learned learning rates and momenta for both methods in Figure \ref{fig:p110learnedhyp}. We note that, whilst the Polyak method strongly decays the learning rate, converging fast on the training set, the Nesterov variant, coupled with Nesterov Momentum, keeps the learning rate high, converging only slightly faster than the SGDSWA variant but outperforming in test error at the end. Whilst we don't expect for general non-convex problems, such as deep learning, a method such as this to out-perform all combinations of learning rates and momentum schedules, it is encouraging that such a cheap estimation approach has significant potential.
\begin{figure}[h!]
\centering
\begin{subfigure}{0.48\linewidth}
	\includegraphics[trim={0cm 0cm 0cm 0cm},clip, width=1\textwidth]{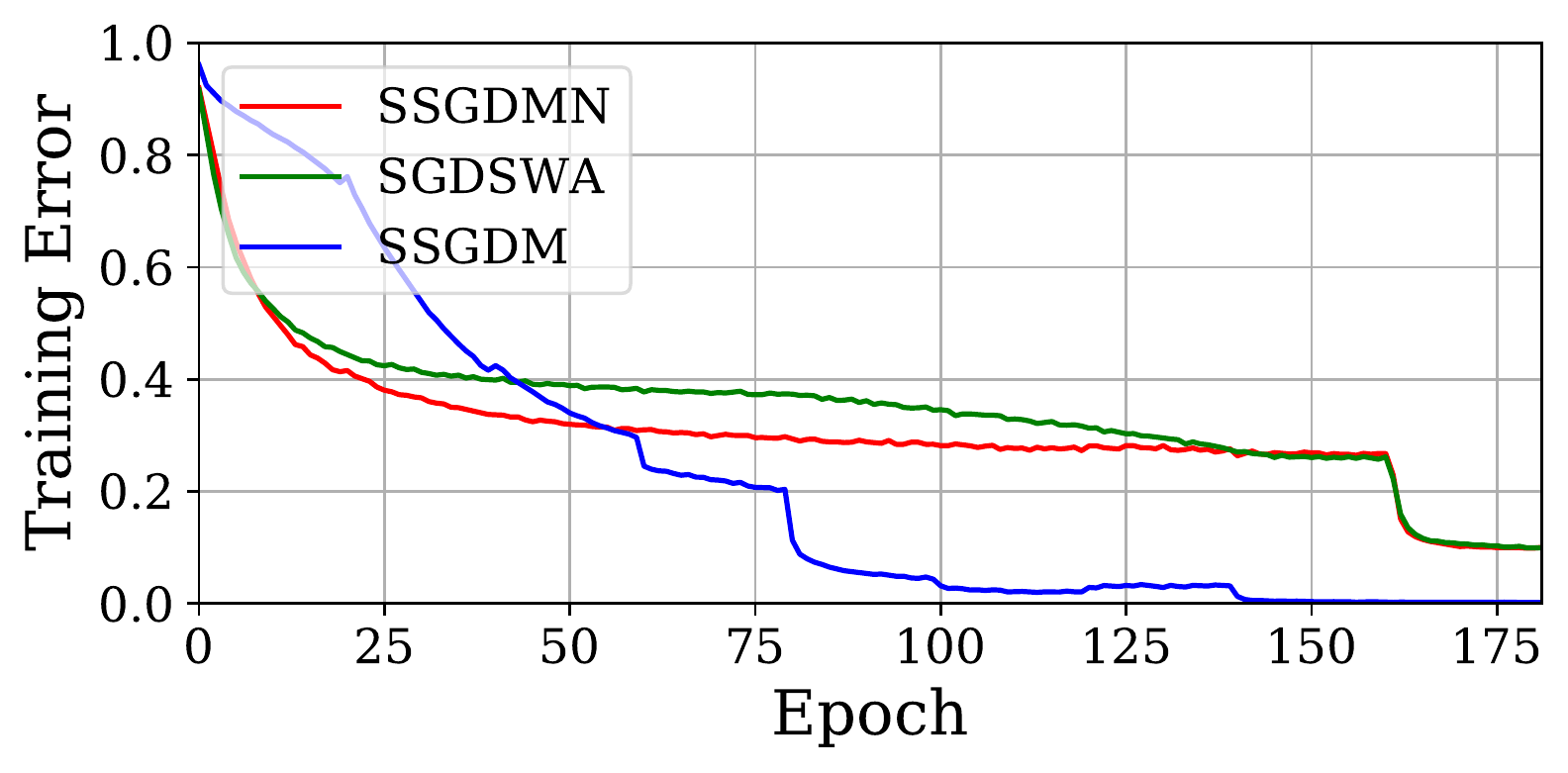}
	\caption{Training Error}
	\label{subfig:p110learnedtrain}
\end{subfigure}
\begin{subfigure}{0.48\linewidth}
	\includegraphics[trim={0cm 0cm 0cm 0cm},clip, width=1\textwidth]{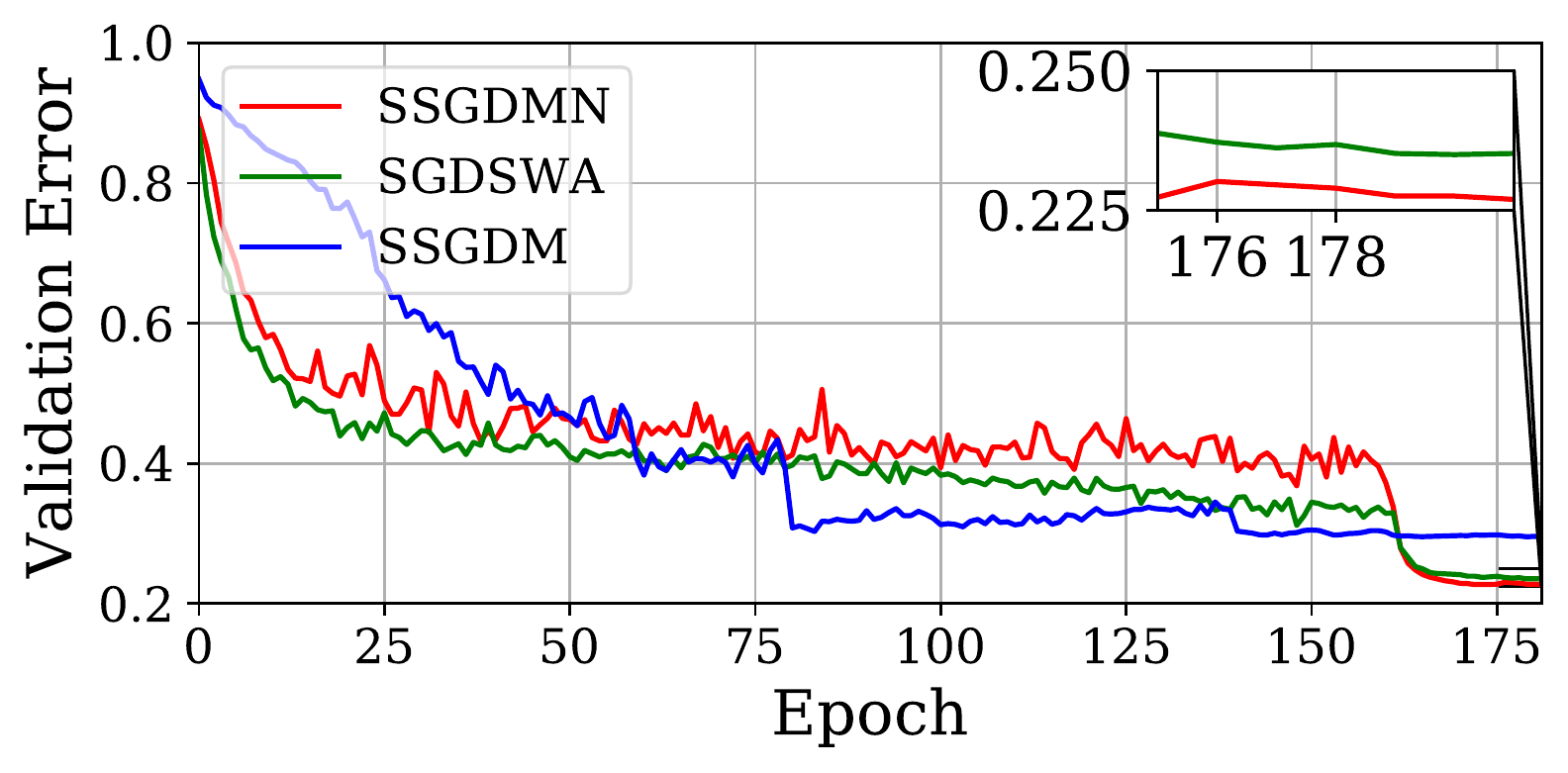}
	\caption{Validation Error}
	\label{subfig:p110learnedtest}
\end{subfigure}
\caption{\textbf{Learned Learning Rates seem Competitive with Fine Tuned} PreResNet-$110$ on the CIFAR-$100$ dataset, with weight decay $\gamma=5e^{-4}$.}
\label{fig:p110learned}
\vspace{-10pt}
\end{figure}
\begin{figure}[h!]
\centering
\begin{subfigure}{0.48\linewidth}
	\includegraphics[trim={0cm 0cm 0cm 0cm},clip, width=1\textwidth]{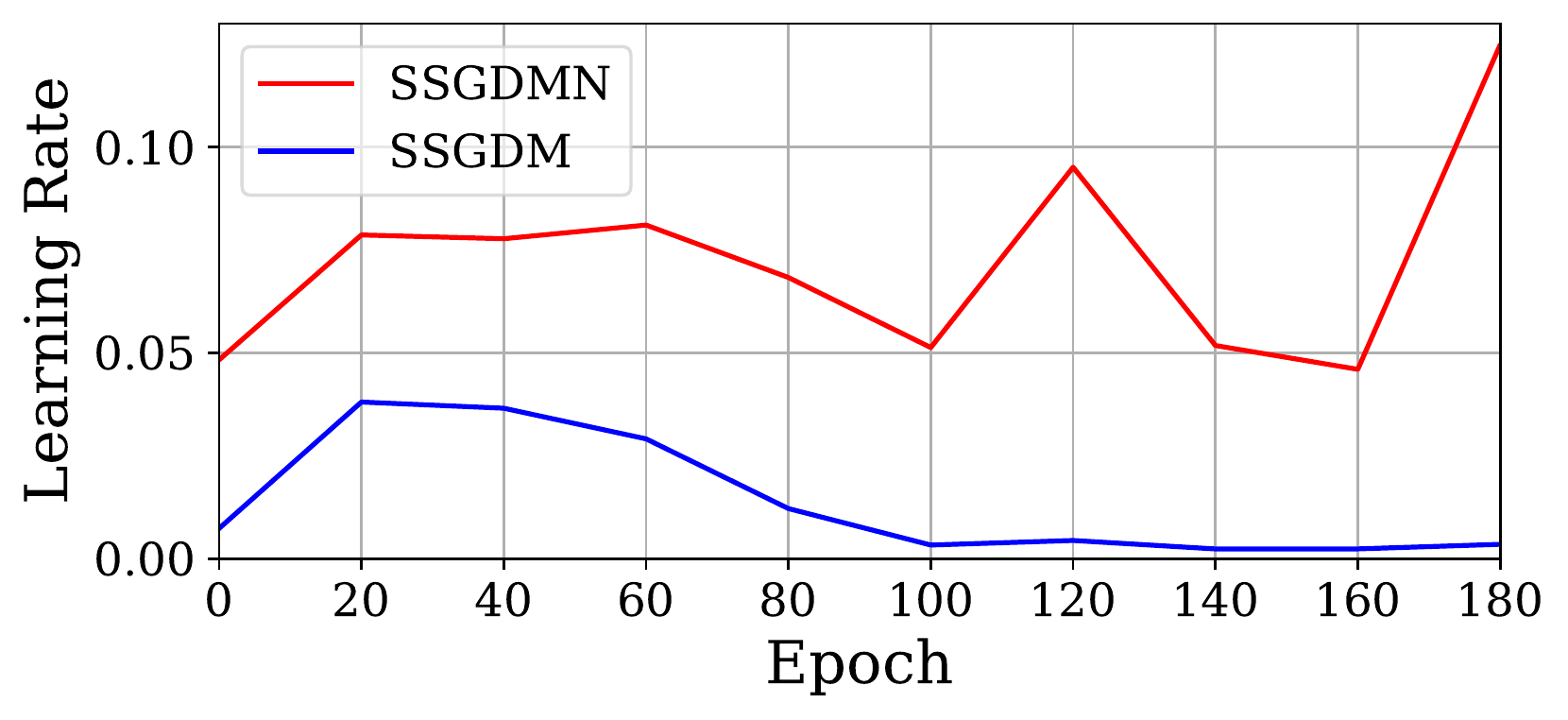}
	\caption{Learning Rate vs Epoch}
	\label{subfig:p110learnedlr}
\end{subfigure}
\begin{subfigure}{0.48\linewidth}
	\includegraphics[trim={0cm 0cm 0cm 0cm},clip, width=1\textwidth]{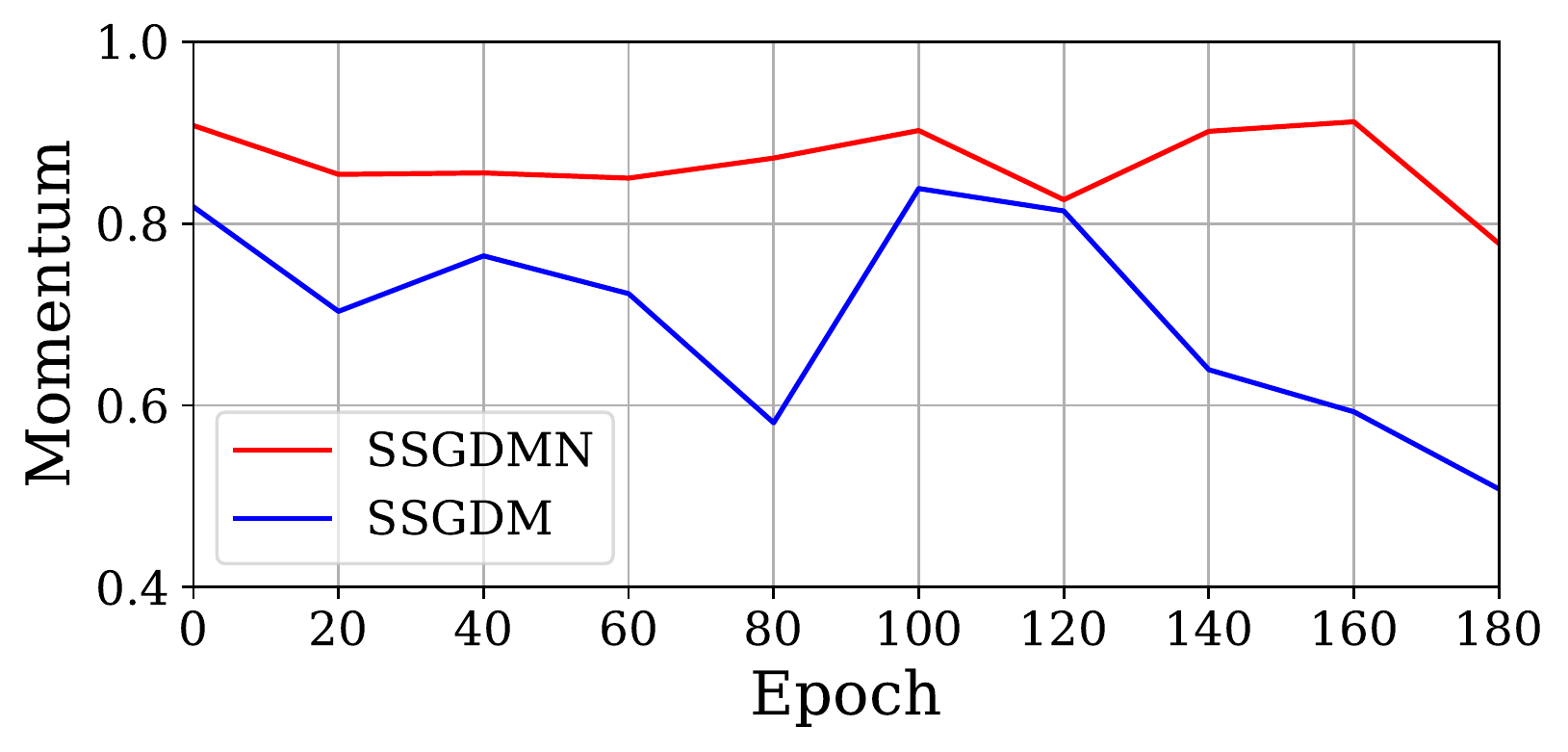}
	\caption{Momentum vs Epoch}
	\label{subfig:p110learnedmom}
\end{subfigure}
\caption{Learning Rates and Momenta learned during training for the PreResNet-$110$ on the CIFAR-$100$ dataset, with weight decay $\gamma=5e^{-4}$.}
\label{fig:p110learnedhyp}
\end{figure}
\end{Correction}

\begin{Correction}

\section{Square root learning rate scaling for adaptive optimisers with small damping}
\label{sec:adamlrscale}

By considering the change in loss for a generic second-order optimiser, where we precondition the gradients with some approximation of the Hessian $\mB$, we have
\begin{equation}
L(\vw_{k}-\alpha\mB^{-1}\nabla L(\vw_{k}))-L(\vw) = \alpha \nabla L(\vw_{k})^{T}\mB^{-1}\nabla L(\vw_{k}) + \frac{\alpha^{2}}{2}\nabla L(\vw_{k})^{T}\mB^{-1}\mH \mB^{-1}\nabla L(\vw_{k}).
\end{equation}
\nonumber
Writing $\mH_{emp} = \sum_{i}\lambda_{i}\bm{\psi}_{i}\bm{\psi}^{T}$ and writing the noisy estimated eigenvalue/eigenvector pair from the optimiser as 
$\mB = \sum_{j}\eta_{j}\vphi_{j}\vphi_{j}^{T}$, making use of orthogonal bases, we have,
\begin{equation}
	\label{eq:secorderlosschange}
	L(\vw_{k+1})-L(\vw) = \sum_{i}^{P}\frac{\alpha_{0}|\vphi_{i}^{T}\nabla L(\vw)|^{2}}{\eta_{i}+\delta}\bigg(1- \frac{\alpha_{0}}{2(\eta_{i}+\delta)}\sum_{\mu}\lambda_{\mu}|\bm{\psi}_{u}^{T}\vphi_{i}|^{2})\bigg). 
\end{equation}
This is a more complicated expression than the resulting equation for SGD (Equation \ref{eq:losschange}), as it involves both the eigenvalue/eigenvector pairs of the batch Hessian and that of the preconditioning matrix. Whereas for SGD, movement in the eigenvectors corresponding to the largest eigenvalues result in the greatest increase in loss, 
\emph{for adaptive optimisers, division by the inverse of the preconditioner eigenvalue means that an increase in the loss function could be due to the optimiser moving direction of lower curvature.}

\paragraph{Potential Boost for Edge Eigenvectors:}  Consider the simplified case of $|\bm{\psi}_{u}^{T}\vphi_{i}|^{2} = \delta_{u,i}$, where we assume perfect eigenvector estimation but potentially imperfect eigenvalue estimation. To consider imperfect eigenvector estimation, we can rewrite $\sum_{j}\eta_{j}\vphi_{j} \equiv \sum_{j}\eta^{*}_{j}\psi_{j}$ and hence imperfect eigenvector estimation can be reframed as perfect eigenvector estimation under a transformed set of eigenvalues.
We then consider an eigenvalue from the batch Hessian which is at the edge of the bulk distribution and thus an outlier. The loss will be larger moving in this "flat" direction iff,
\begin{equation}
	\label{eq:edgevsoutlier}
	\frac{\sqrt{P}\sigma}{\sqrt{\mathfrak{b}}(\eta_{i}+\delta)} >  \frac{\lambda_{j}+\frac{P\sigma^{2}}{\mathfrak{b}\lambda_{j}}}{(\eta_{j}+ \delta)} \; \; \; \text{i.e.} \; \; \; \frac{\eta_{j}+\delta}{\eta_{i}+\delta} >  \bigg( \frac{\lambda_{j}\sqrt{\mathfrak{b}}}{\sqrt{P}\sigma} + \frac{\sqrt{P}\sigma}{\sqrt{\mathfrak{b}}\lambda_{j}}\bigg).
\end{equation}
Hence an under-estimation of the bulk eigenvalue $\eta_{i}$, relative to outlier eigenvalue $\eta_{j}$, combined with a small damping coefficient (typically set at $10^{-8}$ for Adam) could result in this condition being satisfied.
There are many $O(P)$ eigenvalues near the edge of the bulk distribution, compared to the limited number of outliers and hence many edge eigenvalue/eigenvector pairs that need to be well estimated. 
\paragraph{Necessity of small numerical stability coefficient:}In the $\delta \rightarrow \infty$ limit, the l.h.s. of Equation \ref{eq:edgevsoutlier} is $1$, whereas as $\lambda_{j} > \sqrt{\frac{P}{\mathfrak{b}}\sigma}$ the r.h.s. is $>1$. Hence, Equation \ref{eq:edgevsoutlier}
cannot be satisfied. If we move in all eigendirections equally, then - since an outlier is, by definition, larger in magnitude than eigenvalues at the edge of the bulk - we cannot increase the loss more in a non-outlier direction than in an outlier direction.

\paragraph{Practical Implication:} Under the scenario of a small damping, $\delta$, with an adaptive method, we would expect to be able to scale the learning only proportionally to the square root of the batch size, since the bulk eigenvalue distribution scales as the square root of the batch size. Hence,
\begin{equation}
	\bigg(1-\frac{\alpha_{0}\sqrt{P}\sigma}{(\eta_{i}+\delta)\sqrt{\mathfrak{b}}}\bigg) > 0 \therefore \alpha_{0} < \frac{\sqrt{\mathfrak{b}}\kappa}{\sqrt{P}\sigma} \leq \frac{\sqrt{\mathfrak{b}}(\eta_{i}+\delta)}{\sqrt{P}\sigma},
\end{equation} 
where $\kappa = \eta_{min} + \delta$ and $\eta_{min}$ is the worst curvature estimate (transformed into the appropriate basis) of a bulk edge eigenvector. Note since the eigenvectors of the bulk edge all transform as $\propto \sqrt{\mathfrak{b}}$ we can simply absorb the constant into $\kappa$. Note further, that for small enough batch size - as the outlier eigenvalues scale proportionally with $\frac{1}{\mathfrak{b}}$ whereas the bulk distribution only grows proportional to $\sqrt{\frac{1}{\mathfrak{b}}}$ - we expect the condition to become harder to fulfil. This means that our misestimation of the bulk eigenvalue/eigenvector pairs needs to increase relative to the outliers in the event of smaller batch sizes. Hence, for very small batch sizes, the scaling could revert to being linear and the square root rule could break down.
\begin{figure}[h!]
	\centering
	\begin{subfigure}{0.32\linewidth}
		\includegraphics[trim={0cm 0cm 0cm 0cm},clip, width=1\textwidth]{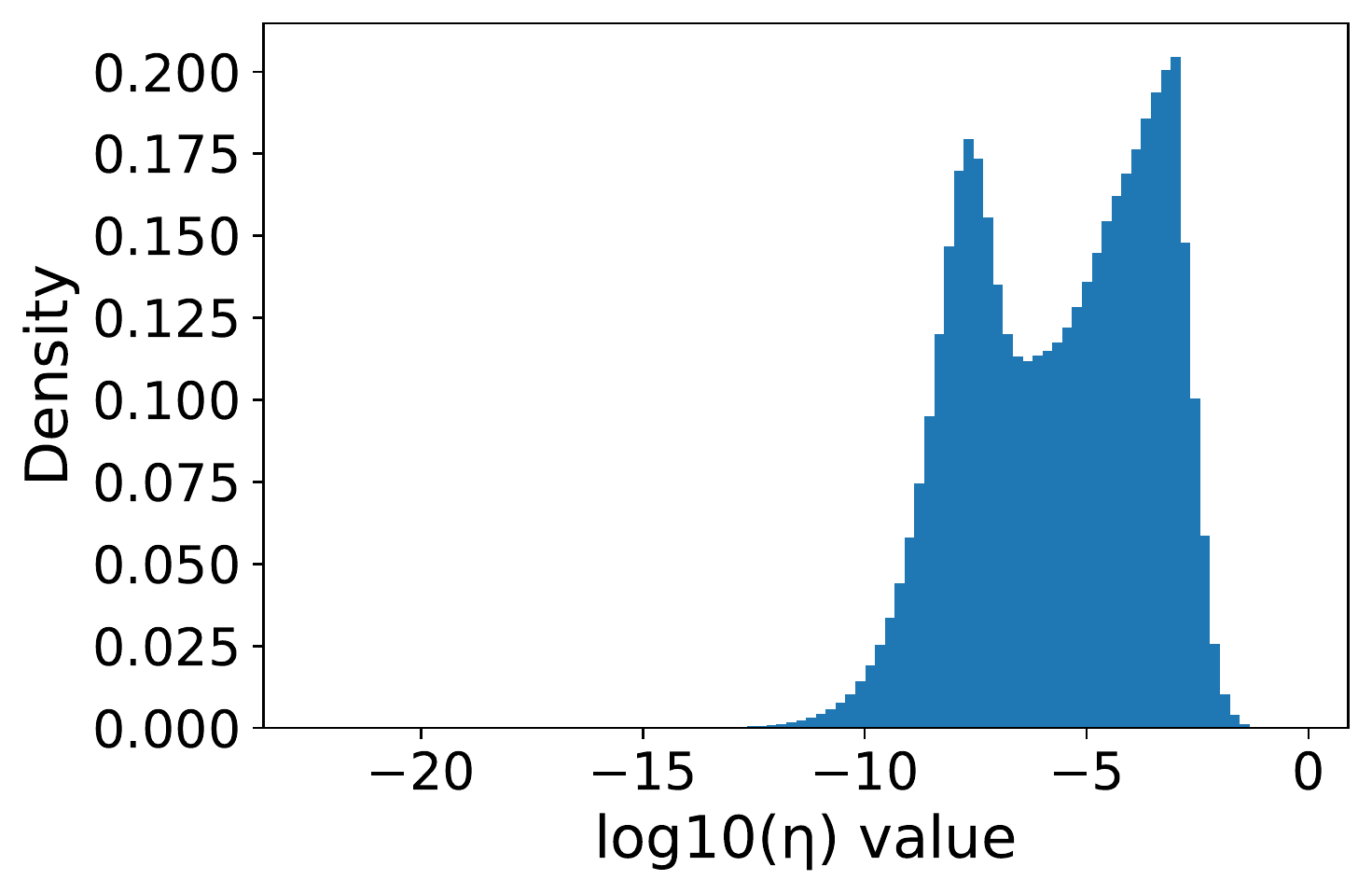}
		\caption{Epoch $25$}
		\label{subfig:ep25adameta}
	\end{subfigure}
	\begin{subfigure}{0.32\linewidth}
		\includegraphics[trim={0.0cm 0cm 0cm 0cm},clip, width=1\textwidth]{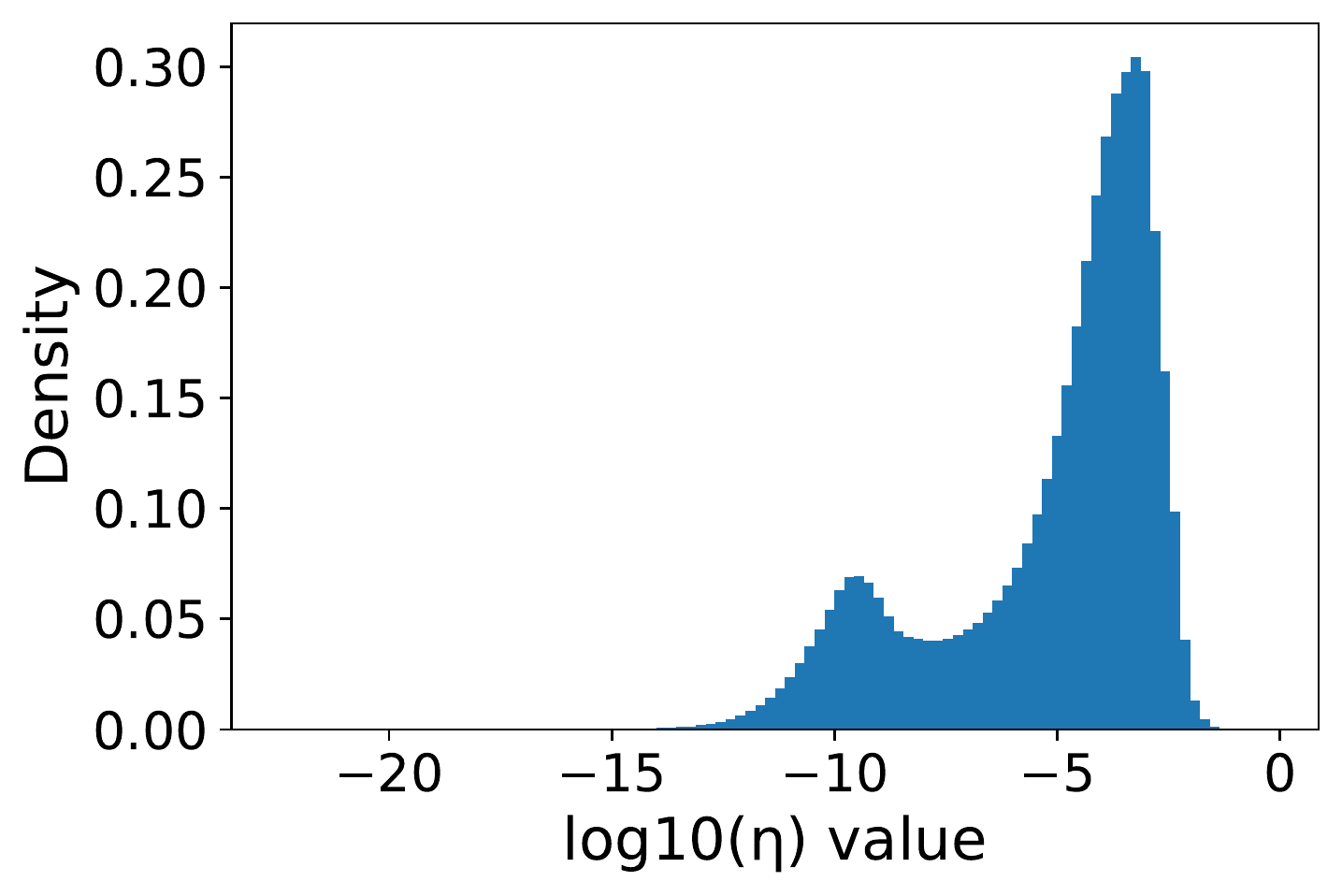}
		\caption{Epoch $50$}
		\label{subfig:ep50adameta}
	\end{subfigure}
	\begin{subfigure}{0.32\linewidth}
		\includegraphics[trim={0.0cm 0cm 0cm 0cm},clip, width=1\textwidth]{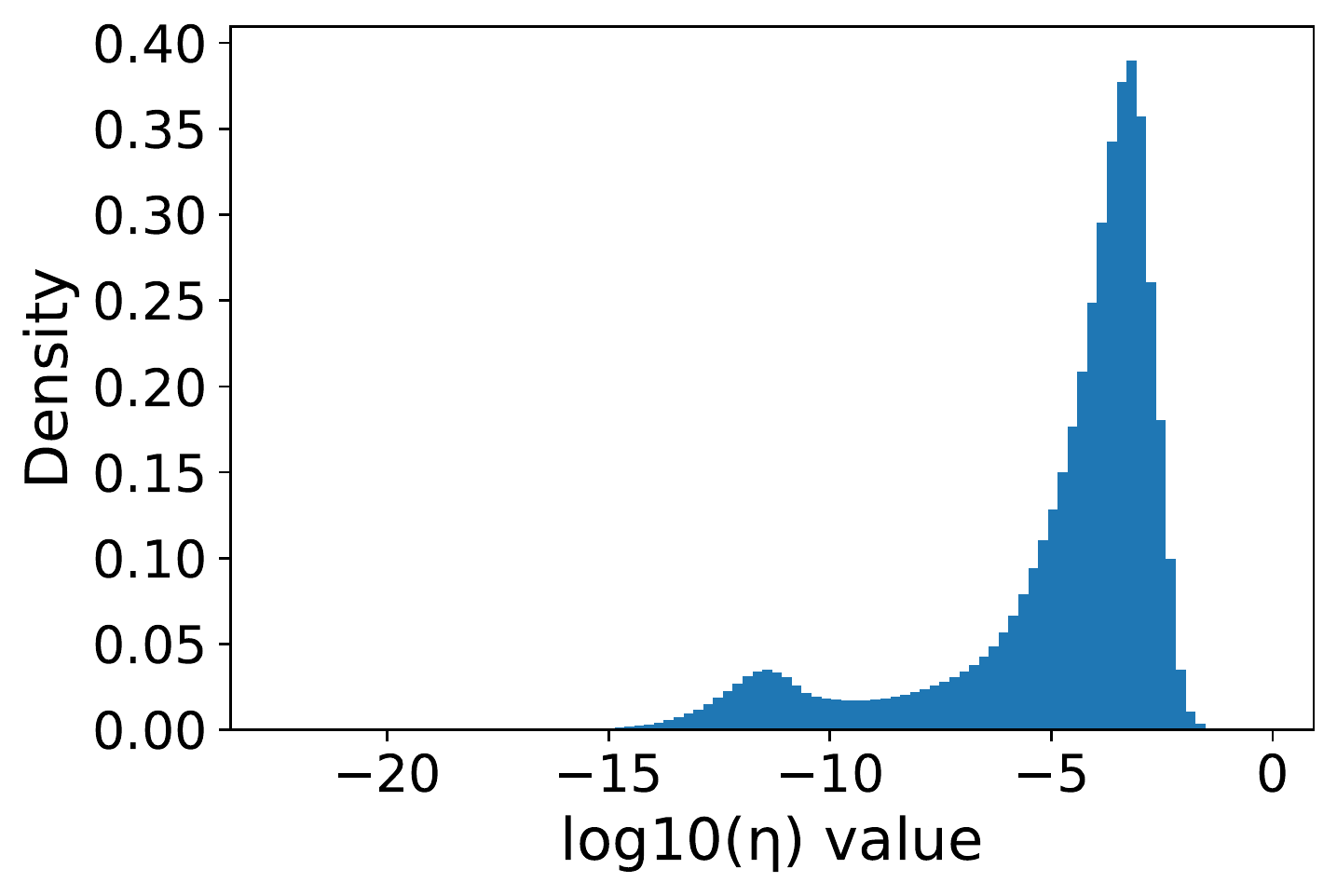}
		\caption{Epoch $75$}
		\label{subfig:ep75adameta}
	\end{subfigure}
	\caption{\textbf{Huge Variation in Scaling Coefficients for Adam.} Density of pseudo eigenvalues $\eta_{i}$ learned during training a VGG-$16$ on CIFAR-$100$ using the Adam optimiser for different epoch values, for $\alpha=0.0004,\gamma=0$ with a linear decay schedule from Section \ref{sec:experiments}.}
	\label{fig:adameta}
\end{figure}
In order to verify that the necessary conditions hold in the commonly used Adam optimiser for such a square root scaling to occur. We investigate the implied curvature eigenvalues $\eta_{i}$ from the Adam state dictionary \citep{chaudhari2016entropy} in the diagonal basis.
We plot the results for different epochs in Figure \ref{fig:adameta}. Note the huge range in value of $\eta$. With a maximum of $\approx 0.6$ and a practical minimum of $10^{-8}$ set by the damping coefficient.

In order to put this derived scaling rule to the test, we run experiments similar to those of Section \ref{sec:scaling}. We find the maximal initial learning rate for the VGG-$16$ on CIFAR-$100$ with no weight decay $\gamma=0$, which stably trains with the Adam optimiser. We use an initial learning rate of $\alpha_{0}=0.0004$ for a batch size of $B=128$ and then complete a linear learning rate decay schedule, as detailed in Section \ref{sec:experiments}. We then scale the learning rate with the square root of the batch size in either direction and plot the results. We drop the batch-size to $8$, to test the limits of our theory. The results are shown in Figure \ref{fig:vggadamsqrt}.
\begin{figure}[h!]
	\centering
	\begin{subfigure}{0.57\linewidth}
		\includegraphics[trim={0cm 0cm 0cm 0cm},clip, width=1\textwidth]{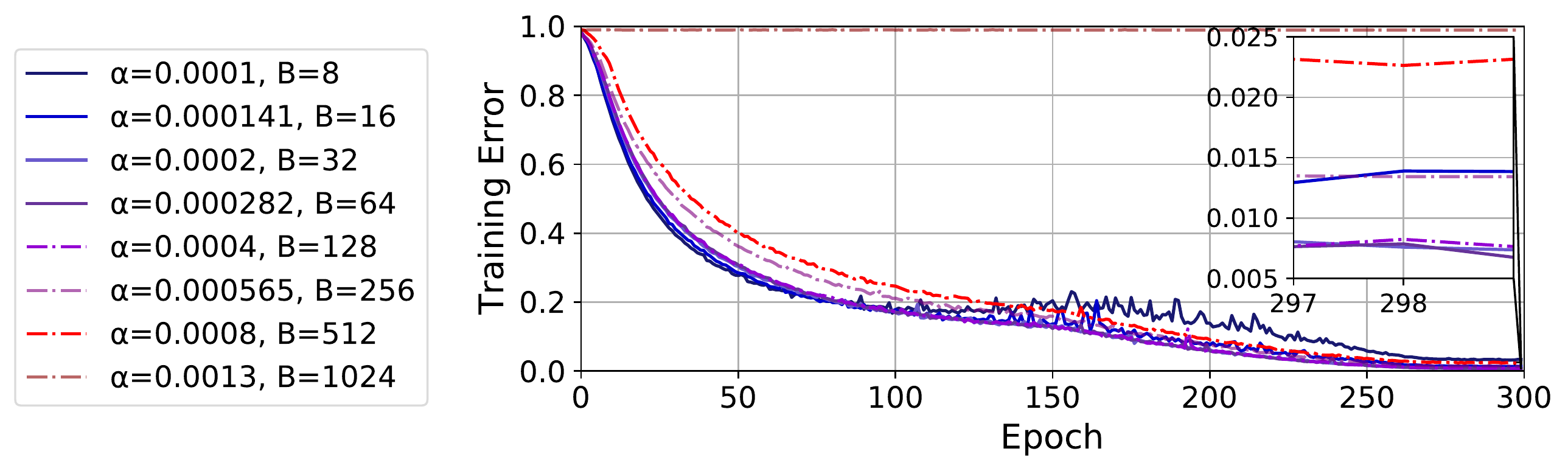}
		\caption{Training Error against Epoch}
		\label{subfig:sqrtadamvgg16train}
	\end{subfigure}
	\begin{subfigure}{0.42\linewidth}
		\includegraphics[trim={0.0cm 0cm 0cm 0cm},clip, width=1\textwidth]{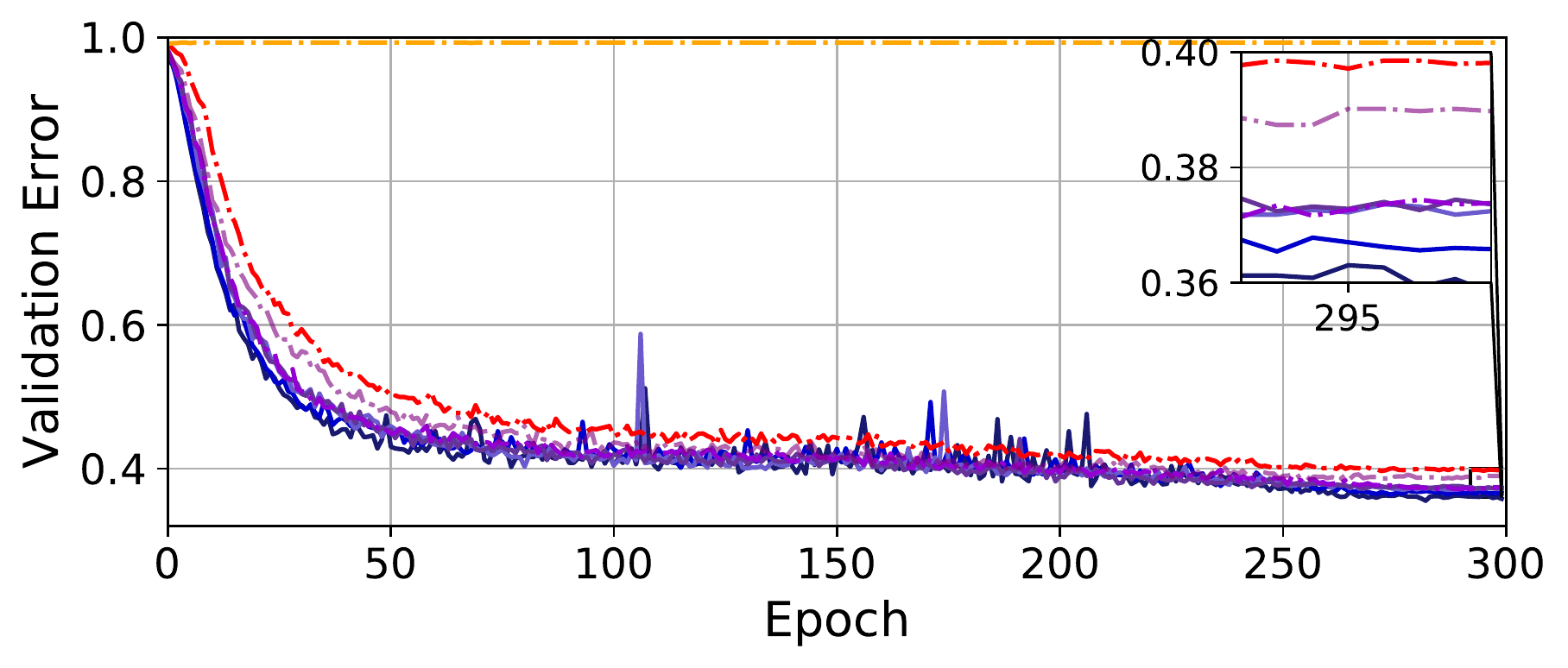}
		\caption{Validation Error against Epoch}
		\label{subfig:sqrtadamvgg16test}
	\end{subfigure}
	\caption{\textbf{Square root scaling for adaptive optimisers is consistent up to a threshold.} Training and Validation error of the VGG-$16$ architecture, without batch normalisation (BN) on CIFAR-$100$, with no weight decay $\gamma=0$ and initial learning rate $\alpha_{0}=\frac{0.004\sqrt{B}}{\sqrt{128}}$, which varies as a function of batch size $B$.}
	\label{fig:vggadamsqrt}
\end{figure}
We note excellent agreement down to $B=16$, with very small differences between training curves and validation performance. There is a slight instability in training for $B=8$, potentially indicating a regime where broadening of the outlier eigenvalues dominates the mis-estimation of the bulk distribution. Note that, when reducing the batch size, reducing the learning rate using square root scaling is a far more aggressive reduction schedule and hence, should the appropriate scaling be linear, training would quickly fail. As is shown for the SGD case, running the same learning rate of $0.01$ (from Section \ref{sec:scaling}) and reducing the learning rate using square-root scaling, leads to poor training and divergence, as shown in Figure \ref{fig:vggsgdsqrt}.

\begin{figure}[h!]
	\centering
	\begin{subfigure}{0.57\linewidth}
		\includegraphics[trim={0cm 0cm 0cm 0cm},clip, width=1\textwidth]{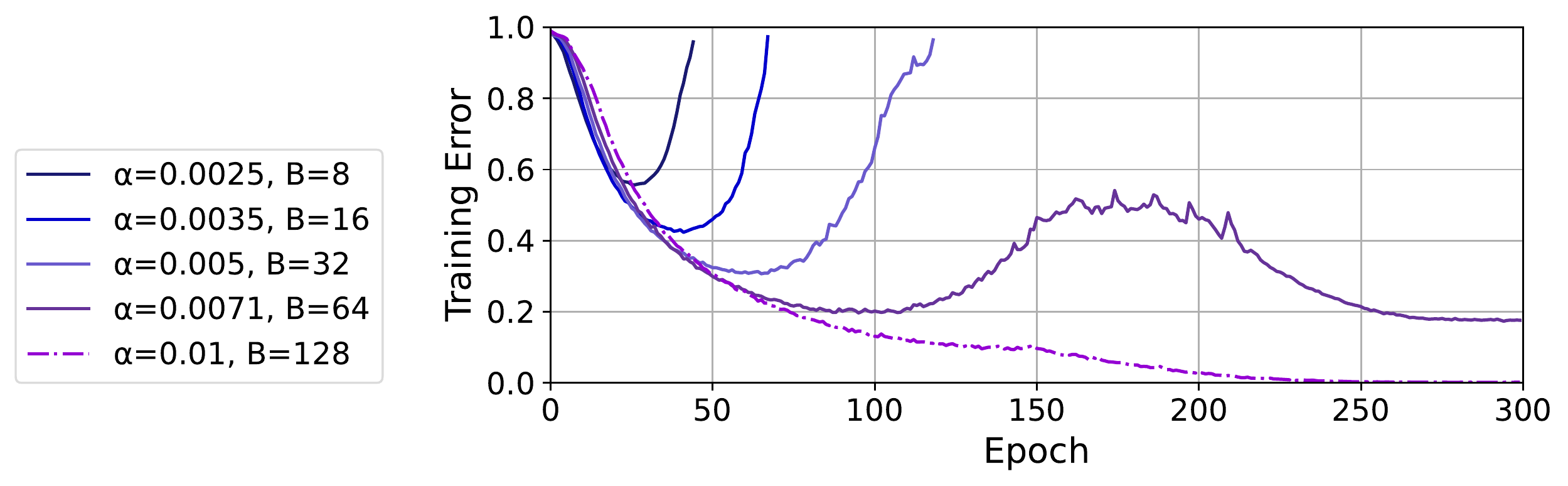}
		
		\caption{Training Error against Epoch}
		\label{subfig:vgg16trainsgdsqrt}
	\end{subfigure}
	\begin{subfigure}{0.42\linewidth}
		\includegraphics[trim={0.0cm 0cm 0cm 0cm},clip, width=1\textwidth]{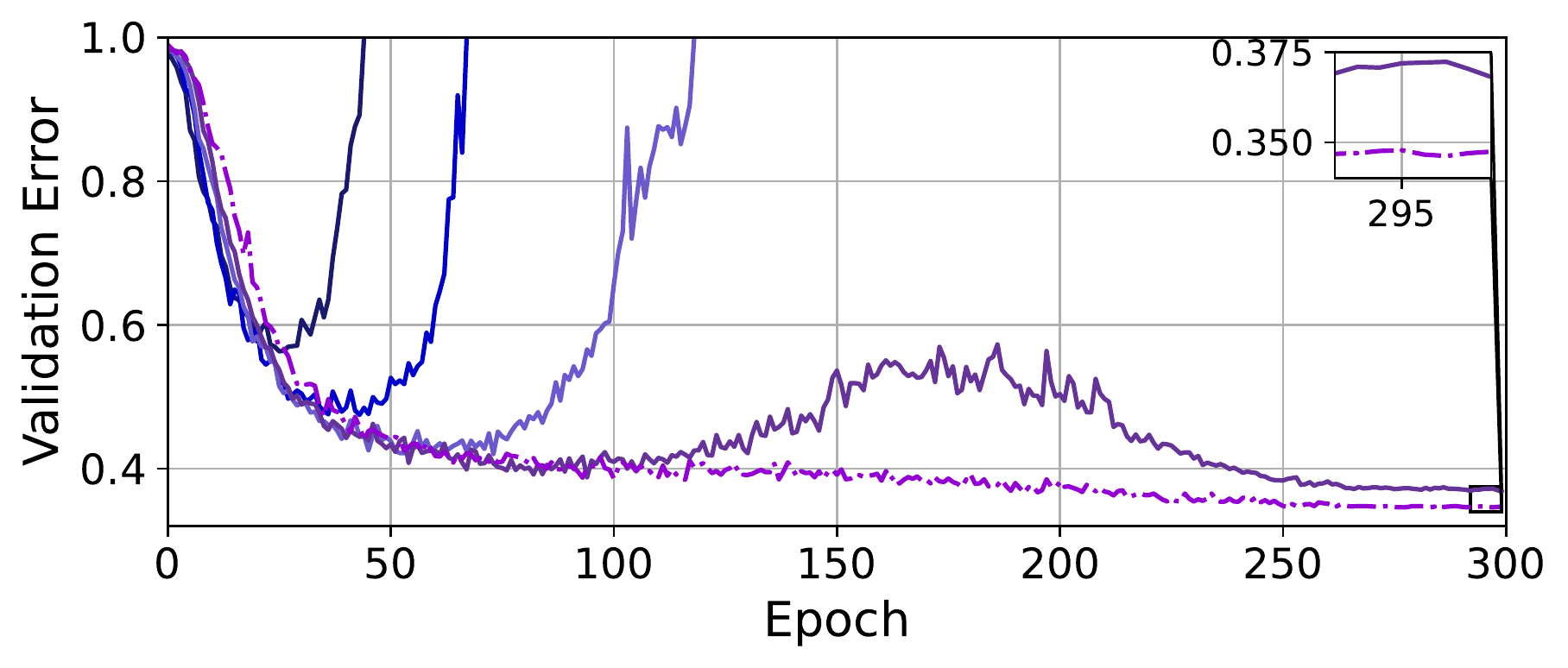}
		
		\caption{Validation Error against Epoch}
		\label{subfig:vgg16testsgdsqrt}
	\end{subfigure}
	
	\caption{\textbf{Square root scaling for SGD does not hold.} Training and Validation error of the VGG-$16$ architecture, without batch normalisation (BN) on CIFAR-$100$, with no weight decay $\gamma=0$ and initial learning rate $\alpha_{0}=\frac{0.01\sqrt{B}}{\sqrt{128}}$.}
	
	\label{fig:vggsgdsqrt}
\end{figure}
\begin{figure}[h!]
	\centering
	\begin{subfigure}{0.57\linewidth}
		\includegraphics[trim={0cm 0cm 0cm 0cm},clip, width=1\textwidth]{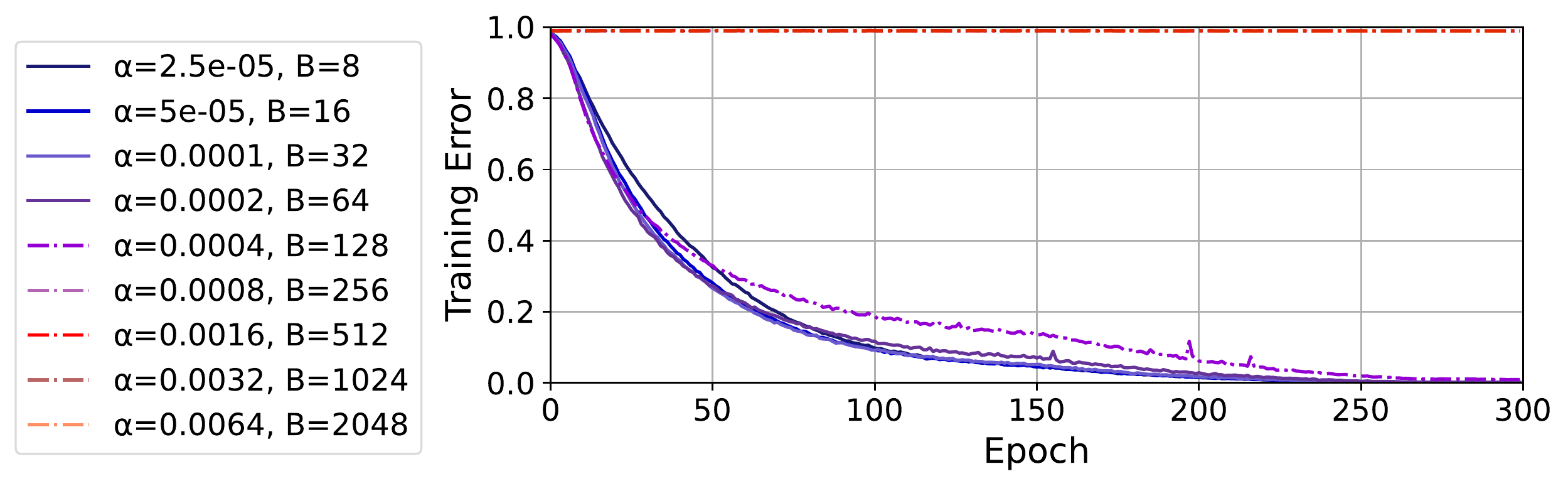}
		\caption{Training Error against Epoch}
		\label{subfig:vgg16trainadam4}
	\end{subfigure}
	\begin{subfigure}{0.42\linewidth}
		\includegraphics[trim={0.0cm 0cm 0cm 0cm},clip, width=1\textwidth]{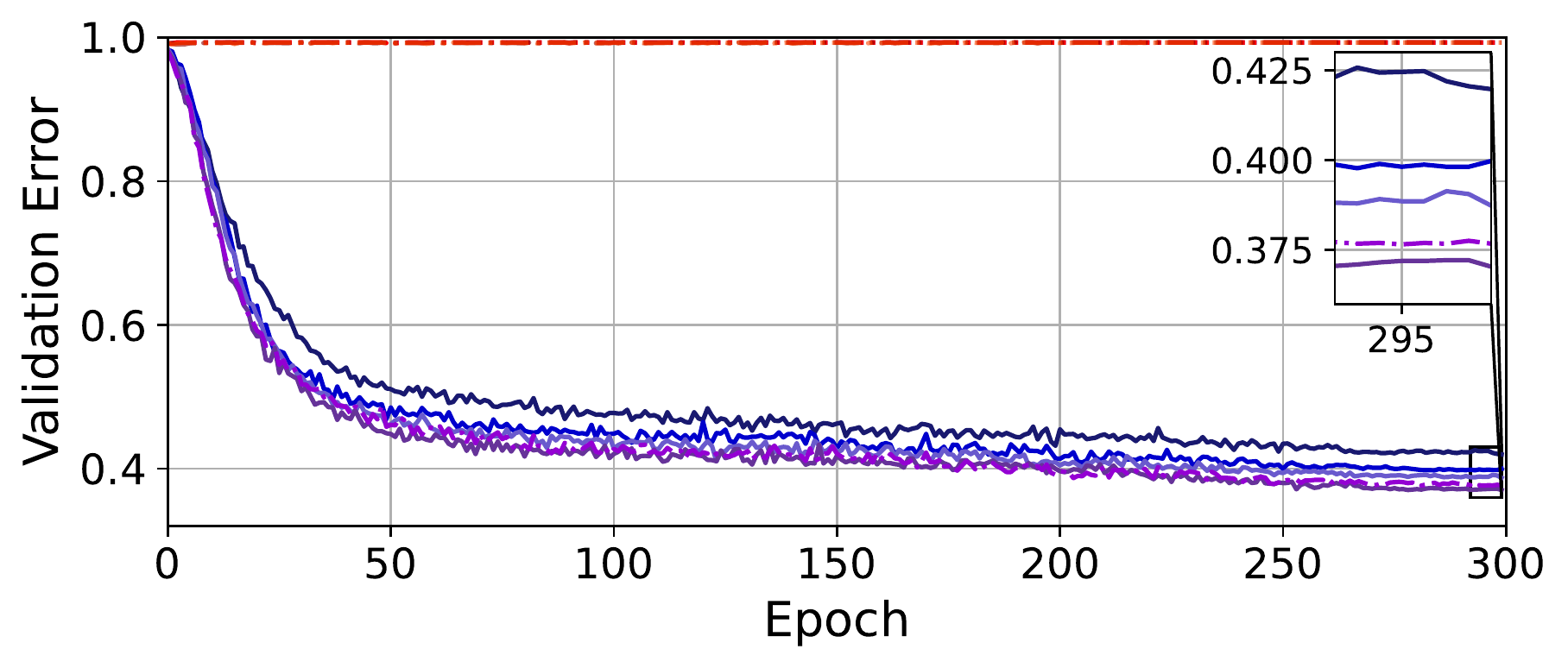}
		\caption{Validation Error against Epoch}
		\label{subfig:vgg16testadam4}
	\end{subfigure}
	\caption{\textbf{Linear scaling rate does not hold for Adam.} Training and Validation error of the VGG-$16$ architecture, without batch normalisation (BN) on CIFAR-$100$, with no weight decay $\gamma=0$ and initial learning rate $\alpha_{0}=\frac{0.0004B}{128}$, which varies as a function of batch size $B$.}
	\label{fig:vggadam4}
\end{figure}

We plot results from attempting to use the linear scaling rule for Adam in Figure \ref{fig:vggadam4}. Note that there is rapid divergence upon using a doubled batch size. Furthermore, we note that experiments with a reduced learning rate and lower batch size show different training loss profiles to that of the the initial setup (shown as the only dotted line that converges). Furthermore, upon inspecting the test error, we note that, unlike the case of square root scaling (which produces consistent test error estimates throughout the range), we see significant ($38.1 \% \pm 0.4$, $37.6 \% \pm 0.64$ $38 \% \pm 0.25$ $40.1 \% \pm 0.3$, $42.7 \% \pm 0.1$) increases in test error as we reduce the learning rate. 
We argue that, in expectation, the main effect of sub-sampling is a broadening of the spectrum (increasing eigenvalue magnitude)
at all points in weight space. Hence, for an equivalent
trajectory in expectation in weight space as we change the batch size, we would expect to scale the learning rate as the inverse of this increase in eigenvalue magnitude.
Hence large differences in test error suggest that we are not traversing the surface in an equivalent way and hence not settling to a minimum of similar sharpness/distance from initialisation.

Note that it is, in general, always possible to train the network with a lower learning rate. Indeed, in many such cases the training curves are almost indistinguishable. However, we find the test errors are often significantly worse, indicating an in-equivalence in trajectory traversal. We note that this is expected from our previous argument, in Section \ref{sec:testaccrmt}. There we argue that different learning rate schedules induce different trajectories across the loss surface and lead to minima of differing curvature. The latter, despite having similar training error, often have very different validation and test errors. We show this explicitly in Appendix \ref{sec:differentiatingschedules}.

\subsection{Validating the Accurate Estimation of the Large Eigenvalue/Eigenvector Pairs Assumption}
The past section relied heavily on the notion that, for adaptive optimisers, the sharpest eigenvalue/eigenvector pairs of the batch Hessian were better estimated than those at the edge of the bulk. Intuitively, it seems reasonable that these "pure noise" eigenvectors (which are distributed on the unit sphere) might change rapidly from iteration to iteration and since they occupy a sizeable fraction of the loss landscape (compared to the small number of outliers), some or many of them might be severely underestimated. However, given that it is unclear to what extent the empirical Fisher approximation \citep{kunstner2019limitations} faithfully approximates local curvature and noting that Adam serves as a running diagonal approximation to the empirical Fisher, it is unclear whether Adam learns any information about the top eigenvalue/eigenvector pairs of the batch Hessian. In order to test this empirically, we run Adam on the VGG-$16$ using CIFAR-$100$ with no weight decay and set the damping coefficient $\delta=1$. We use the linear learning rate schedule from Section \ref{sec:experiments}. Using the quadratic approximation from Equation \ref{eq:secorderlosschange}
we expect the largest possible learning rate before batch loss increases
to depend upon the ability, via $\mB$, to estimate the largest eigenvalue/eigenvector pairs of $\mH$\footnote{This follows as $\frac{\lambda_{i}}{\eta_{i}+1}$ is largest for large $\lambda_{i}/\eta_{i}$}. Hence, the largest learning rate achievable is a direct measure of the estimation accuracy of the sharpest eigenvalue/eigenvector pairs of the batch Hessian. We search for the highest stable learning rate, $\alpha$, along a logarithmic grid, with end points $\in (0.01,1)$. All methods incorporate a momentum of $\rho = 0.9$. We find that the largest \emph{stable} rates for SGD, Adam and KFAC are $0.01,0.12,0.32$ respectively, indicating that both KFAC and Adam are significantly better able to estimate sharp curvature directions than curvature agnostic SGD. Given that KFAC is a well-known second-order method that uses the Fisher approximation rather than an empirical measure, we believe this experiment indicatives that Adam reliably learns information about the largest eigenvalue/eigenvector pairs.

We show the training and testing error curves from these experiments in Figures \ref{fig:adamgradcov}, \ref{fig:sgdgradcov} and \ref{fig:kfacgradcov} for Adam, SGD and KFAC respectively. For SGD and Adam, overly large learning rates lead to returning to either near random or very low performance. We note that, certain curves for Adam and for all the KFAC curves, have more nuanced form. For KFAC we can even use a learning rate of $1$ without running into $NaN$ errors. However, even when annealing this learning rate by a factor of $100$ at the end of training, we do not converge in training or test error. 
We hence loosely define ``stable" as the largest permissible value which allows for training and in the event that a wide range of learning rates are permitted we take the value which gives the best test error. .
\begin{figure}[h!]
	\centering
	\begin{subfigure}[b]{0.48\textwidth}
		\includegraphics[width=\textwidth,trim={0cm 0.2cm 0 0},clip]{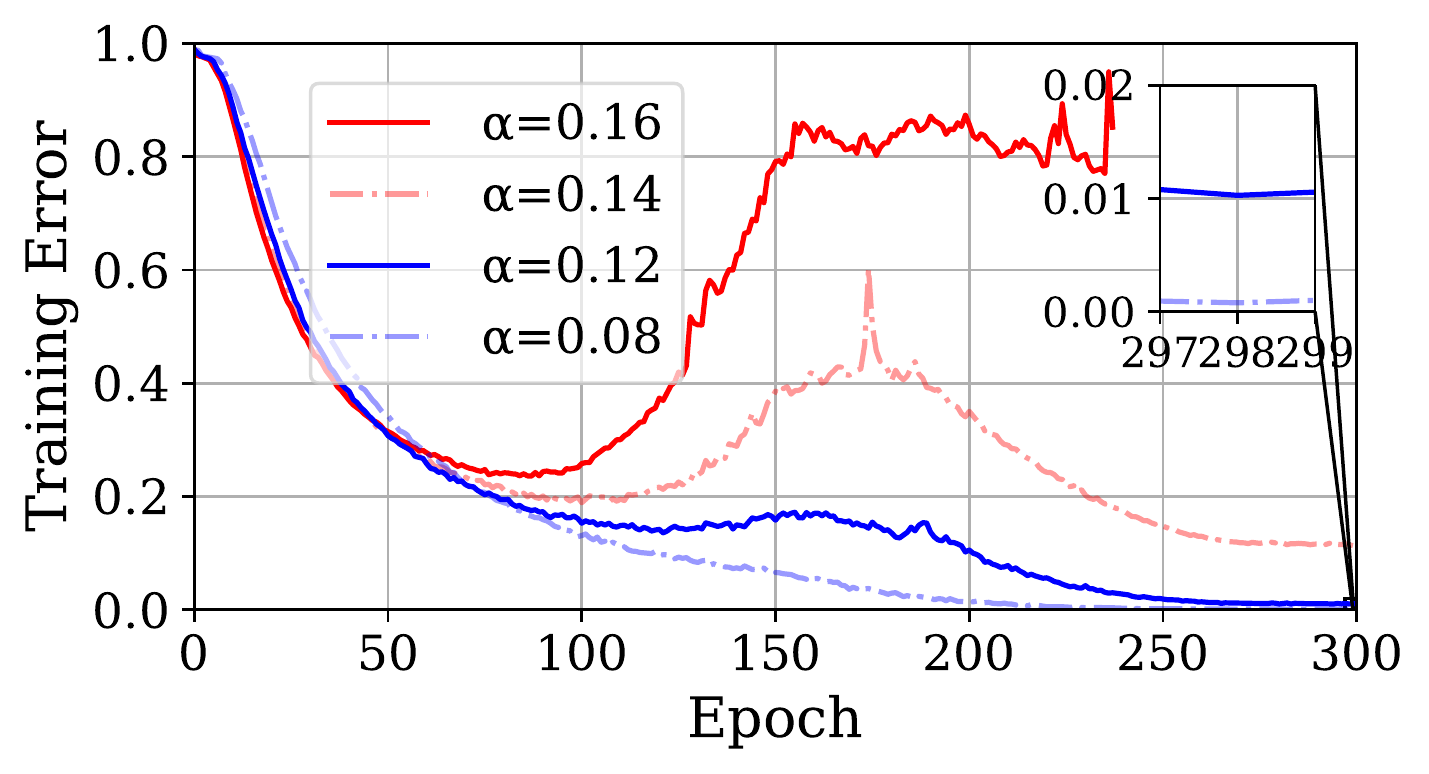}
	\end{subfigure}
	\begin{subfigure}[b]{0.48\textwidth}
		\includegraphics[width=\textwidth,trim={0cm 0.25cm 0 0},clip]{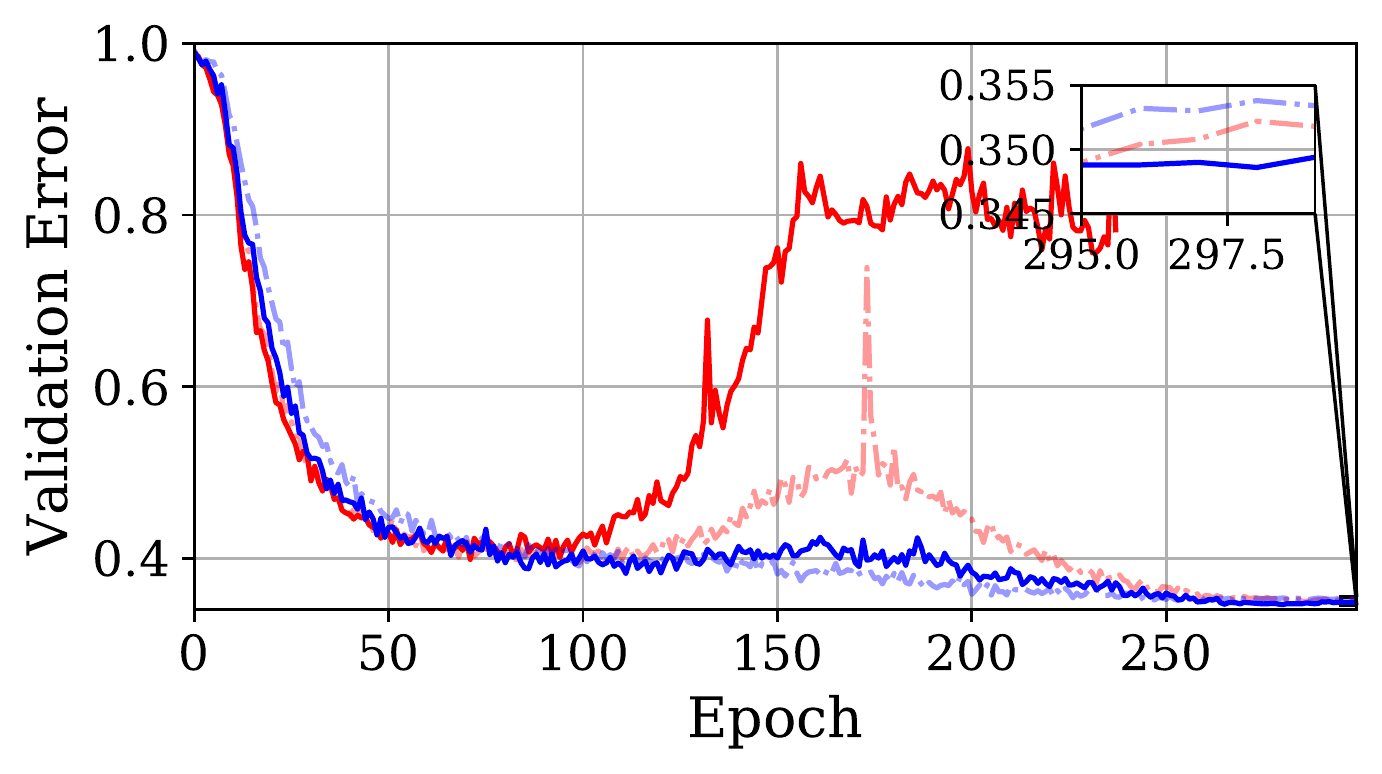}
	\end{subfigure}
	\caption{Training/Validation error of Adam using $\delta=1$ and $\alpha$ on the VGG-$16$ CIFAR-$100$}
	\label{fig:adamgradcov}
\end{figure}
\begin{figure}[h!]
	\centering
	\begin{subfigure}[b]{0.48\textwidth}
		\includegraphics[width=\textwidth,trim={0cm 0.2cm 0 0},clip]{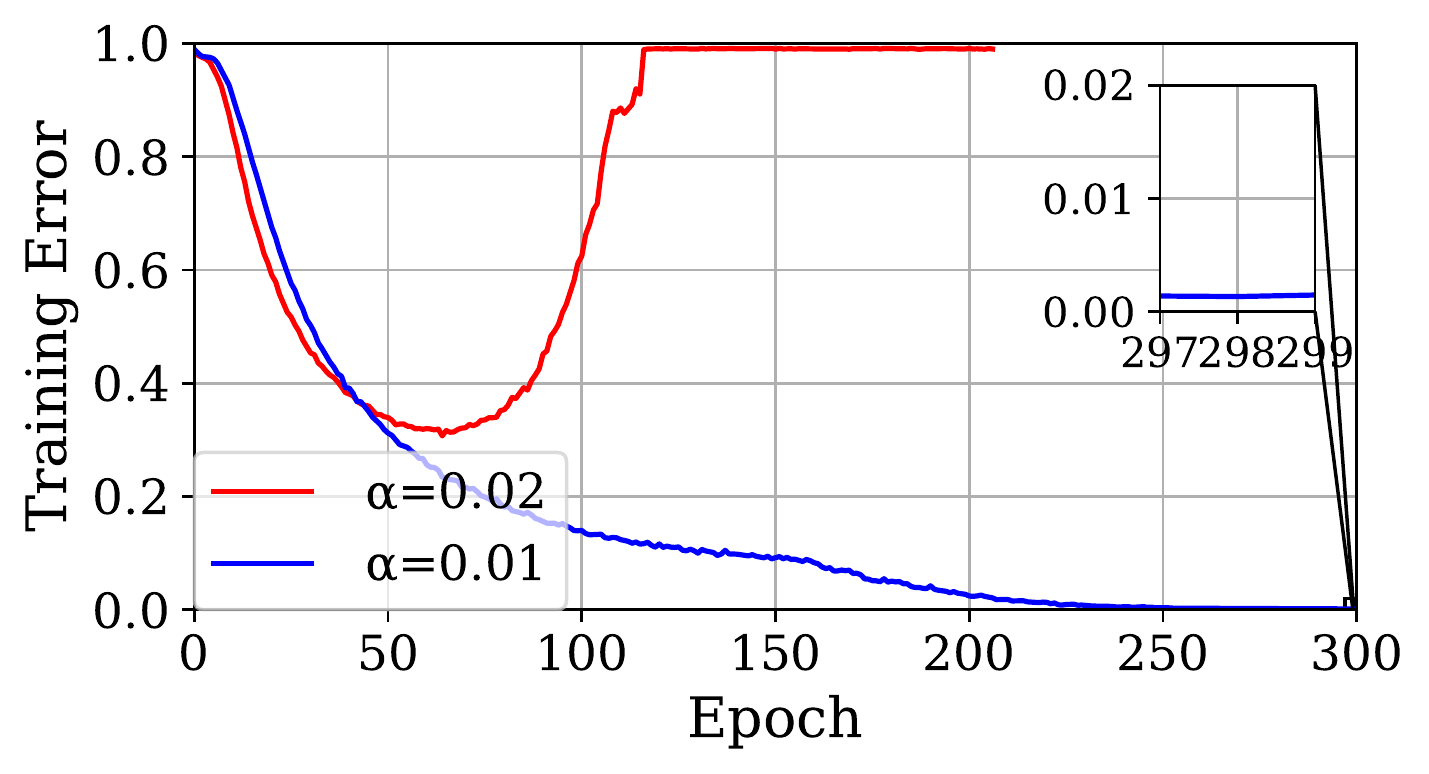}
	\end{subfigure}
	\begin{subfigure}[b]{0.48\textwidth}
		\includegraphics[width=\textwidth,trim={0cm 0.25cm 0 0},clip]{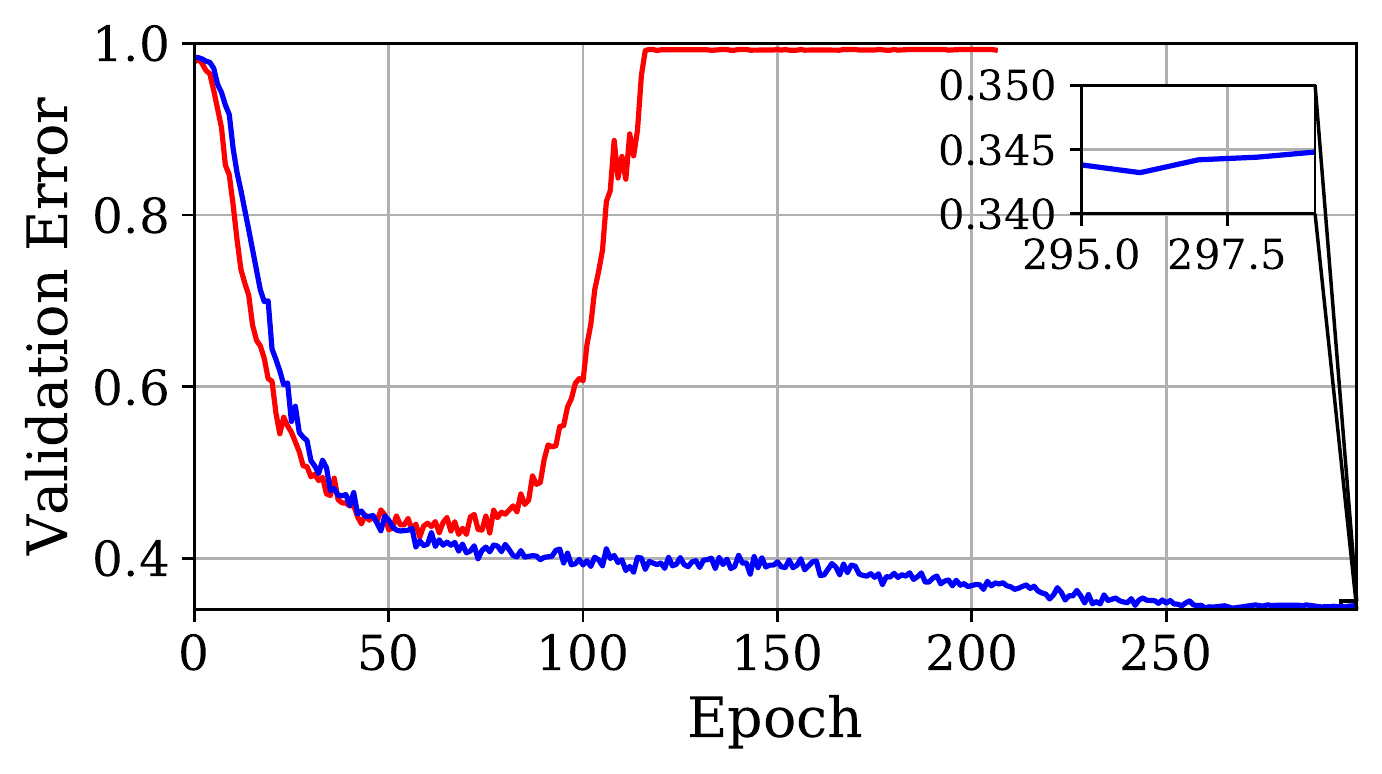}
	\end{subfigure}
	\caption{Training/Validation error of SGD using $\delta=1$ and $\alpha$ on the VGG-$16$ CIFAR-$100$}
	\label{fig:sgdgradcov}
\end{figure}
\begin{figure}[h!]
	\centering
	\begin{subfigure}[b]{0.53\textwidth}
		\includegraphics[width=\textwidth,trim={0cm 0.2cm 0 0},clip]{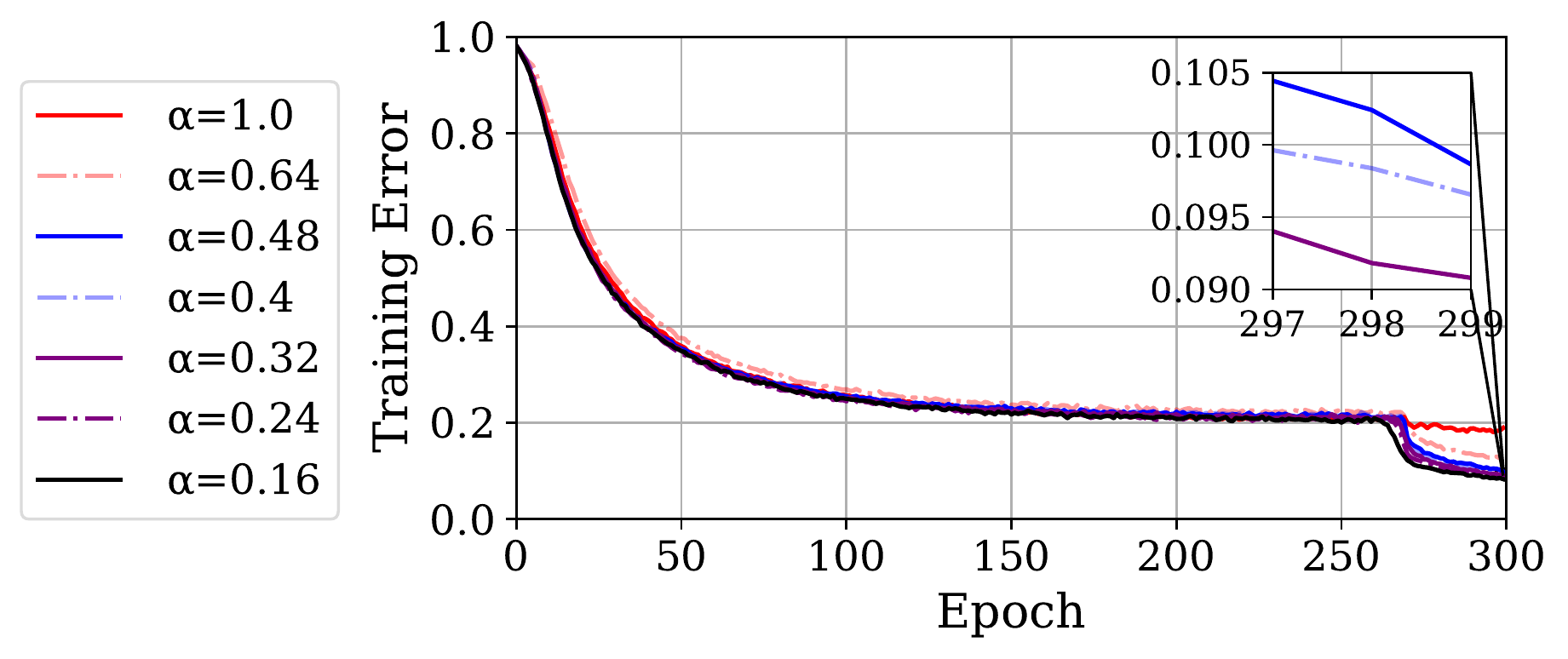}
	\end{subfigure}
	\begin{subfigure}[b]{0.42\textwidth}
		\includegraphics[width=\textwidth,trim={0cm 0.25cm 0 0},clip]{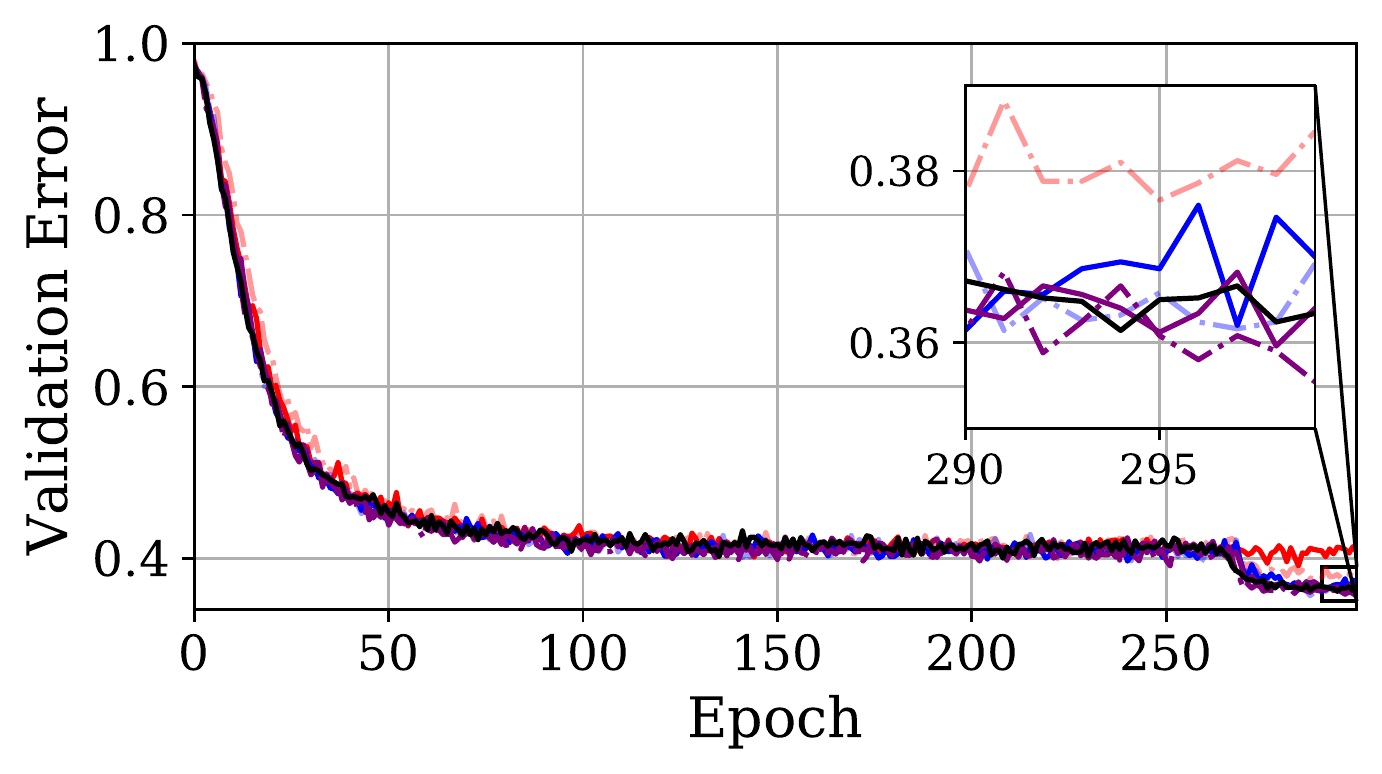}
	\end{subfigure}
	\caption{Training/Validation error of KFAC using $\delta=1$ and $\alpha$ on the VGG-$16$ CIFAR-$100$}
	\label{fig:kfacgradcov}
\end{figure}

\subsection{Comparison to Gradient Noise}
\label{subsec:gradnoise}

We consider in this section deriving similar scaling relations without a random matrix theory approach. This line of thought is  similar to the work in \citet{smith2017bayesian,smith2017don}. Consider a linear approximation to the change in loss from one step of SGD,
\begin{equation}
	L(\vw-\alpha\nabla L(\vw)) - L(\vw) = -\alpha||\nabla L(\vw)||^{2}.
\end{equation}
For a mini-batch of size $B$ (assuming independent samples), whilst the value of the change in loss remains unchanged in expectation, the variance changes as 
\begin{equation}
	\Var \bigg(L(\vw)-\alpha\bigg(\frac{1}{B}\sum_{i}^{B}\nabla L(\vw)_{i}\bigg)\bigg(\frac{1}{B}\sum_{j}^{B}\nabla L(\vw)_{j}\bigg)\bigg)^{T} \propto \frac{\alpha^{2}}{B^{2}},
\end{equation}
Under such a framework, where we take a linear approximation, we would scale the batch size linearly with the learning rate (within the limits of the approximation) in order to keep the loss variance similar. However, unlike our approach which considers optimal learning rates based on the curvature (and the changes in curvature with batch size),
it is somewhat unclear why we would want to keep the variance of the loss equal.
Furthermore, unlike our method, for which we can estimate the curvature of the batch from the batch and use this to inform our choice of learning rates and momenta for SGD, the variance of the loss gives no prescription as to which learning rates or momenta we should choose. Furthermore for adaptive methods, the corresponding variance is now 
\begin{equation}
	\alpha^{2} \Var\bigg(\mB^{-1}\bigg(\frac{1}{B}\sum_{i}^{B}\nabla L(\vw)_{i}\bigg)\bigg(\frac{1}{B}\sum_{j}^{B}\nabla L(\vw)_{j}\bigg)^{T}\bigg) \propto 
\end{equation}
which is a function of the preconditioning matrix and the covariance of the gradients and from which no clear scaling relationships follow (hence we do not give a scaling in the resulting equation). Considering a spiked random matrix model of the covariance of the gradients, would give similar results to our analysis, but again would require the use of random matrix theory. Whilst it is possible to ignore the effect of the pre-conditioning matrix as in \citet{smith2017don} and argue for a linear prescription, we show in this paper the sub-optimality of this naive approach.
\section{True Hessian}
\label{sec:truelosssurface}
We have, to this point, considered the empirical Hessian and the batch Hessian. We here consider the Hessian under the data generating distribution, originally (partially) investigated in \citet{granzioldeep2018}. For finite $P$ and $N \rightarrow \infty$, i.e. $q = P/N \rightarrow 0$, $|\meps(\vw)| \rightarrow 0$ the empirical Hessian would become the true Hessian. $P,N$ refer to the parameter count and dataset size respectively. Similarly, in this limit, the empirical risk converges almost surely to the true risk, i.e. we eliminate the \textit{generalisation gap}. 
However, in much deep learning, the network size eclipses the dataset size by orders of magnitude.\footnote{CIFAR datasets, which have $50,000$ examples, are routinely used to train networks with about $50$ million parameters.} This is similar to considering perturbations between the true covariance matrix and the noisy sample covariance matrix, extensively studied in mathematics and physics \citep{baik2006eigenvalues,bloemendal2016limits,bloemendal2016principal}. 
\begin{figure}[htbp]
	\centering
	\begin{subfigure}{0.32\linewidth}
		\includegraphics[trim={0cm 0cm 0cm 0cm},clip, width=1\textwidth]{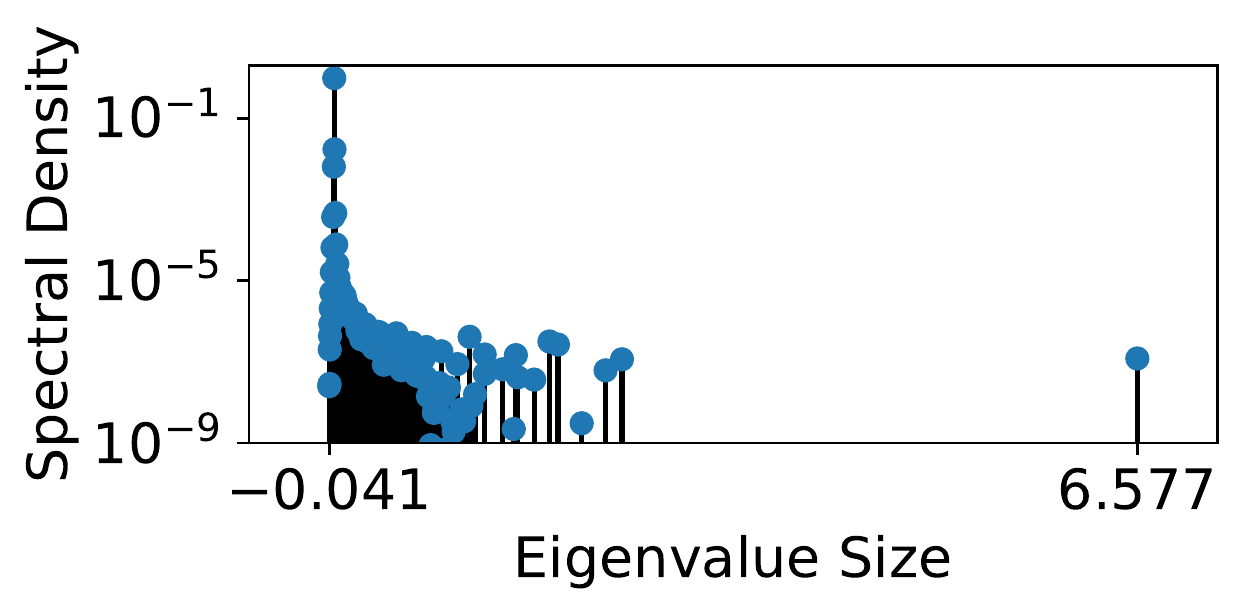}
		\caption{Emp-Hessian $N = 50,000$ }
		\label{subfig:vgg16emp}
	\end{subfigure}
	\begin{subfigure}{0.32\linewidth}
		\includegraphics[trim={0cm 0cm 0cm 0cm},clip,
		width=1\textwidth]{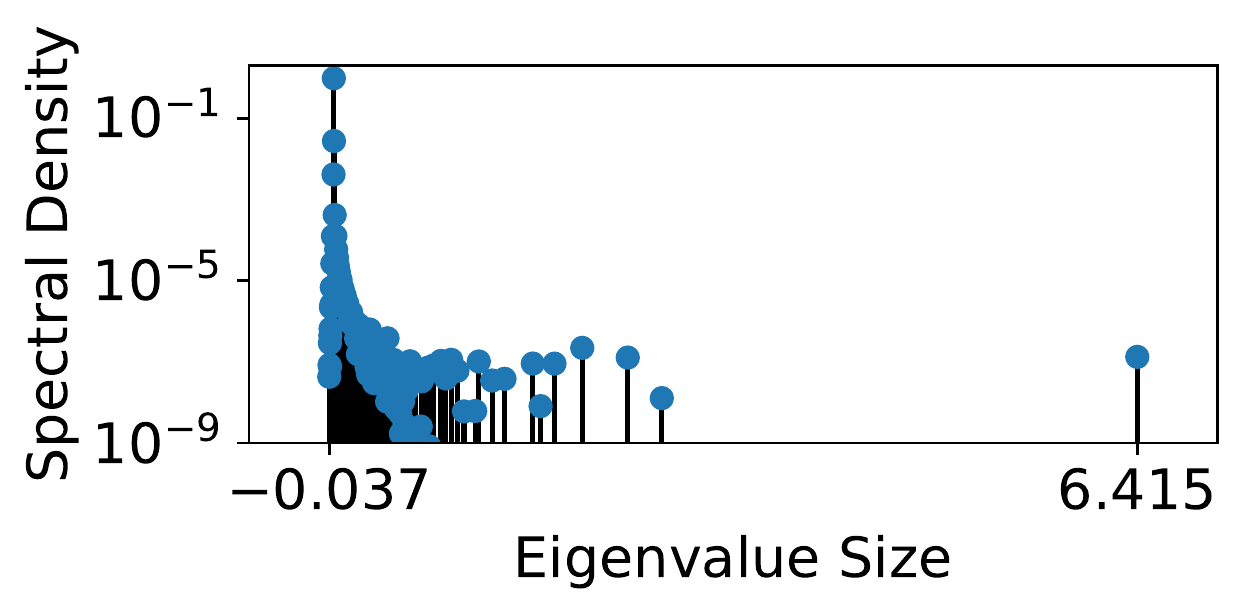}
		\caption{Aug-Hessian $N = 500,000$ }
		\label{subfig:vgg16aug1}
	\end{subfigure}
	\begin{subfigure}{0.32\linewidth}
		\includegraphics[trim={0cm 0cm 0cm 0cm},clip, width=1\textwidth]{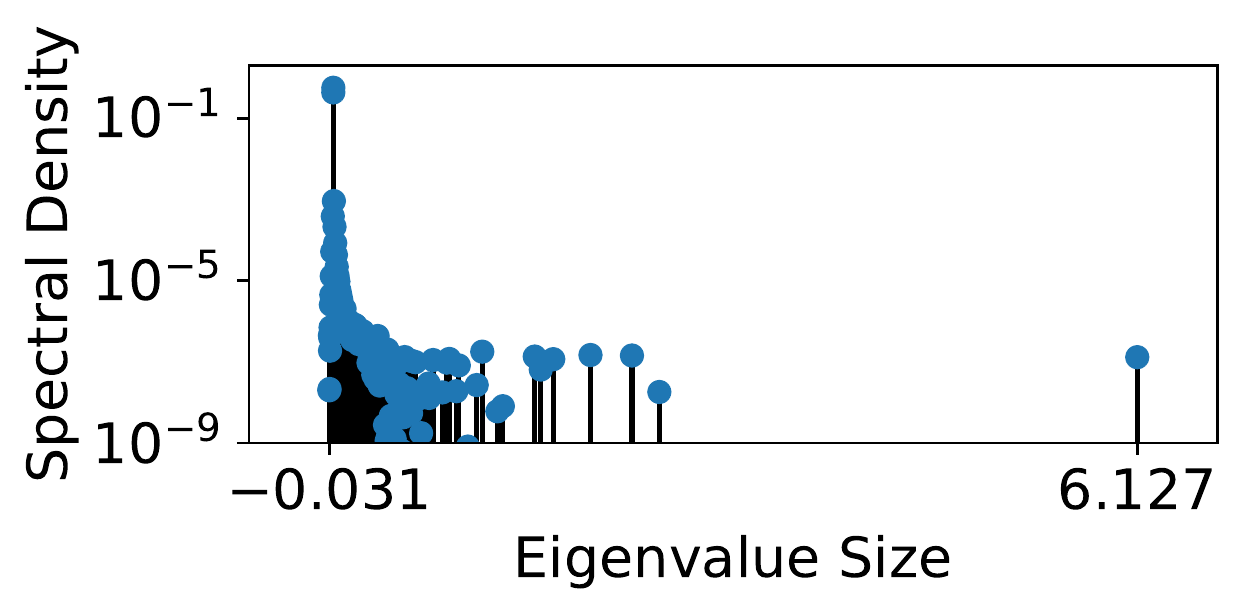}
		\caption{Aug-Hessian $N = 5,000,000$ }
		\label{subfig:vgg16aug2}
	\end{subfigure}
	\caption{Hessian Spectral Density at epoch $300$ , on a VGG-$16$ on the CIFAR-$100$ dataset, for different amounts of data-augmentation.}
	\label{fig:truehess}
\end{figure}
As the true Hessian and true risk are in practice unobservable, we consider whether the empirical Hessian provides a valid approximation to the true Hessian. We evaluate the Hessian by artificially increasing the number of samples through the use of data-augmentation. We simulate the effect of increasing the data-set size, with random horizontal flips, $4\times4$ zero padding and random $32\times 32$ crops. We then use the Pearlmutter trick \citep{pearlmutter1994fast} on the augmented dataset, combined with the Lanczos algorithm, to estimate the spectral density. While there is clear dependence between the augmented samples and original samples, intuitively we can consider the augmented dataset to be equivalent to an independent set, of larger size than the original dataset. This intuition is grounded in the observation that training without augmentation leads to significant performance decreases, similar to reducing the dataset size. Specifically, for the VGG-$16$ without augmentation, we achieve a testing accuracy of $48.8\%$ compared to $72.1\%$ on the CIFAR-$100$ dataset running the same schedule. As shown in Figure \ref{fig:truehess}, the extremal eigenvalues are reduced in size as the sample number is increased, in accordance with Theorem \ref{theorem:mainresult}.
We note from Figure \ref{fig:truehess} that, despite a factor of $100$ in augmentation, the differences between the most augmented and empirical Hessian are slight. There is a slight reduction in the extremal eigenvalues, but otherwise the general shape of the eigenspectrum remains unaffected - implying that the true Hessian may be similar (in its spectral properties) to the empirical Hessian. This is a different conclusion to that reached in \citep{granzioldeep2018}, where the authors conclude that the true Hessian is very different to that of the empirical Hessian. Note that, were we to consider our batch Hessian to be i.i.d. draws from the data generating distribution, we need only replace $\mathfrak{b}$ in Equation \ref{theorem:mainresult} with $B$, where $\mathfrak{b} > B$ and for $B\ll N$ then $\mathfrak{b} \approx B$.

\section{Why do DNN Spectra always have Outliers?}
\label{sec:whywehaveoutliers}
As is evident from Theorems \ref{theorem:mainresult} and \ref{theorem:batchtheoremgnn}, the scaling of the spectral norm as a function of batch size
depends on whether the spectrum has outliers or not. \citet{papyan2020traces} uses a formal  procedure to attribute various parts of the spectrum to within-class and cross-class covariances. These are then validated experimentally on the VGG-$11$ architecture with a subsampled CIFAR-$10$ dataset. The work analytically shows, for softmax regression and a $k$-class Gaussian Mixture Model, that the distance between the outliers (along with the distance between the outliers and the bulk) increases as a function of class separation.
Future work can look to extend these results to a $1$-hidden layer MLP. Note that, in Figure \ref{fig:hessiangmm},  both at initialisation and training end (where we achieve good class separation and training accuracy), distinct outliers in the spectrum are observed.  
\begin{figure}[h!]
	\centering
	\begin{subfigure}{0.46\linewidth}
		\includegraphics[width=1\linewidth,trim={0 0 0 0},clip]{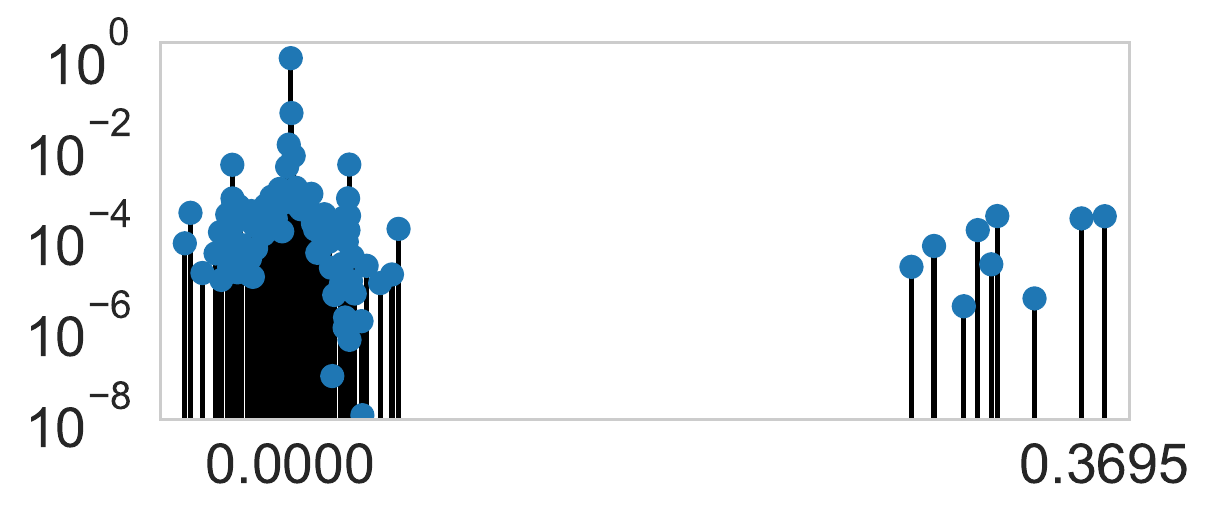}
		\caption{Spectrum at Initialisation}
		\label{subfig:gmmk=10init}
	\end{subfigure}
	\begin{subfigure}{0.46\linewidth}
		\includegraphics[width=1\linewidth,trim={0 0 0 0},clip]{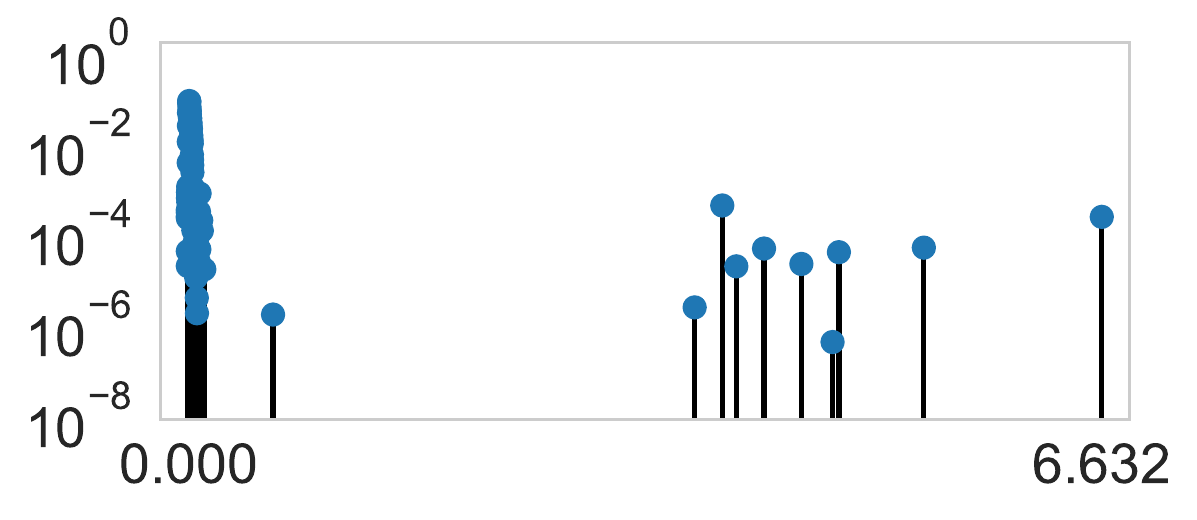}
		\caption{Spectrum at Training End}
		\label{subfig:gmmk=10end}
	\end{subfigure}
	\caption{Outlier persistence: Spectrum of a 1-Layer MLP on a 10 class Gaussian Mixture model at Initialisation (where performance is random, at $10\%$) and End of Training for $100$ Epochs with $\alpha=0.01$ and linear learning rate decay (where the performance is over $95\%$).}
	\label{fig:hessiangmm}
\end{figure}
We also note, at initialisation of the VGG network in Figure \ref{subfig:vggstart}, that we observe some outliers. 
Following \citet{papyan2020traces} we train with an extreme learning rate (we use $\alpha=0.2$ with $\gamma=0$ weight decay - compared to best performance values for this network of $\alpha=0.01$ and no weight decay). We find that this extreme training regime takes the network to a point of no return (i.e. we cannot improve training performance from this point, even by reducing the learning rate and increasing the weight decay factor). We see that, at the end of training, the largest eigenvalues are clustered together to form an outlying continuous spectral density, as shown in Figure \ref{subfig:vggnotrainend}. This implies \citep{papyan2020traces} that there is no class separation.
We plan to investigate the apparent need, during training, to retain singleton spectral outliers in future work.

These preliminary experiments, along with the work of \citet{papyan2020traces}, indicate that for the neural networks in the regime in which we are interested in (i.e. initialisation and training regimens where the network training and testing accuracy continually increase) outliers should be present in the spectrum. 
\begin{figure}[h]
	\centering
	\begin{subfigure}{0.46\linewidth}
		\includegraphics[width=1\linewidth,trim={0 0 0 0},clip]{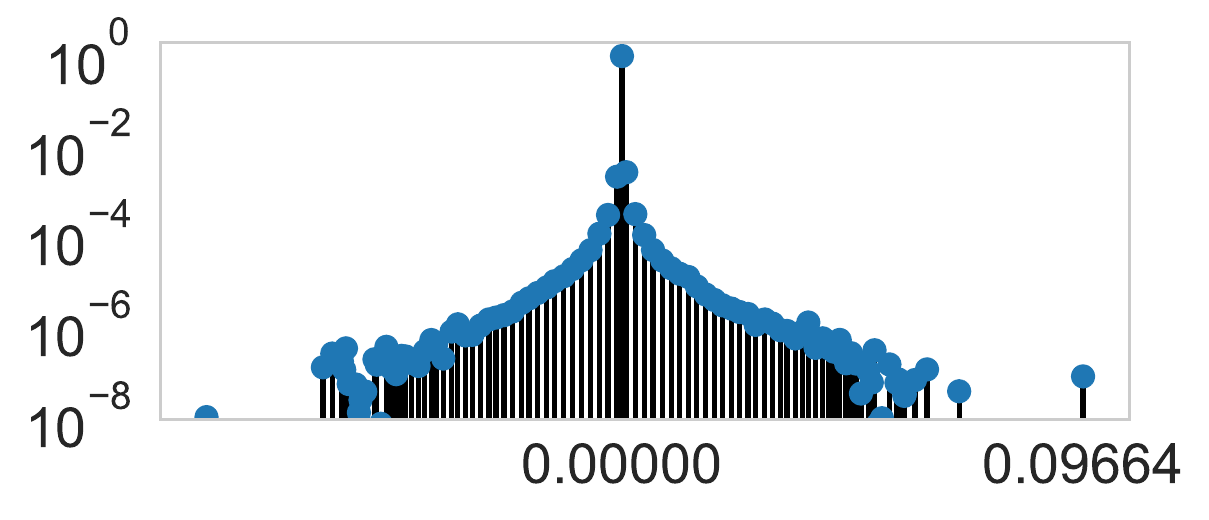}
		\caption{Spectrum at Initialisation}
		\label{subfig:vggstart}
	\end{subfigure}
	\begin{subfigure}{0.46\linewidth}
		\includegraphics[width=1\linewidth,trim={0 0 0 0},clip]{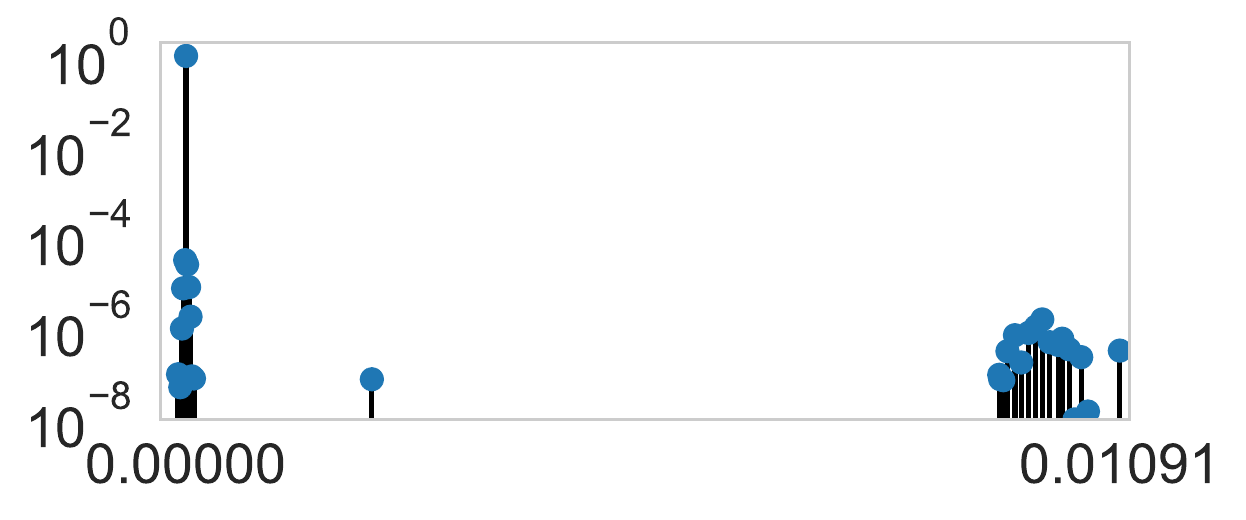}
		\caption{Spectrum at Training End}
		\label{subfig:vggnotrainend}
	\end{subfigure}
	\caption{\textbf{VGG Network Always contains outlayers}. Spectrum of a 16-Layer VGG Network on the 10 class CIFAR-$100$ dataset, at Initalisation (where performance is random at $1\%$) and end of Training for $100$ Epochs with $\alpha=0.2,\gamma=0$ and linear learning rate decay (where the performance remains random and is irrecoverable even with a learning rate drop).}
	\label{fig:hessianvggnotrain}
\end{figure}

\end{Correction}

\section{Conclusion}\label{sec:conclusion}
This paper shows that, under a spiked, field-dependent random matrix model, the extremal eigenvalues of the batch Hessian are larger than those of the empirical Hessian. The magnitude of the perturbation is inversely proportional to the batch size if there are well separated outliers in the Hessian spectrum and inversely proportional to the square root if not. The main implications of this work are that up to a threshold: 1) SGD learning rates should be scaled linearly with batch size; 2) Adam learning rates should be scaled with the square root of the batch size; 
\begin{Correction}
3)When trying to learn the learning rates and momenta directly from curvature, we should use the batch not empirical Hessian.
\end{Correction}
We extensively validate our predictions and associated implications on the VGG-$16$ network and CIFAR-$100$ dataset, across various hyper-parameter settings, including weight-decay and batch-normalisation. For SGD, we further validate our prediction on the WideResNet-$28\times10$ on both the CIFAR-$100$ and ImageNet-$32$ datasets. Given that our analysis is neither dataset nor architecture specific, we expect our results to hold generally outside of our experimental setup. This work can be used to better inform practitioners of how to adapt learning rate schedules for both small and large devices in a principled manner.
\begin{Correction}

\section{Acknowledgements}
The authors would like to thank Nicholas P Baskerville and Jon Keating for discussions surrounding the context behind this work and Random Matrix Theory, along with Timur Garipov and Dmitry Vetrov for the opportunity to develop software relevant to this line of research in Moscow.
The lead author would further like to thank the Oxford-Man Institute for funding the initial stages of this extensive research direction, Andrew Gittings from the JADE team for extensive computational resources and Hafiz Tiomoko \& Xingchen Wan for further discussions and interesting ideas on potential experimental validation.
\end{Correction}

\bibliography{bibliography}

\begin{thebibliography}{93}
\providecommand{\natexlab}[1]{#1}
\providecommand{\url}[1]{\texttt{#1}}
\expandafter\ifx\csname urlstyle\endcsname\relax
  \providecommand{\doi}[1]{doi: #1}\else
  \providecommand{\doi}{doi: \begingroup \urlstyle{rm}\Url}\fi

\bibitem[Adamczak(2011)]{adamczak2011marchenko}
Radoslaw Adamczak.
\newblock On the marchenko-pastur and circular laws for some classes of random
  matrices with dependent entries.
\newblock \emph{Electronic Journal of Probability}, 16:\penalty0 1065--1095,
  2011.

\bibitem[Bai(2008)]{bai2008convergence}
Zhi~Dong Bai.
\newblock Convergence rate of expected spectral distributions of large random
  matrices part i: Wigner matrices.
\newblock In \emph{Advances In Statistics}, pages 60--83. World Scientific,
  2008.

\bibitem[Baik and Silverstein(2006)]{baik2006eigenvalues}
Jinho Baik and Jack~W Silverstein.
\newblock Eigenvalues of large sample covariance matrices of spiked population
  models.
\newblock \emph{Journal of multivariate analysis}, 97\penalty0 (6):\penalty0
  1382--1408, 2006.

\bibitem[Benaych-Georges and Nadakuditi(2011)]{benaych2011eigenvalues}
Florent Benaych-Georges and Raj~Rao Nadakuditi.
\newblock The eigenvalues and eigenvectors of finite, low rank perturbations of
  large random matrices.
\newblock \emph{Advances in Mathematics}, 227\penalty0 (1):\penalty0 494--521,
  2011.

\bibitem[Berrada et~al.(2018)Berrada, Zisserman, and Kumar]{berrada2018deep}
Leonard Berrada, Andrew Zisserman, and M~Pawan Kumar.
\newblock Deep {F}rank-{W}olfe for neural network optimization.
\newblock \emph{arXiv preprint arXiv:1811.07591}, 2018.

\bibitem[Bloemendal et~al.(2016{\natexlab{a}})Bloemendal, Knowles, Yau, and
  Yin]{bloemendal2016principal}
Alex Bloemendal, Antti Knowles, Horng-Tzer Yau, and Jun Yin.
\newblock On the principal components of sample covariance matrices.
\newblock \emph{Probability theory and related fields}, 164\penalty0
  (1-2):\penalty0 459--552, 2016{\natexlab{a}}.

\bibitem[Bloemendal et~al.(2016{\natexlab{b}})Bloemendal, Vir{\'a}g,
  et~al.]{bloemendal2016limits}
Alex Bloemendal, B{\'a}lint Vir{\'a}g, et~al.
\newblock Limits of spiked random matrices ii.
\newblock \emph{The Annals of Probability}, 44\penalty0 (4):\penalty0
  2726--2769, 2016{\natexlab{b}}.

\bibitem[Bollapragada et~al.(2019)Bollapragada, Byrd, and
  Nocedal]{bollapragada2019exact}
Raghu Bollapragada, Richard~H Byrd, and Jorge Nocedal.
\newblock Exact and inexact subsampled newton methods for optimization.
\newblock \emph{IMA Journal of Numerical Analysis}, 39\penalty0 (2):\penalty0
  545--578, 2019.

\bibitem[Bottou et~al.(2018)Bottou, Curtis, and
  Nocedal]{bottou2018optimization}
L{\'e}on Bottou, Frank~E Curtis, and Jorge Nocedal.
\newblock Optimization methods for large-scale machine learning.
\newblock \emph{Siam Review}, 60\penalty0 (2):\penalty0 223--311, 2018.

\bibitem[Boyd and Vandenberghe(2009)]{boyd_vandenberghe_2009}
Stephen~P. Boyd and Lieven Vandenberghe.
\newblock \emph{Convex optimization}.
\newblock Cambridge University Press, 2009.

\bibitem[Bun et~al.(2016)Bun, Allez, Bouchaud, Potters,
  et~al.]{bun2016rotational}
Jo{\"e}l Bun, Romain Allez, Jean-Philippe Bouchaud, Marc Potters, et~al.
\newblock Rotational invariant estimator for general noisy matrices.
\newblock \emph{IEEE Trans. Information Theory}, 62\penalty0 (12):\penalty0
  7475--7490, 2016.

\bibitem[Bun et~al.(2017)Bun, Bouchaud, and Potters]{bun2017cleaning}
Jo{\"e}l Bun, Jean-Philippe Bouchaud, and Marc Potters.
\newblock Cleaning large correlation matrices: tools from random matrix theory.
\newblock \emph{Physics Reports}, 666:\penalty0 1--109, 2017.

\bibitem[Cai et~al.(2013)Cai, Fan, and Jiang]{cai2013distributions}
Tony Cai, Jianqing Fan, and Tiefeng Jiang.
\newblock Distributions of angles in random packing on spheres.
\newblock \emph{The Journal of Machine Learning Research}, 14\penalty0
  (1):\penalty0 1837--1864, 2013.

\bibitem[Chaudhari et~al.(2016)Chaudhari, Choromanska, Soatto, LeCun, Baldassi,
  Borgs, Chayes, Sagun, and Zecchina]{chaudhari2016entropy}
Pratik Chaudhari, Anna Choromanska, Stefano Soatto, Yann LeCun, Carlo Baldassi,
  Christian Borgs, Jennifer Chayes, Levent Sagun, and Riccardo Zecchina.
\newblock Entropy-{SGD}: Biasing gradient descent into wide valleys.
\newblock \emph{arXiv preprint arXiv:1611.01838}, 2016.

\bibitem[Choromanska et~al.(2015{\natexlab{a}})Choromanska, Henaff, Mathieu,
  Arous, and LeCun]{choromanska2015loss}
Anna Choromanska, Mikael Henaff, Michael Mathieu, G{\'e}rard~Ben Arous, and
  Yann LeCun.
\newblock The loss surfaces of multilayer networks.
\newblock In \emph{Artificial Intelligence and Statistics}, pages 192--204,
  2015{\natexlab{a}}.

\bibitem[Choromanska et~al.(2015{\natexlab{b}})Choromanska, LeCun, and
  Arous]{choromanska2015open}
Anna Choromanska, Yann LeCun, and G{\'e}rard~Ben Arous.
\newblock Open problem: The landscape of the loss surfaces of multilayer
  networks.
\newblock In \emph{Conference on Learning Theory}, pages 1756--1760,
  2015{\natexlab{b}}.

\bibitem[Chrabaszcz et~al.(2017)Chrabaszcz, Loshchilov, and
  Hutter]{chrabaszcz2017downsampled}
Patryk Chrabaszcz, Ilya Loshchilov, and Frank Hutter.
\newblock A downsampled variant of imagenet as an alternative to the cifar
  datasets.
\newblock \emph{arXiv preprint arXiv:1707.08819}, 2017.

\bibitem[Cover and Thomas(2012)]{cover2012elements}
Thomas~M Cover and Joy~A Thomas.
\newblock \emph{Elements of information theory}.
\newblock John Wiley \& Sons, 2012.

\bibitem[Dauphin et~al.(2014)Dauphin, Pascanu, Gulcehre, Cho, Ganguli, and
  Bengio]{dauphin2014identifying}
Yann~N Dauphin, Razvan Pascanu, Caglar Gulcehre, Kyunghyun Cho, Surya Ganguli,
  and Yoshua Bengio.
\newblock Identifying and attacking the saddle point problem in
  high-dimensional non-convex optimization.
\newblock In \emph{Advances in neural information processing systems}, pages
  2933--2941, 2014.

\bibitem[Dinh et~al.(2017)Dinh, Pascanu, Bengio, and Bengio]{dinh2017sharp}
Laurent Dinh, Razvan Pascanu, Samy Bengio, and Yoshua Bengio.
\newblock Sharp minima can generalize for deep nets.
\newblock In \emph{Proceedings of the 34th International Conference on Machine
  Learning-Volume 70}, pages 1019--1028. JMLR. org, 2017.

\bibitem[Feier(2012)]{feier2012methods}
Adina~Roxana Feier.
\newblock \emph{Methods of proof in random matrix theory}.
\newblock PhD thesis, Harvard University, 2012.

\bibitem[Fitzsimons et~al.(2017)Fitzsimons, Granziol, Cutajar, Osborne,
  Filippone, and Roberts]{ete}
Jack Fitzsimons, Diego Granziol, Kurt Cutajar, Michael Osborne, Maurizio
  Filippone, and Stephen Roberts.
\newblock Entropic trace estimates for log determinants, 2017.

\bibitem[Gardner et~al.(2018)Gardner, Pleiss, Weinberger, Bindel, and
  Wilson]{gardner2018gpytorch}
Jacob Gardner, Geoff Pleiss, Kilian~Q Weinberger, David Bindel, and Andrew~G
  Wilson.
\newblock Gpytorch: Blackbox matrix-matrix {G}aussian process inference with
  {GPU} acceleration.
\newblock In \emph{Advances in Neural Information Processing Systems}, pages
  7576--7586, 2018.

\bibitem[Ghorbani et~al.(2019)Ghorbani, Krishnan, and
  Xiao]{ghorbani2019investigation}
Behrooz Ghorbani, Shankar Krishnan, and Ying Xiao.
\newblock An investigation into neural net optimization via {H}essian
  eigenvalue density.
\newblock \emph{arXiv preprint arXiv:1901.10159}, 2019.

\bibitem[Golmant et~al.(2018)Golmant, Vemuri, Yao, Feinberg, Gholami, Rothauge,
  Mahoney, and Gonzalez]{golmant2018computational}
Noah Golmant, Nikita Vemuri, Zhewei Yao, Vladimir Feinberg, Amir Gholami, Kai
  Rothauge, Michael~W Mahoney, and Joseph Gonzalez.
\newblock On the computational inefficiency of large batch sizes for stochastic
  gradient descent.
\newblock \emph{arXiv preprint arXiv:1811.12941}, 2018.

\bibitem[Golub and Meurant(1994)]{golub1994matrices}
Gene~H Golub and G{\'e}rard Meurant.
\newblock Matrices, moments and quadrature.
\newblock \emph{Pitman Research Notes in Mathematics Series}, pages 105--105,
  1994.

\bibitem[Golub and Van~Loan(2012)]{golub2012matrix}
Gene~H Golub and Charles~F Van~Loan.
\newblock \emph{Matrix computations}, volume~3.
\newblock JHU press, 2012.

\bibitem[Gotze et~al.(2015)Gotze, Naumov, and Tikhomirov]{gotze2015limit}
F~Gotze, AA~Naumov, and AN~Tikhomirov.
\newblock Limit theorems for two classes of random matrices with dependent
  entries.
\newblock \emph{Theory of Probability \& Its Applications}, 59\penalty0
  (1):\penalty0 23--39, 2015.

\bibitem[G{\"o}tze et~al.(2012)G{\"o}tze, Naumov, and
  Tikhomirov]{gotze2012semicircle}
Friedrich G{\"o}tze, A~Naumov, and A~Tikhomirov.
\newblock Semicircle law for a class of random matrices with dependent entries.
\newblock \emph{arXiv preprint arXiv:1211.0389}, 2012.

\bibitem[Goyal et~al.(2017)Goyal, Doll{\'a}r, Girshick, Noordhuis, Wesolowski,
  Kyrola, Tulloch, Jia, and He]{goyal2017accurate}
Priya Goyal, Piotr Doll{\'a}r, Ross Girshick, Pieter Noordhuis, Lukasz
  Wesolowski, Aapo Kyrola, Andrew Tulloch, Yangqing Jia, and Kaiming He.
\newblock Accurate, large minibatch sgd: Training imagenet in 1 hour.
\newblock \emph{arXiv preprint arXiv:1706.02677}, 2017.

\bibitem[Granziol and Roberts(2017)]{diego}
Diego Granziol and Stephen Roberts.
\newblock An information and field theoretic approach to the grand canonical
  ensemble, 2017.

\bibitem[Granziol et~al.(2018)Granziol, Garipov, Zohren, Vetrov, Roberts, and
  Wilson]{granzioldeep2018}
Diego Granziol, Timur Garipov, Stefan Zohren, Dmitry Vetrov, Stephen Roberts,
  and Andrew~Gordon Wilson.
\newblock The deep learning limit: are negative neural network eigenvalues just
  noise?
\newblock \emph{ICML: Physics in Deep Learning Workshop}, 2018.

\bibitem[Granziol et~al.(2019)Granziol, Wan, Garipov, Vetrov, and
  Roberts]{granziol2019mlrg}
Diego Granziol, Xingchen Wan, Timur Garipov, Dmitry Vetrov, and Stephen
  Roberts.
\newblock {MLRG} deep curvature.
\newblock \emph{arXiv preprint arXiv:1912.09656}, 2019.

\bibitem[Gur-Ari et~al.(2018)Gur-Ari, Roberts, and Dyer]{gur2018gradient}
Guy Gur-Ari, Daniel~A Roberts, and Ethan Dyer.
\newblock Gradient descent happens in a tiny subspace.
\newblock \emph{arXiv preprint arXiv:1812.04754}, 2018.

\bibitem[Hachem et~al.(2013)Hachem, Loubaton, Najim, and
  Vallet]{hachem2013bilinear}
Walid Hachem, Philippe Loubaton, Jamal Najim, and Pascal Vallet.
\newblock On bilinear forms based on the resolvent of large random matrices.
\newblock In \emph{Annales de l'IHP Probabilit{\'e}s et statistiques},
  volume~49, pages 36--63, 2013.

\bibitem[Harvey et~al.(2019)Harvey, Liaw, Plan, and Randhawa]{harvey2019tight}
Nicholas~JA Harvey, Christopher Liaw, Yaniv Plan, and Sikander Randhawa.
\newblock Tight analyses for non-smooth stochastic gradient descent.
\newblock In \emph{Conference on Learning Theory}, pages 1579--1613. PMLR,
  2019.

\bibitem[Hausdorff(1921)]{hausdorff1921summationsmethoden}
Felix Hausdorff.
\newblock Summationsmethoden und momentfolgen. i.
\newblock \emph{Mathematische Zeitschrift}, 9\penalty0 (1):\penalty0 74--109,
  1921.

\bibitem[He et~al.(2016)He, Zhang, Ren, and Sun]{he2016deep}
Kaiming He, Xiangyu Zhang, Shaoqing Ren, and Jian Sun.
\newblock Deep residual learning for image recognition.
\newblock In \emph{Proceedings of the IEEE conference on computer vision and
  pattern recognition}, pages 770--778, 2016.

\bibitem[Hochreiter and Schmidhuber(1997)]{hochreiter1997flat}
Sepp Hochreiter and J{\"u}rgen Schmidhuber.
\newblock Flat minima.
\newblock \emph{Neural Computation}, 9\penalty0 (1):\penalty0 1--42, 1997.

\bibitem[Hoffer et~al.(2017)Hoffer, Hubara, and Soudry]{hoffer2017train}
Elad Hoffer, Itay Hubara, and Daniel Soudry.
\newblock Train longer, generalize better: closing the generalization gap in
  large batch training of neural networks.
\newblock In \emph{Advances in Neural Information Processing Systems}, pages
  1731--1741, 2017.

\bibitem[Hutchinson(1990)]{hutchinson1990stochastic}
Michael~F Hutchinson.
\newblock A stochastic estimator of the trace of the influence matrix for
  {L}aplacian smoothing splines.
\newblock \emph{Communications in Statistics-Simulation and Computation},
  19\penalty0 (2):\penalty0 433--450, 1990.

\bibitem[Ioffe and Szegedy(2015)]{ioffe2015batch}
Sergey Ioffe and Christian Szegedy.
\newblock Batch normalization: Accelerating deep network training by reducing
  internal covariate shift.
\newblock \emph{arXiv preprint arXiv:1502.03167}, 2015.

\bibitem[Izmailov et~al.(2018)Izmailov, Podoprikhin, Garipov, Vetrov, and
  Wilson]{izmailov2018averaging}
Pavel Izmailov, Dmitrii Podoprikhin, Timur Garipov, Dmitry Vetrov, and
  Andrew~Gordon Wilson.
\newblock Averaging weights leads to wider optima and better generalization.
\newblock \emph{arXiv preprint arXiv:1803.05407}, 2018.

\bibitem[Jain et~al.(2017)Jain, Netrapalli, Kakade, Kidambi, and
  Sidford]{jain2017parallelizing}
Prateek Jain, Praneeth Netrapalli, Sham~M Kakade, Rahul Kidambi, and Aaron
  Sidford.
\newblock Parallelizing stochastic gradient descent for least squares
  regression: mini-batching, averaging, and model misspecification.
\newblock \emph{The Journal of Machine Learning Research}, 18\penalty0
  (1):\penalty0 8258--8299, 2017.

\bibitem[Jastrz{\k{e}}bski et~al.(2017)Jastrz{\k{e}}bski, Kenton, Arpit,
  Ballas, Fischer, Bengio, and Storkey]{jastrzkebski2017three}
Stanis{\l}aw Jastrz{\k{e}}bski, Zachary Kenton, Devansh Arpit, Nicolas Ballas,
  Asja Fischer, Yoshua Bengio, and Amos Storkey.
\newblock Three factors influencing minima in {SGD}.
\newblock \emph{arXiv preprint arXiv:1711.04623}, 2017.

\bibitem[Jastrz{\k{e}}bski et~al.(2018)Jastrz{\k{e}}bski, Kenton, Ballas,
  Fischer, Bengio, and Storkey]{jastrzkebski2018relation}
Stanis{\l}aw Jastrz{\k{e}}bski, Zachary Kenton, Nicolas Ballas, Asja Fischer,
  Yoshua Bengio, and Amos Storkey.
\newblock On the relation between the sharpest directions of {DNN} loss and the
  {SGD} step length.
\newblock 2018.

\bibitem[Kawaguchi(2016)]{kawaguchi2016deep}
Kenji Kawaguchi.
\newblock Deep learning without poor local minima.
\newblock In \emph{Advances in neural information processing systems}, pages
  586--594, 2016.

\bibitem[Keskar et~al.(2016)Keskar, Mudigere, Nocedal, Smelyanskiy, and
  Tang]{keskar2016large}
Nitish~Shirish Keskar, Dheevatsa Mudigere, Jorge Nocedal, Mikhail Smelyanskiy,
  and Ping Tak~Peter Tang.
\newblock On large-batch training for deep learning: Generalization gap and
  sharp minima.
\newblock \emph{arXiv preprint arXiv:1609.04836}, 2016.

\bibitem[Krizhevsky(2014)]{krizhevsky2014one}
Alex Krizhevsky.
\newblock One weird trick for parallelizing convolutional neural networks.
\newblock \emph{arXiv preprint arXiv:1404.5997}, 2014.

\bibitem[Kunstner et~al.(2019)Kunstner, Balles, and
  Hennig]{kunstner2019limitations}
Frederik Kunstner, Lukas Balles, and Philipp Hennig.
\newblock Limitations of the empirical {F}isher approximation.
\newblock \emph{arXiv preprint arXiv:1905.12558}, 2019.

\bibitem[Kushner and Yin(2003)]{kushner2003stochastic}
Harold Kushner and G~George Yin.
\newblock \emph{Stochastic approximation and recursive algorithms and
  applications}, volume~35.
\newblock Springer Science \& Business Media, 2003.

\bibitem[Lacoste-Julien et~al.(2012)Lacoste-Julien, Schmidt, and
  Bach]{lacoste2012simpler}
Simon Lacoste-Julien, Mark Schmidt, and Francis Bach.
\newblock A simpler approach to obtaining an o (1/t) convergence rate for the
  projected stochastic subgradient method.
\newblock \emph{arXiv preprint arXiv:1212.2002}, 2012.

\bibitem[Lewkowycz et~al.(2020)Lewkowycz, Bahri, Dyer, Sohl-Dickstein, and
  Gur-Ari]{lewkowycz2020large}
Aitor Lewkowycz, Yasaman Bahri, Ethan Dyer, Jascha Sohl-Dickstein, and Guy
  Gur-Ari.
\newblock The large learning rate phase of deep learning: the catapult
  mechanism.
\newblock \emph{arXiv preprint arXiv:2003.02218}, 2020.

\bibitem[Li et~al.(2017)Li, Xu, Taylor, and Goldstein]{li2017visualizing}
Hao Li, Zheng Xu, Gavin Taylor, and Tom Goldstein.
\newblock Visualizing the loss landscape of neural nets.
\newblock \emph{arXiv preprint arXiv:1712.09913}, 2017.

\bibitem[Li et~al.(2019)Li, Wei, and Ma]{li2019towards}
Yuanzhi Li, Colin Wei, and Tengyu Ma.
\newblock Towards explaining the regularization effect of initial large
  learning rate in training neural networks.
\newblock In \emph{Advances in Neural Information Processing Systems}, pages
  11669--11680, 2019.

\bibitem[MacKay(2003)]{mackay2003information}
David~JC MacKay.
\newblock \emph{Information theory, inference and learning algorithms}.
\newblock Cambridge university press, 2003.

\bibitem[Mar{\v{c}}enko and Pastur(1967)]{marvcenko1967distribution}
Vladimir~A Mar{\v{c}}enko and Leonid~Andreevich Pastur.
\newblock Distribution of eigenvalues for some sets of random matrices.
\newblock \emph{Mathematics of the USSR-Sbornik}, 1\penalty0 (4):\penalty0 457,
  1967.

\bibitem[Martens(2010)]{martens2010deep}
James Martens.
\newblock Deep learning via {H}essian-free optimization.
\newblock In \emph{ICML}, volume~27, pages 735--742, 2010.

\bibitem[Martens(2014)]{martens2014new}
James Martens.
\newblock New insights and perspectives on the natural gradient method.
\newblock \emph{arXiv preprint arXiv:1412.1193}, 2014.

\bibitem[Martens and Grosse(2015)]{martens2015optimizing}
James Martens and Roger Grosse.
\newblock Optimizing neural networks with {K}ronecker-factored approximate
  curvature.
\newblock In \emph{International conference on machine learning}, pages
  2408--2417, 2015.

\bibitem[Martens and Sutskever(2012)]{martens2012training}
James Martens and Ilya Sutskever.
\newblock Training deep and recurrent networks with {H}essian-free
  optimization.
\newblock In \emph{Neural networks: Tricks of the trade}, pages 479--535.
  Springer, 2012.

\bibitem[Meurant and Strako{\v{s}}(2006)]{meurant2006lanczos}
G{\'e}rard Meurant and Zden{\v{e}}k Strako{\v{s}}.
\newblock The {L}anczos and conjugate gradient algorithms in finite precision
  arithmetic.
\newblock \emph{Acta Numerica}, 15:\penalty0 471--542, 2006.

\bibitem[Moritz et~al.(2016)Moritz, Nishihara, and Jordan]{moritz2016linearly}
Philipp Moritz, Robert Nishihara, and Michael Jordan.
\newblock A linearly-convergent stochastic l-bfgs algorithm.
\newblock In \emph{Artificial Intelligence and Statistics}, pages 249--258.
  PMLR, 2016.

\bibitem[Nesterov(2013)]{nesterov2013introductory}
Yurii Nesterov.
\newblock \emph{Introductory lectures on convex optimization: A basic course},
  volume~87.
\newblock Springer Science \& Business Media, 2013.

\bibitem[Nocedal and Wright(2006)]{nocedal2006numerical}
Jorge Nocedal and Stephen Wright.
\newblock \emph{Numerical optimization}.
\newblock Springer Science \& Business Media, 2006.

\bibitem[O'Rourke et~al.(2012)]{o2012note}
Sean O'Rourke et~al.
\newblock A note on the marchenko-pastur law for a class of random matrices
  with dependent entries.
\newblock \emph{Electronic Communications in Probability}, 17, 2012.

\bibitem[Papyan(2018)]{papyan2018full}
Vardan Papyan.
\newblock The full spectrum of deep net {H}essians at scale: Dynamics with
  sample size.
\newblock \emph{arXiv preprint arXiv:1811.07062}, 2018.

\bibitem[Papyan(2020)]{papyan2020traces}
Vardan Papyan.
\newblock Traces of class/cross-class structure pervade deep learning spectra.
\newblock \emph{Journal of Machine Learning Research}, 21\penalty0
  (252):\penalty0 1--64, 2020.

\bibitem[Pascanu and Bengio(2013)]{pascanu2013revisiting}
Razvan Pascanu and Yoshua Bengio.
\newblock Revisiting natural gradient for deep networks.
\newblock \emph{arXiv preprint arXiv:1301.3584}, 2013.

\bibitem[Paszke et~al.(2017)Paszke, Gross, Chintala, Chanan, Yang, DeVito, Lin,
  Desmaison, Antiga, and Lerer]{paszke2017automatic}
Adam Paszke, Sam Gross, Soumith Chintala, Gregory Chanan, Edward Yang, Zachary
  DeVito, Zeming Lin, Alban Desmaison, Luca Antiga, and Adam Lerer.
\newblock Automatic differentiation in pytorch.
\newblock 2017.

\bibitem[Pearlmutter(1994)]{pearlmutter1994fast}
Barak~A Pearlmutter.
\newblock Fast exact multiplication by the {H}essian.
\newblock \emph{Neural computation}, 6\penalty0 (1):\penalty0 147--160, 1994.

\bibitem[Pennington and Bahri(2017)]{pennington2017geometry}
Jeffrey Pennington and Yasaman Bahri.
\newblock Geometry of neural network loss surfaces via random matrix theory.
\newblock In \emph{Proceedings of the 34th International Conference on Machine
  Learning-Volume 70}, pages 2798--2806. JMLR. org, 2017.

\bibitem[Pennington and Worah(2018)]{pennington2018spectrum}
Jeffrey Pennington and Pratik Worah.
\newblock The spectrum of the fisher information matrix of a
  single-hidden-layer neural network.
\newblock In \emph{Advances in Neural Information Processing Systems}, pages
  5410--5419, 2018.

\bibitem[Rakhlin et~al.(2011)Rakhlin, Shamir, and Sridharan]{rakhlin2011making}
Alexander Rakhlin, Ohad Shamir, and Karthik Sridharan.
\newblock Making gradient descent optimal for strongly convex stochastic
  optimization.
\newblock \emph{arXiv preprint arXiv:1109.5647}, 2011.

\bibitem[Roosta-Khorasani and Ascher(2015)]{roosta2015improved}
Farbod Roosta-Khorasani and Uri Ascher.
\newblock Improved bounds on sample size for implicit matrix trace estimators.
\newblock \emph{Foundations of Computational Mathematics}, 15\penalty0
  (5):\penalty0 1187--1212, 2015.

\bibitem[Roosta-Khorasani and Mahoney(2016)]{roosta2016sub}
Farbod Roosta-Khorasani and Michael~W Mahoney.
\newblock Sub-sampled newton methods ii: Local convergence rates.
\newblock \emph{arXiv preprint arXiv:1601.04738}, 2016.

\bibitem[Sagun et~al.(2016)Sagun, Bottou, and LeCun]{sagun2016eigenvalues}
Levent Sagun, L{\'e}on Bottou, and Yann LeCun.
\newblock Eigenvalues of the {H}essian in deep learning: Singularity and
  beyond.
\newblock \emph{arXiv preprint arXiv:1611.07476}, 2016.

\bibitem[Sagun et~al.(2017)Sagun, Evci, Guney, Dauphin, and
  Bottou]{sagun2017empirical}
Levent Sagun, Utku Evci, V~Ugur Guney, Yann Dauphin, and Leon Bottou.
\newblock Empirical analysis of the {H}essian of over-parametrized neural
  networks.
\newblock \emph{arXiv preprint arXiv:1706.04454}, 2017.

\bibitem[Shallue et~al.(2018)Shallue, Lee, Antognini, Sohl-Dickstein, Frostig,
  and Dahl]{shallue2018measuring}
Christopher~J Shallue, Jaehoon Lee, Joseph Antognini, Jascha Sohl-Dickstein,
  Roy Frostig, and George~E Dahl.
\newblock Measuring the effects of data parallelism on neural network training.
\newblock \emph{arXiv preprint arXiv:1811.03600}, 2018.

\bibitem[Shamir and Zhang(2013)]{shamir2013stochastic}
Ohad Shamir and Tong Zhang.
\newblock Stochastic gradient descent for non-smooth optimization: Convergence
  results and optimal averaging schemes.
\newblock In \emph{International conference on machine learning}, pages 71--79.
  PMLR, 2013.

\bibitem[Simonyan and Zisserman(2014)]{simonyan2014very}
Karen Simonyan and Andrew Zisserman.
\newblock Very deep convolutional networks for large-scale image recognition.
\newblock \emph{arXiv preprint arXiv:1409.1556}, 2014.

\bibitem[Smith and Le(2017)]{smith2017bayesian}
Samuel~L Smith and Quoc~V Le.
\newblock A bayesian perspective on generalization and stochastic gradient
  descent.
\newblock \emph{arXiv preprint arXiv:1710.06451}, 2017.

\bibitem[Smith et~al.(2017)Smith, Kindermans, Ying, and Le]{smith2017don}
Samuel~L Smith, Pieter-Jan Kindermans, Chris Ying, and Quoc~V Le.
\newblock Don't decay the learning rate, increase the batch size.
\newblock \emph{arXiv preprint arXiv:1711.00489}, 2017.

\bibitem[Stein(1972)]{stein1972}
Charles Stein.
\newblock A bound for the error in the normal approximation to the distribution
  of a sum of dependent random variables.
\newblock In \emph{Proceedings of the Sixth Berkeley Symposium on Mathematical
  Statistics and Probability, Volume 2: Probability Theory}, pages 583--602,
  Berkeley, Calif., 1972. University of California Press.

\bibitem[Tao(2012)]{tao2012topics}
Terence Tao.
\newblock \emph{Topics in random matrix theory}, volume 132.
\newblock American Mathematical Soc., 2012.

\bibitem[Tsuzuku et~al.(2020)Tsuzuku, Sato, and
  Sugiyama]{tsuzuku2020normalized}
Yusuke Tsuzuku, Issei Sato, and Masashi Sugiyama.
\newblock Normalized flat minima: Exploring scale invariant definition of flat
  minima for neural networks using pac-bayesian analysis.
\newblock In \emph{International Conference on Machine Learning}, pages
  9636--9647. PMLR, 2020.

\bibitem[Ubaru et~al.(2017)Ubaru, Chen, and Saad]{ubaru2017fast}
Shashanka Ubaru, Jie Chen, and Yousef Saad.
\newblock Fast estimation of tr(f(a)) via stochastic {L}anczos quadrature.
\newblock \emph{SIAM Journal on Matrix Analysis and Applications}, 38\penalty0
  (4):\penalty0 1075--1099, 2017.

\bibitem[Voiculescu et~al.(1992)Voiculescu, Dykema, and
  Nica]{voiculescu1992free}
Dan~V Voiculescu, Ken~J Dykema, and Alexandru Nica.
\newblock \emph{Free random variables}.
\newblock Number~1. American Mathematical Soc., 1992.

\bibitem[Wu et~al.(2017)Wu, Zhu, et~al.]{wu2017towards}
Lei Wu, Zhanxing Zhu, et~al.
\newblock Towards understanding generalization of deep learning: Perspective of
  loss landscapes.
\newblock \emph{arXiv preprint arXiv:1706.10239}, 2017.

\bibitem[Wu et~al.(2018)Wu, Ma, et~al.]{wu2018sgd}
Lei Wu, Chao Ma, et~al.
\newblock How sgd selects the global minima in over-parameterized learning: A
  dynamical stability perspective.
\newblock \emph{Advances in Neural Information Processing Systems},
  31:\penalty0 8279--8288, 2018.

\bibitem[Zagoruyko and Komodakis(2016)]{zagoruyko2016wide}
Sergey Zagoruyko and Nikos Komodakis.
\newblock Wide residual networks.
\newblock \emph{arXiv preprint arXiv:1605.07146}, 2016.

\bibitem[Zhang et~al.(2021)Zhang, Bengio, Hardt, Recht, and
  Vinyals]{zhang2021understanding}
Chiyuan Zhang, Samy Bengio, Moritz Hardt, Benjamin Recht, and Oriol Vinyals.
\newblock Understanding deep learning (still) requires rethinking
  generalization.
\newblock \emph{Communications of the ACM}, 64\penalty0 (3):\penalty0 107--115,
  2021.

\bibitem[Zhang et~al.(2019)Zhang, Li, Nado, Martens, Sachdeva, Dahl, Shallue,
  and Grosse]{zhang2019algorithmic}
Guodong Zhang, Lala Li, Zachary Nado, James Martens, Sushant Sachdeva, George
  Dahl, Chris Shallue, and Roger~B Grosse.
\newblock Which algorithmic choices matter at which batch sizes? insights from
  a noisy quadratic model.
\newblock In \emph{Advances in Neural Information Processing Systems}, pages
  8194--8205, 2019.

\end{thebibliography}
\appendix

\section{Proof of the Central Lemma}
\label{sec:backgroundtheory}

A full proof of Theorem \ref{theorem:mainresult}, which rests heavily on disparate yet known results in the literature \citep{gotze2012semicircle,benaych2011eigenvalues,bun2017cleaning} would span many dozens of pages, repeating prior work. We hence adopt an alternative proof strategy, which we hope is understandable and relatable to a machine learning audience, for which this work is intended. We first introduce a minimum amount of necessary random matrix theory background. We then prove Theorem \ref{theorem:mainresult}, but under the stronger assumptions that the elements of the fluctuation matrix are i.i.d. Gaussian (the Gaussian Orthogonal Ensemble \citep{tao2012topics}). To understand why this makes sense, we consider the key ingredients of the proof
\begin{itemize}
\item The fluctuation matrix converges to the semi-circle law (which we introduce and explain in the next Section);
\item the spectral perturbation low-rank empirical Hessian by the fluctuation matrix can be computed analytically using perturbation theory; 
\item By Lemma \ref{lemma:normalelements}, the scaling relationships which characterise the extent of the noise perturbation as a function of batch size can be analysed.
\end{itemize}
Hence the only difference between the simplified proof and Theorem \ref{theorem:mainresult}, is that we have more general conditions for the convergence semicircle law, which are detailed extensively in \citet{gotze2012semicircle}. The other two key components proceed in an identical fashion.

\subsection{Background}
\label{sec:notationandproof}
Following the notation of \citep{bun2017cleaning} the resolvent of a matrix $H$ is defined as
\begin{equation}
\mG_{H}(z) = (z\mI_{N}-\mH)^{-1}
\end{equation}
with  $z = x + i\eta \in \mathbb{C}$. The normalised trace operator of the resolvent, in the $N \rightarrow \infty$ limit
\begin{equation}
\mathcal{S}_{N}(z) = \frac{1}{N}\text{Tr}[\mG_{H}(z)] \xrightarrow{N \rightarrow \infty} \mathcal{S}(z) = \int \frac{\rho(\lambda)}{z-\lambda}du
\end{equation}
is known as the Stieltjes transform of the eigenvalue density $\rho$. The functional inverse of the Siteltjes transform, is denoted the black transform $\mathcal{B}(\mathcal{S}(z)) = z$. The $\mathcal{R}$ transform is thence defined as 
\begin{equation}
\mathcal{R}(w) = \mathcal{B}(w) - \frac{1}{w}
\end{equation}
The following definition formally defines a Wigner matrix:
\begin{definition}{}
Let $\{Y_{i}\}$ and $\{Z_{ij}\}_{1\leq i\leq j}$ be two real-valued families of zero mean, i.i.d. random variables, Furthermore suppose that $\mathbb{E}|Z_{i,j}|^{2}=1$ and for each $k \in \mathbb{N}$
\begin{equation}
	\max(E|Z_{i,j}|^{k},E|Y_{1}|^{k})< \infty
\end{equation}
Consider an $n \times n$ symmetric matrix $\mM_{n}$, whose entries are given by
\begin{equation}
	\begin{cases}
		\mM_{n}(i,i) = Y_{i}\\
		\mM_{n}(i,j) = Z_{ij} = \mM_{n}(j,i) \\
	\end{cases} 
\end{equation}
The Matrix $\mM_{n}$ is known as a real symmetric Wigner matrix.
\end{definition}
\begin{theorem}
Let $\{\mM_{n}\}^{\infty}_{n=1}$ be a sequence of Wigner matrices, and for each $n$ denote $\mX_{n} = \mM_{n}/\sqrt{n}$. Then $\rho(\lambda)$, converges weakly, almost surely to the semicircle  distribution,
\begin{equation}
	d\mu(\lambda) = \rho(\lambda) d\lambda= \frac{1}{2\pi}\sqrt{4-\lambda^{2}}\mathbf{1}_{|\lambda|\leq 2} \, d\lambda.
\end{equation}
\end{theorem}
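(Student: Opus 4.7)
The plan is to use the classical method of moments. Since the semicircle distribution has compact support, it is uniquely determined by its moments (indeed, $C_m \leq 4^m$ so Carleman's condition holds), and hence it suffices to show that for every $k \in \mathbb{N}$ the $k$-th moment of the empirical spectral distribution $\rho_n$ of $\mX_n$ converges almost surely to the $k$-th moment of the semicircle law. First I would write
\begin{equation}
\int \lambda^{k}\, d\rho_{n}(\lambda) = \frac{1}{n}\Tr(\mX_{n}^{k}) = \frac{1}{n^{1+k/2}} \sum_{i_{1},\ldots,i_{k}=1}^{n} \mM_{n}(i_{1},i_{2})\,\mM_{n}(i_{2},i_{3})\cdots \mM_{n}(i_{k},i_{1})
\end{equation}
and compute the expectation term by term, using independence (up to the symmetry constraint) and the zero-mean assumption on the entries.

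Each index cycle $(i_{1},\ldots,i_{k},i_{1})$ determines a closed walk on $\{1,\ldots,n\}$, and a term contributes to the expectation only when every edge $\{i_{j},i_{j+1}\}$ is traversed an even number of times. Contributions from the diagonal entries $Y_{i}$ are negligible because there are only $n$ such terms, so they are absorbed into lower-order error. For $k=2m+1$ odd, no pairing of the $2m+1$ edges into even-multiplicity classes is possible, so these moments vanish. For $k=2m$ even, the leading contribution comes from walks traversing exactly $m$ distinct edges, each exactly twice; these correspond to pair partitions of $\{1,\ldots,2m\}$. The central combinatorial fact, which I expect to be the main technical obstacle, is that only \emph{non-crossing} pair partitions (equivalently, walks on rooted plane trees) visit $m+1$ distinct vertices and hence contribute at order $n^{m+1}$, whereas crossing partitions visit at most $m$ vertices and yield $O(n^{m})$ terms that vanish after dividing by $n^{1+m}$. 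Since each non-crossing pairing contributes a factor of $1$ from the unit second-moment assumption on the off-diagonal entries, the total count is the Catalan number $C_{m} = \frac{1}{m+1}\binom{2m}{m}$.

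This matches precisely the moments of the semicircle density $\frac{1}{2\pi}\sqrt{4-\lambda^{2}}\,\mathbbm{1}_{|\lambda|\leq 2}$, as verified by the substitution $\lambda = 2\cos\theta$. Thus $\E\int \lambda^{k}\, d\rho_{n}(\lambda)$ converges to the $k$-th semicircle moment for every $k$, establishing convergence in expectation.

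To upgrade convergence in expectation to almost sure convergence, I would bound
\begin{equation}
\Var\!\left(\frac{1}{n}\Tr(\mX_{n}^{k})\right) = O\!\left(\frac{1}{n^{2}}\right)
\end{equation}
via an analogous walk-counting argument applied to $\E[(\Tr \mX_{n}^{k})^{2}] - (\E\Tr \mX_{n}^{k})^{2}$: after subtracting the disconnected contribution the remaining ``coupled'' walks share at least one edge, which lowers the effective number of free vertex labels by exactly one and gives the extra factor $n^{-1}$. Chebyshev together with Borel--Cantelli then yields almost-sure convergence of each moment, and the moment-determinacy of the semicircle law (via Carleman) upgrades this to almost-sure weak convergence of $\rho_{n}$ to the semicircle measure, completing the proof.
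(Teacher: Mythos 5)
Your proposal is essentially correct, but note that the paper does not actually prove this statement: it is quoted verbatim as classical background (Wigner's theorem, cf.\ \citep{tao2012topics}) inside the appendix, and the machinery the paper goes on to use is the Stieltjes/$\mathcal{R}$-transform formalism, in which the semicircle law would instead be obtained from the self-consistent resolvent equation $\mathcal{S}(z)=\bigl(z-\sigma^{2}\mathcal{S}(z)\bigr)^{-1}$. Your combinatorial moment-method argument is the standard alternative route and all the essential steps are in place: moment determinacy of the compactly supported limit, reduction of $\frac{1}{n}\Tr(\mX_{n}^{k})$ to closed-walk counting, negligibility of the diagonal entries, vanishing of the odd moments, the Catalan-number count of tree walks (equivalently non-crossing pairings) for the even moments, and the $O(n^{-2})$ variance bound feeding Chebyshev and Borel--Cantelli. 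One small imprecision: a summand has nonzero expectation whenever every edge of the walk is traversed \emph{at least twice}, not necessarily an even number of times (an edge visited three times contributes $\mathbb{E}[Z^{3}]$, which need not vanish); this does not affect the conclusion, because any closed walk of length $2m$ containing an edge of multiplicity at least $3$ has at most $m$ distinct vertices and is therefore $O(n^{m})$, hence killed by the $n^{-(m+1)}$ normalisation, but the case split should be organised by multiplicity rather than by parity. What your route buys is a self-contained elementary proof under exactly the moment hypotheses in the paper's definition of a Wigner matrix; what the transform route buys is that it delivers $\mathcal{S}_{\meps}(z)=\bigl(z\pm\sqrt{z^{2}-4\sigma_{\epsilon}^{2}}\bigr)/(2\sigma_{\epsilon}^{2})$ directly, which is the object the paper actually needs for the low-rank perturbation computation in the rest of the appendix.
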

Crucially for our calculations, it is known that the $\mathcal{R}$ transform of the Wigner matrix $\mW$ where the variance is $\sigma^{2}$ instead of $1$ is given by:
\begin{equation}
\mathcal{R}_{W}(z) = \sigma^{2}z.
\end{equation}
\paragraph{Free random matrices:}
The property of freeness for non commutative random matrices can be considered analogously to the moment factorisation property of independent random variables. Let us denote the normalized trace operator, which is equal to the first moment of the spectral density
\begin{equation}
\psi(H) = \frac{1}{N}\text{Tr}\mH = \frac{1}{N}\sum_{i=1}^{N}\lambda_{i} = \int_{\lambda \in \mathcal{D}} d\mu(\lambda)\lambda
\end{equation}
We say matrices $\mA $ and $\mB$ for which $\psi(\mA) = \psi(\mB) = 0$ (We can always consider the transform $\mA - \psi(\mA)\mI$) are free if they satisfy for any integers $n_{1}..n_{k}$ with $k \in \mathbb{N}^{+}$
\begin{equation}
\psi(\mA^{n_{1}}\mB^{n_{2}}\mA^{n_{3}}\mB^{n_{4}}) = \psi(\mA^{n_{1}})\psi(\mB^{n_{2}})\psi(\mA^{n_{3}})\psi(\mA^{n_{4}}).
\end{equation}

\subsection{Proof of the Main Lemma}

Recall that we wish to derive the values of the extremal eigenvalues of the matrix sum $\mM = \mA+\meps(\vw)/\sqrt{P}$, where $\meps(\vw)$ has a limiting spectral density given by the semicircle law.
In this section we show by the definition of the Stieltjes transform (which has a one to one correspondence with the spectral density) that a finite rank perturbation of the Stieltjes transform (corresponding to the eigenvalues of the matrix $\mA$) can be dealt with perturbation theory.

The Stijeles transform of the matrix $\meps(\vw)$ with corresponding semicircle eigenvalue distribution can be written as \citep{tao2012topics}
\begin{equation}
\mathcal{S}_{\meps}(z) = \frac{z \pm \sqrt{z^{2}-4\sigma_{\epsilon}^{2}}}{2\sigma_{\epsilon}^{2}}.
\end{equation}
From the definition of the black transform, we hence have
\begin{equation}
\begin{aligned}
	\label{eq:usingtheblack}
	& z = \frac{\mathcal{B}_{\meps}(z) \pm \sqrt{\mathcal{B}^{2}_{\meps}(z)-4\sigma_{\epsilon}^{2}}}{2\sigma_{\epsilon}^{2}}\\
	& \mathcal{B}_{\meps}(z) = \frac{1}{z} + \sigma_{\epsilon}^{2}z \\
	&\mathcal{R}_{\meps}(z) =  \sigma_{\epsilon}^{2}z.\\
\end{aligned}
\end{equation}
Computing the $\mathcal{R}$ transform of the rank 1 matrix $\mA$ (which has a non-trivial eigenvalue $\lambda_{1} > \sigma_{\epsilon}$) using the Stieltjes transform \citep{bun2017cleaning}, we find the effect on the spectrum is given by:
\begin{equation}
\mathcal{S}_{\mA}(u) = \frac{1}{N}\frac{1}{u-\lambda_{1}}+\bigg(1-\frac{1}{N}\bigg)\frac{1}{u} = \frac{1}{u}\bigg[1 + \frac{1}{N}\frac{\lambda_{1}}{1-u^{-1}\lambda_{1}}\bigg] 
\end{equation}
We can use perturbation theory similar to in Equation (\ref{eq:usingtheblack}) to find the black and $\mathcal{R}$ transform which to leading order gives
\begin{equation}
\begin{aligned}
	& \mathcal{B}_{\mA}(\omega) = \frac{1}{w} + \frac{\lambda_{1}}{N(1-\omega \lambda_{1})} + \mathcal{O}(N^{-2}) \\
	&  \mathcal{R}_{\mA}(\omega) = \frac{\lambda_{1}}{N(1-\omega \lambda_{1})} + \mathcal{O}(N^{-2})  \\
\end{aligned}
\end{equation}
Setting $\omega = \mathcal{S}_{\mM}(z)$ so
\begin{equation}
z = \mathcal{B}_{\mA}(\mathcal{S}_{\mM}(z)) + \frac{\lambda_{1}}{N(1-\lambda_{1} \mathcal{S}_{\mM}(z))} + \mathcal{O}(N^{-2})
\end{equation}
using the ansatz of $\mathcal{S}_{\mM}(z) = \mathcal{S}_{0}(z) + \frac{\mathcal{S}_{1}(z)}{N} + \mathcal{O}(N^{-2})$ we find that $\mathcal{S}_{0}(z) = \mathcal{S}_{\epsilon(w)}(z)$ and using that $\mathcal{B}'_{\meps}(\mathcal{S}_{\meps})(z)) = \frac{1}{\mathcal{S}_{\meps}(z)}$, we conclude that
\begin{equation}
\mathcal{S}_{1}(z) = -\frac{\lambda_{1} \mathcal{S'}_{\epsilon(w)}(z)}{1-\mathcal{S}_{\epsilon(w)}(z)\lambda_{1}}
\end{equation}
and hence
\begin{equation}
\mathcal{S}_{\mM}(z) \approx \mathcal{S}_{\epsilon(w)}(z) - \frac{1}{N}\frac{\lambda_{1} \mathcal{S'}_{\epsilon(w)}(z)}{1-\mathcal{S}_{\epsilon(w)}(z)\lambda_{1}}
\end{equation}
In the large $N$ limit the correction only survives if $\mathcal{S}_{\epsilon(w)}(z) = 1/\lambda_{1}$
\begin{equation}
\begin{aligned}
	& \mathcal{S}_{\epsilon(w)}(z) = \frac{1}{\lambda_{1}} \\
	& \frac{2\sigma_{\epsilon}^{2}}{\lambda_{1}} = z \pm \sqrt{z^{2}-4\sigma_{\epsilon}^{2}} \\
	& \therefore z = \lambda_{1} + \frac{\sigma_{\epsilon}^{2}}{\lambda_{1}}  \\
\end{aligned}
\end{equation}
The same proof also holds for $\lambda_{P} < -2\sigma_{\epsilon}$ and hence the second part of the Lemma also follows.
\subsection{Overlap between Eigenvectors of the Batch and Empirical Hessian}
\begin{theorem}
The squared overlap $|\vphi_{i}'\vphi_{i}|^{2}$ between an eigenvector of the batch Hessian $\vphi_{i}'$ and that of the empirical Hessian $\vphi_{i}$ is given by
\begin{equation}
	|\vphi_{i}'\vphi_{i}|^{2} =  \left\{\begin{array}{lr}
		1 - \frac{P}{\mathfrak{b}}\frac{\sigma_{\epsilon}^{2}}{\lambda_{1}^{2}}, & \text{if } \lambda_{1} > \sqrt{\frac{P}{\mathfrak{b}}}\sigma_{\epsilon}\\
		0, & \text{otherwise } \\
	\end{array}\right\}
\end{equation}
\end{theorem}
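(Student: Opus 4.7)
The plan is to extract the overlap from the residue of the resolvent of $\mM = \mH_{emp} + \meps$ at the outlier location $\lambda'_{i}$ identified in Theorem~\ref{theorem:mainresult}. Assuming the low-rank structure of $\mH_{emp}$ used throughout the paper, I would first reduce to the rank-one case by projecting onto the outlier eigenpair $(\lambda_{i},\vphi_{i})$, writing $\mH_{emp} = \lambda_{i}\vphi_{i}\vphi_{i}^{T} + \mR$ where $\mR$ contributes only to other outliers and does not interact with the pole at $\lambda'_{i}$ to leading order. Then $\mM$ can be treated as $\mW + \lambda_{i}\vphi_{i}\vphi_{i}^{T}$, with $\mW$ a Wigner-type matrix whose limiting spectral density is the rescaled semicircle of Theorem~\ref{theorem:beastingassumptions} (with effective variance $P\sigma_{\epsilon}^{2}/\mathfrak{b}$ after unravelling the $P$ and $B$ dependences, as in the proof of Theorem~\ref{theorem:mainresult}).

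Next, I would apply the Sherman--Morrison identity to the resolvent,
\begin{equation}
(z\mI - \mM)^{-1} = (z\mI - \mW)^{-1} + \frac{\lambda_{i}\,(z\mI - \mW)^{-1}\vphi_{i}\vphi_{i}^{T}(z\mI - \mW)^{-1}}{1 - \lambda_{i}\,\vphi_{i}^{T}(z\mI - \mW)^{-1}\vphi_{i}},
\end{equation}
and sandwich with $\vphi_{i}$. In the large-$P$ limit the quadratic form $\vphi_{i}^{T}(z\mI-\mW)^{-1}\vphi_{i}$ concentrates on the Stieltjes transform $\mathcal{S}_{\mW}(z)$ by the isotropic local law for Wigner-type matrices (the matrix $\mW$ is effectively rotationally invariant on the subspace orthogonal to the finitely many eigenvectors of $\mH_{emp}$). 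The pole condition $1 = \lambda_{i}\mathcal{S}_{\mW}(z)$ then reproduces the outlier location $\lambda'_{i}$ from Theorem~\ref{theorem:mainresult}, and the overlap is given by the residue
\begin{equation}
|\langle \vphi_{i}, \vphi_{i}'\rangle|^{2} = -\mathrm{Res}_{z=\lambda'_{i}}\vphi_{i}^{T}(z\mI-\mM)^{-1}\vphi_{i} = -\frac{\mathcal{S}_{\mW}(\lambda'_{i})^{2}}{\mathcal{S}'_{\mW}(\lambda'_{i})}.
\end{equation}

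The remaining step is purely algebraic: using the defining quadratic $(P\sigma_{\epsilon}^{2}/\mathfrak{b})\mathcal{S}_{\mW}^{2} - z\mathcal{S}_{\mW} + 1 = 0$ for the rescaled semicircle, implicit differentiation yields $\mathcal{S}'_{\mW}(\lambda'_{i}) = \mathcal{S}_{\mW}(\lambda'_{i})/(2(P\sigma_{\epsilon}^{2}/\mathfrak{b})\mathcal{S}_{\mW}(\lambda'_{i}) - \lambda'_{i})$. Substituting $\mathcal{S}_{\mW}(\lambda'_{i}) = 1/\lambda_{i}$ and $\lambda_{i}\lambda'_{i} = \lambda_{i}^{2} + P\sigma_{\epsilon}^{2}/\mathfrak{b}$ gives $|\langle \vphi_{i}, \vphi_{i}'\rangle|^{2} = 1 - (P/\mathfrak{b})\sigma_{\epsilon}^{2}/\lambda_{i}^{2}$. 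In the sub-threshold regime $\lambda_{i} \leq \sqrt{P/\mathfrak{b}}\,\sigma_{\epsilon}$ the pole collides with the bulk, the residue dissolves into a branch cut, and a standard delocalisation argument shows the overlap vanishes.

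\textbf{Main obstacle.} The delicate step is justifying the isotropic concentration $\vphi_{i}^{T}(z\mI-\mW)^{-1}\vphi_{i} \to \mathcal{S}_{\mW}(z)$ uniformly on a neighbourhood of $\lambda'_{i}$. Under the i.i.d. Gaussian (GOE) special case this follows directly from orthogonal invariance; under the weaker field-dependent conditions (i)--(iii) of Theorem~\ref{theorem:beastingassumptions}, one must invoke an isotropic local semicircle law of Götze--Naumov--Tikhomirov type, checking that the dependence structure encoded by the $\sigma$-algebras $\mathfrak{F}^{(i,j)}$ is mild enough to preserve the deterministic equivalent for quadratic forms in deterministic unit vectors. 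The sub-threshold vanishing statement requires an additional edge-rigidity argument to exclude spurious mass of $\vphi_{i}'$ on the direction $\vphi_{i}$.
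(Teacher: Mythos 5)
Your proposal is correct, and it reaches the stated formula by a genuinely different route from the paper. The paper gives essentially no derivation for this theorem: it quotes the eigenvector-overlap lemma verbatim from \citet{benaych2011eigenvalues} (their Lemma~\ref{lemma:eigenvectors}) and then asserts that "the proof proceeds identically to that of Theorem~\ref{theorem:mainresult}", i.e.\ one only has to carry through the same rescaling of the semicircle variance $\sigma_{\epsilon}^{2}\to P\sigma_{\epsilon}^{2}/\mathfrak{b}$ that was used for the eigenvalue statement. You instead re-derive the cited lemma from scratch: Sherman--Morrison on the resolvent of $\mW+\lambda_{i}\vphi_{i}\vphi_{i}^{T}$, concentration of the quadratic form onto the Stieltjes transform, the pole condition $\lambda_{i}\mathcal{S}_{\mW}(\lambda_{i}')=1$, and the residue computation $-\mathcal{S}_{\mW}^{2}/\mathcal{S}_{\mW}'$ evaluated via the defining quadratic of the semicircle --- your algebra checks out and gives $1-(P/\mathfrak{b})\sigma_{\epsilon}^{2}/\lambda_{i}^{2}$, continuously vanishing at the threshold. (One cosmetic slip: by the spectral decomposition $\vphi_{i}^{T}(z\mI-\mM)^{-1}\vphi_{i}=\sum_{j}|\langle\vphi_{i},\vphi_{j}'\rangle|^{2}/(z-\lambda_{j}')$, the overlap is $+$ the residue, not $-$ it; your final expression $-\mathcal{S}_{\mW}^{2}/\mathcal{S}_{\mW}'$ is nevertheless the correct positive quantity since $\mathcal{S}_{\mW}'<0$ outside the bulk.) What your approach buys is self-containedness and an honest accounting of the real technical content --- the isotropic local law for the quadratic form under the field-dependent conditions (i)--(iii), and edge rigidity for the sub-threshold case --- which the paper's citation-plus-rescaling argument leaves entirely implicit; note also that the paper's own appendix proof of the eigenvalue lemma uses $\mathcal{R}$-transform perturbation theory rather than your resolvent identity, so your route is also more elementary than the machinery the paper deploys elsewhere. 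What the paper's approach buys is brevity, at the cost of quoting its Lemma~\ref{lemma:eigenvectors} with the threshold written as $\lambda_{1}>2\sigma_{\epsilon}$, which is inconsistent with the detachment condition $\lambda_{1}>\sigma_{\epsilon}$ used everywhere else and recovered correctly by your derivation.
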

For this theorem we utilise the following Lemma
\begin{lemma}
\label{lemma:eigenvectors}
Denote by $\phi_{1}'$ as the eigenvector associated with the largest eigenvalue of the matrix sum $\mM = \mA+\meps(\vw)/\sqrt{P}$, where $\mA \in \mathbb{R}^{P\times P}$ is a matrix of finite rank $r$ with largest eigenvalue $\lambda_{1}$ and $\meps(\vw) \in \mathbb{R}^{P\times P}$ with limiting spectral density $p(\lambda)$ satisfying the semicircle law $p(\lambda) = \frac{\sqrt{4\sigma_{\epsilon}^{2}-\lambda^{2}}}{2\pi\sigma_{\epsilon}^{2}}$. Then we have
\begin{equation}
	|\vphi_{i}'\vphi_{i}|^{2} = \left\{\begin{array}{lr}
		1 - \frac{\sigma_{\epsilon}^{2}}{\lambda_{1}^{2}}, & \text{if } \lambda_{1} > 2\sigma_{\epsilon}\\
		0, & \text{otherwise } \\
	\end{array}\right\} 
\end{equation}
\end{lemma}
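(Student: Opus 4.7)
The plan is to track the eigenvector alongside the eigenvalue in a resolvent-based argument analogous to that of Lemma~\ref{theorem:rmtbookwork}. Without loss of generality take $\mA = \lambda_1\vphi_1\vphi_1^T$; the general finite-rank case follows by iterating the argument on the orthogonal complement of $\vphi_1$. Starting from the eigen-equation $(\mA + \meps(\vw)/\sqrt{P})\vphi'_1 = \lambda'_1\vphi'_1$ and moving the noise to the resolvent side yields
\begin{equation}
\vphi'_1 \;=\; \lambda_1\,\langle \vphi_1, \vphi'_1\rangle\;\mG_\epsilon(\lambda'_1)\,\vphi_1, \qquad \mG_\epsilon(z) := \bigl(z\mI - \meps(\vw)/\sqrt{P}\bigr)^{-1}.
\end{equation}
Taking the inner product of both sides with $\vphi_1$ produces the scalar identity $1 = \lambda_1\,\vphi_1^T\mG_\epsilon(\lambda'_1)\vphi_1$, which in the $P\to\infty$ limit concentrates on $1 = \lambda_1\mathcal{S}_\epsilon(\lambda'_1)$ and so recovers the BBP eigenvalue $\lambda'_1 = \lambda_1 + \sigma_\epsilon^2/\lambda_1$ of Lemma~\ref{theorem:rmtbookwork}, providing a consistency check on the setup.

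To isolate the overlap, I impose the unit-norm condition $\|\vphi'_1\|^2 = 1$ on the same identity, obtaining
\begin{equation}
|\langle \vphi_1, \vphi'_1\rangle|^{-2} \;=\; \lambda_1^2\,\vphi_1^T\,\mG_\epsilon(\lambda'_1)^2\,\vphi_1.
\end{equation}
Using the spectral decomposition of $\meps(\vw)/\sqrt{P}$, the right-hand bilinear form equals $\sum_k |\langle \vphi_1, \vphi_k^{(\epsilon)}\rangle|^2/(\lambda'_1-\lambda_k^{(\epsilon)})^2$, and for any fixed unit vector $\vphi_1$ and any $\lambda'_1$ outside the semicircle support this concentrates on $\int \rho(\lambda)/(\lambda'_1-\lambda)^2\,d\lambda = -\mathcal{S}'_\epsilon(\lambda'_1)$. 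Combining gives the closed form
\begin{equation}
|\langle \vphi_1, \vphi'_1\rangle|^2 \;=\; \frac{1}{-\lambda_1^2\,\mathcal{S}'_\epsilon(\lambda'_1)}.
\end{equation}

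The concluding step is a direct computation using the semicircle Stieltjes transform from Equation~(\ref{eq:usingtheblack}). Differentiating $\mathcal{S}_\epsilon(z) = (z - \sqrt{z^2 - 4\sigma_\epsilon^2})/(2\sigma_\epsilon^2)$ and substituting $z = \lambda_1 + \sigma_\epsilon^2/\lambda_1$, the algebraic identity $\sqrt{z^2 - 4\sigma_\epsilon^2} = \lambda_1 - \sigma_\epsilon^2/\lambda_1$ reduces the derivative to $-\mathcal{S}'_\epsilon(\lambda'_1) = 1/(\lambda_1^2 - \sigma_\epsilon^2)$, yielding $|\langle \vphi_1, \vphi'_1\rangle|^2 = (\lambda_1^2 - \sigma_\epsilon^2)/\lambda_1^2 = 1 - \sigma_\epsilon^2/\lambda_1^2$. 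Below the BBP threshold the outlier has been absorbed into the bulk (Lemma~\ref{theorem:rmtbookwork}), the displayed resolvent identity admits no solution at an isolated eigenvalue outside the semicircle, and $\vphi'_1$ delocalises in the bulk so its overlap with any fixed direction $\vphi_1$ vanishes as $P\to\infty$.

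The main obstacle is justifying the concentration of the quadratic forms $\vphi_1^T \mG_\epsilon(z)\vphi_1$ and $\vphi_1^T\mG_\epsilon(z)^2\vphi_1$ onto their deterministic Stieltjes-transform limits \emph{uniformly in a neighbourhood of $\lambda'_1$}, because in our setting $\vphi_1$ is a fixed deterministic vector rather than a typical bulk eigenvector and convergence of the normalised trace $\frac{1}{P}\mathrm{Tr}\,\mG_\epsilon(z)$ is not in itself enough. For i.i.d.\ Gaussian entries this is immediate from the orthogonal invariance of the Gaussian Orthogonal Ensemble: the eigenbasis of $\meps(\vw)$ is Haar-distributed and $\vphi_1$ behaves like a uniformly random direction, so $\sum_k |\langle \vphi_1, \vphi_k^{(\epsilon)}\rangle|^2 f(\lambda_k^{(\epsilon)})$ reduces in expectation to $\frac{1}{P}\sum_k f(\lambda_k^{(\epsilon)})$ with $O(P^{-1/2})$ fluctuations. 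For the more general field-dependent Wigner model of Theorem~\ref{theorem:beastingassumptions}, rotational invariance fails and one must invoke an isotropic local semicircle law, which extends the convergence of the normalised trace to arbitrary quadratic forms and holds under the moment and weak-dependence conditions already imposed on $\meps(\vw)$.
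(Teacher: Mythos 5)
Your derivation is correct, but it is worth noting that the paper does not actually prove this lemma: it defers both statement and proof entirely to \citet{benaych2011eigenvalues}, and the appendix only works through the companion \emph{eigenvalue} lemma, which it handles via free-probability perturbation theory ($\mathcal{R}$-transforms and a $1/N$ expansion of the Stieltjes transform) rather than the direct eigen-equation manipulation you use. Your route --- writing $\vphi'_1 = \lambda_1\langle\vphi_1,\vphi'_1\rangle\,\mG_\epsilon(\lambda'_1)\vphi_1$, extracting $1=\lambda_1\mathcal{S}_\epsilon(\lambda'_1)$ for the location and $|\langle\vphi_1,\vphi'_1\rangle|^{-2}=\lambda_1^2\,\vphi_1^T\mG_\epsilon(\lambda'_1)^2\vphi_1\to-\lambda_1^2\mathcal{S}'_\epsilon(\lambda'_1)$ from normalisation --- is exactly the mechanism of the cited reference, and your closing algebra ($-\mathcal{S}'_\epsilon(\lambda_1+\sigma_\epsilon^2/\lambda_1)=(\lambda_1^2-\sigma_\epsilon^2)^{-1}$) checks out. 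What your approach buys over the paper's is twofold: it actually supplies an argument where the paper supplies a pointer, and it gives the eigenvalue and eigenvector results in one pass from the same resolvent identity, whereas the paper's $\mathcal{R}$-transform expansion yields only the eigenvalue and would need the separate citation for the overlap anyway. You also correctly flag the one genuine technical obligation --- isotropic concentration of $\vphi_1^T\mG_\epsilon(z)\vphi_1$ and $\vphi_1^T\mG_\epsilon(z)^2\vphi_1$ for a fixed deterministic $\vphi_1$ under the field-dependent (non-rotationally-invariant) noise model --- which the paper never addresses at all. Two small remarks: the condition in the lemma as printed, $\lambda_1>2\sigma_\epsilon$, is inconsistent with the companion lemma and with your computation, which give the detachment threshold $\lambda_1>\sigma_\epsilon$ (the $2\sigma_\epsilon$ is the bulk edge, not the spike threshold), so your implicit use of the correct threshold is fine; and your reduction of the general rank-$r$ case to rank one deserves a sentence of care, since the remaining $r-1$ spikes perturb the resolvent only at order $1/P$ and hence do not affect the limit.
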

The result and proof of this Lemma can be found in \citet{benaych2011eigenvalues}. Then the proof of the main theorem proceeds identically to that of Theorem~\ref{theorem:mainresult}.
\section{Non unit variance Marchenko-Pastur Stieltjes Transform}
\label{sec:nonunitmp}
We now derive the Stieltjes transform of the generalised non-unit variance Marchenko-Pastur density. This derivation closely follows \citep{feier2012methods}, but generalises the result. Note that \citet{feier2012methods} use a different convention for the Stieltjes transform
\begin{equation}
\mathcal{S}_{P}(z) = \int_{\mathbb{R}}\frac{1}{x-z} \rho (x)dx = \frac{1}{P}\Tr (\mM_{n}/\sqrt{P}-z\mI)^{-1}
\end{equation}
We consider a series of matrices 
\begin{equation}
\mX_{N} = \bigg(r^{s}_{i}/\sqrt{P}\bigg)_{1\leq i \leq P, \thinspace 1\leq s \leq N}
\end{equation}
where the entries $r^{s}_{i}$ are $0$ mean and variance $\sigma^{2}$  The Wishart matrix $\mW_{P} = \mX_{P}\mX_{P}^{T}$, where $(\mX_{P}\mX_{P})_{i,j} = \frac{1}{N}\sum_{s=1}^{N}r^{s}_{i}r^{s}_{j}$.. Clearly, $\mW_{P}$ can be written as the sum of rank-$1$ contributions $\mW_{P}^{s} = (r^{s}_{i}r^{s}_{j})_{1\leq i,j \leq P}$. Now as each element is of mean $0$ and variance $\sigma^{2}$, the expectation of the sum of the elements squared is given by $P^{2}\sigma^{4}/N^{2} = Tr([\mW_{P}^{s}]^{2}) = \lambda^{2}$ and hence the only eigenvalue (the contribution is rank $1$) is given by $\lambda = \frac{P}{T}\sigma^{2} = \beta \sigma^{2}$. For large $P$, by the weak law of large numbers, this is also true for a single realisation of $\mW_{P}^{s}$.  By the strong law of large numbers the column vectors $\vr^{s}= [r_{1}^{s}...r_{P}^{s}]^{T}$ and $\vr^{s'}$ are almost surely orthogonal as $P \rightarrow \infty$ and hence the matrices $\mW_{P}^{s}$ are asymptotically free \citep{voiculescu1992free}

The Stieltjes transform $\mathcal{S}(z)$ of $\mW_{P}^{s}$ 
\begin{equation}
\frac{1}{P}\Tr(\mW_{P}^{s}-z\mI)^{-1} = -\frac{1}{P}\sum_{k=0}^{\infty}\frac{\Tr(\mW_{P}^{s})^{k}}{z^{k+1}} = -\frac{1}{P}\bigg(\frac{P-1}{z}+\frac{1}{z-\beta\sigma^{2}}\bigg)
\end{equation}
Solving the quadratic for $z$, completing the square, dropping low order terms in $P$ and noting by the definition of the Stieltjes transform that for large $|z| \sim -\frac{1}{z}$
\begin{equation}
\begin{aligned}
	&z = \frac{P(s\beta\sigma^{2}-1)\pm \sqrt{P^{2}(s\beta\sigma^{2}-1)^{2}+4Ps(P-1)\beta\sigma^{2}}}{2Ps} =\frac{P(s\beta\sigma^{2}-1)\pm \sqrt{P^{2}(s\beta\sigma^{2}+1)^{2}-4Ps\beta\sigma^{2}}}{2Ps}\\
	&z \approx \frac{P(s\beta\sigma^{2}-1)- P(s\beta\sigma^{2}+1)-\frac{2Ps\beta\sigma^{2}}{s\beta\sigma^{2}+1}}{2Ps} = -\frac{1}{s}+\frac{\beta\sigma^{2}}{P(s\beta\sigma^{2}+1)}\\
\end{aligned}
\end{equation}
Hence as the $\mW_{P}$ is the free convolution \citep{voiculescu1992free} of the random matrices $\mW_{P}^{s}$ we simply multiply the $\mathcal{R}$ transform of each matrix by $N$ and as $\beta = P/N$ so:
\begin{equation}
\begin{aligned}
	& \mathcal{R}_{\mW_{P}}(s) = N \times \bigg(z-\frac{1}{s}\bigg) = \frac{N \beta\sigma^{2}}{P(s\beta\sigma^{2}+1)} = \frac{\sigma^{2}}{(s\beta\sigma^{2}+1)} \\
	& \mathcal{B}_{\mW_{P}}(s) = z = -\frac{1}{s}+ \frac{\sigma^{2}}{(s\beta\sigma^{2}+1)}
\end{aligned}
\end{equation}
and hence 
\begin{equation}
\mathcal{S}_{\mW_{P}} = \frac{-(z+\sigma^{2}(1-\beta))+\sqrt{(z+\sigma^{2}(1-\beta))^{2}-4\beta\sigma^{2}z}}{2\beta\sigma^{2}z}
\end{equation}
From here, using the definition of the Stieltjes transform and the relationship to the spectral density, $\text{Im}_{y \rightarrow 0}(\mathcal{S}_{\mW_{P}}(x+iy))/2\pi i$ we have the celebrated generalised Marchenko-Pastur result.
\begin{equation}
\rho(y) =  \frac{\sqrt{4\beta\sigma^{2}y-(y+\sigma^{2}(1-\beta))^{2}}}{2\beta\sigma^{2}y}
\end{equation}
Noting that the Stieltjes transform from \citet{feier2012methods} is reversed in the convention of the sign \citep{bun2017cleaning}, we take $z\rightarrow -z$. Now we apply the T transform, given by $\mathcal{T}(z)=z\mathcal{S}(z)-1$ and the result from \citet{benaych2011eigenvalues}, i.e $\lambda_{i}' = \mathcal{T}(\frac{1}{\lambda_{i}})$, where the dash denotes the eigenvalue corresponding to the batch Hessian (instead of the empirical which is fixed).
\begin{equation}
\mathcal{T}(z) = \frac{z-\sigma^{2}(1+\beta)-\sqrt{(z+\sigma^{2}(1-\beta))^{2}-4\beta\sigma^{2}z}}{2\beta\sigma^{2}}.
\end{equation}






\section{Differentiating Learning Rate Schedules}
\label{sec:differentiatingschedules}
As shown in Figure \ref{fig:traindiff}, in the instances where training occurs, the training curves between slightly larger or smaller learning rates is indistinguishable. Note that the curves for $B=128,64$ crosses at seemingly identical points, despite the learning rate being $25\% smaller$. Clearly the training profile is different between curves that train and those that don't (when a too large a learning rate has been used and training never commences).
\begin{figure}[h!]
	\centering
	\begin{subfigure}{0.47\linewidth}
		\includegraphics[trim={0cm 0cm 0cm 0cm},clip, width=1\textwidth]{sqrt_adam/Batch_optimal_linear_4_train_adam_VGG16_CIFAR100_simple.pdf}
		\caption{Training Error against Epoch}
		\label{subfig:vgg16trainadam4_app}
	\end{subfigure}
	\begin{subfigure}{0.47\linewidth}
		\includegraphics[trim={0cm 0cm 0cm 0cm},clip, width=1\textwidth]{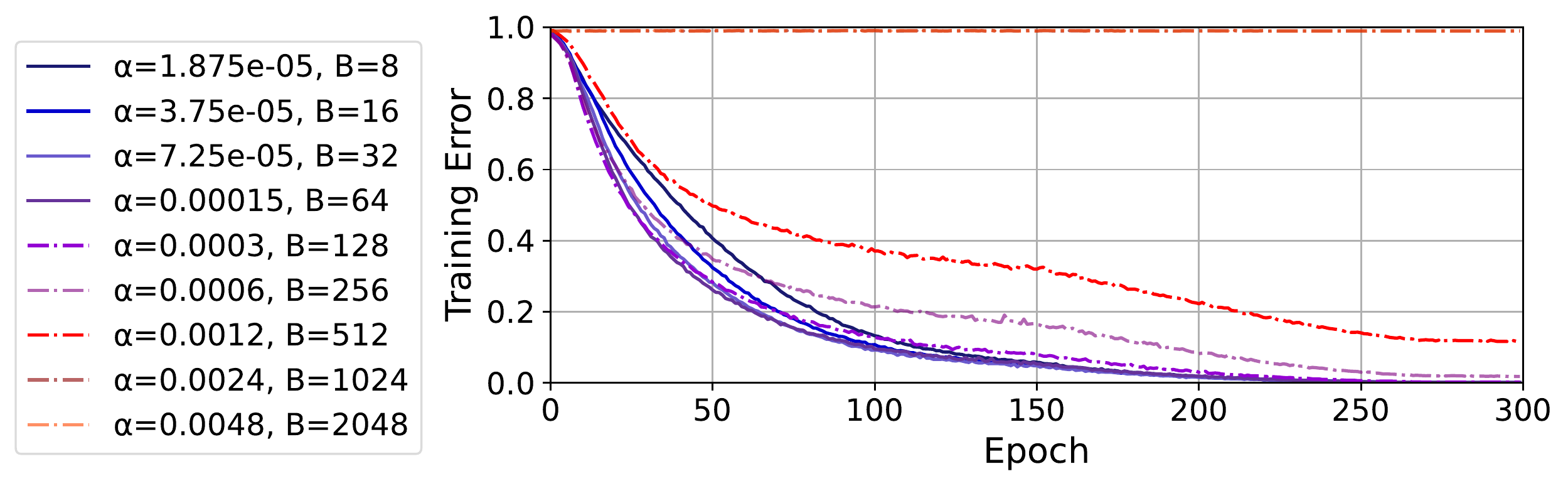}
		\caption{Training Error against Epoch}
		\label{subfig:vgg16train3adam_app}
	\end{subfigure}
	\caption{\textbf{Training Error trajectories do not well differentiate between learning rates if training occurs.} Training error of the VGG-$16$ architecture, without batch normalisation (BN) on CIFAR-$100$, with no weight decay $\gamma=0$ and initial learning rate $\alpha_{0}$}
	\label{fig:traindiff}
\end{figure}
However note that even for schedules with train indistinguishable training curves,the validation curve, shown in Figure \ref{fig:valdiff} can differ significantly. Quite specifically for $B=8,16,32$ there is an upwards shift in validation error of around $2\%$, for learning rates which have been decreased by $25\%$, despite indistinguishable training curves.
\begin{figure}[h!]
	\centering
	\begin{subfigure}{0.47\linewidth}
		\includegraphics[trim={0.0cm 0cm 0cm 0cm},clip, width=1\textwidth]{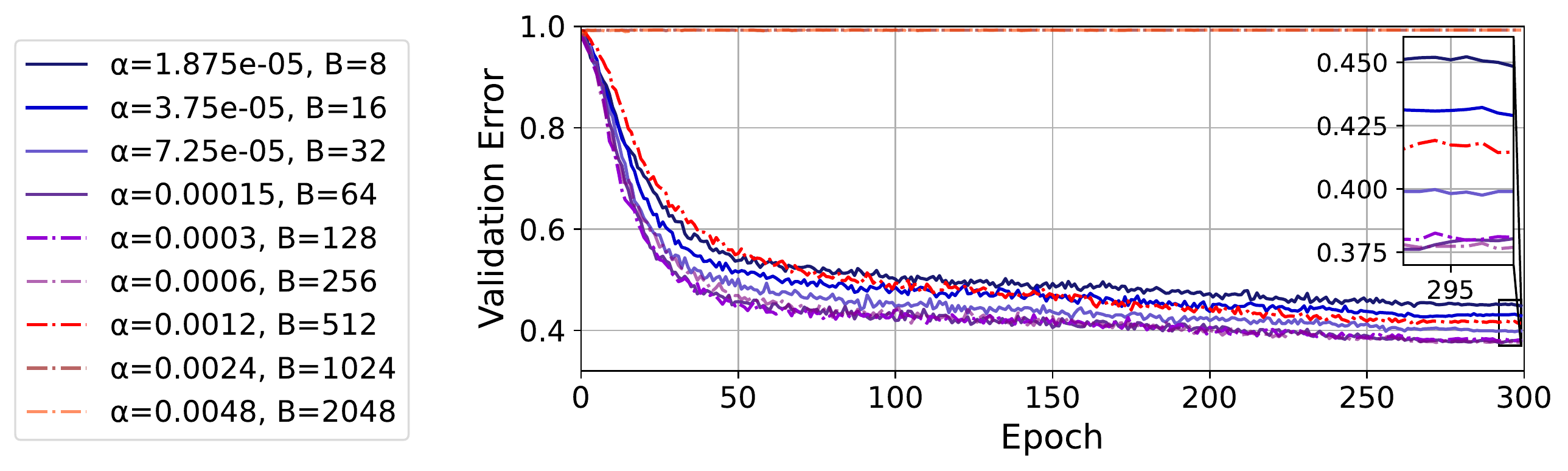}
		\caption{Validation Error against Epoch}
		\label{subfig:vgg16test3adam_app}
	\end{subfigure}
	\begin{subfigure}{0.47\linewidth}
		\includegraphics[trim={0.0cm 0cm 0cm 0cm},clip, width=1\textwidth]{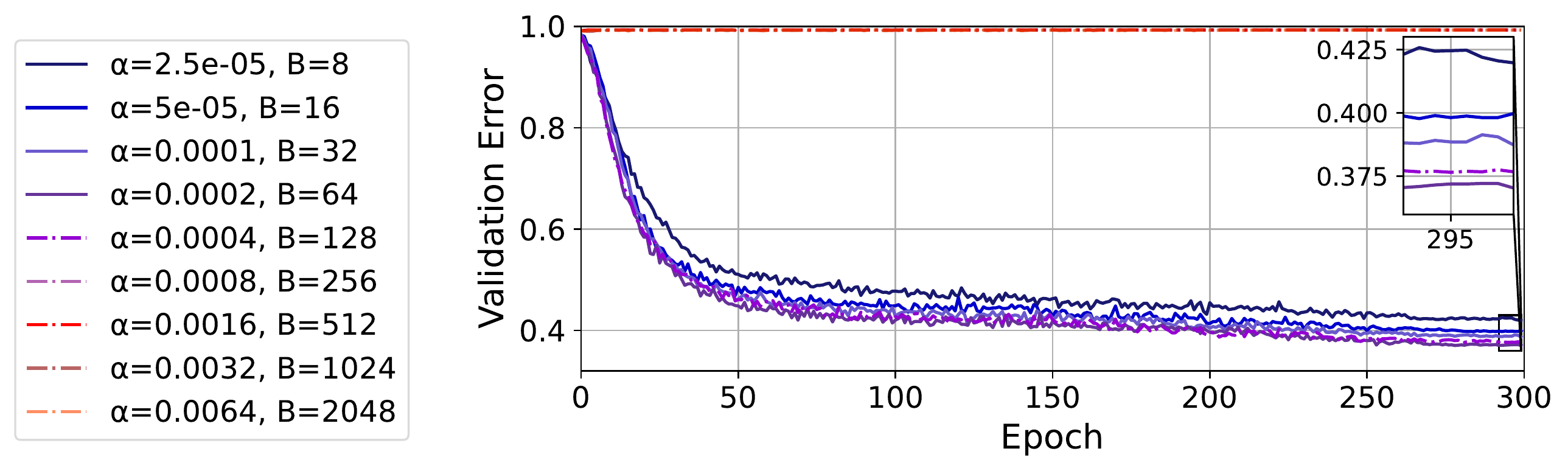}
		\caption{Validation Error against Epoch}
		\label{subfig:vgg16testadam4_app}
	\end{subfigure}
	\caption{\textbf{Validation Error trajectories distinguish between learning rates used.} Validation error of the VGG-$16$ architecture, without batch normalisation (BN) on CIFAR-$100$, with no weight decay $\gamma=0$ and initial learning rate $\alpha_{0}$.}
	\label{fig:valdiff}
\end{figure}
We expect schedules parameterised by a different learning rates to traverse the non-convex loss surface in a different way. Specifically larger initial learning rate schedules would be expected to escape sharper local minima earlier in training before the learning rate decay kicks in. Given that high capacity neural networks which are capable of easily memorising the data and hence there are many points of low training loss/error in the training risk, which may not be the result of following a similar trajectory in the loss surface, we consider the validation error plot (and final value) as more indicative to discriminate between effective scaling regimens.  
\subsection{Different Initailisations Give Similar Performance}
\label{sec:diffinit}
\begin{figure}
\begin{minipage}[t]{1\textwidth}
		\centering
		\begin{subfigure}[b]{0.32\textwidth}
			\includegraphics[width=\textwidth]{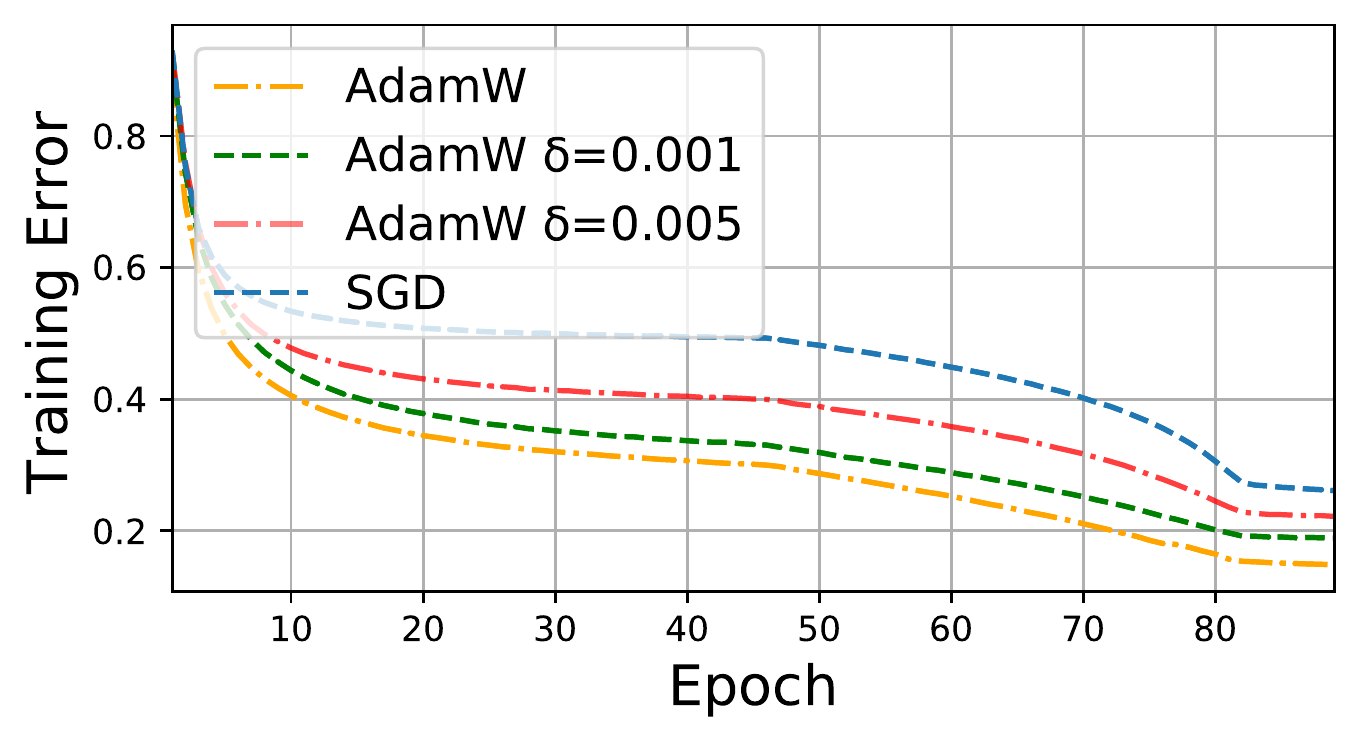}
			\caption{ResNet-$50$ Training Error}
			\label{subfig:r50train}
		\end{subfigure}
		\begin{subfigure}[b]{0.32\textwidth}
			\includegraphics[width=\textwidth]{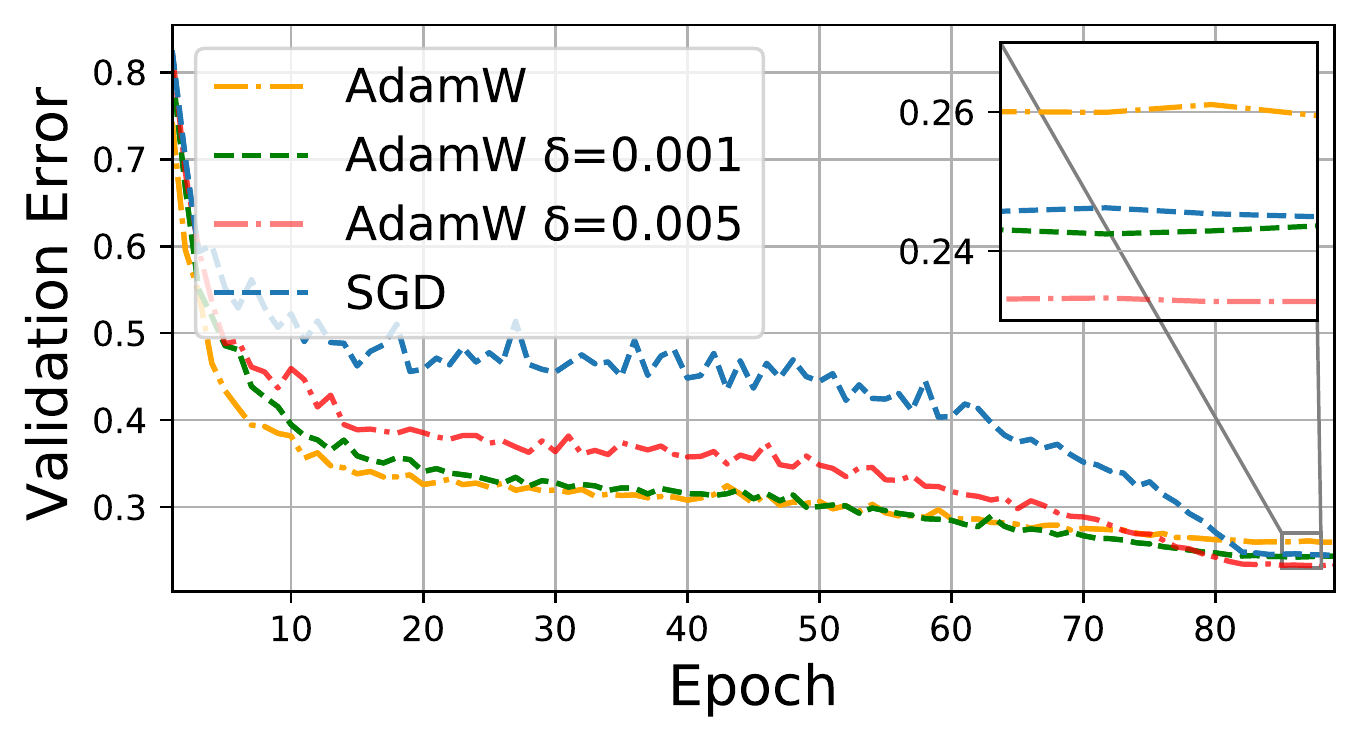}
			\caption{ResNet-$50$ Testing Error}
			\label{subfig:r50test}
		\end{subfigure}
		\begin{subfigure}[b]{0.31\textwidth}
		\begin{tiny}
			\begin{tabular}{@{}llll@{}}
				\toprule
				\textbf{Data/Model} & \textbf{SGD} & \textbf{Adam-D} & \textbf{Adam} \\ \midrule
				\textbf{C100/VGG16} & 65.3 $\pm$ 0.6 & 65.5 $\pm$ 0.7 & 61.9 $\pm$ 0.4 \\
				\midrule
				
				\textbf{ImgNet/Res50} & 75.7 $\pm$ 0.1 & 76.6 $\pm$ 0.1 & - \\ \bottomrule
			\end{tabular}
        \end{tiny}
        \vspace{15pt}
        \caption{Statistical Significance}
        \label{tab:seeds}
		\end{subfigure}
\end{minipage}
\label{fig:res50adamw}
				
\caption{(a-b) The influence of $\delta$ on the generalisation gap. Train/Val curves for ResNet-$50$ on ImageNet. The generalisation gap is completely closed with an appropriate choice of $\delta$. (c) Comparison of test accuracy across CIFAR 100 (5 seeds) and ImageNet (3 seeds). \textbf{Adam-D} denotes Adam with increased damping ($\delta=5e^{-3}$ for CIFAR-100, $\delta=1e^{-4}$ for ImageNet).}
\vspace{-15pt}
\end{figure}
We show here in \ref{tab:seeds} that for different seeds using SGD, Adam and Adam-$\delta$ where we simply tune the coefficient of the numerical stability coefficient to a larger value $\epsilon \rightarrow 10^{-4}$ instead of $10^{-8}$, gives very consistent performance across a set of datasets and networks.

\section{Lanczos algorithm}
\label{sec:lanczos}
In order to empirically analyse properties of modern neural network spectra with tens of millions of parameters $N = \mathcal{O}(10^{7})$, we use the Lanczos algorithm \citep{meurant2006lanczos}, provided for deep learning by \citet{granziol2019mlrg}. It requires Hessian vector products, for which we use the \emph{Pearlmutter trick} \citep{pearlmutter1994fast} with computational cost $\mathcal{O}(NP)$, where $N$ is the dataset size and $P$ is the number of parameters. Hence for $m$ steps the total computational complexity including re-orthogonalisation is $\mathcal{O}(NPm)$ and memory cost of $\mathcal{O}(Pm)$. In order to obtain accurate spectral density estimates we re-orthogonalise at every step \citep{meurant2006lanczos}. We exploit the relationship between the Lanczos method and Gaussian quadrature, using random vectors to allow us to learn a discrete approximation of the spectral density.  A quadrature rule is a relation of the form,
\begin{equation}
\label{eq:quadraturerule}
\int_{a}^{b}f(\lambda)d\mu(\lambda) = \sum_{j=1}^{M}\rho_{j}f(t_{j})+R[f]
\end{equation}
for a function $f$, such that its Riemann-Stieltjes integral and all the moments exist on the measure $d\mu(\lambda)$, on the interval $[a,b]$ and where $R[f]$ denotes the unknown remainder. The nodes $t_{j}$ of the Gauss quadrature rule are given by the Ritz values and the weights (or mass) $\rho_{j}$ by the squares of the first elements of the normalized eigenvectors of the Lanczos tri-diagonal matrix \citep{golub1994matrices}. The main properties of the Lanczos algorithm are summarized in the theorems \ref{theorem:lanczoseigenvalues},\ref{theorem:lanczosspectrum}
\begin{theorem}
\label{theorem:lanczoseigenvalues}
Let $H^{N\times N}$ be a symmetric matrix with eigenvalues $\lambda_{1}\geq .. \geq \lambda_{n}$ and corresponding orthonormal eigenvectors $z_{1},..z_{n}$. If $\theta_{1}\geq .. \geq \theta_{m}$ are the eigenvalues of the matrix $T_{m}$ obtained after $m$ Lanczos steps and $q_{1},...q_{k}$ the corresponding Ritz eigenvectors then
\begin{equation}
	\begin{aligned}
		& \lambda_{1} \geq \theta_{1} \geq \lambda_{1} - \frac{(\lambda_{1}-\lambda_{n})\tan^{2}(\theta_{1})}{(c_{k-1}(1+2\rho_{1}))^{2}} \\
		& \lambda_{n} \leq \theta_{k} \leq \lambda_{m} + \frac{(\lambda_{1}-\lambda_{n})\tan^{2}(\theta_{1})}{(c_{k-1}(1+2\rho_{1}))^{2}} \\
	\end{aligned}	
\end{equation}
where $c_{k}$ is the Chebyshev polynomial of order $k$
\end{theorem}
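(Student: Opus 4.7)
The plan is to prove this standard Kaniel--Paige--Saad convergence result via the min--max (Courant--Fischer) characterisation of Ritz values combined with polynomial approximation on the spectrum using shifted Chebyshev polynomials. The upper bound $\theta_{1}\le\lambda_{1}$ is a direct consequence of the Rayleigh--Ritz principle (a Rayleigh quotient over any subspace is bounded by $\lambda_{1}$); the nontrivial part is the lower bound on $\theta_{1}$, and by applying the argument to $-H$ one obtains the symmetric upper bound on $\theta_{k}$ for free.

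First I would recall that after $m$ Lanczos steps started from unit vector $v_{1}$, the Ritz values are precisely the eigenvalues of the orthogonal compression of $H$ onto the Krylov subspace $\mathcal{K}_{m}(H,v_{1})=\mathrm{span}\{v_{1},Hv_{1},\ldots,H^{m-1}v_{1}\}$. Any $u\in\mathcal{K}_{m}$ can be written as $u=p(H)v_{1}$ for some $p\in\mathcal{P}_{m-1}$, so Courant--Fischer gives
\begin{equation*}
\theta_{1} \;=\; \max_{u\in\mathcal{K}_{m}\setminus\{0\}} \frac{u^{T}Hu}{u^{T}u} \;=\; \max_{p\in\mathcal{P}_{m-1}} \frac{v_{1}^{T}p(H)\,H\,p(H)v_{1}}{v_{1}^{T}p(H)^{2}v_{1}} .
\end{equation*}
Expanding $v_{1}=\sum_{i=1}^{n}\zeta_{i}z_{i}$ in the eigenbasis of $H$, one obtains for every $p\in\mathcal{P}_{m-1}$ the bound
\begin{equation*}
\lambda_{1}-\theta_{1} \;\le\; \frac{\sum_{i\ge 2}\zeta_{i}^{2}\,p(\lambda_{i})^{2}(\lambda_{1}-\lambda_{i})}{\sum_{i\ge 1}\zeta_{i}^{2}\,p(\lambda_{i})^{2}} \;\le\; (\lambda_{1}-\lambda_{n})\,\frac{\max_{i\ge 2} p(\lambda_{i})^{2}}{p(\lambda_{1})^{2}}\cdot\frac{\sum_{i\ge 2}\zeta_{i}^{2}}{\zeta_{1}^{2}} .
\end{equation*}

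Next I would make the optimal polynomial choice. Take the shifted Chebyshev polynomial $p(\lambda)=c_{k-1}\!\bigl(1+2(\lambda-\lambda_{n})/(\lambda_{2}-\lambda_{n})\bigr)$ of degree $k-1\le m-1$. On $[\lambda_{n},\lambda_{2}]$ its argument lies in $[-1,1]$, so $|p(\lambda_{i})|\le 1$ for all $i\ge 2$, while at $\lambda_{1}$ it evaluates to $p(\lambda_{1})=c_{k-1}(1+2\rho_{1})$ with gap ratio $\rho_{1}=(\lambda_{1}-\lambda_{2})/(\lambda_{2}-\lambda_{n})$, which grows exponentially in $k$. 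Recognising $\sum_{i\ge 2}\zeta_{i}^{2}/\zeta_{1}^{2}=\tan^{2}\bigl(\angle(v_{1},z_{1})\bigr)$ --- the quantity the paper denotes $\tan^{2}(\theta_{1})$ by mild notational abuse --- produces the first displayed inequality in the theorem. The symmetric bound on $\theta_{k}$ follows by applying the entire argument to $-H$, whose largest Ritz value after $m$ steps from the same starting vector is $-\theta_{k}$ and whose spectrum is the reflection of that of $H$.

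The main obstacle is the polynomial step: one has to justify that the shifted Chebyshev choice is (near-)optimal in the min--max problem hidden in the bound above, and to carry through the bookkeeping so that the factor $c_{k-1}(1+2\rho_{1})^{2}$ appears squared in the denominator (the squaring comes from $p$ appearing both in the numerator and denominator of the Rayleigh quotient). A subsidiary technical point is the implicit assumption $\zeta_{1}\ne 0$, i.e.\ $v_{1}$ has nonzero component along $z_{1}$; otherwise $\tan(\theta_{1})$ is infinite and the bound is vacuous. Beyond these points, the proof is routine linear-algebra bookkeeping and the classical extremal property of Chebyshev polynomials on an interval.
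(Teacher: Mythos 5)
The paper does not actually prove this theorem --- it simply cites \citet{golub2012matrix} --- and your sketch is precisely the classical Kaniel--Paige argument given in that reference: Courant--Fischer over the Krylov subspace, expansion of $v_{1}$ in the eigenbasis, and the shifted Chebyshev polynomial to control the ratio $\max_{i\geq 2}p(\lambda_{i})^{2}/p(\lambda_{1})^{2}$. Your outline is correct (including the correct reading of the overloaded symbol $\theta_{1}$ inside $\tan^{2}(\cdot)$ as the angle $\angle(v_{1},z_{1})$ and the caveat $\zeta_{1}\neq 0$), so it matches the proof the paper defers to.
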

Proof: see \citep{golub2012matrix}.
\begin{theorem}
\label{theorem:lanczosspectrum}
The eigenvalues of $T_{k}$ are the nodes $t_{j}$ of the Gauss quadrature rule, the weights $w_{j}$ are the squares of the first elements of the normalized eigenvectors of $T_{k}$
\end{theorem}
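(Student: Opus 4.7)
The plan is to exploit the classical correspondence between the Lanczos tridiagonalisation and orthogonal polynomials with respect to the spectral measure induced by the starting vector. First, I would introduce the measure $d\mu(\lambda) = \sum_{i=1}^{N} c_{i}^{2}\, \delta(\lambda - \lambda_{i})$, where $q_{1} = \sum_{i} c_{i} z_{i}$ is decomposed in the eigenbasis of $H$. By construction every Riemann-Stieltjes moment satisfies $\int \lambda^{j}\, d\mu(\lambda) = q_{1}^{T} H^{j} q_{1}$, so the entire moment sequence of $\mu$ is encoded in quadratic forms against $q_{1}$.

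Second, I would establish the pivotal moment-matching identity
\begin{equation}
q_{1}^{T} H^{j} q_{1} = e_{1}^{T} T_{k}^{j} e_{1}, \qquad 0 \leq j \leq 2k-1.
\end{equation}
Packaging the Lanczos three-term recurrence as $H Q_{k} = Q_{k} T_{k} + \beta_{k} q_{k+1} e_{k}^{T}$, with $Q_{k} = [q_{1}, \ldots, q_{k}]$ orthonormal, an induction on $j$ shows that $H^{j} q_{1} = Q_{k} T_{k}^{j} e_{1}$ for all $j \leq k-1$, because the residual terms vanish: they are scalar multiples of $e_{k}^{T} T_{k}^{j-1} e_{1}$, which is zero whenever $j - 1 < k-1$ by the tridiagonal band structure. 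For $j = a + b$ with $a, b \leq k-1$ one then obtains $q_{1}^{T} H^{a+b} q_{1} = (Q_{k} T_{k}^{a} e_{1})^{T} (Q_{k} T_{k}^{b} e_{1}) = e_{1}^{T} T_{k}^{a+b} e_{1}$ using $Q_{k}^{T} Q_{k} = I$, covering degrees $0 \leq j \leq 2k-2$. The top degree $j = 2k-1$ is obtained by writing $H^{k} q_{1} = Q_{k} T_{k}^{k} e_{1} + \gamma\, q_{k+1}$ for some scalar $\gamma$, and exploiting $Q_{k}^{T} q_{k+1} = 0$ to annihilate the spill-out when contracted against $H^{k-1} q_{1} = Q_{k} T_{k}^{k-1} e_{1}$.

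Third, I would diagonalise the symmetric tridiagonal matrix $T_{k} = Y \Theta Y^{T}$ with $\Theta = \mathrm{diag}(\theta_{1}, \ldots, \theta_{k})$, yielding $e_{1}^{T} T_{k}^{j} e_{1} = \sum_{m=1}^{k} Y_{1m}^{2}\, \theta_{m}^{j}$. Combined with the previous step this gives
\begin{equation}
\int \lambda^{j}\, d\mu(\lambda) = \sum_{m=1}^{k} Y_{1m}^{2}\, \theta_{m}^{j}, \qquad 0 \leq j \leq 2k-1,
\end{equation}
so by linearity the quadrature rule with nodes $t_{j} = \theta_{j}$ and weights $w_{j} = Y_{1j}^{2}$ is exact on every polynomial of degree at most $2k-1$. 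The Gauss quadrature rule with $k$ nodes is uniquely characterised by this maximal degree of exactness (a standard consequence of the theory of orthogonal polynomials associated with $\mu$), so these nodes and weights must coincide with the Gauss data, yielding the claim.

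The main obstacle is pushing the moment match to its top degree $j = 2k-1$: going beyond the naive Krylov bound $k-1$, and even beyond the symmetrised bound $2k-2$, requires simultaneously exploiting symmetry of $H$ and $T_{k}$, the orthogonality $Q_{k}^{T} q_{k+1} = 0$, and careful bookkeeping of the tridiagonal band. Once this is in place the remainder of the argument, namely diagonalisation of a symmetric tridiagonal matrix and invocation of uniqueness of Gauss quadrature, is entirely routine.
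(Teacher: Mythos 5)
Your proof is correct. Note that the paper itself gives no argument for this theorem at all --- it simply writes ``Proof: See [Golub \& Meurant]'' --- so there is no in-paper proof to compare against; your moment-matching route (spectral measure induced by the starting vector, the identity $q_{1}^{T}H^{j}q_{1}=e_{1}^{T}T_{k}^{j}e_{1}$ for $0\leq j\leq 2k-1$, diagonalisation of $T_{k}$, and uniqueness of the $k$-point Gauss rule) is precisely the standard argument in the cited reference, including the correct handling of the top degree $j=2k-1$ via orthogonality of $q_{k+1}$ to the Krylov basis. The only point worth making explicit is that uniqueness of the Gauss rule requires the measure $\mu$ to have at least $k$ points of support, which is guaranteed exactly when the Lanczos iteration reaches step $k$ without breakdown; this hypothesis is implicit in the statement and should be flagged.
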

Proof: See \citep{golub1994matrices}. The first term on the RHS of Equation \ref{eq:quadraturerule} using Theorem \ref{theorem:lanczosspectrum} can be seen as a discrete approximation to the spectral density matching the first $m$ moments $v^{T}H^{m}v$ \citep{golub1994matrices,golub2012matrix}, where $v$ is the initial seed vector. Using the expectation of quadratic forms, for zero mean, unit variance random vectors, using the linearity of trace and expectation
\begin{equation}
\begin{aligned}
	\mathbb{E}_{v}\text{Tr}(v^{T}H^{m}v) & =  \text{Tr}\mathbb{E}_{v}(vv^{T}H^{m}) = \text{Tr}(H^{m})
	= \sum_{i=1}^{N}\lambda_{i} = N \int_{\lambda \in \mathcal{D}} \lambda d\mu(\lambda) \\
\end{aligned}
\end{equation}
The error between the expectation over the set of all zero mean, unit variance vectors $v$ and the Monte Carlo sum used in practice can be bounded \citep{hutchinson1990stochastic,roosta2015improved}. However in the high dimensional regime $N \rightarrow \infty$, we expect the squared overlap of each random vector with an eigenvector of $H$, $|v^{T}\phi_{i}|^{2} \approx \frac{1}{N} \forall i$, with high probability. This result can be seen by computing the moments of the overlap between Rademacher vectors, containing elements $P(v_{j} = \pm 1) = 0.5$. Further analytical results for Gaussian vectors have been obtained \citep{cai2013distributions}.
\section{Batch Normalisation Results}
\label{sec:batchnormresults}
Given that the vast majority of image classification are run in conjunction with normalisation methods such as batch normalisation \citep{ioffe2015batch} and previous literature observing that batch normalisation suppresses outliers \citep{ghorbani2019investigation} it is important to investigate whether the observations in terms of spectral structure and mini-batching effect are in any way invalidated with batch-normalisation. We hence present results on a variety of pre-activated residual networks. We show that the typical spectral density plots in the main text with well separated outliers, a large rank degeneracy and large increase in spectral width with mini-batching are visible also in batch normalised Resnets more commonly used in deep learning for both types of batch normalisation mode (explained in the next paragraph).
\begin{figure}[h!]
\centering
\begin{subfigure}{0.23\linewidth}
	\includegraphics[width=1\linewidth,trim={0 0 0 0},clip]{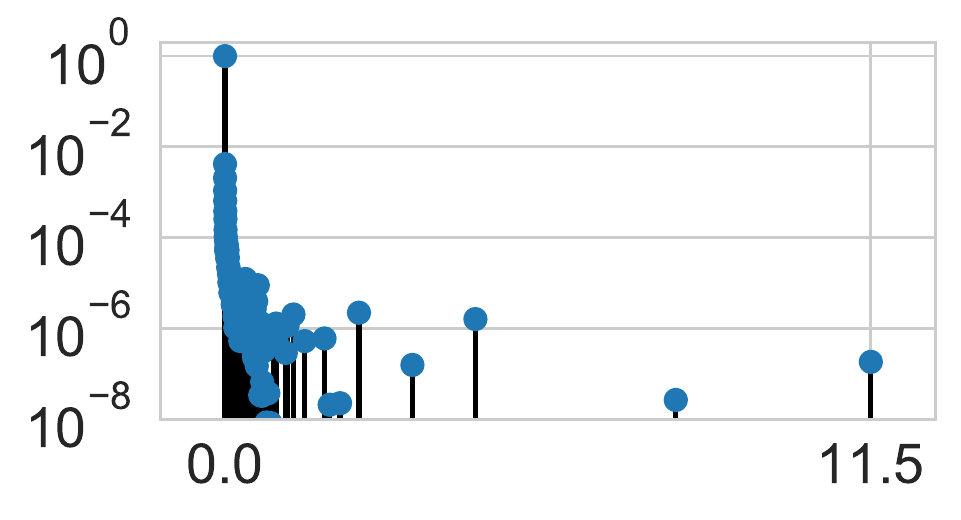}
	\caption{Epoch $0$}
	\label{subfig:c100p110ep0}
\end{subfigure}
\begin{subfigure}{0.23\linewidth}
	\includegraphics[width=1\linewidth,trim={0 0 0 0},clip]{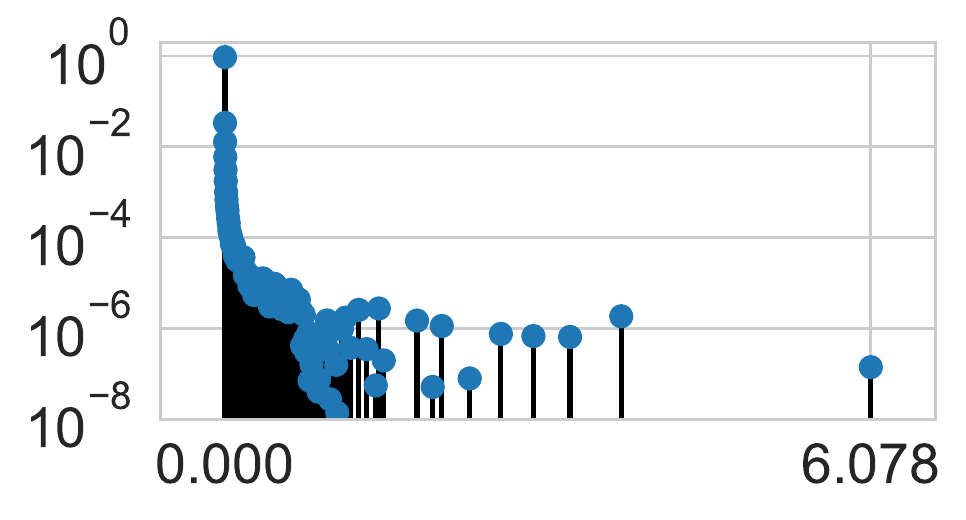}
	\caption{Epoch $25$}
	\label{subfig:c100p110ep25}
\end{subfigure}
\begin{subfigure}{0.23\linewidth}
	\includegraphics[width=1\linewidth,trim={0 0 0 0},clip]{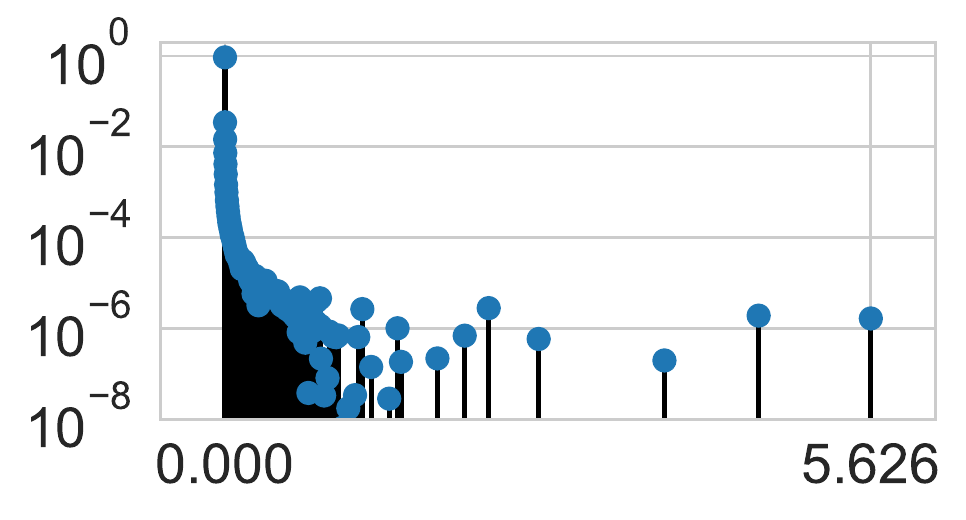}
	\caption{Epoch $50$}
	\label{subfig:c100p110ep50}
\end{subfigure}
\begin{subfigure}{0.23\linewidth}
	\includegraphics[width=1\linewidth,trim={0 0 0 0},clip]{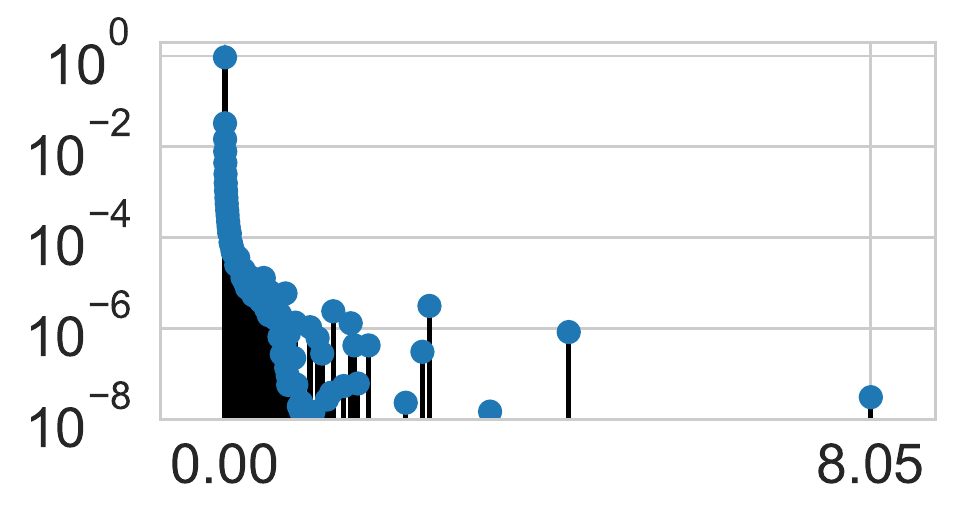}
	\caption{Epoch $75$}
	\label{subfig:c100p110ep75}
\end{subfigure}
\begin{subfigure}{0.30\linewidth}
	\includegraphics[width=1\linewidth,trim={0 0 0 0},clip]{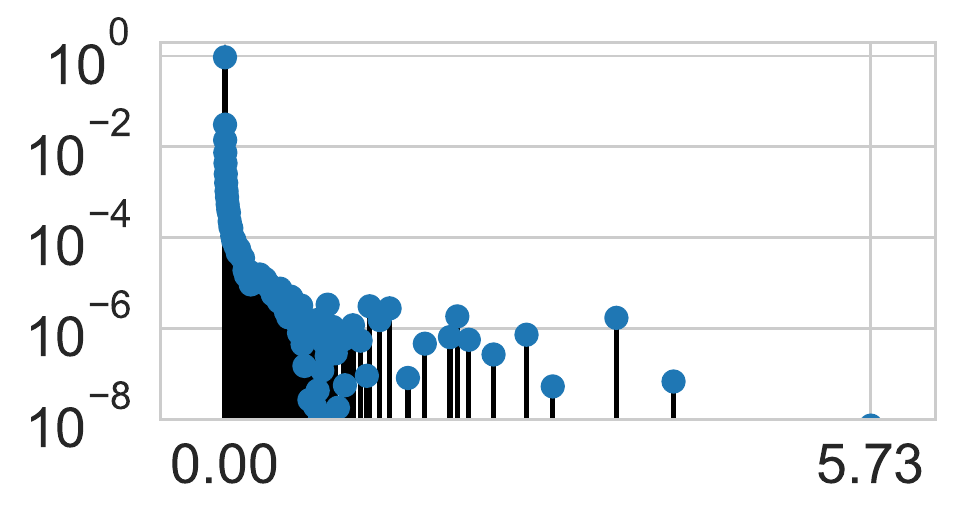}
	\caption{Epoch $100$}
	\label{subfig:c100p110ep100}
\end{subfigure}
\begin{subfigure}{0.30\linewidth}
	\includegraphics[width=1\linewidth,trim={0 0 0 0},clip]{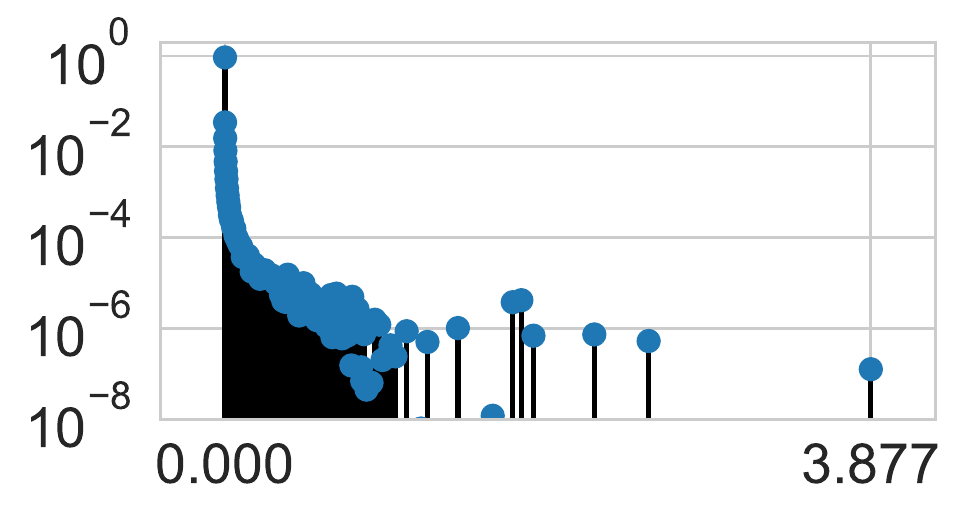}
	\caption{Epoch $125$}
	\label{subfig:c100p110ep125}
\end{subfigure}
\begin{subfigure}{0.30\linewidth}
	\includegraphics[width=1\linewidth,trim={0 0 0 0},clip]{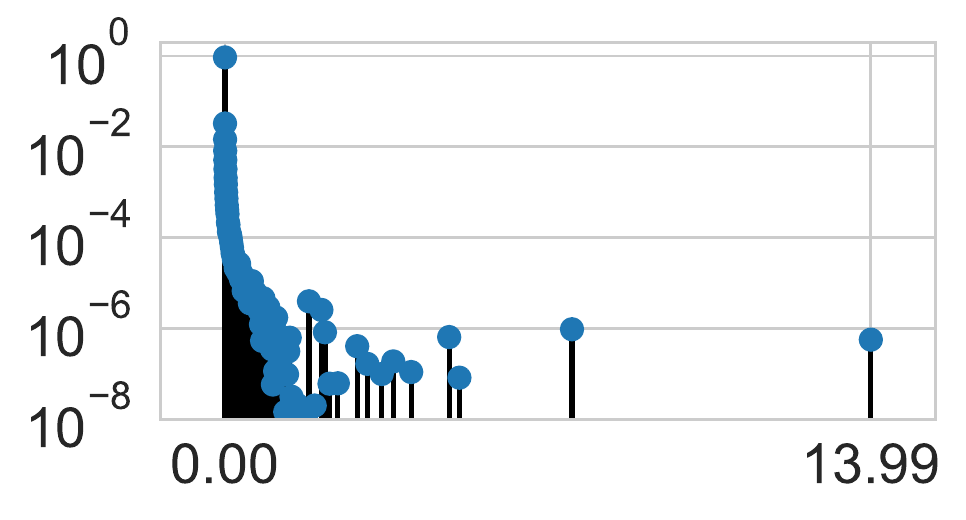}
	\caption{Epoch $150$}
	\label{subfig:c100p110ep150}
\end{subfigure}
\centering
\begin{subfigure}{0.30\linewidth}
	\includegraphics[width=1\linewidth,trim={0 0 0 0},clip]{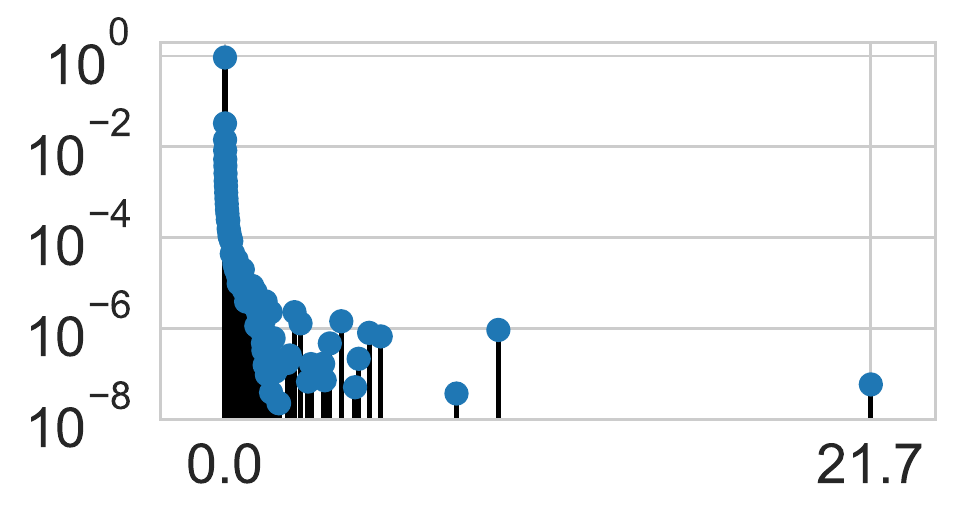}
	\caption{Epoch $175$}
	\label{subfig:c100p110ep175}
\end{subfigure}
\begin{subfigure}{0.30\linewidth}
	\includegraphics[width=1\linewidth,trim={0 0 0 0},clip]{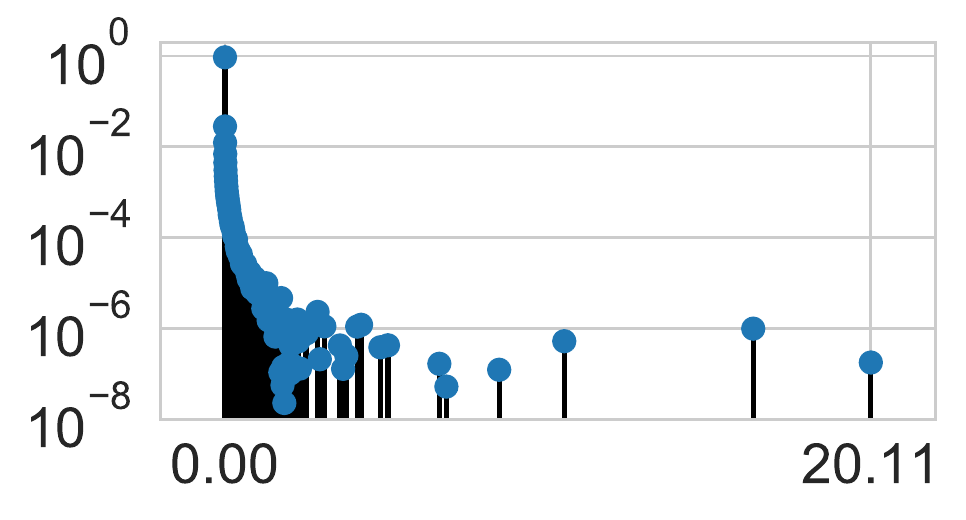}
	\caption{Epoch $200$}
	\label{subfig:c100p110ep200}
\end{subfigure}
\begin{subfigure}{0.30\linewidth}
	\includegraphics[width=1\linewidth,trim={0 0 0 0},clip]{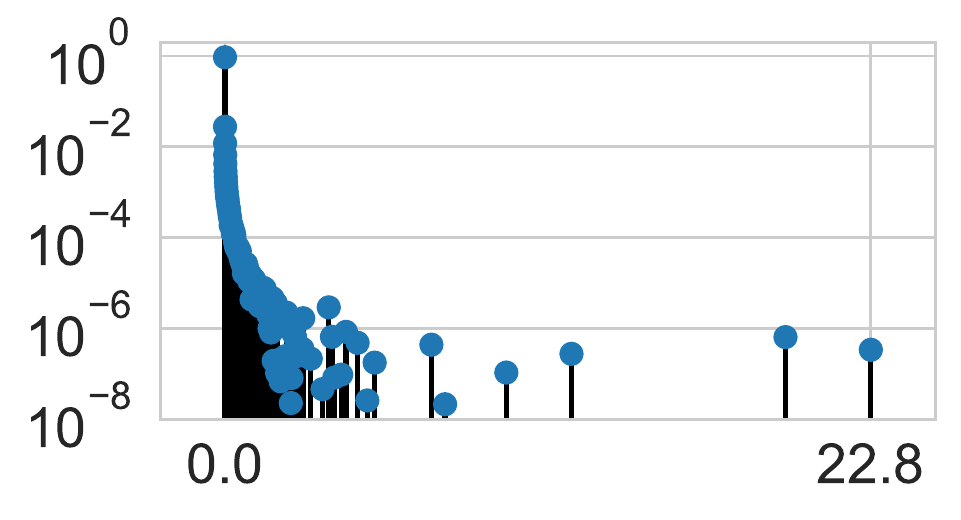}
	\caption{Epoch $225$}
	\label{subfig:c100p110ep225}
\end{subfigure}
\caption{Generalised Gauss-Newton matrix full empirical spectrum for the PreResNet-$110$ on the CIFAR-$100$ dataset, total training $225$ epochs, batch norm train mode}
\label{fig:p110c100ggn3}
\end{figure}
\paragraph{Technical point on batch norm:} Batch-normalisation, as alluded to in the main text function differently during training and during evaluation. When evaluating curvature, we thus have the option of choosing the setting of this functionality. We denote the same properties as during training as batch norm train mode and those during evaluation as batch norm evaluation mode.We look at the Generalised Gauss-Newton matrix and Hessian of the PreResNet-$110$ in batch norm evaluation and training mode.
\subsection{Generalised Gauss-Newton matrix - batch normalisation train mode}
To show similarly that the Generalised Gauss-Newton matrix experiences severe spectral broadening when mini-batching, we take the same points in weight space as in Figure \ref{fig:p110c100ggn3} but instead take stochastic samples of size $B=128$, although the results are stochastic, they are stochastic around a significantly broadened spectrum, with some samples shown in  Figure \ref{fig:p110c100ggnbatch}, for comparison. Where we see significant broadening.
\begin{figure}[h!]
\centering
\begin{subfigure}{0.30\linewidth}
	\includegraphics[width=1\linewidth,trim={0 0 0 0},clip]{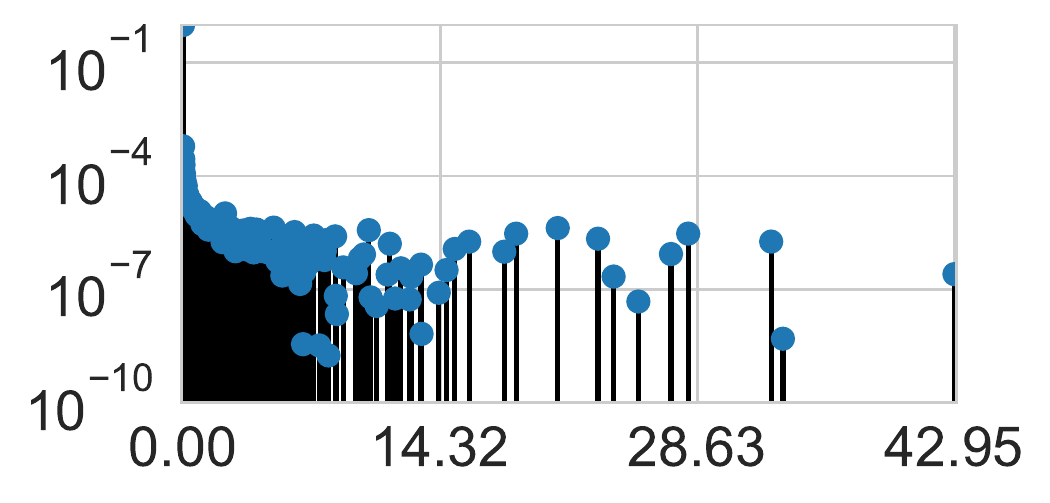}
	\caption{Epoch $175$, $B=128$}
	\label{subfig:c100p110ep175}
\end{subfigure}
\begin{subfigure}{0.30\linewidth}
	\includegraphics[width=1\linewidth,trim={0 0 0 0},clip]{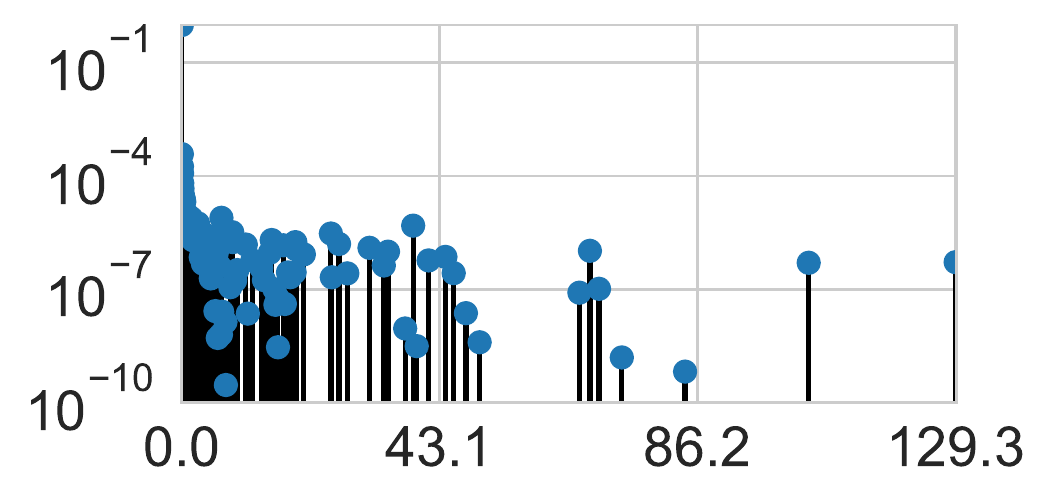}
	\caption{Epoch $200$, $B=128$}
	\label{subfig:c100p110ep200}
\end{subfigure}
\begin{subfigure}{0.30\linewidth}
	\includegraphics[width=1\linewidth,trim={0 0 0 0},clip]{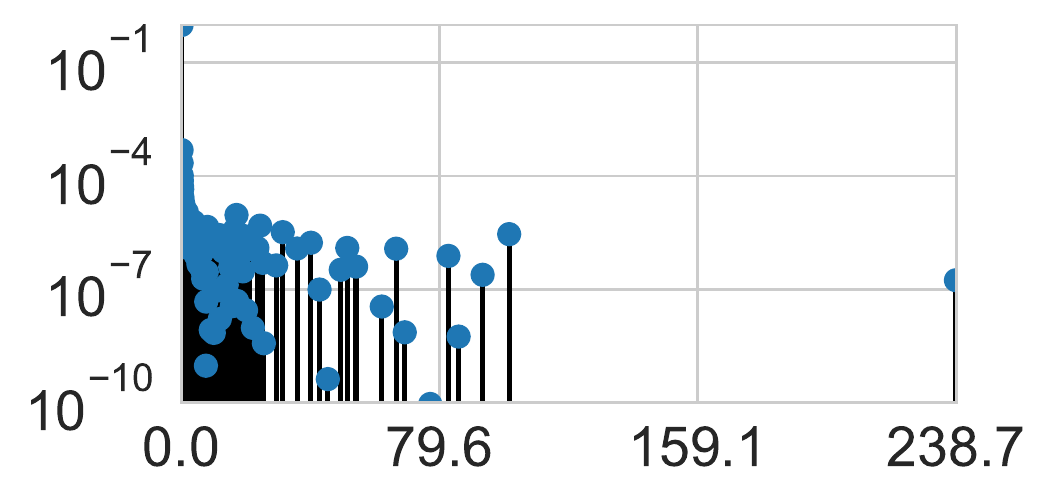}
	\caption{Epoch $225$, $B=128$}
	\label{subfig:c100p110ep225}
\end{subfigure}
\caption{Generalised Gauss-Newton matrix full empirical spectrum for the PreResNet-$110$ on the CIFAR-$100$ dataset, total training $225$ epochs, batch norm train mode, samples taken with a batch of $B=128$}
\label{fig:p110c100ggnbatch}
\end{figure}
\subsection{Generalised Gauss-Newton matrix - Evaluation Mode}
\begin{figure}[h!]
\centering
\begin{subfigure}{0.30\linewidth}
	\includegraphics[width=1\linewidth,trim={0 0 0 0},clip]{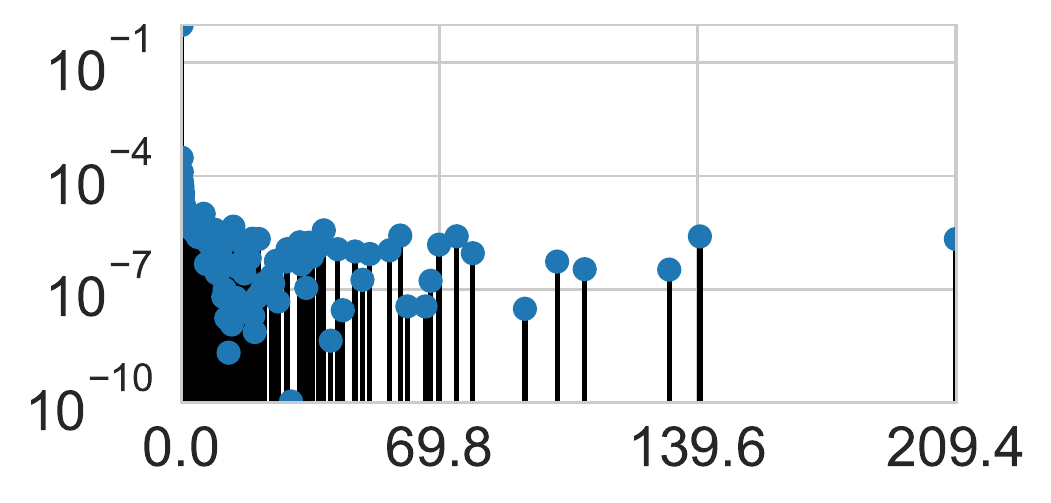}
	\caption{Epoch $175$, $B=128$}
	\label{subfig:c100p110ep175evalbatch}
\end{subfigure}
\begin{subfigure}{0.30\linewidth}
	\includegraphics[width=1\linewidth,trim={0 0 0 0},clip]{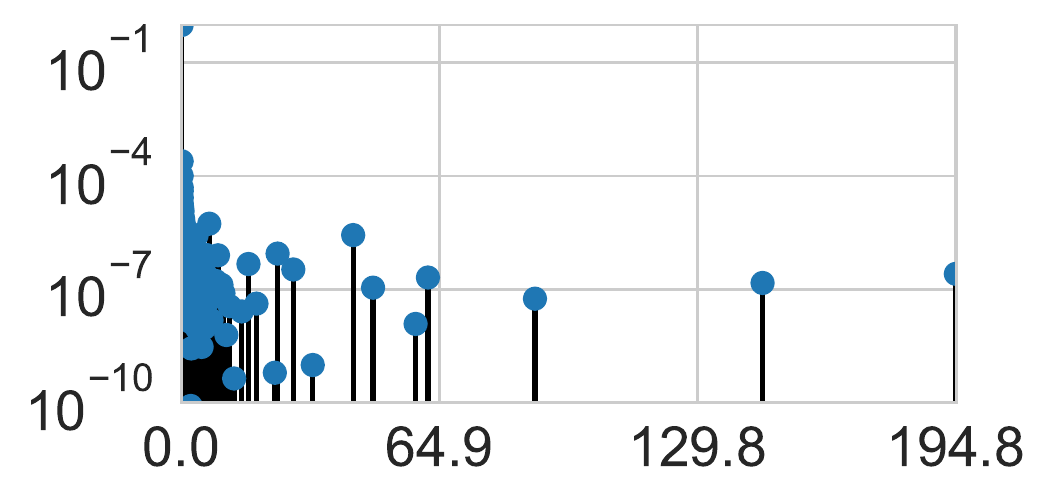}
	\caption{Epoch $200$, $B=128$}
	\label{subfig:c100p110ep200evalbatch}
\end{subfigure}
\begin{subfigure}{0.30\linewidth}
	\includegraphics[width=1\linewidth,trim={0 0 0 0},clip]{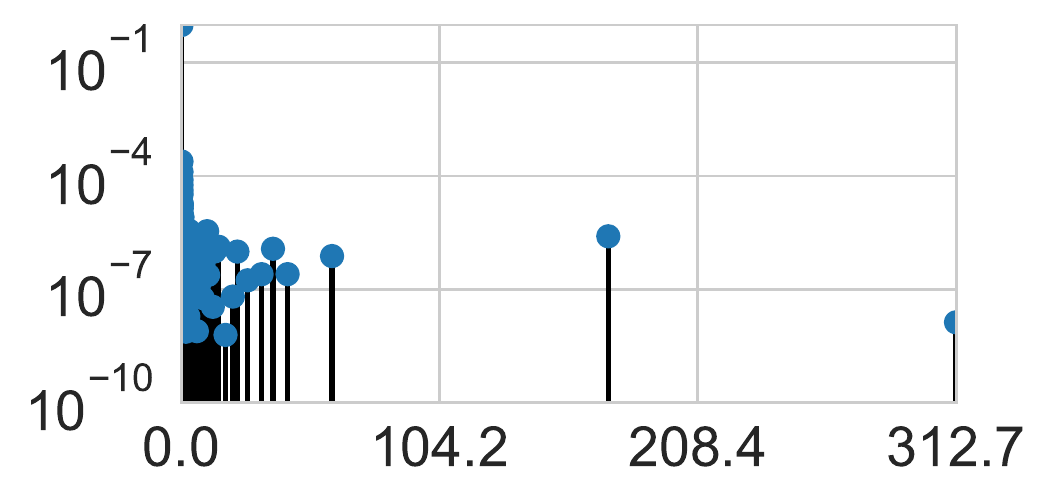}
	\caption{Epoch $225$, $B=128$}
	\label{subfig:c100p110ep225evalbatch}
\end{subfigure}
\caption{Generalised Gauss-Newton matrix full empirical spectrum for the PreResNet-$110$ on the CIFAR-$100$ dataset, total training $225$ epochs, batch norm eval mode, $B=128$ sub-sampled spectrum}
\label{fig:p110c100evalggn3batch}
\end{figure} 
Similarly to the previous section, we show in Figure \ref{fig:p110c100evalggn3batch} that even with batch normalisation in evaluation mode, the sub-sampling procedure induces extreme spectral broadening, compared to the same points in weight space with the full dataset, shown in Figure \ref{fig:p110c100evalggn}. Again although the results are stochastic, with large variance the trend is consistent. 
\begin{figure}[h!]
\centering
\begin{subfigure}{0.23\linewidth}
	\includegraphics[width=1\linewidth,trim={0 0 0 0},clip]{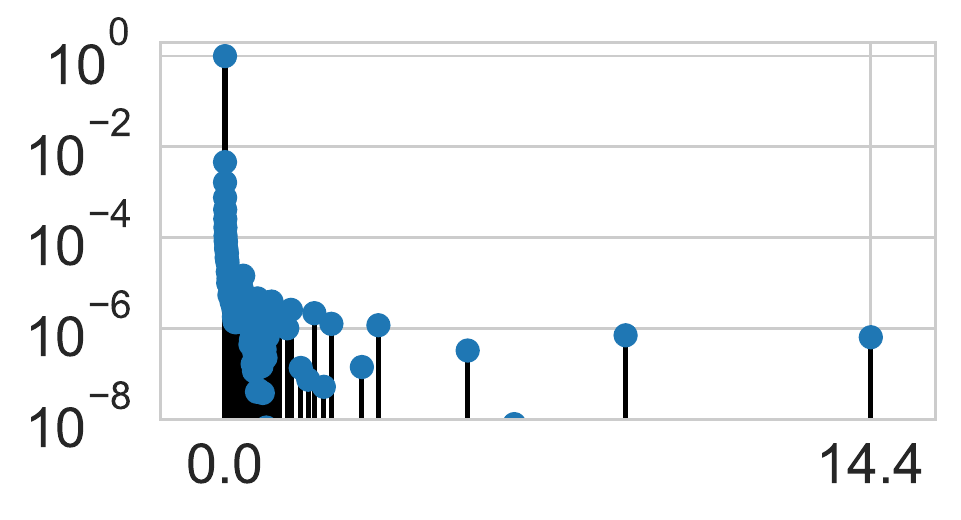}
	\caption{Epoch $0$}
	\label{subfig:c100p110ep0eval}
\end{subfigure}
\begin{subfigure}{0.23\linewidth}
	\includegraphics[width=1\linewidth,trim={0 0 0 0},clip]{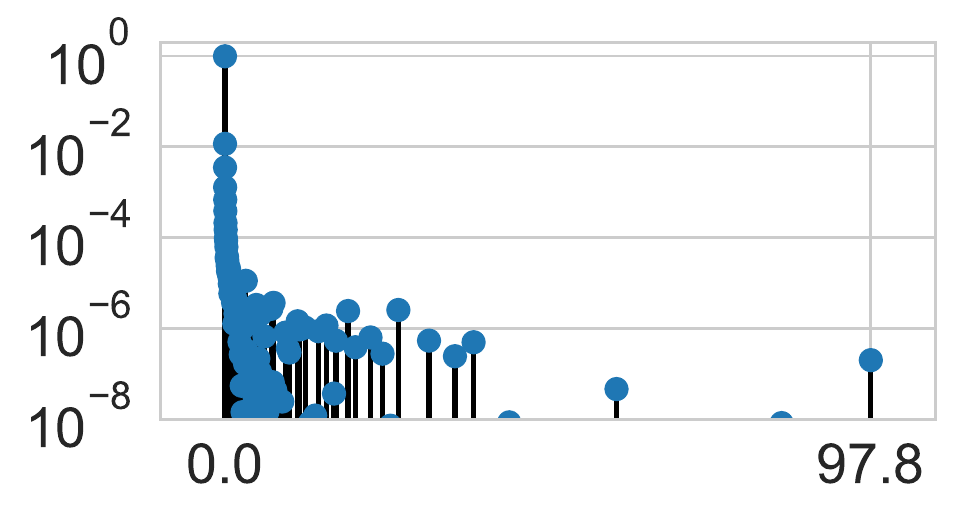}
	\caption{Epoch $25$}
	\label{subfig:c100p110ep25eval}
\end{subfigure}
\begin{subfigure}{0.23\linewidth}
	\includegraphics[width=1\linewidth,trim={0 0 0 0},clip]{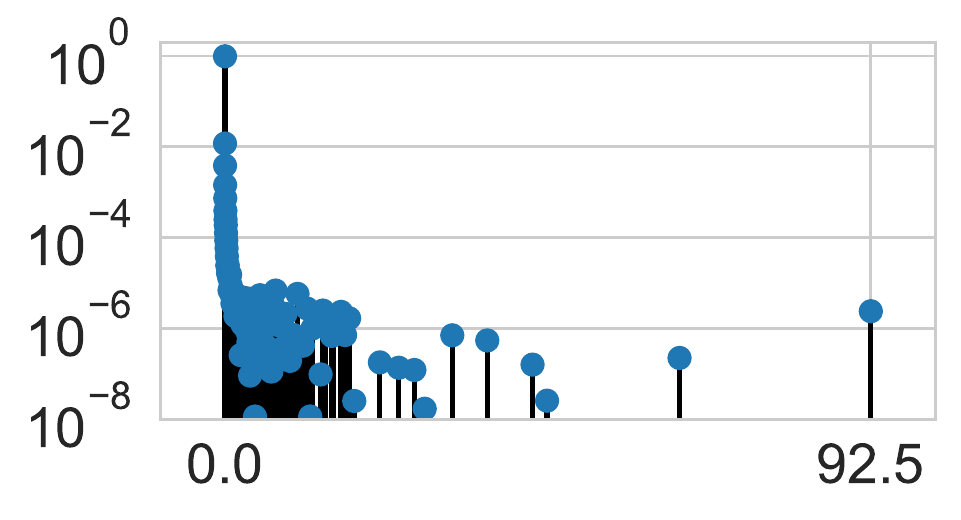}
	\caption{Epoch $50$}
	\label{subfig:c100p110ep50eval}
\end{subfigure}
\begin{subfigure}{0.23\linewidth}
	\includegraphics[width=1\linewidth,trim={0 0 0 0},clip]{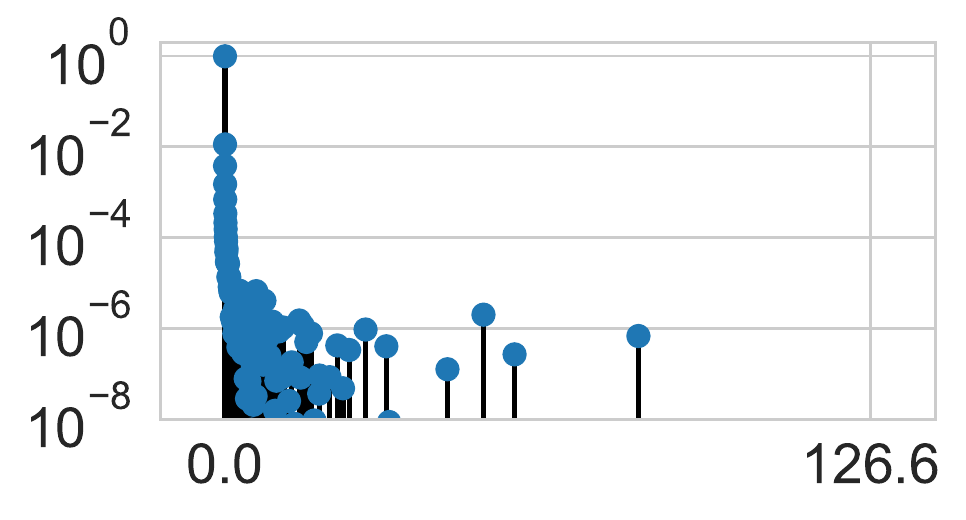}
	\caption{Epoch $75$}
	\label{subfig:c100p110ep75eval}
\end{subfigure}
\begin{subfigure}{0.23\linewidth}
	\includegraphics[width=1\linewidth,trim={0 0 0 0},clip]{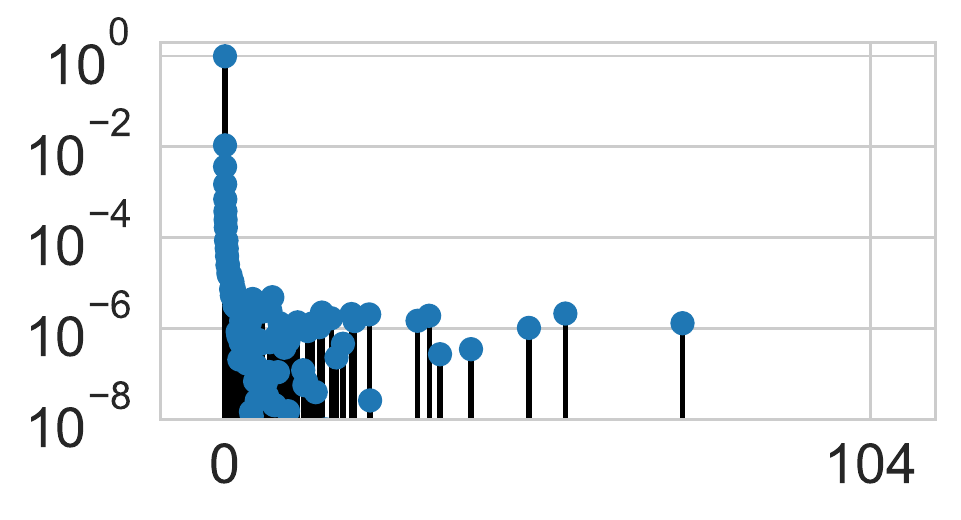}
	\caption{Epoch $100$}
	\label{subfig:c100p110ep0eval}
\end{subfigure}
\begin{subfigure}{0.23\linewidth}
	\includegraphics[width=1\linewidth,trim={0 0 0 0},clip]{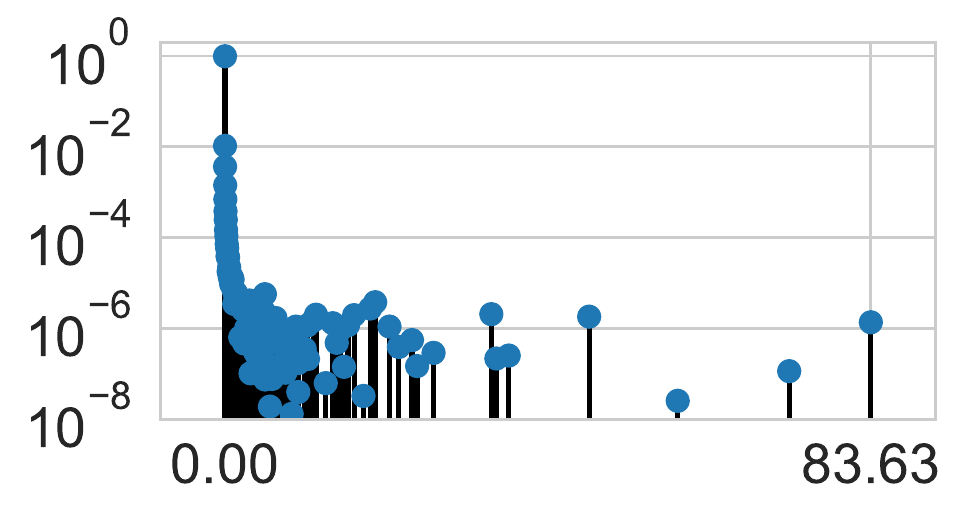}
	\caption{Epoch $125$}
	\label{subfig:c100p110ep25eval}
\end{subfigure}
\begin{subfigure}{0.23\linewidth}
	\includegraphics[width=1\linewidth,trim={0 0 0 0},clip]{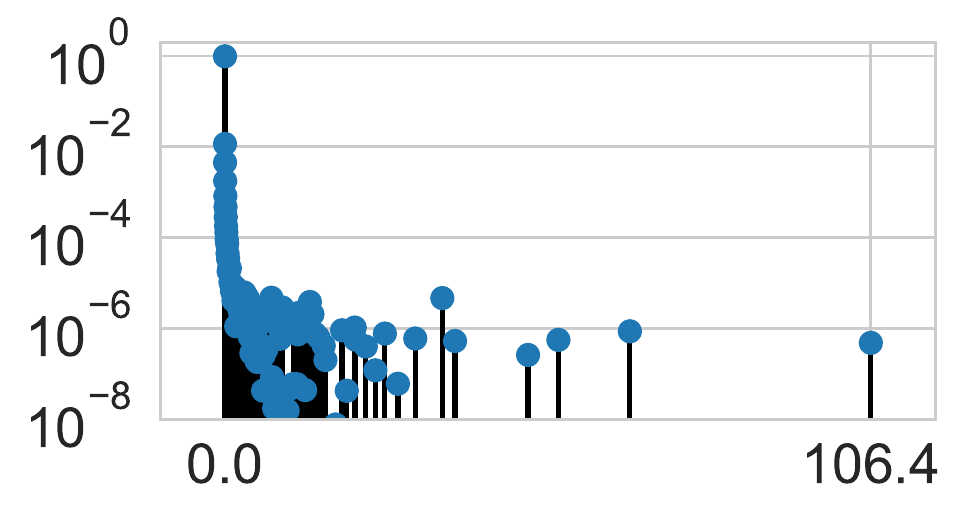}
	\caption{Epoch $150$}
	\label{subfig:c100p110ep50eval}
\end{subfigure}
\begin{subfigure}{0.23\linewidth}
	\includegraphics[width=1\linewidth,trim={0 0 0 0},clip]{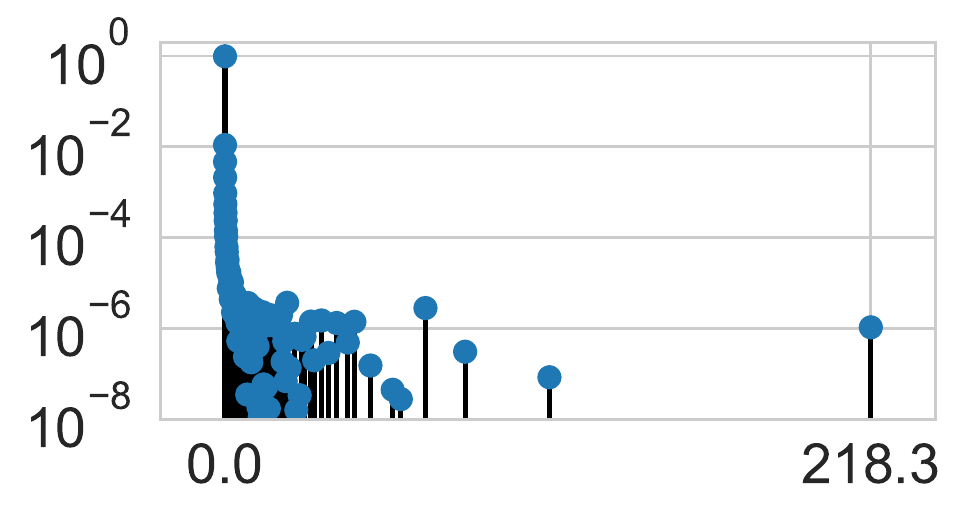}
	\caption{Epoch $175$}
	\label{subfig:c100p110ep75eval}
\end{subfigure}
\begin{subfigure}{0.30\linewidth}
	\includegraphics[width=1\linewidth,trim={0 0 0 0},clip]{P110/C100_PreResNet_GGN_eval_175.pdf}
	\caption{Epoch $175$}
	\label{subfig:c100p110ep175eval}
\end{subfigure}
\begin{subfigure}{0.30\linewidth}
	\includegraphics[width=1\linewidth,trim={0 0 0 0},clip]{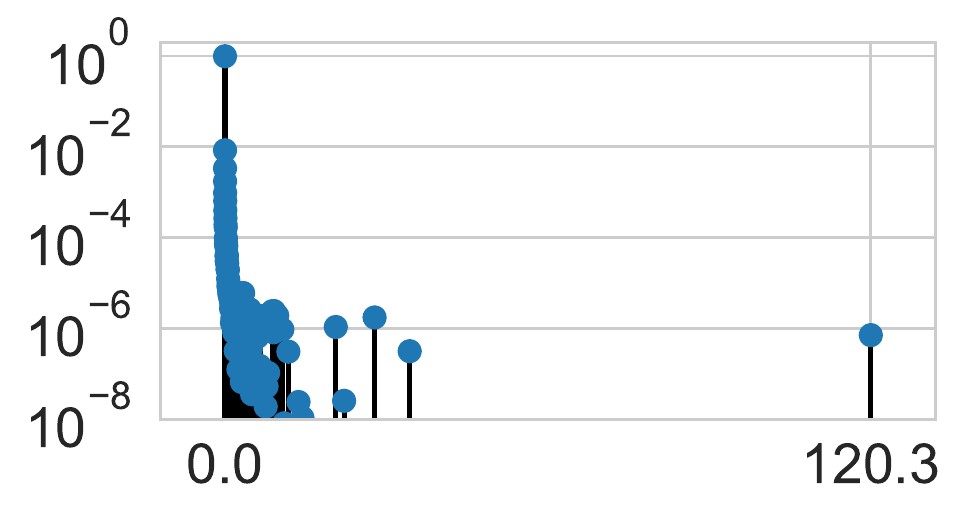}
	\caption{Epoch $200$}
	\label{subfig:c100p110ep200eval}
\end{subfigure}
\begin{subfigure}{0.30\linewidth}
	\includegraphics[width=1\linewidth,trim={0 0 0 0},clip]{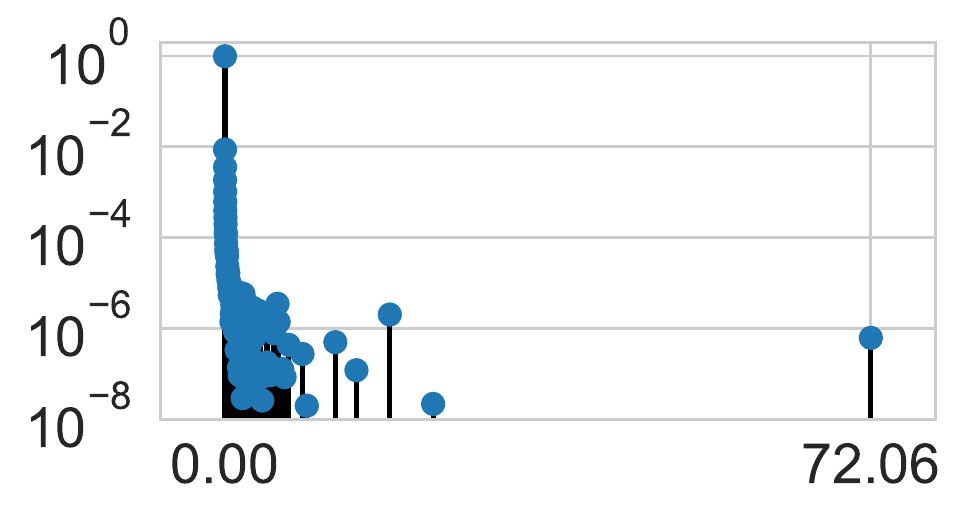}
	\caption{Epoch $225$}
	\label{subfig:c100p110ep225eval}
\end{subfigure}
\caption{Generalised Gauss-Newton matrix full empirical spectrum for the PreResNet-$110$ on the CIFAR-$100$ dataset, total training $225$ epochs, batch norm eval mode}
\label{fig:p110c100evalggn}
\end{figure}
\subsection{Hessian - Batch Normalisation Train Mode}
Similar to the Generalised Gauss-Newton matrix, the Hessian has well separated both negative and positive outliers from the spectral bulk and a large rank degeneracy, these observations are consistent throughout training.

\begin{figure}[h!]
\centering
\begin{subfigure}{0.23\linewidth}
	\includegraphics[width=1\linewidth,trim={0 0 0 0},clip]{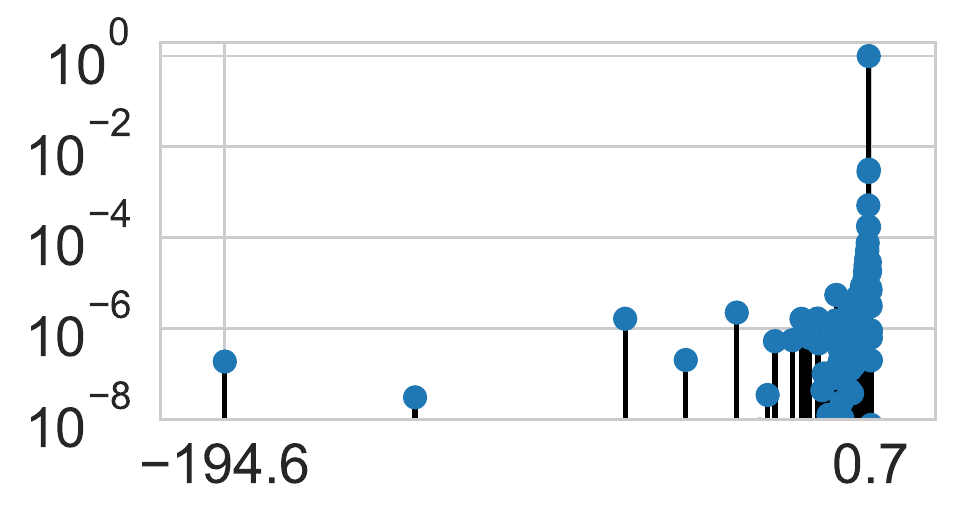}
	\caption{Epoch $0$}
	\label{subfig:hessc100p110ep0train}
\end{subfigure}
\begin{subfigure}{0.23\linewidth}
	\includegraphics[width=1\linewidth,trim={0 0 0 0},clip]{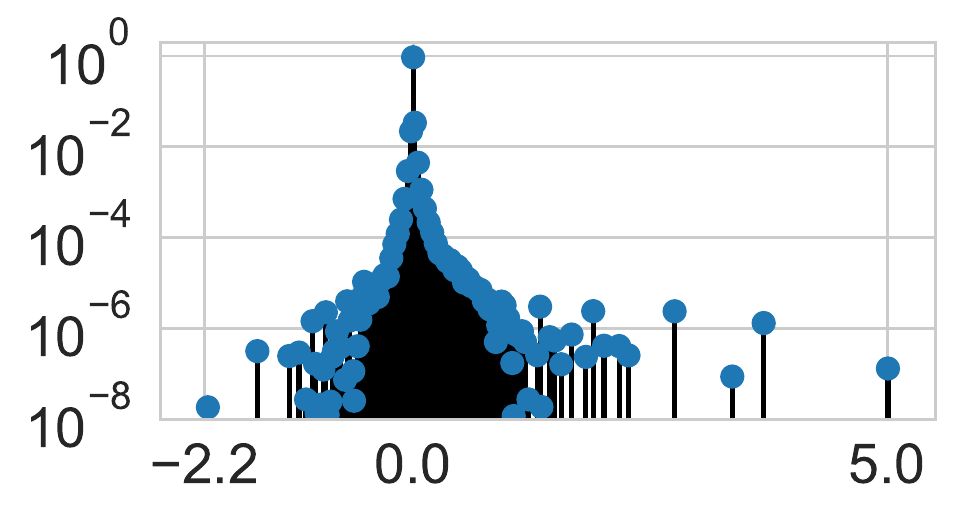}
	\caption{Epoch $25$}
	\label{subfig:hessc100p110ep25train}
\end{subfigure}
\begin{subfigure}{0.23\linewidth}
	\includegraphics[width=1\linewidth,trim={0 0 0 0},clip]{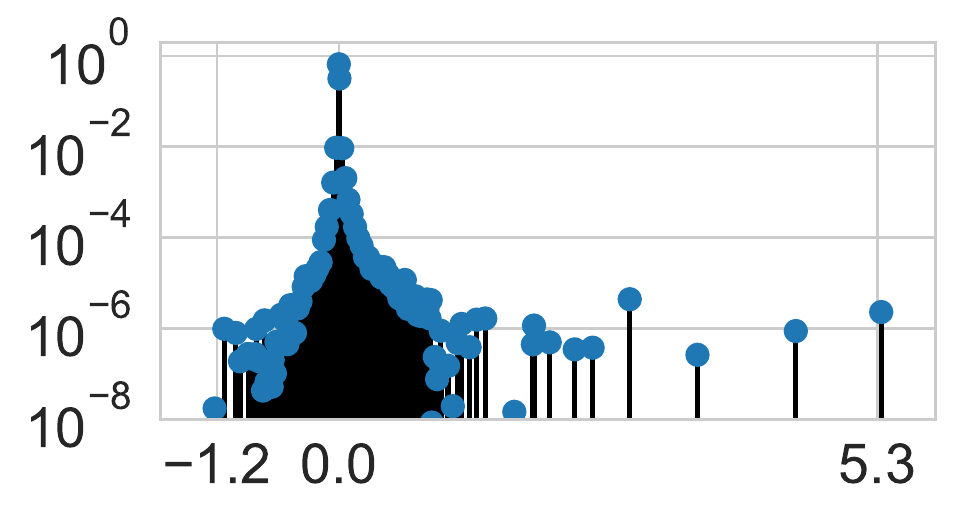}
	\caption{Epoch $50$}
	\label{subfig:hessc100p110ep50train}
\end{subfigure}
\begin{subfigure}{0.23\linewidth}
	\includegraphics[width=1\linewidth,trim={0 0 0 0},clip]{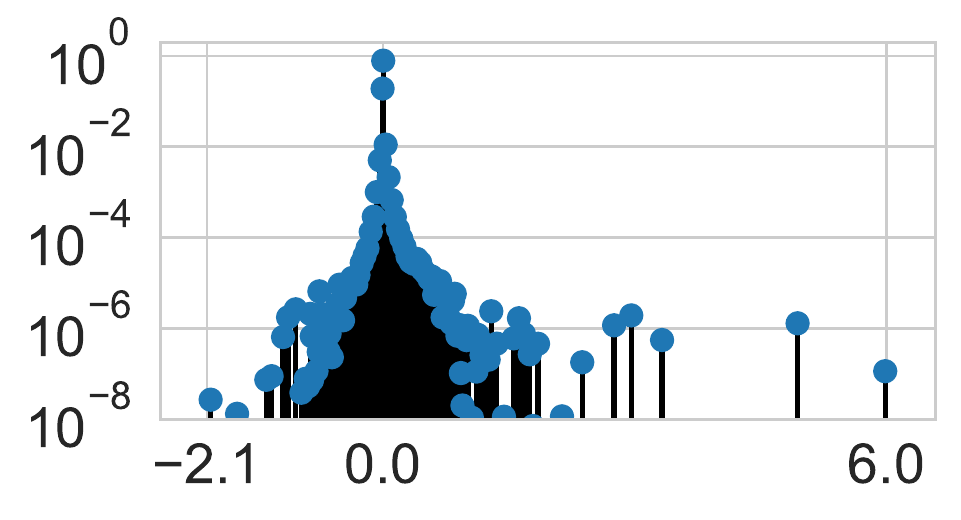}
	\caption{Epoch $75$}
	\label{subfig:hessc100p110ep75train}
\end{subfigure}
\begin{subfigure}{0.30\linewidth}
	\includegraphics[width=1\linewidth,trim={0 0 0 0},clip]{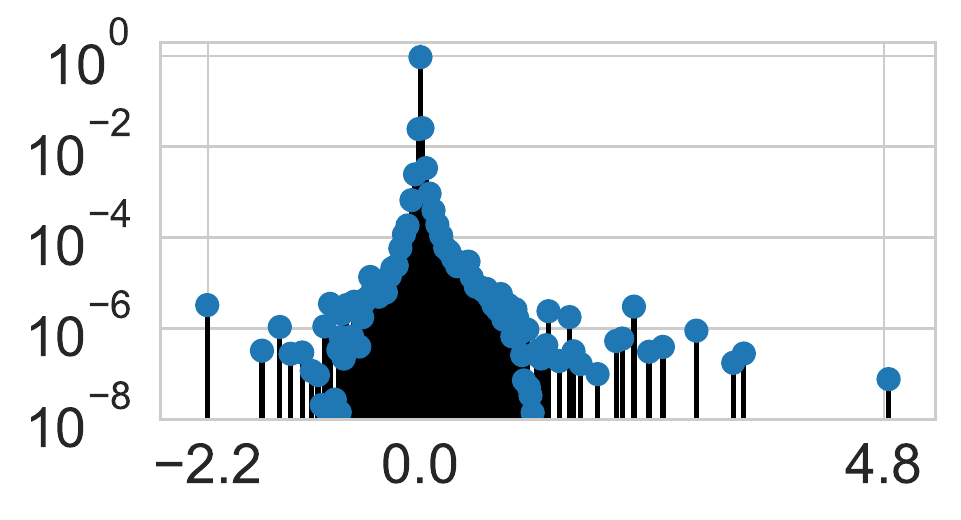}
	\caption{Epoch $100$}
	\label{subfig:hessc100p110ep100train}
\end{subfigure}
\begin{subfigure}{0.30\linewidth}
	\includegraphics[width=1\linewidth,trim={0 0 0 0},clip]{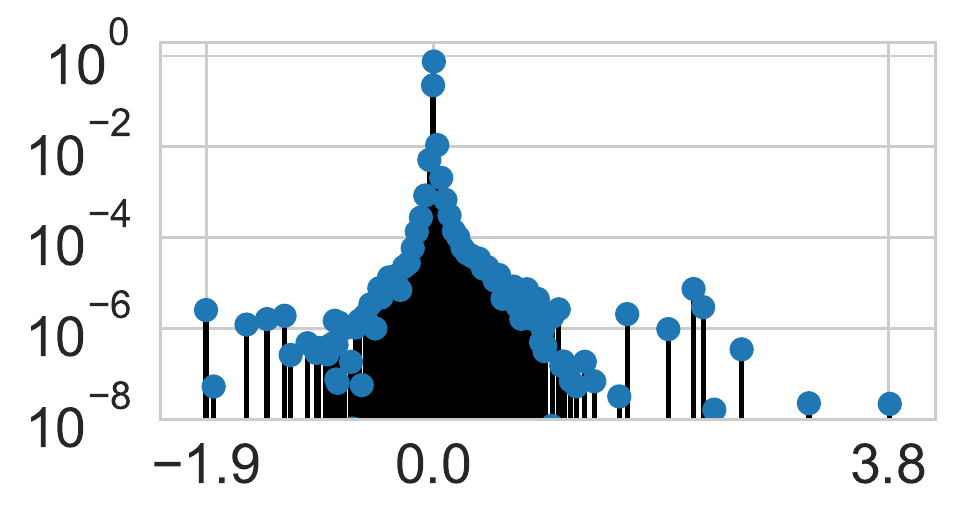}
	\caption{Epoch $125$}
	\label{subfig:hessc100p110ep125train}
\end{subfigure}
\begin{subfigure}{0.30\linewidth}
	\includegraphics[width=1\linewidth,trim={0 0 0 0},clip]{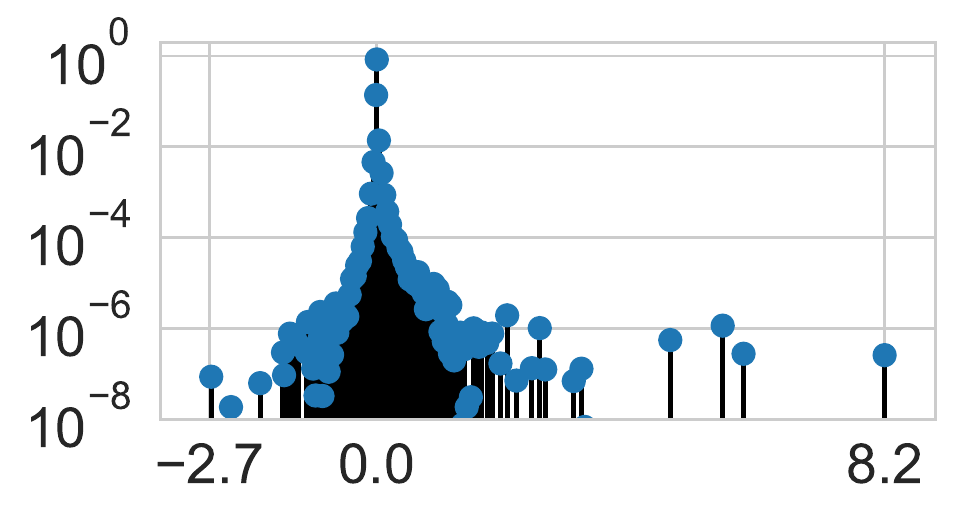}
	\caption{Epoch $150$}
	\label{subfig:hessc100p110ep150train}
\end{subfigure}
\begin{subfigure}{0.30\linewidth}
	\includegraphics[width=1\linewidth,trim={0 0 0 0},clip]{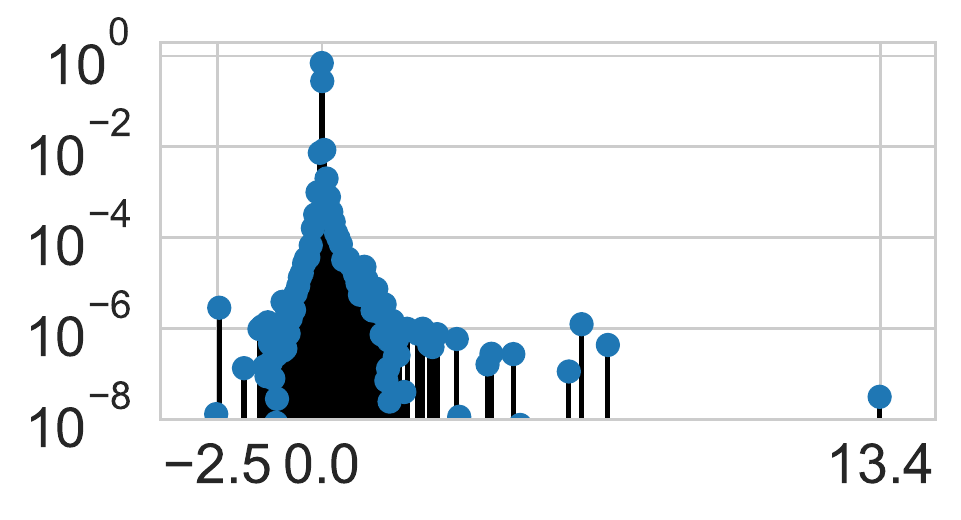}
	\caption{Epoch $175$}
	\label{subfig:hessc100p110ep175train}
\end{subfigure}
\begin{subfigure}{0.30\linewidth}
	\includegraphics[width=1\linewidth,trim={0 0 0 0},clip]{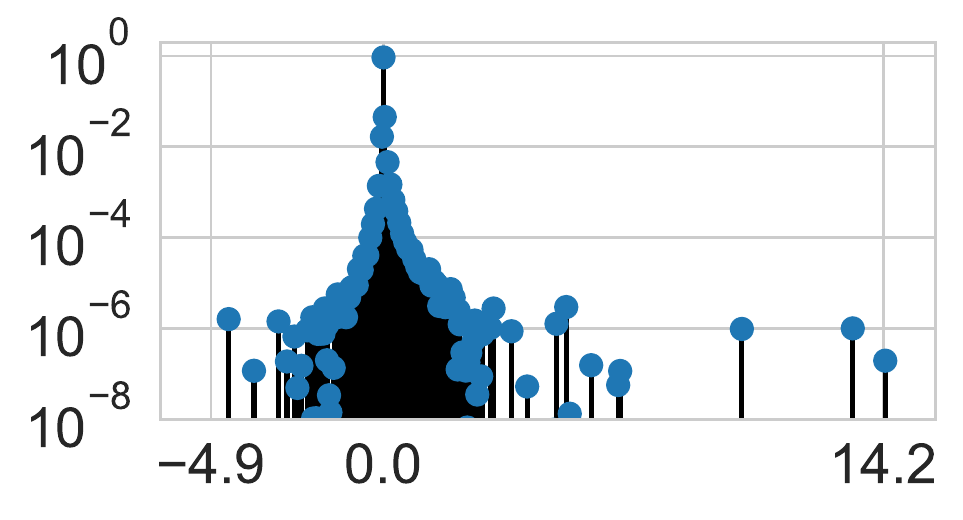}
	\caption{Epoch $200$}
	\label{subfig:hessc100p110ep200train}
\end{subfigure}
\begin{subfigure}{0.30\linewidth}
	\includegraphics[width=1\linewidth,trim={0 0 0 0},clip]{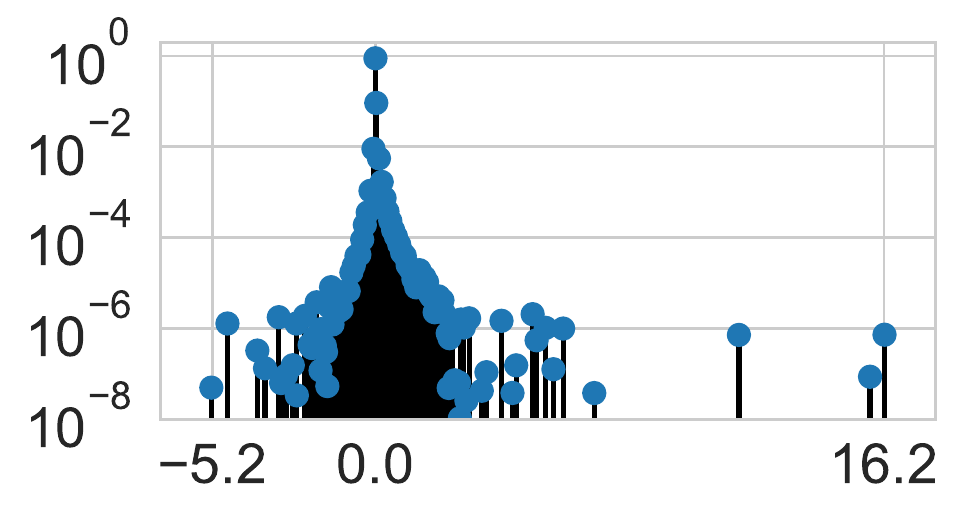}
	\caption{Epoch $225$}
	\label{subfig:hessc100p110ep225train}
\end{subfigure}
\caption{Hessian full empirical spectrum for the PreResNet-$110$ on the CIFAR-$100$ dataset, total training $225$ epochs, batch norm train mode}
\label{fig:hessp110c100train2}
\end{figure}
Similarly at all points in training, stochastic batch Hessians are shown to be significantly broadened, we see this by comparing the full data empirical Hessian spectrum \ref{fig:hessp110c100train2} compared to the Hessian at the same point in weight space but using only a batch size of $B=128$ in Figure \ref{fig:hessp110c100train2batch}. 
\begin{figure}[h!]
\centering
\begin{subfigure}{0.30\linewidth}
	\includegraphics[width=1\linewidth,trim={0 0 0 0},clip]{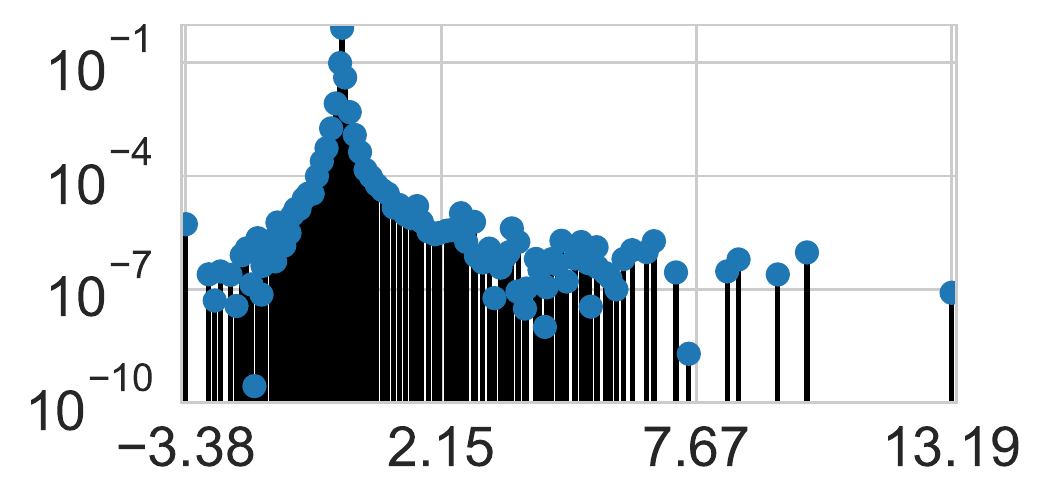}
	\caption{Epoch $100$, $B=128$}
	\label{subfig:hessc100p110ep100trainbatch}
\end{subfigure}
\begin{subfigure}{0.30\linewidth}
	\includegraphics[width=1\linewidth,trim={0 0 0 0},clip]{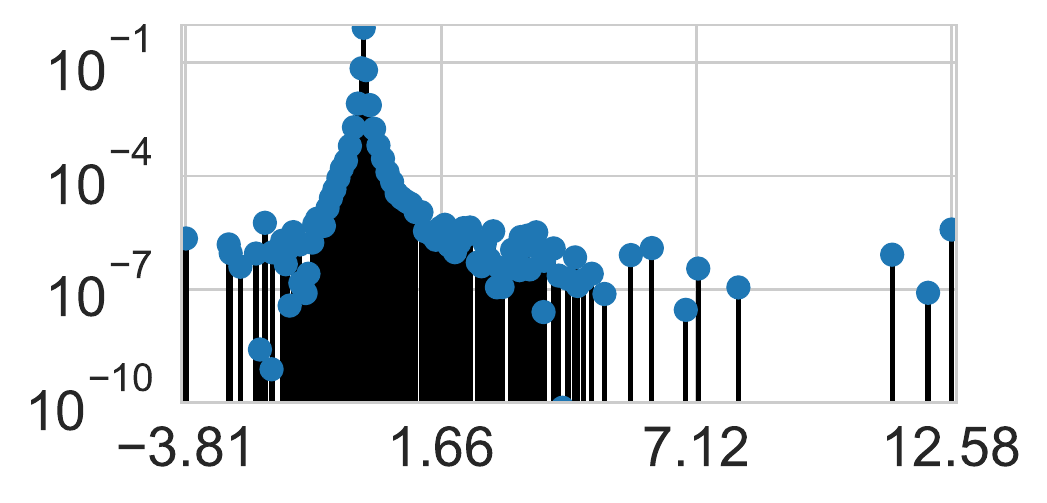}
	\caption{Epoch $125$, $B=128$}
	\label{subfig:hessc100p110ep125trainbatch}
\end{subfigure}
\begin{subfigure}{0.30\linewidth}
	\includegraphics[width=1\linewidth,trim={0 0 0 0},clip]{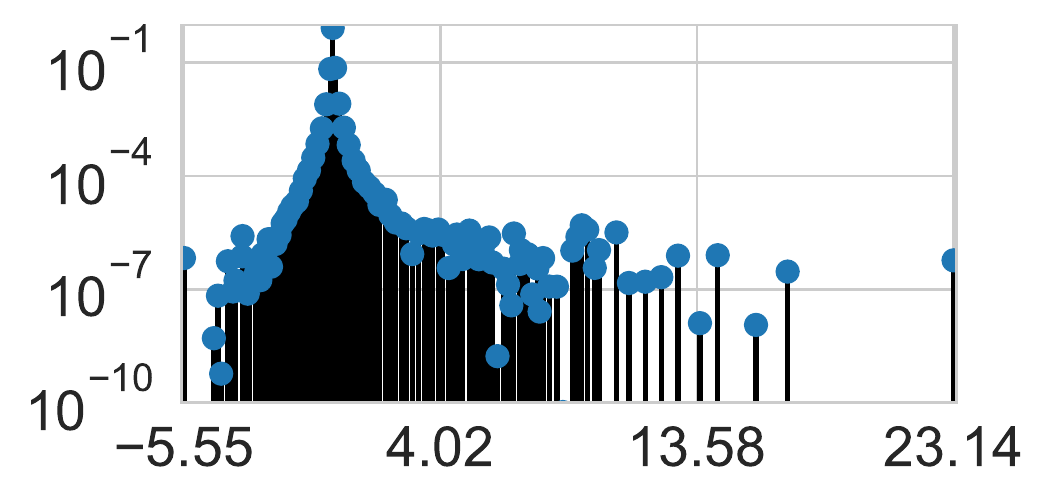}
	\caption{Epoch $150$, $B=128$}
	\label{subfig:hessc100p110ep150trainbatch}
\end{subfigure}
\caption{Hessian batch spectrum for the PreResNet-$110$ on the CIFAR-$100$ dataset, total training $225$ epochs, batch norm train mode, $B=128$}
\label{fig:hessp110c100train2batch}
\end{figure}
\subsection{Hessian - Batch Normalisation Evaluation Mode}
Similarly at all points in training, stochastic batch Hessians in evaluation mode are shown to be significantly broadened, we see this by comparing the full data empirical Hessian spectrum \ref{fig:hessp110c100eval2} compared to the Hessian at the same point in weight space but using only a batch size of $B=128$ in Figure \ref{fig:hessp110c100eval2batch}. 

\begin{figure}[h!]
\centering
\begin{subfigure}{0.23\linewidth}
	\includegraphics[width=1\linewidth,trim={0 0 0 0},clip]{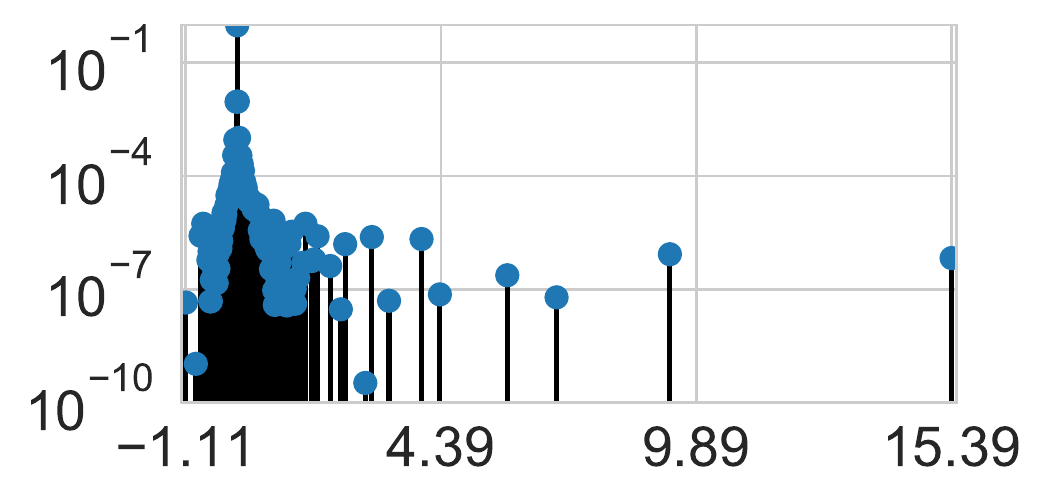}
	\caption{Epoch $0$}
	\label{subfig:hessc100p110ep0eval}
\end{subfigure}
\begin{subfigure}{0.23\linewidth}
	\includegraphics[width=1\linewidth,trim={0 0 0 0},clip]{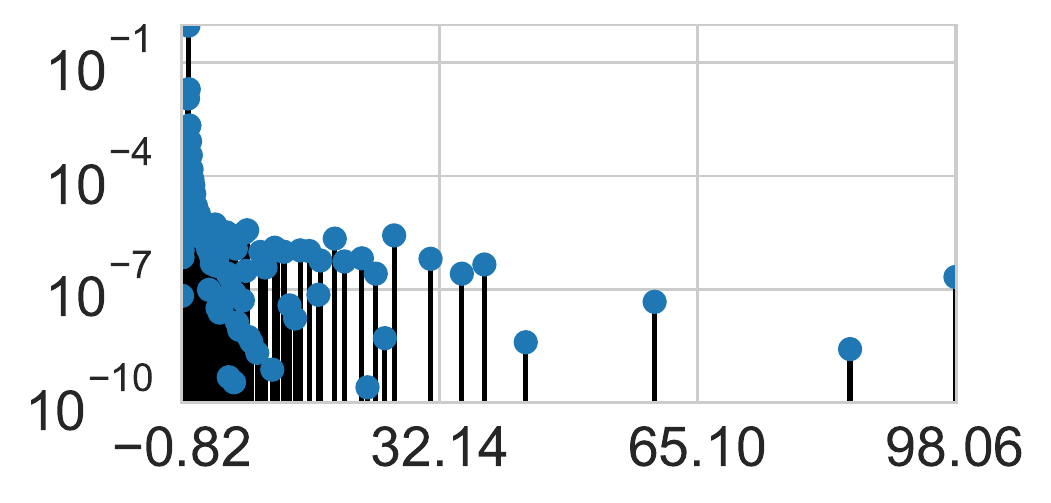}
	\caption{Epoch $25$}
	\label{subfig:hessc100p110ep25eval}
\end{subfigure}
\begin{subfigure}{0.23\linewidth}
	\includegraphics[width=1\linewidth,trim={0 0 0 0},clip]{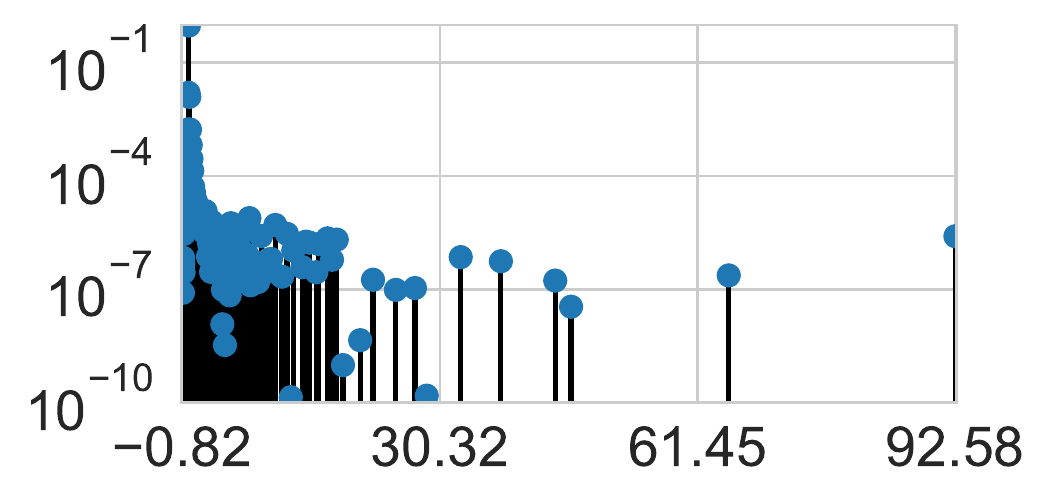}
	\caption{Epoch $50$}
	\label{subfig:hessc100p110ep50eval}
\end{subfigure}
\begin{subfigure}{0.23\linewidth}
	\includegraphics[width=1\linewidth,trim={0 0 0 0},clip]{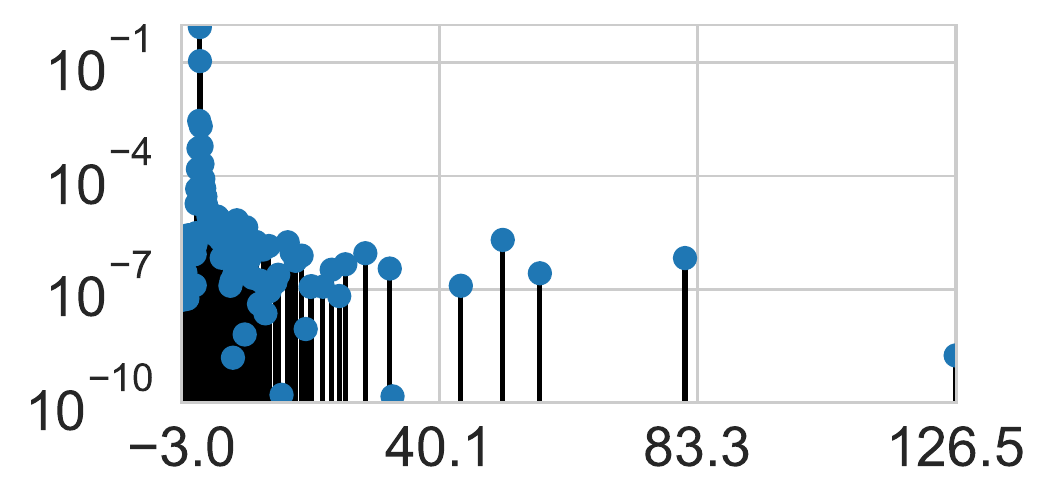}
	\caption{Epoch $75$}
	\label{subfig:hessc100p110ep75eval}
\end{subfigure}
\caption{Hessian full empirical spectrum for the PreResNet-$110$ on the CIFAR-$100$ dataset, total training $225$ epochs, batch norm evaluation mode}
\label{fig:hessp110c100eval}
\end{figure}

\begin{figure}[h!]
\centering
\begin{subfigure}{0.30\linewidth}
	\includegraphics[width=1\linewidth,trim={0 0 0 0},clip]{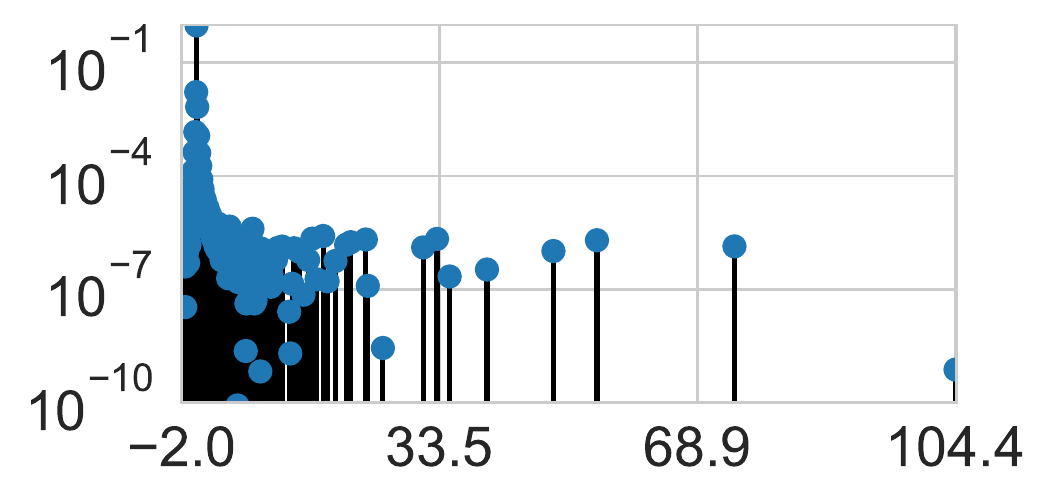}
	\caption{Epoch $100$}
	\label{subfig:hessc100p110ep100eval}
\end{subfigure}
\begin{subfigure}{0.30\linewidth}
	\includegraphics[width=1\linewidth,trim={0 0 0 0},clip]{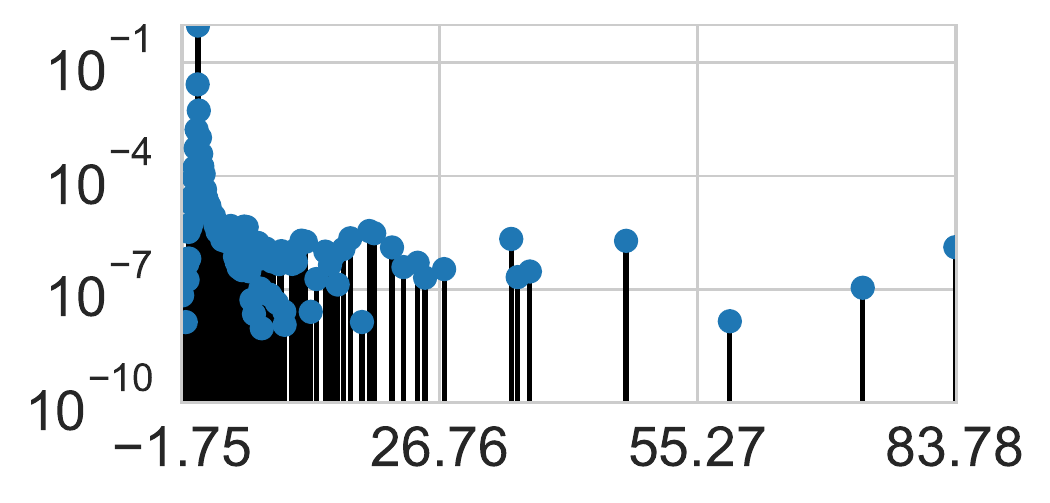}
	\caption{Epoch $125$}
	\label{subfig:hessc100p110ep125eval}
\end{subfigure}
\begin{subfigure}{0.30\linewidth}
	\includegraphics[width=1\linewidth,trim={0 0 0 0},clip]{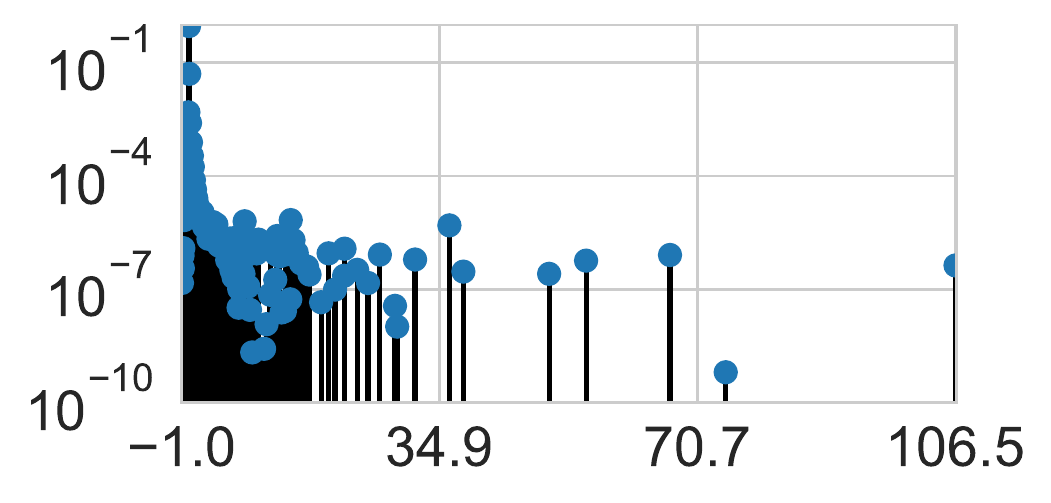}
	\caption{Epoch $150$}
	\label{subfig:hessc100p110ep150eval}
\end{subfigure}
\caption{Hessian full empirical spectrum for the PreResNet-$110$ on the CIFAR-$100$ dataset, total training $225$ epochs, batch norm evaluation mode}
\label{fig:hessp110c100eval2}
\end{figure}
\begin{figure}[h!]
\centering
\begin{subfigure}{0.30\linewidth}
	\includegraphics[width=1\linewidth,trim={0 0 0 0},clip]{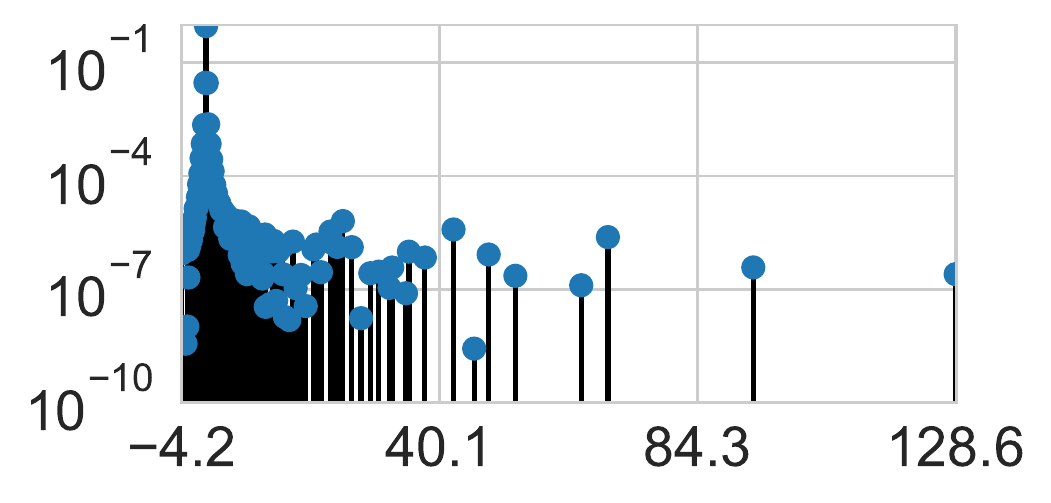}
	\caption{Epoch $100$, $B=128$}
	\label{subfig:hessc100p110ep100evalbatch}
\end{subfigure}
\begin{subfigure}{0.30\linewidth}
	\includegraphics[width=1\linewidth,trim={0 0 0 0},clip]{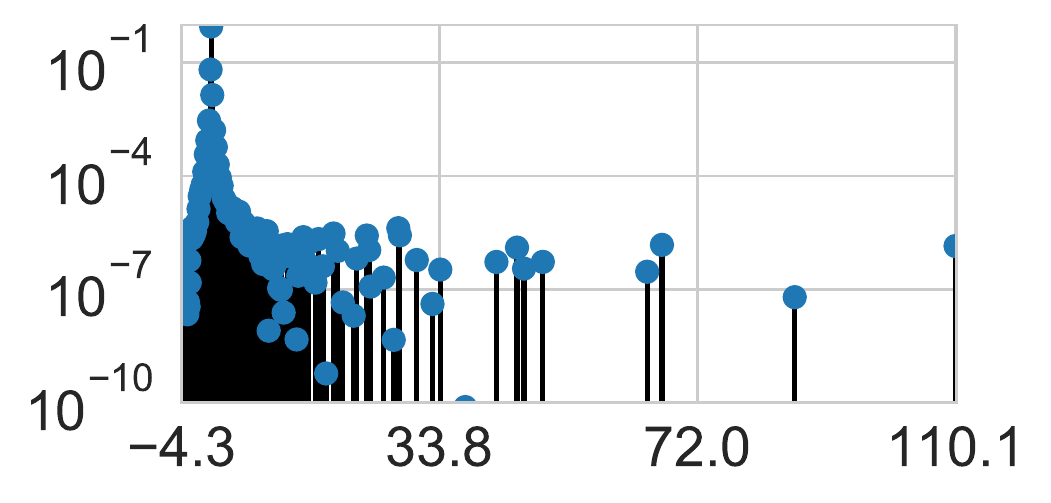}
	\caption{Epoch $125$, $B=128$}
	\label{subfig:hessc100p110ep125evalbatch}
\end{subfigure}
\begin{subfigure}{0.30\linewidth}
	\includegraphics[width=1\linewidth,trim={0 0 0 0},clip]{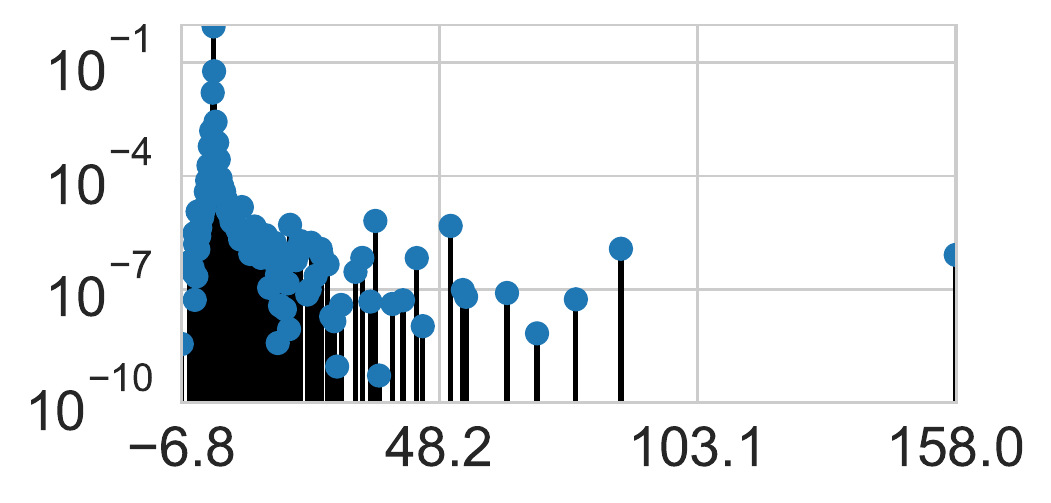}
	\caption{Epoch $150$, $B=128$}
	\label{subfig:hessc100p110ep150evalbatch}
\end{subfigure}
\caption{Hessian batch spectrum for the PreResNet-$110$ on the CIFAR-$100$ dataset, total training $225$ epochs, batch norm train mode, $B=128$}
\label{fig:hessp110c100eval2batch}
\end{figure}
\begin{figure}[h!]
\centering
\begin{subfigure}{0.30\linewidth}
	\includegraphics[width=1\linewidth,trim={0 0 0 0},clip]{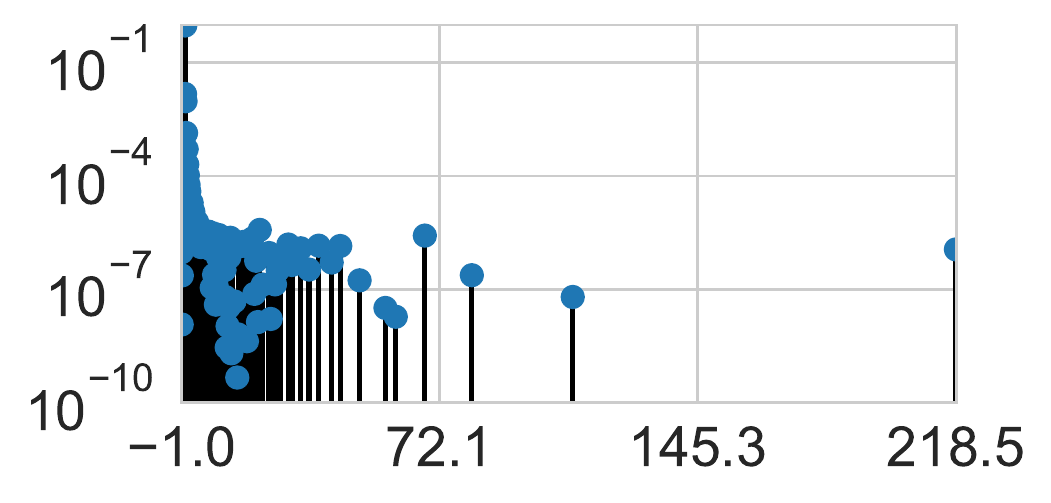}
	\caption{Epoch $175$}
	\label{subfig:hessc100p110ep175eval}
\end{subfigure}
\begin{subfigure}{0.30\linewidth}
	\includegraphics[width=1\linewidth,trim={0 0 0 0},clip]{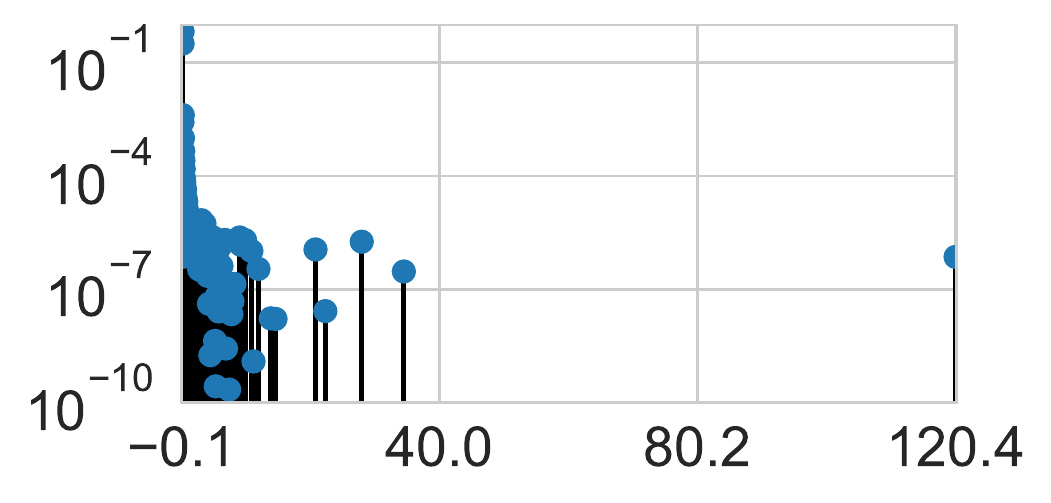}
	\caption{Epoch $200$}
	\label{subfig:hessc100p110ep200eval}
\end{subfigure}
\begin{subfigure}{0.30\linewidth}
	\includegraphics[width=1\linewidth,trim={0 0 0 0},clip]{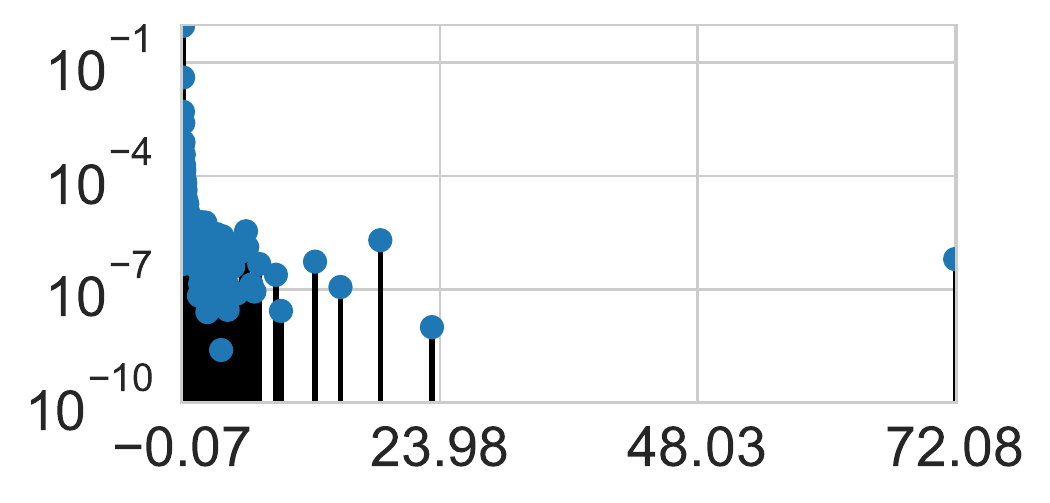}
	\caption{Epoch $225$}
	\label{subfig:hessc100p110ep225eval}
\end{subfigure}
\caption{Hessian full empirical spectrum for the PreResNet-$110$ on the CIFAR-$100$ dataset, total training $225$ epochs, batch norm evaluation mode}
\label{fig:hessp110c100eval3}

\end{figure}

\section{Alternative learning rate schedules and initialisation distance importance}
\label{sec:initdist}
\begin{figure}[h!]
\centering
\begin{subfigure}{0.48\linewidth}
	\includegraphics[trim={0cm 0cm 0cm 0cm},clip, width=1\textwidth]{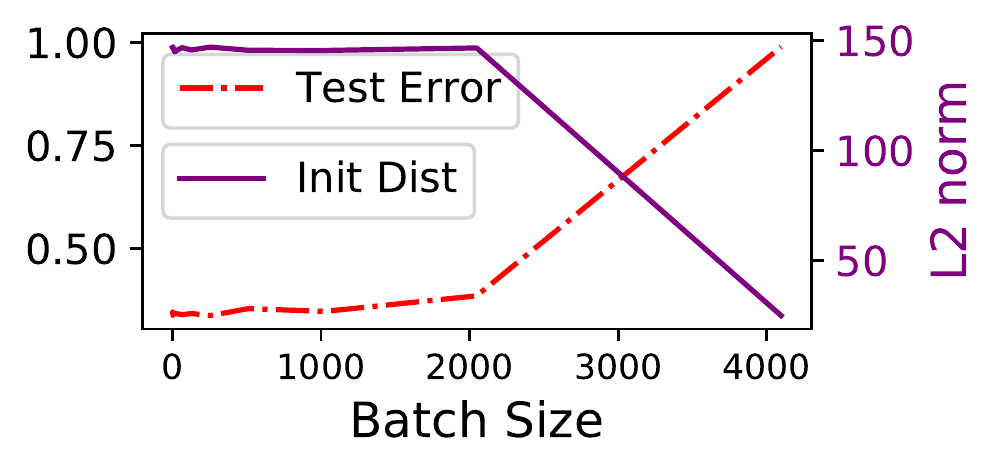}
	
	\caption{VGG-$16$}
	\label{subfig:vgg16err}
\end{subfigure}
\begin{subfigure}{0.48\linewidth}
	\includegraphics[trim={0cm 0cm 0cm 0cm},clip, width=1\textwidth]{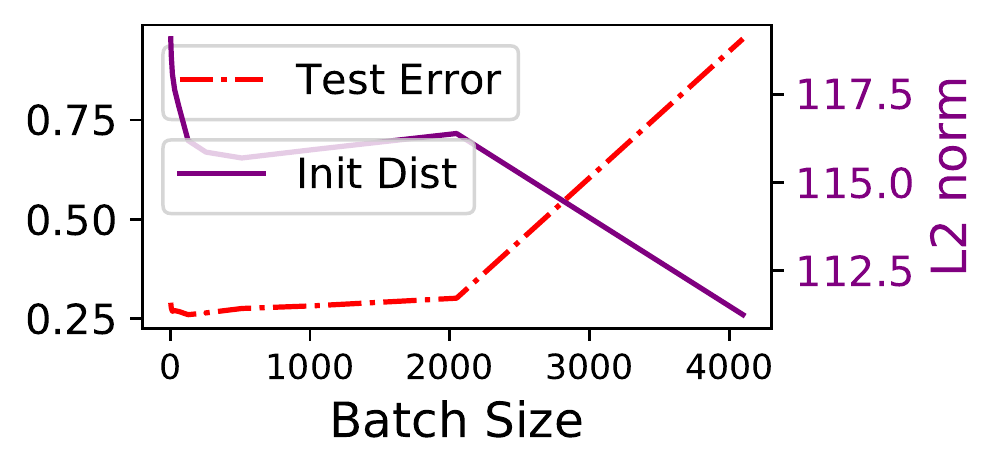}
	
	\caption{VGG-$16$BN}
	\label{subfig:vgg16bnerr}
\end{subfigure}
\caption{Test error as a function of initialisation distance for both the VGG-$16$ and VGG-$16$BN, for the CIFAR-$100$ dataset. Learning rate is scaled linearly with batch size.}
\label{fig:distinitvgg}
\end{figure}
One implicit assumption in Section \ref{sec:scaling} is that the largest learning rate which trains stably gives the best result. This informs our work as to how we should scale this rate as the batch size is increased. However it makes sense to consider how alternative more conservative scaling rules might fare and whether they impact performance. In this section we also consider whether increased distance from Initialisation, as posited in \cite{hoffer2017train} is relevant for generalisation.We do this for the VGG-$16$ on the CIFAR-$100$ dataset. Against a baseline validation accuracy of $65.82 \%$ for $B=128$. For the $B=1024$ case, our theoretically justified linearly increased learning rate of $0.08$ gives an accuracy of $64.35 \%$, whereas using the square root rule \citep{hoffer2017train} suggestion of $0.028$ only gives $61.08 \%$, we note from Figure \ref{fig:hesspert} that there are many well separated outliers. On the held out test set, the linear scaling solution has an error of $34.64 \%$ and a distance of $145.76$ in $L2$ norm from the initialisation, whereas the square root scaled solution has an error of $37.55 \%$ and a distance of $67.44$ from initialization. This indicates that as argued in \citet{hoffer2017train} that distance from initialisation seems to play an important role for generalisation. We test this further by looking at the initialisation distance across the set of similar test performing solutions for a constant learning rate to batch size ratio. Interestingly for the VGG-$16$ without batch normalisation as shown in Figure \ref{subfig:vgg16batchggn} there is a strong link between initialisation distance in $L2$ norm and the test error. This relationship is much weaker and much smaller in magnitude when batch normalisation is utilised, as shown in Figure \ref{subfig:vgg16empggn}. We even see the initialisation distance increasing as test error also increases.

\newpage

\end{document}